\def\mystrut{\rule[-2ex]{0ex}{5ex}} 
\newcommand{\blue}[1]{#1}
\definecolor{wine_red}{RGB}{228,48,64}
\definecolor{DSgray}{cmyk}{0,1,0,0}
\def \LBstatic {\mbox{OFUL}\xspace}
\def \LBweight {\mbox{D-LinUCB}\xspace}
\def \LBrestart {\mbox{RestartUCB}\xspace}
\def \LBwindow {\mbox{SW-UCB}\xspace}
\def \GLBstatic {\mbox{GLM-UCB}\xspace}
\def \GLBweight {\mbox{BVD-GLM-UCB}\xspace}
\def \SCBstatic {\mbox{LogUCB1}\xspace}
\def \LBweightours {\mbox{LB-WeightUCB}\xspace}
\def \GLBweightours {\mbox{GLB-WeightUCB}\xspace}
\def \GLBrestart {\mbox{GLB-RestartUCB}\xspace}
\def \SCBrestart {\mbox{SCB-RestartUCB}\xspace}
\def \SCBweightours {\mbox{SCB-WeightUCB}\xspace}
\def \SCBweightourspw {\mbox{SCB-PW-WeightUCB}\xspace}
\def \MASTER {\mbox{MASTER}\xspace}
\def \LBMASTER {\mbox{MASTER+\LBstatic}\xspace}
\def \GLBMASTER {\mbox{MASTER+\GLBstatic}\xspace}
\def \SCBMASTER {\mbox{MASTER+\SCBstatic}\xspace}
\def \LMDPweightours  {\mbox{WeightUCRL}\xspace}
\def \MNLweightours  {\mbox{MNL-WeightUCRL}\xspace}
\def \pathlength {path length\xspace}
\renewcommand{\tilde}{\widetilde}
\renewcommand{\hat}{\widehat}
\def \A {\mathcal{A}}
\def \C {\mathcal{C}}
\def \F {\mathcal{F}}
\def \Jb {\mathbb{J}}
\def \N {\mathcal{N}}
\def \M {\mathcal{M}}
\def \O {\mathcal{O}}
\def \Ot {\tilde{\O}}
\def \P {\mathbb{P}}
\def \Pt {\tilde{\P}}
\def \E {\mathbb{E}}
\def \R {\mathbb{R}}
\def \S {\mathcal{S}}
\def \T {\top}
\def \W {\mathcal{W}}
\def \X {\mathcal{X}}
\def \Xt {\tilde{X}}
\def \x {\mathbf{x}}
\def \DReg {\textsc{D-Reg}}
\def \define {\triangleq}
\def \thetah {\hat{\theta}}
\def \th {\hat{\theta}}
\def \thetat {\tilde{\theta}}
\def \thetab {\bar{\theta}}
\def \w {\mathbf{w}}
\def \wh {\hat{\w}}
\def \wb {\bar{\w}}
\def \wt {\tilde{\w}}
\def \lamt {{\lambda_\theta}}
\def \lamw {{\lambda_\w}}
\def \etat {\tilde{\eta}}
\def \Vt {\tilde{V}}
\def \diff {\mathrm{d}}
\def \dmu {\mu^\prime}
\def \ddmu {\mu^{\prime\prime}}
\def \Ht {\tilde{H}}
\def \Var {\textnormal{Var}}
\def \betat {\tilde{\beta}}
\def \betab {\bar{\beta}}
\def \betabr {\breve{\beta}}
\def \bias {\mathtt{bias~part}\xspace}
\def \variance {\mathtt{variance~part}\xspace}
\def \trace {\operatorname{trace}}
\def \psib {\bar{\psi}}
\def \Sigmab {\bar{\Sigma}}
\def \betab {\bar{\beta}}
\def \Qb {\bar{Q}}
\def \Vb {\bar{V}}
\def \gammabob {\tilde{\gamma}^{*}}
\let\norm\undefined 
\newcommand\norm[1]{\left\| #1 \right\|} 
\newcommand\abs[1]{\left| #1 \right|} 
\newcommand\inner[2]{\left\langle #1, #2 \right\rangle} 
\newcommand\commoninner[2]{\langle #1, #2 \rangle} 
\newcommand\sbr[1]{\left( #1 \right)} 
\newcommand\mbr[1]{\left[ #1 \right]} 
\newcommand\bbr[1]{\left\{ #1 \right\}}
\DeclareMathOperator*{\argmax}{arg\,max}
\DeclareMathOperator*{\argmin}{arg\,min}
\newcommand\given[1][]{\:#1\vert\:}
\newcommand\givenn[1][]{\:#1\middle\vert\:}
\def \parag {\vspace{3mm}\noindent\textbf}
\newcommand\term[1]{\textsc{term}~(\textsc{#1})}
\DeclareMathOperator{\indicator}{\mathds{1}}
\newcommand {\NA}{---}
\newtheorem{myThm}{Theorem}
\newtheorem{myLemma}{Lemma}
\newtheorem{myDef}{Definition}
\theoremstyle{definition}
\newtheorem{myAssum}{Assumption}
\newtheorem{myRemark}{Remark}
\newcommand{\pref}[1]{\prettyref{#1}}
\begin{document}

\title{Revisiting Weighted Strategy for Non-stationary Parametric Bandits and MDPs}

\author{Jing Wang\raisebox{0.5ex}{\orcidlink{0009-0001-2798-0884}}, Peng Zhao\raisebox{0.5ex}{\orcidlink{0000-0001-7925-8255}},~\IEEEmembership{Member,~IEEE,} and Zhi-Hua Zhou\raisebox{0.5ex}{\orcidlink{0000-0003-0746-1494}},~\IEEEmembership{Fellow,~IEEE}
        \thanks{J. Wang, P. Zhao and Z.-H. Zhou are with National Key Laboratory for Novel Software Technology and the School of Artificial Intelligence, Nanjing University, Nanjing 210023, China. (e-mail: \{wangjing,zhaop,zhouzh\}@lamda.nju.edu.cn). This paper was presented in part at the Proceedings of the 26th International Conference on Artificial Intelligence and Statistics (AISTATS), 2023.
        
        Manuscript received December 28, 2024; revised August 6, 2025. (Corresponding author: Peng Zhao)}
        }

\markboth{IEEE TRANSACTIONS ON INFORMATION THEORY,~Vol.~1, No.~1, December~2025}
{WANG \MakeLowercase{et al.}: Revisiting Weighted Strategy for Non-stationary Parametric Bandits and MDPs}

\IEEEpubid{0000--0000~\copyright~2025 IEEE}

\maketitle
\newif\iffullversion
\fullversiontrue   

\begin{abstract}
        Non-stationary parametric bandits have attracted much attention recently. There are three principled ways to deal with non-stationarity, including sliding-window, weighted, and restart strategies. As many non-stationary environments exhibit gradual drifting patterns, the weighted strategy is commonly adopted in real-world applications. However, previous theoretical studies show that its analysis is more involved and the algorithms are either computationally less efficient or statistically suboptimal. This paper revisits the weighted strategy for non-stationary parametric bandits. In linear bandits (LB), we discover that this undesirable feature is due to an inadequate regret analysis, which results in an overly complex algorithm design. We propose a \emph{refined analysis framework}, which simplifies the derivation and, importantly, produces a simpler weight-based algorithm that is as efficient as window/restart-based algorithms while retaining the same regret as previous studies. Furthermore, our new framework can be used to improve regret bounds of other parametric bandits, including Generalized Linear Bandits (GLB) and Self-Concordant Bandits (SCB). For example, we develop a simple weighted GLB algorithm with an $\Ot(k_\mu^{\sfrac{5}{4}} c_\mu^{-\sfrac{3}{4}} d^{\sfrac{3}{4}} P_T^{\sfrac{1}{4}}T^{\sfrac{3}{4}})$ regret, improving the $\Ot(k_\mu^{2} c_\mu^{-1}d^{\sfrac{9}{10}} P_T^{\sfrac{1}{5}}T^{\sfrac{4}{5}})$ bound in prior work, where $k_\mu$ and $c_\mu$ characterize the reward model's nonlinearity, $P_T$ measures the non-stationarity, $d$ and $T$ denote the dimension and time horizon. Moreover, we extend our framework to non-stationary Markov Decision Processes (MDPs) with function approximation, focusing on Linear Mixture MDP and Multinomial Logit (MNL) Mixture MDP. For both classes, we propose algorithms based on the weighted strategy and establish dynamic regret guarantees using our analysis framework.
\end{abstract}

\begin{IEEEkeywords}
dynamic regret, non-stationary bandits, discounted factor, online MDPs, function approximation.
\end{IEEEkeywords}

\section{Introduction}
\label{sec:introduction}

\IEEEPARstart{N}{on-stationary} parametric bandits model the sequential decision-making problems where the reward distributions of each arm are structured with an unknown \emph{time-varying} parameter, which have been extensively studied in recent years~\cite{AISTATS'19:window-LB, NIPS'19:weighted-LB, AISTATS'20:restart,arXiv'20:NS-GLB,UAI'20:kim20a,arXiv'21:faury-driftingGLB, AISTATS'21:SCB-forgetting,COLT'21:black-box, AISTATS'22:weighted-GPB,arXiv'22:VanRoy,AISTATS'23:revisiting-weighted} due to their significance in many real-world non-stationary online applications such as recommendation systems~\cite{MLJ'21:TSnonstationary,AISTATS'21:change-preference}. This line of work also has a tight connection with the theoretical foundation of Reinforcement Learning (RL), particularly in the context of episodic Markov Decision Processes (MDPs) with function approximation~\cite{COLT'20:ChiJin,arXiv'20:Touati, AAAI'23:MNL_MDP,NIPS'24:MNL_MDP_efficient}. In these settings, parametric bandits are frequently employed to model both reward and transition dynamics across episodes. Moreover, when these underlying dynamics exhibit \emph{non-stationary} behavior across different episodes, non-stationary parametric bandit techniques naturally extend to capture the non-stationary dynamics of rewards and transitions~\cite{arxiv'22:ns_lin_mdp,NIPS'24:ALM_MDP_dynamic_regret}. 

Linear Bandits (LB) is a fundamental instance of parametric bandits, where the expected reward for pulling a certain arm at time $t$ is the inner product between the arm's feature vector $X_t$ and an unknown parameter $\theta_t$, namely, $\E[r_t \given X_t] = X_t^\T \theta_t$. Moreover, Generalized Linear Bandits (GLB) is introduced as a generalization of LB to model a broader range of reward functions (e.g. binary rewards), where the expected reward obeys a generalized linear model as $\E[r_t \given X_t] = \mu(X_t^\T \theta_t)$ with $\mu(\cdot)$ being an inverse link function. Furthermore, LB and GLB have fundamental applications in Markov Decision Processes (MDPs) with function approximation. As a representative instance, the Linear Mixture MDP adopts LB to model both reward functions and transition dynamics. Building on this, the Multinomial Logit (MNL) Mixture MDP was introduced to address the limitation of linear functions to model probabilities. By employing the MNL bandit (a special case of GLB), it effectively models transition probabilities and ensures valid distributions. Notably, the non-stationary models allow the parameter $\theta_t$ in the above models to be time-varying; therefore, we use dynamic regret~\cite{NIPS'14:besbes, JMLR'24:sword++} to evaluate the algorithm's performance. There are two typical non-stationarity measures to quantify the intensity of parameter changes: (i) in gradually drifting cases, \pathlength $P_T = \sum_{t=2}^{T} \norm{\theta_{t-1} - \theta_{t}}_2$ is used to measure the cumulative variations of the underlying parameters; and (ii) in piecewise-stationary cases, $\Gamma_T$ denotes the number of parameter changes in $T$ rounds. 

\IEEEpubidadjcol

To deal with non-stationarity, there are three principled ways: sliding-window, weighted, and restart strategies. For the sliding-window strategy, the learner maintains a time window that contains the most recent observed data to discard the outdated data. For the weighted strategy, the learner assigns more weight to the most recent data and less to older data, gradually forgetting the outdated data. For the restart strategy, the learner restarts the algorithm according to a certain period to discard the outdated data. The currently best-known result for non-stationary (generalized) linear bandits and episodic MDPs with linear function approximation is by~\cite{COLT'21:black-box}, who developed a minimax optimal algorithm consisting of a non-stationarity detector and a base algorithm that performs well in near-stationary environments. Whenever the detector examines that the non-stationarity exceeds a certain limit, the algorithm will \emph{restart} itself to handle the non-stationarity. In this sense, their algorithm can be regarded as an \emph{adaptive restart-based algorithm}. Building on the \LBrestart algorithm~\cite{AISTATS'20:restart} and a carefully designed non-stationarity detector with multi-scale explorations, their algorithm can achieve an $\Ot(\min\{\sqrt{\Gamma_T T}, P_T^{\sfrac{1}{3}} T^{\sfrac{2}{3}}\})$ optimal dynamic regret for both LB and GLB and $\Ot(\min\{\sqrt{\Gamma_T T}, \Delta^{\sfrac{1}{3}} T^{\sfrac{2}{3}}\})$ dynamic regret for Episodic MDPs where $\Delta$ represents the total \pathlength which includes the cumulative parameters variations of both reward function and transition function.

In real-world scenarios, the distributional change of environments often exhibits gradually drifting patterns~\cite{NSR'22:OpenML,COLT'10:concept-drift,COLT'13:Chiang,ACM'14:survey-concept-drift, TIT'24:smooth-change, FCSC'25:robust-domain-adaptation}, in such cases, a soft weighted strategy can be (empirically) more advantageous than a hard restart strategy to deal with the non-stationarity, as can be observed in bandits learning~\cite{NIPS'19:weighted-LB,AISTATS'20:restart,AISTATS'22:weighted-GPB}, classification with concept drift~\cite{SADM'12:adaptive-forgetting,TKDE'21:DFOP}, and adaptive system identification~\cite{guo1993performance, SP'17:FFRLS}. As a result, it will be highly attractive to design an \emph{adaptive weight-based algorithm} for non-stationary parametric bandits, which imposes weights to discount the importance of past data, and the weights are set adaptively according to environments. Towards this end, we examine existing methods for non-stationary parametric bandits based on the weighted strategy, and (surprisingly) find that current results exhibit \emph{unnatural} gaps compared to the other strategies, such as restart-based algorithms, as well as \emph{unnatural} regret analysis transitions from GLB to LB.

\begin{table*}
    \centering
    \caption{\small{Comparisons of our dynamic regret bounds to the previous best-known results for weight-based algorithms, under different non-stationary bandit and MDP settings. Below, $k_\mu/c_\mu$ characterize the non-linearity in GLB/SCB (reducing to $1$ for LB) and $\kappa$ denotes the non-linearity in MNL Mixture MDP; $d$ is the dimension, $H$ is the length of an episode in the MDP setting, \pathlength $P_T$ and the change number $\Gamma_T$ are non-stationarity measures for drifting and piecewise-stationary cases, respectively, and total \pathlength $\Delta$ measures the non-stationarity in the drifting MDP case.} }
    \label{table:results}
\renewcommand*{\arraystretch}{1.25}
    {\normalsize
    \begin{tabular}{c|r|r}
        \hline

        \hline
        \textbf{Settings}   &\multicolumn{1}{c|}{\textbf{Previous Work}} &\multicolumn{1}{c}{\textbf{Our Results}}\\\hline 
        \mystrut Drifting LB          &$\Ot\big(d^{\sfrac{7}{8}} P_T^{\sfrac{1}{4}} T^{\sfrac{3}{4}}\big)$~\cite{NIPS'19:weighted-LB}  &$\Ot\big({d^{\sfrac{3}{4}} P_T^{\sfrac{1}{4}} T^{\sfrac{3}{4}} }\big)$~[\pref{thm:LB-regret}] \\\hline
        \mystrut Drifting GLB              &$\Ot\Big(\frac{k_\mu^2}{c_\mu}d^{\sfrac{9}{10}} P_T^{\sfrac{1}{5}}T^{\sfrac{4}{5}}\Big)$~\cite{arXiv'21:faury-driftingGLB} &$\Ot\Big(\frac{k_\mu^{\sfrac{5}{4}}}{c_\mu^{\sfrac{3}{4}}}d^{\sfrac{3}{4}} P_T^{\sfrac{1}{4}}T^{\sfrac{3}{4}}\Big)$~[\pref{thm:GLB-regret}]\\\hline      
        \mystrut Drifting SCB              &$\Ot\Big(\frac{k_\mu^2}{c_\mu}d^{\sfrac{9}{10}} P_T^{\sfrac{1}{5}}T^{\sfrac{4}{5}}\Big)$~\cite{arXiv'21:faury-driftingGLB} &$\Ot\Big(\frac{k_\mu^{\sfrac{5}{4}}}{c_\mu^{\sfrac{1}{2}}}d^{\sfrac{3}{4}} P_T^{\sfrac{1}{4}}T^{\sfrac{3}{4}}\Big)$~[\pref{thm:SCB-regret}]\\\hline        
        \mystrut Piecewise Stationary SCB              &$\Ot\Big(\frac{1}{c_\mu^{\sfrac{1}{3}}}d^{\sfrac{2}{3}}\Gamma_T^{\sfrac{1}{3}}T^{\sfrac{2}{3}}\Big)$~\cite{AISTATS'21:SCB-forgetting} &$\Ot\big(d^{\sfrac{2}{3}}\Gamma_T^{\sfrac{1}{3}}T^{\sfrac{2}{3}}\big)$~[\pref{thm:SCB-PW-regret}]\\\hline      
        \mystrut Drifting Linear Mixture MDP       &$\Ot\Big(Hd\Delta^{\sfrac{1}{4}}T^{\sfrac{3}{4}}\Big)$~\cite{arxiv'22:ns_lin_mdp} &$\Ot\Big(Hd\Delta^{\sfrac{1}{4}}T^{\sfrac{3}{4}}\Big)$~[\pref{thm:LMDP-regret-bound}]\\\hline        
        \mystrut Drifting MNL Mixture MDP              &\multicolumn{1}{c|} \NA &$\Ot\big(\kappa^{-1}Hd\Delta^{\sfrac{1}{4}}T^{\sfrac{3}{4}}\big)$~[\pref{thm:MNL-regret-bound}]\\
        \hline

        \hline
    \end{tabular}
    }
\end{table*}

Those unnatural phenomena motivate us to revisit the algorithm design and regret analysis of the weighted strategy for non-stationary parametric bandits~\cite{NIPS'19:weighted-LB,AISTATS'21:SCB-forgetting,arXiv'21:faury-driftingGLB}. Indeed, the key ingredient is the \emph{estimation error} analysis for the weight-based estimator, which is usually decomposed into two parts --- one is the \emph{bias} part due to the parameter drift, and the other is the \emph{variance} part due to the stochastic noise. Generally, the bias part is controlled by non-stationary strategies, and the variance part is handled by carefully designed concentration.~\cite{NIPS'19:weighted-LB} provided the first analysis of a weight-based algorithm for linear bandits (LB). In their bias analysis, they introduced a virtual window size in the analysis to control the bias in order to mimic the analysis of a sliding-window strategy~\cite{AISTATS'19:window-LB}. For the variance analysis,~\cite{NIPS'19:weighted-LB} developed a weighted version of the self-normalized concentration inequality, which required a specially designed local norm. This introduced additional analytical complexity, since the previously studied sliding-window~\cite{AISTATS'19:window-LB} and restart strategy~\cite{AISTATS'20:restart} could directly apply the standard self-normalized concentration inequality~\cite{NIPS'11:AY-linear-bandits}. Consequently, they have to use \mbox{\emph{different}} local norms to control bias and variance parts, resulting in unexpected inefficiencies of algorithm design and complications of analysis. For LB, this leads to an algorithm \LBweight~\cite{NIPS'19:weighted-LB} requiring the maintenance of an extra covariance matrix as the local norm for the weighted version self-normalized concentration, which is less efficient than the window and restart-based algorithms~\cite{AISTATS'19:window-LB, AISTATS'20:restart}.

This analysis framework for weighted strategy introduces more severe issues in GLB, due to its more enriched and complicated structure. Specifically,~\cite{arXiv'21:faury-driftingGLB} studied the drifting GLB and designed a highly complex projection operation to control bias and variance parts following the way of~\cite{NIPS'19:weighted-LB} to mimic sliding-window analysis, and finally attained an $\Ot(d^{\sfrac{9}{10}} P_T^{\sfrac{1}{5}}T^{\sfrac{4}{5}})$ dynamic regret. Unfortunately, this \mbox{\emph{cannot}} recover the $\Ot(d^{\sfrac{7}{8}} P_T^{\sfrac{1}{4}}T^{\sfrac{3}{4}})$ bound enjoyed by the weight-based algorithm for drifting LB (a special case of GLB)~\cite{NIPS'19:weighted-LB}. Subsequently,~\cite{AISTATS'21:SCB-forgetting} investigated the non-stationary Self-Concordant Bandits (SCB), a subclass of GLB with many attractive structures. They can only conduct analysis under the piecewise-stationary setting, whereas they failed in the more challenging drifting setting, due to technical difficulties in bounding bias using conventional analysis. Moreover, since the weighted version of the self-normalized concentration for LB~\cite{NIPS'19:weighted-LB} could not be extended to the SCB setting, they further redesigned a weighted version specifically for SCB, building on the self-normalized concentration for stationary SCB~\cite{ICML'20:logistic-bandits}, which introduced substantial additional complexity into the analysis. As such, two open questions are proposed in their papers: (i) how to extend weight-based algorithms to drifting SCB; and (ii) how to replicate recent progress in stationary SCB~\cite{AISTATS'21:optimal-logistic-bandits} to improve dependence on $c_\mu$ in non-stationary SCB.

\parag{Our Results.~~} In this paper, we revisit the weighted strategy for non-stationary parametric bandits and MDPs. We discover that the earlier analysis framework for the weighted strategy may be inappropriate due to its reliance on mimicking the sliding-window analysis and the specifically designed weighted version of self-normalized concentration, which requires bounding the bias and variance components using \emph{different} local norms, and designing new weighted versions of this concentration tool for every new setting further introduces significant and unnecessary analytical complexity. As a result, there is no unified analysis framework that can be applied directly across different settings. To address this, we propose a \emph{refined analysis framework} for the weighted strategy. In our framework, a new analysis for the bias part is presented, while the variance part analysis only relies on the standard self-normalized concentration~\cite{NIPS'11:AY-linear-bandits} without the need for an additional weighted version and enables the use of a \emph{single} local norm to analyze both the bias and variance components. This refinement simplifies the analysis of the weighted strategy and makes the approach more broadly applicable to other decision-making settings. It also brings several benefits to algorithm design, including improved efficiency for LB and a resolution to the projection issue encountered in GLB and SCB. Furthermore, our analysis framework is not limited to the bandit setting and can be extended to online Markov decision processes (MDPs) scenarios. In this paper, we extend our results to two fundamental classes of MDPs: (i) non-stationary linear mixture MDPs, and (ii) non-stationary multinomial logit (MNL) mixture MDPs. Table~\ref{table:results} summarizes our main results compared with the best-known results for weight-based algorithms. Specifically, based on our refined analysis framework, we achieve: (i) for LB, our approach only needs to maintain one covariance instead of two and still enjoys the same regret as~\cite{NIPS'19:weighted-LB}; (ii) for GLB, our approach enjoys an $\Ot(k_\mu^{\sfrac{5}{4}} c_\mu^{-\sfrac{3}{4}} d^{\sfrac{3}{4}} P_T^{\sfrac{1}{4}} T^{\sfrac{3}{4}})$ regret bound, whose order of $d$, $P_T$ and $T$ matches that in LB case;  (iii) for SCB, we achieve an $\Ot(k_\mu^{\sfrac{5}{4}} c_\mu^{-\sfrac{1}{2}} d^{\sfrac{3}{4}} P_T^{\sfrac{1}{4}}T^{\sfrac{3}{4}})$ regret bound, and for piecewise stationary SCB, our approach achieves an $\Ot(d^{\sfrac{2}{3}}\Gamma_T^{\sfrac{1}{3}}T^{\sfrac{2}{3}})$ regret bound that can get rid of the influence of $c_\mu^{-1}$, resolving the second open problem asked by~\cite{AISTATS'21:SCB-forgetting}; (iv) for Linear Mixture MDP, we achieve an $\Ot(Hd\Delta^{\sfrac{1}{4}}T^{\sfrac{3}{4}})$ regret bound that enjoys the same regret as~\cite{arxiv'22:ns_lin_mdp} that was achieved by the restarted strategy; and (v) for MNL Mixture MDP, we establish the first dynamic regret bound of $\Ot(Hd\Delta^{\sfrac{1}{4}}T^{\sfrac{3}{4}})$ in the literature.

Compared with our earlier conference version~\cite{AISTATS'23:revisiting-weighted}, this extended version presents additional results, along with a simpler, clearer analysis and refined presentation. Firstly, this extended version further simplifies the analysis compared to our conference version~\cite{AISTATS'23:revisiting-weighted}. The earlier approach~\cite{NIPS'19:weighted-LB} relied on three key components: an extra covariance matrix, a weighted self-normalized concentration inequality, and a weighted potential lemma. In our conference version~\cite{AISTATS'23:revisiting-weighted}, we removed the need to maintain an additional covariance matrix. In this extended version, we take it a step further by showing that the standard self-normalized concentration inequality is sufficient for analyzing the weighted strategy. As a result, the only essential component for the weighted strategy analysis is the weighted potential lemma. The maintenance of two covariance matrices and the use of weighted self-normalized concentration, as done in previous works~\cite{NIPS'19:weighted-LB,AISTATS'23:revisiting-weighted}, are unnecessary. This simplification makes our analysis and algorithm both much simpler and more general. Secondly, this simplification makes our approach much more scalable and easier to extend to other bandit settings. Both earlier work~\cite{AISTATS'21:SCB-forgetting} and our conference version~\cite{AISTATS'23:revisiting-weighted} required designing a new weighted self-normalized concentration inequality when adapting the method to a new setting (e.g., SCB), which limited their generality. Our refined analysis removes this need, allowing the same framework to be applied across different problems without requiring problem-specific weighted concentration results. Thirdly, we extend our results to two fundamental settings of online MDPs with function approximation: linear mixture MDPs and multinomial logit (MNL) mixture MDPs. Notably, we provide the first dynamic regret guarantee for MNL mixture MDPs, demonstrating both the effectiveness and the broad applicability of our refined analytical framework for weighted strategy. 

\iffullversion \else Due to space constraints, we provide complete proofs only for linear bandits. Full proofs for generalized linear bandits, self-concordant bandits, Linear Mixture MDP, MNL Mixture MDP, and Bandits over Bandits can be found in the full version~\cite{arXiv'25:Revisiting_MDP}.\fi

\section{Related Work}
\label{sec:related_work}

\parag{Linear Bandits.}
The non-stationary LB problem was first studied by~\cite{AISTATS'19:window-LB}. They established an $\Omega(d^{\sfrac{2}{3}}P_T^{\sfrac{1}{3}}T^{\sfrac{2}{3}})$ minimax lower bound and then proposed \LBwindow algorithm based on the sliding-window strategy. Then \cite{NIPS'19:weighted-LB} proposed the \LBweight algorithm based on a weighted strategy, and \cite{AISTATS'20:restart} proposed the \LBrestart algorithm based on a restart strategy. Note that the three works proved an $\Ot(d^{\sfrac{2}{3}} P_T^{\sfrac{1}{3}}T^{\sfrac{2}{3}})$ regret bound, but there exists a subtle technical gap in the regret analysis as identified by~\cite{arxiv'21:NSLB_revisit_note}. After fixing the technical gap, all three aforementioned algorithms achieve an $\Ot(d^{\sfrac{7}{8}} P_T^{\sfrac{1}{4}}T^{\sfrac{3}{4}})$ regret bound~\cite{arxiv'21:NSLB_revisit_note, arXiv'21:restartucb}. However, to achieve this result, all three algorithms require the knowledge of the path length $P_T$ as an input at the beginning of algorithmic implementation, which is undesired. To address this,~\cite{AISTATS'19:window-LB} proposed the bandits-over-bandits (BOB) strategy as a meta-algorithm to learn the unknown parameter $P_T$, which can be combined with the above algorithms to remove the requirement of this prior knowledge. Afterward,~\cite{COLT'21:black-box} proposed the \MASTER algorithm with theoretically optimal~$\Ot(\min\{d\sqrt{\Gamma_T T}, d P_T^{\sfrac{1}{3}} T^{\sfrac{2}{3}}\})$ regret bound, also without requiring the non-stationarity level of environments (that is, $\Gamma_T$ and $P_T$) in advance, but requires fixed arm set assumption. Most recently, there has also been some new progress in the non-stationary (linear) bandits~\cite{arXiv'22:VanRoy, colt'22:most_significant_arm, arxiv'22:new_look, arXiv'23:LB_memory}. Furthermore, \cite[Remark 4]{Manage'22:window-LB} bypassed the aforementioned technical gap by restarting adversarial LB algorithms. However, it is important to note that this only applies to LB and requires fixed arm set assumption and known $P_T$.

\parag{Generalized Linear Bandits.}
The GLB problem was first introduced by~\cite{NIPS10:GLM-infinite}. They proposed \GLBstatic algorithm, achieving an $\Ot(k_\mu c_\mu^{-1}d\sqrt{T})$ regret bound where $k_\mu$, $c_\mu$ are the problem-dependent constants and $k_\mu/c_\mu$ represents the nonlinearity of the generalized linear model.~\cite{arXiv'21:faury-driftingGLB} extended the stationary GLB to the drifting case, and proposed \GLBweight algorithm with $\Ot(k_\mu^{2} c_\mu^{-1}d^{\sfrac{9}{10}} P_T^{\sfrac{1}{5}}T^{\sfrac{4}{5}})$ regret bound.~\cite{ICML'20:logistic-bandits} studied a specific instance of GLB called Logistic Bandits (LogB). They first pointed out that under the GLB setting, the problem-dependent constant $1/c_\mu$ could be very large in some cases like LogB, then they proposed the Logistic-UCB-1 algorithm with an $\Ot(c_\mu^{-\sfrac{1}{2}}d\sqrt{T})$ regret bound and the Logistic-UCB-2 algorithm with an $\Ot(d\sqrt{T}+c_\mu^{-1})$ regret bound. Subsequently, \cite{AISTATS'21:optimal-logistic-bandits} established an $\Omega(d\sqrt{\dmu(X_*^\T\theta_*)T})$ regret lower bound for logistic bandits and provided an optimal algorithm OFULog. \cite{AISTATS'21:SCB-forgetting} generalized the logistic bandits to self-concordant bandits and considered the piecewise-stationary case; their algorithm enjoys an $\Ot(c_\mu^{-\sfrac{1}{3}}d^{\sfrac{2}{3}} \Gamma_T^{\sfrac{1}{3}}T^{\sfrac{2}{3}})$ regret bound. To deal with $P_T$-unknown cases,~\cite{arXiv'21:faury-driftingGLB}  proposed a parameter-free algorithm by combining \GLBweight with the BOB strategy, but the final result is still suboptimal. Meanwhile, the black-box algorithm~\cite{COLT'21:black-box} can adaptively restart the stationary algorithm \GLBstatic~\cite{NIPS10:GLM-infinite} and achieve an~$\Ot(\min{k_\mu c_\mu^{-1}\sqrt{\Gamma_T T}, k_\mu^{4/3}c_\mu^{-1} d P_T^{\sfrac{1}{3}} T^{\sfrac{2}{3}}})$ regret, which matches the lower bound for non-stationary LB in terms of $P_T$ and $T$, and therefore optimal for non-stationary GLBs, since LB is a special case of GLB (i.e. $\mu(x) = x$). Recently, there has been notable progress in GLB regarding its efficiency and regret optimality in terms of non-linearity. Readers can refer to~\cite{NIPS'25:GLB_one_pass_update} for the latest advancements. Nonetheless, these results focus on the static regret setting.

\parag{MDP with Function Approximation.} Reinforcement learning with function approximation has attracted significant attention recently~\cite{L4DC'20:lin_mix_mdp,ICML'20:lin_mix_mdp,arxiv'22:ns_lin_mdp,AAAI'23:MNL_MDP,NIPS'24:MNL_MDP_efficient}, with two fundamental approaches: linear function approximation and generalized linear function approximation. Among these, Linear Mixture MDP was first introduced by~\cite{L4DC'20:lin_mix_mdp,ICML'20:lin_mix_mdp}, which is a representative model for linear function approximation. They proposed the UCRL-VTR algorithm, achieving a regret bound of $\Ot(d\sqrt{H^3T})$, where $H$ is the episode horizon. Building on this,~\cite{arxiv'22:ns_lin_mdp} extended the stationary Linear Mixture MDP to drifting case, they establish $\Omega(d^{5/6} \Delta^{1/3} H^{2/3} T^{2/3})$ minimax lower bound for non-stationary linear mixture MDPs, and then proposed the SW-LSVI-UCB algorithm, which achieves a regret bound of $\Ot(Hd \Delta^{\sfrac{1}{4}}T^{\sfrac{3}{4}})$, where $\Delta$ quantifies the cumulative variation of the underlying parameters. To better capture the probabilistic nature of transition dynamics,~\cite{AAAI'23:MNL_MDP} explored a class of generalized function approximation models and introduced the MNL Mixture MDP, which leverages the MNL function to model transitions. They proposed the UCRL-MNL algorithm, achieving a regret bound of $\Ot(\kappa^{-1}d\sqrt{H^3 T})$, where $\kappa$ represents the nonlinearity of the MNL model, $H$ is the episode horizon, and $K$ is the total number of episodes. 
Later, \cite{NIPS'24:MNL_MDP_efficient} further achieved an $\Ot(d\sqrt{H^3 T} + \kappa^{-1} d^2 H^2)$ regret bound for MNL Mixture MDP.

\section{Linear Bandit}
\label{sec:LB}
In this section, we first introduce the problem setting of non-stationary LB, and describe our \LBweightours algorithm and its theoretical guarantee. Then we present a proof sketch of Lemma~\ref{lemma:LB-estimation-error} to illustrate our proposed analysis framework in detail. Notably, our algorithm achieves the same regret bound as the best-known weight-based algorithm~\cite{NIPS'19:weighted-LB} without relying on a specially designed weighted version of self-normalized concentration and can be more efficient.

\subsection{Problem Setting}
\label{sec:LB-problem-setting}
At each round $t$, the learner chooses an arm $X_t$ from a feasible set $\X \subseteq \R^d$ and receives a reward $r_t$ such that
\begin{equation}
  \label{eq:LB-model}
  r_t = X_t^\T\theta_t + \eta_t,
\end{equation}
where $\theta_t \in \R^d$ is the unknown time-varying parameter and $\eta_t$ is the $R$-sub-Gaussian noise. The goal of the learner is to minimize the following (pseudo) \emph{dynamic regret}:
\begin{equation}
  \label{eq:LB-regret}
  \DReg_T = \sum_{t=1}^{T} \max_{\x \in \X} \x^{\T}\theta_t - \sum_{t=1}^{T} X_t^{\T}\theta_t,
\end{equation}
which is the cumulative regret against the optimal strategy that has full information of the unknown parameter. Here we consider the drifting case where we use path length $P_T = \sum_{t=2}^{T} \norm{\theta_{t-1} - \theta_{t}}_2$ as the non-stationarity measure. \blue{Notice that in this paper we focus on a fixed arm set $\mathcal{X}$. A time-varying arm set $\mathcal{X}_t$ does not introduce any additional difficulty for our weight-based algorithm. The only difference is that the optimal comparator in the dynamic regret~\eqref{eq:LB-regret} would need to be updated to $\max_{\x \in \mathcal{X}_t} \x^{\T} \theta_t$, and the arm selection step~\eqref{eq:LB-select-criteria} would be performed over the time-varying arm set $\mathcal{X}_t$ instead of $\mathcal{X}$. This does not affect the analysis. For simplicity, we stick to the fixed arm set setting in this paper.}

We work under the following standard boundedness assumption~\cite{NIPS'11:AY-linear-bandits,AISTATS'19:window-LB,NIPS'19:weighted-LB,AISTATS'20:restart}.

\begin{myAssum} \label{ass:bounded-norm} 
  \textnormal{
  The feasible set and unknown parameters are assumed to be bounded: $\forall \x \in \X$, $\norm{\x}_2 \leq L$, and $\theta_t \in \Theta$ holds for all $t \in [T]$ where $\Theta \define \{\theta \mid \norm{\theta}_2 \leq S\}$.}
\end{myAssum}
\subsection{Algorithm and Regret Guarantee}
\label{sec:LB-algorithm}
We propose the \LBweightours algorithm, which attains the same regret guarantees as previous methods while enjoying better efficiency. We first give the employed estimator and then derive its estimation error upper bound by our refined analysis framework, which is the key for algorithm design and regret analysis. Based on the estimation error bound, we propose our selection criterion and finally give the theoretical guarantee on its dynamic regret.

\parag{Estimator.~} \label{alg:LB-estimator}
We adopt a weighted regularized least square estimator similar to \LBweight~\cite{NIPS'19:weighted-LB}, the estimator $\thetah_t$ is the solution to the following problem,
\begin{equation}
  \label{eq:LB-estimator}
\min_{\theta}~\lambda\norm{\theta}_2^2 + \sum_{s = 1}^{t-1}w_{t-1,s}\sbr{X_s^\T \theta -r_s}^2,
\end{equation}
where $\lambda > 0$ is the regularization
coefficient and $\forall t\in[T], s\in[t-1], w_{t-1,s}$ is the weighted factor. To deal with non-stationarity, we set $w_{t,s} = \gamma^{t-s}$, where $\gamma \in (0, 1)$ is the discounted factor. This approach assigns lower weights to older data while giving higher weights to more recent data, thereby better adapting to changes over time. Clearly, $\thetah_t$ admits a \blue{closed-form solution} $\thetah_t = V_{t-1}^{-1}(\sum_{s=1}^{t-1}w_{t-1,s}r_sX_s)$, where $V_{t} = \lambda I_d + \sum_{s=1}^{t} w_{t,s} X_s X_s^\T, V_0  = \lambda I_d$ is the covariance matrix. Note that this closed-form solution can be further transformed into a recursive formula such that $V_t = \gamma V_{t-1} + X_tX_t^\top  + (1-\gamma) \lambda I_d$ where we set $w_{t,s} = \gamma^{t-s}$. This allows it to be updated online without storing historical data, which is another important computational advantage of the weighted strategy over the sliding-window strategy.

\parag{Upper Confidence Bounds.~~}\label{alg:LB-UCB}
For estimator~\eqref{eq:LB-estimator}, we provide the following estimation error bound. Notably, this is \emph{different} from the previous result~{\cite[\blue{Appendix} B.3, second and third steps in Proof of Theorem 2]{NIPS'19:weighted-LB}}. This difference is key to our algorithm's improved efficiency, as we discuss later.
\begin{myLemma}
  \label{lemma:LB-estimation-error}
  For any $\x \in \X$, $\gamma\in(0,1)$ and $\delta \in (0,1)$, with probability at least $1-\delta$, the following holds for all $t \in [T]$
  \begin{align}
      &\abs{\x^\T(\thetah_t-\theta_t)} \label{eq:LB-estimation-error}\\
      &\leq L^2\sqrt{\frac{d}{\lambda}} \sum_{p=1}^{t-1}\sqrt{\sum_{s=1}^{p}w_{t-1,s}}\norm{\theta_p -\theta_{p+1}}_2 + \beta_{t-1}\norm{\x}_{V_{t-1}^{-1}}, \notag
  \end{align}
  where $\beta_t$ is the radius of the confidence region set by
  \begin{equation}
    \label{eq:LB-confidence-radius}
    \beta_{t}=\sqrt{\lambda}S+R\sqrt{2\log\frac{1}{\delta}+d\log\sbr{1+\frac{L^2 \sum_{s=1}^{t}w_{t,s}}{\lambda d}}}.
  \end{equation}
  \end{myLemma}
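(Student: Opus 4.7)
The plan is to prove Lemma~\ref{lemma:LB-estimation-error} via a clean bias/variance decomposition of the weighted ridge estimator, carried out entirely in the single local norm $\norm{\cdot}_{V_{t-1}^{-1}}$. I start from the closed form $\thetah_t=V_{t-1}^{-1}\sum_s w_{t-1,s}r_sX_s$, substitute $r_s=X_s^\top\theta_s+\eta_s$, and use the identity $V_{t-1}\theta_t=\lambda\theta_t+\sum_s w_{t-1,s}X_sX_s^\top\theta_t$ to write
\begin{align*}
\thetah_t-\theta_t &= V_{t-1}^{-1}\sum_{s=1}^{t-1} w_{t-1,s}X_sX_s^\top(\theta_s-\theta_t)\\
&\quad -\lambda V_{t-1}^{-1}\theta_t + V_{t-1}^{-1}\sum_{s=1}^{t-1} w_{t-1,s}\eta_sX_s.
\end{align*}
Multiplying by $\x^\top$ and applying the triangle inequality reduces the claim to bounding a \emph{drift bias}, a \emph{regularization bias}, and a \emph{noise} term.

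For the drift bias, I telescope $\theta_s-\theta_t=-\sum_{p=s}^{t-1}(\theta_{p+1}-\theta_p)$ and swap the order of summation to rewrite the drift bias as $\sum_{p=1}^{t-1}\x^\top V_{t-1}^{-1}A_p(\theta_p-\theta_{p+1})$ with $A_p\define\sum_{s=1}^p w_{t-1,s}X_sX_s^\top\preceq V_{t-1}$. Since $A_p\preceq V_{t-1}$, Cauchy--Schwarz in the $V_{t-1}^{-1}$ norm gives $\abs{\x^\top V_{t-1}^{-1}A_p v}\leq \norm{\x}_{V_{t-1}^{-1}}\norm{v}_{A_p}$, while $\norm{v}_{A_p}\leq L\norm{v}_2\sqrt{\sum_{s=1}^pw_{t-1,s}}$ follows from Assumption~\ref{ass:bounded-norm}. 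Combining with the crude bound $\norm{\x}_{V_{t-1}^{-1}}\leq L\sqrt{d/\lambda}$ yields the first (bias) term in~\eqref{eq:LB-estimation-error}. The regularization bias is handled by one more Cauchy--Schwarz, $\lambda\abs{\x^\top V_{t-1}^{-1}\theta_t}\leq\sqrt{\lambda}\,S\,\norm{\x}_{V_{t-1}^{-1}}$, using $\norm{\theta_t}_{V_{t-1}^{-1}}\leq S/\sqrt{\lambda}$, which contributes the $\sqrt{\lambda}S$ summand of $\beta_{t-1}$.

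The noise term is the main obstacle and the central technical contribution of the refined framework. Cauchy--Schwarz gives
\begin{align*}
\abs[\Big]{\x^\top V_{t-1}^{-1}\sum_s w_{t-1,s}\eta_sX_s}\leq \norm{\x}_{V_{t-1}^{-1}}\cdot\norm[\Big]{\sum_s w_{t-1,s}\eta_sX_s}_{V_{t-1}^{-1}}\!,
\end{align*}
and the task is to bound the second factor using only the \emph{standard} self-normalized concentration of~\cite{NIPS'11:AY-linear-bandits}. The key observation is that the rescaled pair $(\sqrt{w_{t-1,s}}\,\eta_s,\sqrt{w_{t-1,s}}\,X_s)_{s=1}^{t-1}$ still defines a predictable, conditionally $R$-sub-Gaussian sequence (since $w_{t-1,s}\leq 1$), its cumulative outer-product matrix equals $V_{t-1}-\lambda I_d$, and its martingale sum equals $\sum_s w_{t-1,s}\eta_sX_s$. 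Hence Abbasi--Yadkori's inequality applies \emph{as is} to this rescaled sequence, producing the remaining $R\sqrt{2\log(1/\delta)+d\log(1+L^2\sum_s w_{t-1,s}/(\lambda d))}$ part of $\beta_{t-1}$. This bypasses both the bespoke weighted self-normalized inequality and the auxiliary covariance matrix maintained in~\cite{NIPS'19:weighted-LB}, and is precisely what keeps the whole analysis confined to a single local norm. Summing the three bounds and collecting constants finishes the proof.
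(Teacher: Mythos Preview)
Your proof is correct and follows the same overall framework as the paper: decompose $\thetah_t-\theta_t$ into drift bias, regularization bias, and noise; bound all three in the single local norm $V_{t-1}^{-1}$; and handle the noise by rescaling $(\eta_s,X_s)\mapsto(\sqrt{w_{t-1,s}}\,\eta_s,\sqrt{w_{t-1,s}}\,X_s)$ so that the standard Abbasi--Yadkori inequality (Theorem~\ref{thm:snc-AY}) applies directly. This last step is exactly the paper's Lemma~\ref{lemma:LB-B_t-bound}, and your treatment of the regularization bias is identical as well.

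Where you differ is in the drift-bias bound. The paper (Lemma~\ref{lemma:LB-A_t-bound}) telescopes, extracts $\sum_{s=1}^p w_{t-1,s}\norm{X_s}_{V_{t-1}^{-1}}$, and bounds it via Cauchy--Schwarz together with the trace argument of Lemma~\ref{lemma:d}, which is where the factor $\sqrt{d}$ comes from. You instead exploit the matrix inequality $A_p\preceq V_{t-1}$ to get $\abs{\x^\top V_{t-1}^{-1}A_pv}\leq\norm{\x}_{V_{t-1}^{-1}}\norm{v}_{A_p}$ directly (this works because $0\preceq V_{t-1}^{-1/2}A_pV_{t-1}^{-1/2}\preceq I$ implies $A_pV_{t-1}^{-1}A_p\preceq A_p$). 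Your route is a bit cleaner and actually yields a bound \emph{tighter by $\sqrt{d}$}: the correct crude estimate is $\norm{\x}_{V_{t-1}^{-1}}\leq L/\sqrt{\lambda}$, not $L\sqrt{d/\lambda}$, so your insertion of the extra $\sqrt{d}$ is unnecessary (though harmless, since $d\geq 1$, and it makes the bound match the statement verbatim). In short, both arguments are valid; yours dispenses with the auxiliary trace lemma and gives a sharper constant for free.
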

  \begin{algorithm}[!t]
    \caption{\LBweightours}
    \label{alg:LB-WeightUCB}
  \begin{algorithmic}[1]
  \REQUIRE time horizon $T$, discounted factor $\gamma$, confidence $\delta$, regularizer $\lambda$, parameters $S$, $L$ and $R$\\
  \STATE Set $V_0 = \lambda I_d$, $\thetah_1 = \mathbf{0}$ and compute $\beta_{0}$ by~\eqref{eq:LB-confidence-radius}
  \FOR{$t = 1,2,...,T$}
    \STATE Select $X_t = \argmax_{\x \in \X} \bbr{ \langle \x,\thetah_t\rangle + \beta_{t-1} \norm{\x}_{V_{t-1}^{-1}}}$
    \STATE Receive the reward $r_t$
    \STATE Update $V_{t} = \gamma V_{t-1} + X_t X_t^\T +(1-\gamma)\lambda I_d$ \label{LB:update}
    \STATE Compute $\thetah_{t+1}$ by~\eqref{eq:LB-estimator} and $\beta_{t}$ by~\eqref{eq:LB-confidence-radius} with $w_{t,s} = \gamma^{t-s}$
  \ENDFOR
  \end{algorithmic}
  \end{algorithm}
  The proof of Lemma~\ref{lemma:LB-estimation-error} is presented in Appendix~\ref{sec:LB-estimation-error-proof}. Based on Lemma~\ref{lemma:LB-estimation-error}, we can specify the arm selection criterion as
  \begin{equation}
    \label{eq:LB-select-criteria}
    \begin{split}
        X_t = {}&\argmax_{\x \in \X} \bbr{ \commoninner{\x}{\thetah_t} + \beta_{t-1}\norm{\x}_{V_{t-1}^{-1}}}.
    \end{split}
  \end{equation}
The overall algorithm is summarized in Algorithm~\ref{alg:LB-WeightUCB}. From the update procedure in Line~\ref{LB:update} of Algorithm~\ref{alg:LB-WeightUCB}, we can observe that our algorithm needs to maintain a \emph{single} covariance matrix $V_{t-1} \in \R^{d\times d}$. By contrast, the selection criterion of the algorithm proposed in~\cite{NIPS'19:weighted-LB} is 
$$X_t = \argmax_{\x \in \X} \bbr{ \commoninner{\x}{\thetah_t} + \beta_{t-1}\norm{\x}_{V_{t-1}^{\prime-1}\Vt_{t-1}V_{t-1}^{\prime-1}}},$$ where $\beta_{t-1}$ is similar to those in our selection criterion~\eqref{eq:LB-select-criteria}, $V^{\prime}_{t-1} = \lambda I_d + \sum_{s=1}^{t-1} \gamma^{t-s-1} X_s X_s^\T \in \R^{d\times d} $, and $\Vt_{t-1} = \lambda I_d + \sum_{s=1}^{t-1} \gamma^{2(t-s-1)} X_s X_s^\T \in \R^{d\times d} $ is an extra covariance matrix. Thus, our algorithm is more efficient than their algorithm since it only needs to maintain one covariance matrix instead of two. This owes to the fact that our analysis of Lemma~\ref{lemma:LB-estimation-error} only uses $V_{t-1}^{-1}$ as the local norm to analyze both bias and variance parts, but the algorithm of~\cite{NIPS'19:weighted-LB} requires to use $l_2$-norm and $V_{t-1}^{-1}\Vt_{t-1}V_{t-1}^{-1}$-norm to control bias and variance parts, respectively. In Section~\ref{sec:LB-analysis-framework}, we provide a sketch of the analysis framework for Lemma~\ref{lemma:LB-estimation-error}, and a more detailed discussion is presented in Appendix~\ref{sec:LB-review}. Furthermore, we prove that our algorithm enjoys the same (even slightly better in $d$) regret as the algorithm of~\cite{NIPS'19:weighted-LB}.
\begin{myThm}
  \label{thm:LB-regret}
  Let the weighted factor $w_{t,s} = \gamma^{t-s}$, where $\gamma\in(1/T,1)$, and let $\lambda = d$, the dynamic regret of \LBweightours (Algorithm~\ref{alg:LB-WeightUCB}) is bounded with probability at least $1-1/T$, by 
  \begin{equation}\nonumber
  \begin{split}
      \DReg_T\leq{}& \Ot\sbr{\frac{1}{(1-\gamma)^{\sfrac{3}{2}}}P_T + d(1-\gamma)^{\sfrac{1}{2}}T}.
  \end{split}
  \end{equation}
  Furthermore, by setting the discounted factor optimally as $\gamma = 1- \max\{1/T, \sqrt{P_T/(dT)}\}$, \LBweightours ensures
  \begin{equation}\nonumber
      \DReg_T \leq
      \begin{cases}
      \Ot\sbr{d^{ \sfrac{3}{4}} P_T^{ \sfrac{1}{4}} T^{ \sfrac{3}{4}}} & \mbox{ when } P_T \geq d/T,\vspace{2mm}\\
      \Ot(d\sqrt{T}) & \mbox{ when } P_T < d/T.
      \end{cases} 
  \end{equation}
\end{myThm}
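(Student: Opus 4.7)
The plan is to combine the estimation error decomposition of Lemma~\ref{lemma:LB-estimation-error} with the optimism of the UCB rule~\eqref{eq:LB-select-criteria} to bound the per-round dynamic regret, then sum the bias and variance contributions separately, and finally optimize the discount factor $\gamma$.

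First, I would establish a per-round regret bound. Let $X_t^* = \argmax_{\x\in\X}\x^\T\theta_t$. Since the bias term in Lemma~\ref{lemma:LB-estimation-error} is independent of the probed vector $\x$, applying the lemma to both $X_t^*$ and $X_t$ and invoking the UCB property $\commoninner{X_t^*}{\thetah_t} + \beta_{t-1}\norm{X_t^*}_{V_{t-1}^{-1}} \leq \commoninner{X_t}{\thetah_t} + \beta_{t-1}\norm{X_t}_{V_{t-1}^{-1}}$ gives
\begin{equation*}
r_t := (X_t^* - X_t)^\T \theta_t \leq 2\, b_{t-1} + 2\beta_{t-1}\norm{X_t}_{V_{t-1}^{-1}},
\end{equation*}
where $b_{t-1} = L^2\sqrt{d/\lambda}\sum_{p=1}^{t-1}\sqrt{\sum_{s=1}^{p}\gamma^{t-1-s}}\,\norm{\theta_p - \theta_{p+1}}_2$, together with the trivial truncation $r_t \leq 2LS$ from Assumption~\ref{ass:bounded-norm} which is used when the UCB term exceeds $1$.

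Second, I would sum the two pieces over $t\in[T]$. For the bias, using $\sum_{s=1}^{p}\gamma^{t-1-s} \leq \gamma^{t-1-p}/(1-\gamma)$, swapping the order of summation and applying $\sum_{k\geq 0}\gamma^{k/2} \leq 2/(1-\gamma)$ gives $\sum_{t=1}^T b_{t-1} \leq \Ot\bigl(P_T/(1-\gamma)^{3/2}\bigr)$ after plugging $\lambda = d$. For the variance, Cauchy--Schwarz yields $\sum_t\beta_{t-1}\min\{1,\norm{X_t}_{V_{t-1}^{-1}}\} \leq \beta_T\sqrt{T\sum_t \min\{1,\norm{X_t}_{V_{t-1}^{-1}}^2\}}$, and a weighted elliptical potential lemma tailored to the update $V_t = \gamma V_{t-1} + X_tX_t^\T + (1-\gamma)\lambda I_d$ (proved by telescoping $\log\det V_t$ and paying a $Td\log(1/\gamma)$ correction for the multiplicative $\gamma$ in the update) delivers $\sum_t \min\{1,\norm{X_t}_{V_{t-1}^{-1}}^2\} = \Ot(dT(1-\gamma))$. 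Combined with $\beta_T = \Ot(\sqrt{d})$ from~\eqref{eq:LB-confidence-radius}, the variance contribution becomes $\Ot(dT\sqrt{1-\gamma})$.

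Summing the two contributions yields $\DReg_T \leq \Ot\bigl(P_T/(1-\gamma)^{3/2} + dT\sqrt{1-\gamma}\bigr)$, which is the first claim. Balancing the two terms produces the optimal choice $1-\gamma = \sqrt{P_T/(dT)}$ when $P_T \geq d/T$, yielding the $\Ot(d^{3/4} P_T^{1/4} T^{3/4})$ rate, whereas clipping $1-\gamma$ at $1/T$ when $P_T < d/T$ gives the $\Ot(d\sqrt{T})$ bound (since then $P_T T^{3/2}\leq d\sqrt{T}$). The main obstacle I anticipate is establishing the weighted elliptical potential bound on $\sum_t \min\{1,\norm{X_t}_{V_{t-1}^{-1}}^2\}$ directly in terms of the single covariance $V_{t-1}$: the classical rank-one determinant telescoping of~\cite{NIPS'11:AY-linear-bandits} must be adapted to absorb the multiplicative factor $\gamma$ introduced at every step, which is precisely where the refined framework of this paper pays off by dispensing with the auxiliary covariance $\widetilde V_t$ used in~\cite{NIPS'19:weighted-LB}; all other steps are routine double-sum manipulations once this potential bound and Lemma~\ref{lemma:LB-estimation-error} are in hand.
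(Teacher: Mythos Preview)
Your proposal is correct and follows essentially the same route as the paper's proof: per-round optimism via Lemma~\ref{lemma:LB-estimation-error} and the UCB rule, then summing the bias by swapping the double sum and collapsing the geometric series to get $P_T/(1-\gamma)^{3/2}$, and summing the variance via Cauchy--Schwarz together with the weighted potential lemma (your anticipated ``main obstacle'' is exactly the paper's Lemma~\ref{lemma:potential-lemma}, proved precisely by the $\log\det$ telescoping you describe). The only cosmetic difference is that the paper bounds $\sum_t\|X_t\|_{V_{t-1}^{-1}}^2$ directly without the $\min\{1,\cdot\}$ truncation (it absorbs the boundedness into the factor $\max\{1,L^2/\lambda\}$ inside Lemma~\ref{lemma:potential-lemma}), so your truncation and the $r_t\le 2LS$ clipping are not actually needed here.
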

Compared to previous works~\cite{AISTATS'19:window-LB,NIPS'19:weighted-LB,AISTATS'20:restart}, our approach improves from $\Ot(d^{ \sfrac{7}{8}} P_T^{ \sfrac{1}{4}} T^{ \sfrac{3}{4}})$ to $\Ot(d^{ \sfrac{3}{4}} P_T^{ \sfrac{1}{4}} T^{ \sfrac{3}{4}})$ when $P_T \geq d/T$. We remark that this improved dimensional dependence is simply owing to the more refined tuning of the discounted factor than the one used by~\cite{NIPS'19:weighted-LB}, who did not take the dimension into the tuning. Their algorithm and regret can also benefit from the refined tuning. The proof of~\pref{thm:LB-regret} is in Appendix~\ref{sec:LB-regret-proof}.

Further, notice that the optimal choice of discounted factor $\gamma$ requires knowing $P_T$ in advance. To achieve a parameter-free result for unknown $P_T$ case, our algorithm can be combined with the BOB strategy~\cite{AISTATS'19:window-LB} and achieves an $\Ot(d^{ \sfrac{3}{4}} P_T^{ \sfrac{1}{4}} T^{ \sfrac{3}{4}})$ bound.\iffullversion We provide the BOB version of \LBweightours and detailed regret analysis in Appendix~\ref{sec:BOB}.\fi However, this bound is not optimal, and it is possible to design an adaptive weight-based algorithm based on our result, in the spirit of~\cite{COLT'21:black-box}, to further achieve an optimal dynamic regret without prior knowledge of $P_T$. This is very challenging since at each round $t \in [T]$, we can only receive one data pair $(X_t,r_t)$, which is not adequate for the learner to real-time update the discounted factor $\gamma_{t}$. At the same time, \MASTER algorithm~\cite{COLT'21:black-box} can be considered as a special case of the adaptive weight-based algorithm since it only includes two circumstances: setting $\gamma_t = 0$ to restart at time $t$ and setting $\gamma_t = 1$ to keep going. However, for the adaptive weight-based algorithm, the choice of the discounted factor $\gamma_t$ can be continuous in $[0,1]$, which is more difficult than a binary decision. We leave this as an important open question for future study. \blue{Additionally, we note that our approach can handle time-varying arm set settings, whereas MASTER relies on the fixed arm set assumption. It remains unclear whether optimal regret can be achieved under time-varying arm set.}

\subsection{Refined analysis framework}
\label{sec:LB-analysis-framework}
In this section, we present a proof sketch for Lemma~\ref{lemma:LB-estimation-error} (estimation error analysis for weighted linear bandits), which also serves as a description of our proposed analysis framework.

\begin{proof}[Proof Sketch]
From the model assumption~\eqref{eq:LB-model} and the estimator~\eqref{eq:LB-estimator}, the estimation error can be split into two parts,
\begin{equation*}
  \begin{split}
  \thetah_t - \theta_t  ={}&\underbrace{V_{t-1}^{-1}\sbr{\sum_{s=1}^{t-1}w_{t-1,s}  X_s X_s^\T \sbr{\theta_s -\theta_t}}}_{\bias} \\{}&+ \underbrace{V_{t-1}^{-1}\sbr{ \sum_{s=1}^{t-1}w_{t-1,s}\eta_sX_s -\lambda\theta_t}}_{\variance},
  \end{split}
\end{equation*}
where the \emph{bias} part is caused by the parameter drifting, and the \emph{variance} part is due to the stochastic noise. Then, by the Cauchy-Schwarz inequality, for any $\x\in\X$,
\begin{equation}
\begin{split}
  \label{eq:LB-decompose}
    |\x^\T (\thetah_t - \theta_t)| \leq \norm{\x}_{V_{t-1}^{-1}}(A_t+B_t),\\
\end{split}
\end{equation}
where $A_t = \|\sum_{s=1}^{t-1}w_{t-1,s}  X_s X_s^\T \sbr{\theta_s -\theta_t}\|_{V_{t-1}^{-1}}$ and $B_t = \|\sum_{s=1}^{t-1}w_{t-1,s}\eta_sX_s -\lambda\theta_t\|_{V_{t-1}^{-1}}$.

Choosing an appropriate local norm for~\eqref{eq:LB-decompose} is the key to simplifying and improving the estimation error analysis. Note that the previous analysis~\cite{NIPS'19:weighted-LB} had to use \emph{different} local norms: using $l_2$-norm in the bias part, and $V_{t-1}^{\prime-1}\Vt_{t-1}V_{t-1}^{\prime-1}$-norm in the variance part, namely,
\begin{equation}
  \begin{split}
    \label{eq:previous-decompose}
      |\x^\T (\thetah_t - \theta_t)| \leq \norm{\x}_{2}A_t'+\norm{\x}_{V_{t-1}^{\prime-1}\Vt_{t-1}V_{t-1}^{\prime-1}}B_t',\\
  \end{split}
\end{equation}
where we have $A_t' = \|V_{t-1}^{\prime-1}\sum_{s=1}^{t-1}\gamma^{t-s-1}  X_s X_s^\T \sbr{\theta_s -\theta_t}\|_{2}$, $B_t' = \|\sum_{s=1}^{t-1}\gamma^{t-s-1}\eta_sX_s -\lambda\theta_t\|_{\Vt_{t-1}^{-1}}$ and $V^{\prime}_{t} = \lambda I_d + \sum_{s=1}^{t-1} \gamma^{t-s} X_s X_s^\T$, $\Vt_{t} = \lambda I_d + \sum_{s=1}^{t} \gamma^{2(t-s)} X_s X_s^\T$. Due to the need for using sliding-window analysis to analyze the bias part, they have to use $l_2$-norm to get the format of $A_t'$. For the variance part, to use weighted version of self-normalized concentration, they use the $V_{t-1}^{\prime-1}\Vt_{t-1}V_{t-1}^{\prime-1}$-norm to control $\x$ term so that $B_t'$ term can be normed by $\Vt_{t-1}^{-1}$.

As an improvement, we directly use the \emph{same} $V_{t-1}^{-1}$-norm to control both parts, which benefits from our new analysis for the bias part and modified analysis for the variance part. 

\parag{Bias Part Analysis.}~~ The key step of bias part analysis is to extract the variations of underlying parameters as follows,
\begin{align*}
  A_t &\leq L \sum_{p=1}^{t-1} \sum_{s=1}^{p}w_{t-1,s}\norm{X_s}_{V_{t-1}^{-1}}\|\theta_p -\theta_{p+1}\|_2\\&\leq L\sqrt{d} \sum_{p=1}^{t-1} \sqrt{\sum_{s=1}^{p}w_{t-1,s}}\norm{\theta_p -\theta_{p+1}}_2.
\end{align*} 
Based on that, we can obtain an upper bound for bias part related to the path length \strut $P_T = \sum_{t=2}^T \norm{\theta_{t-1} - \theta_t}_2$. A precise proof for the above argument can be found in Lemma~\ref{lemma:LB-A_t-bound}.

\parag{Variance Part Analysis.~~} The key lies in analyzing the following self-normalized term with weighted factor $w_{t-1,s}$,
  \begin{align*}
    B_t &\leq \norm{ \sum_{s=1}^{t-1}w_{t-1,s}\eta_sX_s }_{V_{t-1}^{-1}}+\sqrt{\lambda}S \\&= \norm{ \sum_{s=1}^{t-1}\etat_s\Xt_s}_{V_{t-1}^{-1}}+\sqrt{\lambda}S.
  \end{align*}
  Here, notice that $V_{t} = \lambda I_d + \sum_{s=1}^{t} \Xt_s \Xt_s^\T$, where we define $\etat_s \define \sqrt{w_{t-1,s}}\eta_s$ and $\Xt_s \define \sqrt{w_{t-1,s}} X_s$. Notably, for all $t \in [T]$ and $s \in [t-1]$, it holds that $\abs{w_{t-1,s}} \leq 1$, which ensures that $\etat_s$ remains $R$-sub-Gaussian, and a precise argument can be found in Lemma~\ref{lemma:LB-B_t-bound}. Consequently, we can directly apply the self-normalized concentration (Theorem~\ref{thm:snc-AY}) to control the variance term, without requiring the weighted version of the self-normalized concentration proposed in Theorem 1 of ~\cite{NIPS'19:weighted-LB}.

  Combining the analysis for bias and variance parts, we can finish the proof of Lemma~\ref{lemma:LB-estimation-error}.
\end{proof}
The estimation error analysis for weighted strategies involves first decomposing the estimation error into bias and variance parts, then analyzing them separately. \cite{NIPS'19:weighted-LB} used different local norms to decompose estimation errors, mimicking sliding window analysis for the bias term and specifically designing a weighted version of self-normalized concentration for the variance term. Our refined analysis framework shows that such complexity is unnecessary. Bias and variance can be decomposed using the same local norm, with a dedicated bias analysis for the weighted strategy, and the variance term no longer requires specially designed concentrations or additional local norms. With the estimation error bound, we proceed to the regret analysis, where we need to use a weighted potential lemma to bound the regret.  

\parag{Weighted Potential Lemma.}~Term $\norm{\x}_{V_{t-1}^{-1}}$ in~\eqref{eq:LB-decompose} induces a summation term $\sum_{t=1}^T\|X_t\|_{V_{t-1}^{-1}}$ in the variance part of regret analysis. Since $V_{t-1} = \lambda I_d + \sum_{s=1}^{t-1} w_{t-1,s} X_s X_s^\T$ incorporates the weighted factors, we cannot directly apply the standard potential lemma. Instead, we need to use a weighted potential lemma (see Lemma~\ref{lemma:potential-lemma}) for regret analysis with weighted factor $w_{t,s} = \gamma^{t-s}$, such that
\begin{equation}
\begin{split}
  \label{eq:LB-weighted-potential-sketch}
  {}&\sum_{t=1}^T\|X_t\|_{V_{t-1}^{-1}} = \Ot\sbr{T\sqrt{d\log\frac{1}{\gamma}}}.
\end{split}
\end{equation}
The next is to choose the discounted factor $\gamma$ appropriately, so that the bias and variance terms in the regret bound are well balanced. A smaller $\gamma$ corresponds to faster forgetting, which helps reduce the bias caused by non-stationarity. However, a smaller $\gamma$ will also increase the variance by the key term $\O(\sqrt{\log \frac{1}{\gamma}})$ shown in~\eqref{eq:LB-weighted-potential-sketch}. For details on how to optimally select $\gamma$ to balance these two parts, please refer to Appendix~\ref{sec:LB-regret-proof}.

To summarize, for non-stationary LB analysis, the weighted strategy is as simple as the restarted or sliding-window strategies, \emph{with only the difference being the requirement of the weighted potential lemma}  for regret analysis, without the need for more complicated deviation results.

\begin{myRemark}
    The key step~\eqref{eq:LB-decompose} in our analysis framework also resolves the projection issue in GLB. Specifically, after the projection step, the bias-variance decomposition can only be performed in $V_{t-1}^{-1}$-norm. To accommodate previous analysis~\eqref{eq:previous-decompose},~\cite{arXiv'21:faury-driftingGLB} has to inject a highly complex projection operation in the algorithm, whereas our framework already satisfies this condition owing to the usage of the same $V_{t-1}^{-1}$-norm for the bias and variance parts.
\end{myRemark}

\section{Generalized Linear Bandit}
\label{sec:GLB}
In this section, we apply the weighted strategy to drifting GLB. Compared to the best-known weight-based algorithm for drifting GLB~\cite{arXiv'21:faury-driftingGLB}, our algorithm is simpler and meanwhile has a better dynamic regret. Additionally, we consider a key class of GLB, known as the Self-Concordant Bandit (SCB), and further improve the theoretical guarantees in this setting.

\subsection{Problem Setting}
\label{sec:GLB-problem-setting}
GLB assumes an inverse link function $\mu:\R\rightarrow \R$ such that $r_t = \mu(X_t^\T \theta_t) + \eta_t$, where $\theta_t \in \R^d$ is the unknown parameter and can change over time. Similar to LB, we define \emph{dynamic regret} for GLB as follows:
\begin{equation}
  \label{eq:GLB-regret}
  \DReg_T = \sum_{t=1}^{T} \sbr{\max_{\x \in \X} \mu(\x^{\T}\theta_t) - \mu( X_t^{\T}\theta_t)}.
\end{equation}
Under GLB, we make the same assumptions as those of LB, which include $R$-sub-Gaussian noise, boundedness of feasible set and unknown regression parameters (Assumption~\ref{ass:bounded-norm}). In addition, we work under the standard boundedness assumption of the inverse link function~\cite{NIPS10:GLM-infinite, ICML18:GLM-finite, arXiv'21:faury-driftingGLB}.

\begin{myAssum} \label{ass:link-function} 
  The inverse link function $\mu: \R\rightarrow \R$ is $k_\mu$-Lipschitz, and continuously differentiable with $$c_\mu \triangleq \inf_{\{\theta \in \Theta, \x \in \X\}} \dmu(\theta^\T \x)>0,\quad \Theta = \{\theta \mid \norm{\theta}_2\leq S \}.$$ 
\end{myAssum}
Previous works~\cite{AISTATS'20:restart,Manage'22:window-LB} define a similar parameter $\widetilde{c}_\mu \triangleq \inf_{\{\theta\in \R^d, \x \in \X\}} \dmu(\theta^\T \x)>0$ and obtain regret upper bound scaling with $1/\widetilde{c}_\mu$. Clearly, $\widetilde{c}_\mu$ is smaller than our defined $c_\mu$ (and can be much smaller) as $c_\mu$ is defined on $\Theta$ while $\widetilde{c}_\mu$ is defined on $\R$. Therefore, $\widetilde{c}_\mu$ is less attractive to appear in the regret upper bound.
  
  \subsection{Algorithm and Regret Guarantee}
\label{sec:GLB-algorithm} 
We propose \GLBweightours, which is a simpler algorithm with better theoretical guarantee compared to previous weight-based algorithm~\cite{arXiv'21:faury-driftingGLB}. The key improvement is owing to our refined analysis framework, which is compatible with a simple projection step.

\parag{Estimator.} At iteration $t$, we first adopt the quasi-maximum likelihood estimator (QMLE) without considering the projection onto the feasible domain. Specifically, the estimator $\thetah_t$ is the solution of the following weighted regularized equation:
\begin{equation}
  \label{eq:GLB-estimator}
    \lambda c_\mu \theta +\sum_{s=1}^{t-1}w_{t-1,s}\sbr{\mu(X_s^\T \theta) - r_s}X_s = 0.
\end{equation}
Similar to the Estimator~\eqref{eq:LB-estimator}, we set $w_{t,s} = \gamma^{t-s}$, where $\gamma \in (0, 1)$ is the discounted factor. Given that $\thetah_t$ may not belong to the feasible set $\Theta$ and $c_\mu$ is defined over the parameter $\theta \in \Theta$, we need to perform the following projection step
\begin{equation}
  \label{eq:GLB-projection}
    \thetat_t = \argmin_{\theta \in \Theta}\|g_t(\thetah_t) - g_t(\theta)\|_{V_{t-1}^{-1}},
\end{equation}
where $V_{t} = \lambda I_d + \sum_{s=1}^{t} w_{t,s} X_s X_s^\T$ and $g_t(\theta)$ is
\begin{equation}
  \begin{split}
      \label{eq:GLB-gt}
      g_t(\theta) \define \lambda c_\mu \theta + \sum_{s=1}^{t-1}w_{t-1,s}\mu(X_s^\T \theta)X_s.
  \end{split}
\end{equation}
However, previous work~\cite{arXiv'21:faury-driftingGLB} cannot conduct the same simple projection in the drifting case as stationary GLB or piecewise-stationary GLB, since they use different local norms to measure the bias and variance parts separately for estimation error analysis. Consequently, they have to design a complicated projection to ensure that the bias and variance parts could be measured by different local norms  \iffullversion(see \cite[Section 4.1]{arXiv'21:faury-driftingGLB}, and our restatements in Appendix~\ref{sec:GLB-previous-work}). \else(see \cite[Section 4.1]{arXiv'21:faury-driftingGLB}).\fi

Our refined analysis framework is compatible with this projection operation, thanks to our analysis framework utilizing the same local norm for the bias and variance parts.

\parag{Upper Confidence Bounds.~~} For estimator~\eqref{eq:GLB-estimator} with projection~\eqref{eq:GLB-projection}, we construct following estimation error bound.
\begin{myLemma}
  \label{lemma:GLB-estimation-error}
  For any $\x \in \X$, $\gamma\in(0,1)$ and $\delta \in (0,1)$, with probability at least $1-\delta$, the following holds for all $t \in [T]$
  \begin{equation*}
    \begin{split}
      {}&\abs{\mu(\x^\T\thetat_t) - \mu(\x^\T\theta_t)} \\\leq{}& \frac{2k_\mu}{c_\mu}\sbr{\sum_{p=1}^{t-1}C(p)\norm{\theta_p - \theta_{p+1}}_2 + \betab_{t-1}\norm{\x}_{V_{t-1}^{-1}}},
    \end{split}
  \end{equation*}
where $C(p) \define k_\mu L^2\sqrt{\frac{d}{\lambda}}\sqrt{\sum_{s=1}^{p}w_{t-1,s}}$ and $\betab_t$ is the radius of confidence region set by
\begin{equation}
  \begin{split}
    \label{eq:GLB-confidence-radius}
    \sqrt{\lambda}c_\mu S+R\sqrt{2\log\frac{1}{\delta}+d\log\sbr{1+\frac{L^2 \sum_{s=1}^{t}w_{t,s}}{\lambda d}}}.
  \end{split}
\end{equation}
\end{myLemma}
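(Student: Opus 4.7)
~The plan is to mirror the refined analysis of Lemma~\ref{lemma:LB-estimation-error} in the generalized linear setting, with the projection~\eqref{eq:GLB-projection} serving as the bridge that lets the \emph{same} $V_{t-1}^{-1}$-norm control both the bias and the variance parts. I would first apply the $k_\mu$-Lipschitzness of $\mu$ followed by Cauchy--Schwarz to reduce the lemma to bounding $\norm{\thetat_t - \theta_t}_{V_{t-1}}$:
\begin{equation*}
\abs{\mu(\x^\T\thetat_t) - \mu(\x^\T\theta_t)} \leq k_\mu \norm{\x}_{V_{t-1}^{-1}}\norm{\thetat_t - \theta_t}_{V_{t-1}}.
\end{equation*}

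The key reduction is to connect $\norm{\thetat_t - \theta_t}_{V_{t-1}}$ with the QMLE residual under the $V_{t-1}^{-1}$-norm. I would invoke the integral mean value theorem to write $g_t(\thetat_t) - g_t(\theta_t) = H_t(\thetat_t - \theta_t)$ with $H_t \define \int_0^1 \nabla g_t(\theta_t + u(\thetat_t - \theta_t))\diff u$; since $\thetat_t,\theta_t \in \Theta$ and $\Theta$ is a convex ball, Assumption~\ref{ass:link-function} gives $\nabla g_t \succeq c_\mu V_{t-1}$ uniformly along the entire segment, hence $H_t \succeq c_\mu V_{t-1}$. Setting $M \define V_{t-1}^{-1/2} H_t V_{t-1}^{-1/2} \succeq c_\mu I_d$ then gives $H_t V_{t-1}^{-1} H_t = V_{t-1}^{1/2} M^2 V_{t-1}^{1/2} \succeq c_\mu^2 V_{t-1}$, which rearranges to $\norm{\thetat_t - \theta_t}_{V_{t-1}} \leq c_\mu^{-1}\norm{g_t(\thetat_t) - g_t(\theta_t)}_{V_{t-1}^{-1}}$. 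Combining this with the optimality of the projection~\eqref{eq:GLB-projection} (legitimised by $\theta_t \in \Theta$) and the triangle inequality produces the factor $2$ that appears in the lemma: $\norm{g_t(\thetat_t) - g_t(\theta_t)}_{V_{t-1}^{-1}} \leq 2\norm{g_t(\thetah_t) - g_t(\theta_t)}_{V_{t-1}^{-1}}$. Crucially, everything is now measured under the same $V_{t-1}^{-1}$-norm that controls $\norm{\x}_{V_{t-1}^{-1}}$, which is exactly the feature that is incompatible with the bias/variance-separated projection of~\cite{arXiv'21:faury-driftingGLB} and is the reason the projection step can be kept simple here.

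Substituting $g_t(\thetah_t) = \sum_s w_{t-1,s} r_s X_s$ from the QMLE normal equation and using $r_s = \mu(X_s^\T\theta_s) + \eta_s$ splits $g_t(\thetah_t) - g_t(\theta_t)$ into a bias piece $\sum_s w_{t-1,s}(\mu(X_s^\T\theta_s) - \mu(X_s^\T\theta_t))X_s$ and a variance piece $\sum_s w_{t-1,s}\eta_s X_s - \lambda c_\mu \theta_t$. For the bias, a scalar MVT writes $\mu(X_s^\T\theta_s) - \mu(X_s^\T\theta_t) = \alpha_s X_s^\T(\theta_s-\theta_t)$ with $\abs{\alpha_s} \leq k_\mu$; combining this with the telescoping identity $\theta_s - \theta_t = \sum_{p=s}^{t-1}(\theta_p - \theta_{p+1})$, the triangle inequality, and the trace bound $\sum_s w_{t-1,s}\norm{X_s}_{V_{t-1}^{-1}}^2 \leq d$ together with one further Cauchy--Schwarz step reproduces the LB-style estimate
\begin{equation*}
\norm{\text{bias}}_{V_{t-1}^{-1}} \leq k_\mu L\sqrt{d}\sum_{p=1}^{t-1}\sqrt{\sum_{s=1}^p w_{t-1,s}}\norm{\theta_p - \theta_{p+1}}_2,
\end{equation*}
whose product with the trivial bound $\norm{\x}_{V_{t-1}^{-1}} \leq L/\sqrt{\lambda}$ collapses into the coefficient $C(p) = k_\mu L^2\sqrt{d/\lambda}\sqrt{\sum_{s\leq p} w_{t-1,s}}$.

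For the variance I would reuse the reparametrization from the LB proof, $\etat_s \define \sqrt{w_{t-1,s}}\,\eta_s$ and $\Xt_s \define \sqrt{w_{t-1,s}}\,X_s$, so that $V_{t-1} = \lambda I_d + \sum_s \Xt_s\Xt_s^\T$ takes the standard unweighted form and, because $w_{t-1,s} \in (0,1]$, $\etat_s$ remains $R$-sub-Gaussian. The standard self-normalized concentration inequality~\cite{NIPS'11:AY-linear-bandits} then applies \emph{directly} and yields the logarithmic term of~\eqref{eq:GLB-confidence-radius} with probability at least $1-\delta$, while $\norm{\lambda c_\mu\theta_t}_{V_{t-1}^{-1}} \leq \sqrt{\lambda}c_\mu S$ is immediate from Assumption~\ref{ass:bounded-norm} and supplies the remaining summand of $\betab_{t-1}$. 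Chaining the four steps produces exactly the stated inequality. The main obstacle will be the second paragraph: pinning down the positive semi-definite manipulation $H_t V_{t-1}^{-1}H_t \succeq c_\mu^2 V_{t-1}$ and verifying that the convexity of $\Theta$ plus the projection~\eqref{eq:GLB-projection} genuinely let us apply $\dmu \geq c_\mu$ along the entire segment between $\theta_t$ and $\thetat_t$; once this is settled, the bias/variance split proceeds in a single $V_{t-1}^{-1}$-norm and the remainder is essentially a transcription of the LB proof.
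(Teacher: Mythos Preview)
Your proposal is correct and follows essentially the same route as the paper. The only cosmetic difference is in the Cauchy--Schwarz step that extracts the $1/c_\mu$ factor: the paper applies Cauchy--Schwarz directly in the $G_t^{-1}(\theta_t,\thetat_t)$-norm on both factors and then uses $G_t \succeq c_\mu V_{t-1}$ twice, whereas you split into the dual pair $V_{t-1}/V_{t-1}^{-1}$ and then invoke the PSD identity $H_t V_{t-1}^{-1}H_t \succeq c_\mu^2 V_{t-1}$; both give the identical bound and the remainder (projection inequality, bias telescoping, unweighted self-normalized concentration after the $\sqrt{w_{t-1,s}}$ reparametrization) matches the paper's Lemmas~\ref{lemma:GLB-C_t-bound} and~\ref{lemma:GLB-D_t-bound}.
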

\iffullversion The proof of Lemma~\ref{lemma:GLB-estimation-error} is in Appendix~\ref{sec:GLB-estimation-error-proof}.\fi Then, based on Lemma~\ref{lemma:GLB-estimation-error}, we can specify the arm selection criterion as
\begin{equation}
\label{eq:GLB-select-criteria}
\begin{split}
    X_t ={}&\argmax_{\x \in \X} \bbr{ \mu(\x^\T \thetat_t)+ \frac{2k_\mu}{c_\mu}\betab_{t-1}\norm{\x}_{V_{t-1}^{-1}}}.
\end{split}
\end{equation}
The overall algorithm is summarized in Algorithm~\ref{alg:GLB-WeightUCB}. 

\begin{algorithm}[!t]
  \caption{\GLBweightours}
  \label{alg:GLB-WeightUCB}
\begin{algorithmic}[1]
\REQUIRE time horizon $T$, discounted factor $\gamma$, confidence $\delta$, regularizer $\lambda$, link function $\mu$, parameters $S$, $L$ and $R$\\
\STATE Set $V_0 = \lambda I_d$, $\thetah_1 = \mathbf{0}$, compute $k_\mu, c_\mu$ and $\betab_0$ by~\eqref{eq:GLB-confidence-radius}
\FOR{$t = 1,2,...,T$}
  \IF{$\|\thetah_t\|_2\leq S$} 
  \STATE let $\thetat_t = \thetah_t$
  \ELSE 
  \STATE Do the projection and get $\thetat_t$ by~\eqref{eq:GLB-projection}
  \ENDIF
  \STATE Select $X_t$ by~\eqref{eq:GLB-select-criteria}
  \STATE Receive the reward $r_t$
  \STATE Update $V_{t} = \gamma V_{t-1} + X_t X_t^\T +(1-\gamma)\lambda I_d$
  \STATE Compute $\thetah_{t+1}$ according to~\eqref{eq:GLB-estimator} with $w_{t,s} = \gamma^{t-s}$
  \STATE Compute $\betab_{t}$ by~\eqref{eq:GLB-confidence-radius} with $w_{t,s} = \gamma^{t-s}$
\ENDFOR
\end{algorithmic}
\end{algorithm}

Notice that the estimation equation~\eqref{eq:GLB-estimator} and the confidence radius~\eqref{eq:GLB-confidence-radius} are the same as those used in Algorithm 1 of~\cite{arXiv'21:faury-driftingGLB}. But importantly, the final (projected) estimators of the two approaches are significantly different. With a simpler projection operation and our refined analysis framework, we can immediately attain an improved regret guarantee for weight-based algorithm.

\begin{myThm}
  \label{thm:GLB-regret}
  Let the weighted factor $w_{t,s} = \gamma^{t-s}$, where $\gamma\in(1/T,1)$, and let $\lambda = d/c_\mu^2$, the regret of \GLBweightours (Algorithm~\ref{alg:GLB-WeightUCB}) is bounded with probability at least $1-1/T$ by
  \begin{equation}\nonumber
  \begin{split}
      \DReg_T\leq{}& \Ot\sbr{k_\mu^2\frac{1}{(1-\gamma)^{\sfrac{3}{2}}}P_T + \frac{k_\mu}{c_\mu}d(1-\gamma)^{\sfrac{1}{2}}T }.
  \end{split}
  \end{equation}
  By optimally setting $\gamma = 1 - \max\{1/T, \sqrt{k_\mu c_\mu P_T/(dT)}\}$, \GLBweightours achieves the following dynamic regret,
  \begin{equation}\nonumber
      \DReg_T \leq
      \begin{cases}
      \Ot\sbr{\frac{k_\mu^{5/4}}{c_\mu^{\sfrac{3}{4}}}d^{\sfrac{3}{4}} P_T^{\sfrac{1}{4}}T^{\sfrac{3}{4}}}  \mbox{ when } P_T \geq \frac{d}{k_\mu c_\mu T},&\vspace{2mm}\\
      \Ot\sbr{\frac{k_\mu}{c_\mu}d\sqrt{T}}  \quad\quad\quad \mbox{ when } 0\leq P_T < \frac{d}{k_\mu c_\mu T}.&
      \end{cases} 
  \end{equation}
\end{myThm}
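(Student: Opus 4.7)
The overall strategy mirrors the proof of Theorem~\ref{thm:LB-regret}, with Lemma~\ref{lemma:GLB-estimation-error} replacing Lemma~\ref{lemma:LB-estimation-error}. First, I would observe that the bias term in Lemma~\ref{lemma:GLB-estimation-error}, namely $B(t) \define \frac{2k_\mu}{c_\mu}\sum_{p=1}^{t-1}C(p)\|\theta_p-\theta_{p+1}\|_2$, is independent of the arm $\x$, so it can be viewed as an additive shift to the UCB~\eqref{eq:GLB-select-criteria} and does not affect the $\argmax$. Hence for $\x_t^* = \argmax_{\x\in\X}\mu(\x^\T\theta_t)$, optimism gives $\mu(\x_t^{*\T}\theta_t) \leq \mu(\x_t^{*\T}\thetat_t) + B(t) + \frac{2k_\mu}{c_\mu}\betab_{t-1}\|\x_t^*\|_{V_{t-1}^{-1}} \leq \mu(X_t^\T\thetat_t) + B(t) + \frac{2k_\mu}{c_\mu}\betab_{t-1}\|X_t\|_{V_{t-1}^{-1}}$, and applying Lemma~\ref{lemma:GLB-estimation-error} once more to $X_t$ yields the instantaneous regret bound
\begin{equation*}
\mu(\x_t^{*\T}\theta_t)-\mu(X_t^\T\theta_t)\leq 2B(t) + \frac{4k_\mu}{c_\mu}\betab_{t-1}\|X_t\|_{V_{t-1}^{-1}}.
\end{equation*}

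Next, I would sum over $t\in[T]$ and treat the two parts separately. For the \emph{bias sum} $\sum_{t=1}^T B(t)$, I would swap the order of summation over $t$ and $p$: for each fixed $p$, the inner factor is $\sum_{t=p+1}^T\sqrt{\sum_{s=1}^{p}\gamma^{t-1-s}}$. Using $\sum_{s=1}^p\gamma^{t-1-s}\leq \gamma^{t-1-p}/(1-\gamma)$ and the geometric-series estimate $\sum_{t=p+1}^T\gamma^{(t-1-p)/2}\leq 1/(1-\sqrt{\gamma})\leq 2/(1-\gamma)$, this inner sum is at most $2/(1-\gamma)^{3/2}$. Plugging in $C(p) = k_\mu L^2\sqrt{d/\lambda}\sqrt{\sum_{s=1}^p w_{t-1,s}}$, $\lambda = d/c_\mu^2$, and collecting $\frac{2k_\mu}{c_\mu}\cdot k_\mu\sqrt{d/\lambda} = \Ot(k_\mu^2)$, I obtain $\sum_{t=1}^T B(t) = \Ot\!\big(k_\mu^2 P_T/(1-\gamma)^{3/2}\big)$. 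For the \emph{variance sum}, I would invoke the weighted potential lemma~\eqref{eq:LB-weighted-potential-sketch}, which gives $\sum_{t=1}^T\|X_t\|_{V_{t-1}^{-1}}=\Ot\!\big(T\sqrt{d(1-\gamma)}\big)$ using $\log(1/\gamma)\leq (1-\gamma)/\gamma$. With $\lambda=d/c_\mu^2$, the confidence radius $\betab_{t-1}$ in~\eqref{eq:GLB-confidence-radius} is $\Ot(\sqrt{d})$, so the variance contribution is $\Ot\!\big(\frac{k_\mu}{c_\mu}d\sqrt{1-\gamma}\,T\big)$.

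Combining these yields the first claimed bound $\DReg_T = \Ot\!\big(k_\mu^2\frac{P_T}{(1-\gamma)^{3/2}} + \frac{k_\mu}{c_\mu}d\sqrt{1-\gamma}\,T\big)$. To optimize, I would equate the two terms, which gives $(1-\gamma)^2 = k_\mu c_\mu P_T/(dT)$, i.e.\ $1-\gamma = \sqrt{k_\mu c_\mu P_T/(dT)}$. When $P_T\geq d/(k_\mu c_\mu T)$ this choice lies in $(1/T,1)$ and substituting back produces the $\Ot\!\big(\frac{k_\mu^{5/4}}{c_\mu^{3/4}}d^{3/4}P_T^{1/4}T^{3/4}\big)$ regret; otherwise setting $1-\gamma=1/T$ gives the $\Ot(k_\mu d\sqrt{T}/c_\mu)$ bound.

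The routine calculations are the reshuffling of the bias double sum and the $\gamma$-optimization, both of which parallel the LB case. The main subtlety will be bookkeeping the dependence on $k_\mu$, $c_\mu$, and $\lambda$ through the projected estimator and the confidence radius: in particular verifying that choosing $\lambda = d/c_\mu^2$ correctly balances the $\sqrt{\lambda}c_\mu S$ and $\sqrt{d\log(\cdot)}$ terms in $\betab_{t-1}$ and that the $\sqrt{d/\lambda}$ factor in $C(p)$ combines with $k_\mu/c_\mu$ to produce exactly the $k_\mu^2$ coefficient on the bias term, without any hidden $c_\mu$ factor that would spoil the final $c_\mu^{-3/4}$ exponent. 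This careful constant-tracking—rather than any new analytic idea—is where I expect most of the effort to lie.
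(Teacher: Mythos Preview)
Your proposal is correct and follows essentially the same route as the paper's proof: both use Lemma~\ref{lemma:GLB-estimation-error} together with the selection rule~\eqref{eq:GLB-select-criteria} to obtain the instantaneous bound, split into bias and variance, treat the bias by swapping the $(t,p)$ sums and applying the geometric estimate $\sum_{t>p}\sqrt{\sum_{s\le p}\gamma^{t-1-s}}\le 2/(1-\gamma)^{3/2}$, treat the variance via Cauchy--Schwarz and the weighted potential lemma, and then tune $\gamma$. Your constant-tracking (in particular $\sqrt{d/\lambda}=c_\mu$ under $\lambda=d/c_\mu^2$, giving the $k_\mu^2$ bias coefficient and $\betab_t=\Ot(\sqrt{d})$) matches the paper's computation.
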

Compared to \GLBweight (the best-known weight-based algorithm for drifting GLB)~\cite{arXiv'21:faury-driftingGLB}, focusing on the dependence on $d$, $P_T$, and $T$, we can see that our approach improves the regret from $\Ot(d^{\sfrac{9}{10}} P_T^{\sfrac{1}{5}}T^{\sfrac{4}{5}})$ to $\Ot(d^{\sfrac{3}{4}} P_T^{\sfrac{1}{4}}T^{\sfrac{3}{4}})$. Furthermore, our result also improves their result upon the $c_\mu$ dependence from $c_\mu^{-1}$ to $c_\mu^{-\sfrac{3}{4}}$.

\subsection{Self-Concordant Bandits}
\label{sec:SCB}
This section studies Self-Concordant Bandits (SCB), an important subclass of GLB with many attractive structures. For SCB, the reward's distribution belongs to a canonical exponential family: ${\P_\theta\mbr{r \givenn \x}} = \exp(r\x^\T\theta - b(\x^\T\theta) +c(r))$ where $b(\cdot)$ is a twice continuously differentiable function and $c(\cdot)$ is a real-valued function. Owing to the benign properties of exponential families, we have $\E\mbr{r\given \x} = b^{\prime}(\x^\T\theta)$ and $\Var\mbr{r\given \x} = b^{\prime\prime}(\x^\T\theta)$ where $b^\prime$ denotes the first derivative of the function $b$, and $b^{\prime\prime}$ denotes its second derivative. Then, we can introduce the (inverse) link function $\mu(\cdot) \define b^{\prime}(\cdot)$ such that
\begin{equation}
  \label{eq:SCB-model-assume}
       \E\mbr{r_t\given X_t} = \mu(X_t^\T \theta_t), \Var\mbr{r_t\given X_t} = \dmu(X_t^\T \theta_t).
\end{equation}
SCB requires the link function satisfy $|\ddmu| \leq \dmu$, usually referred to general self-concordant property. We further introduce the notation $\eta_t = r_t - \mu(X_t^\T \theta_t)$ to denote the noise. SCB successfully models many important real-world applications and captures the reward structure. For example, choosing $\mu(x) = (1+e^{-x})^{-1}$ yields the Logistic Bandits (LogB), which is often adopted to model the binary-feedback reward in recommendation system~\cite{ICML'16:ZhangYJXZ16, NIPS'17:online-GLB, COLT'19:TS_LogB}.

We make several standard assumptions same as LB and GLB, including boundedness of feasible set and unknown parameters (Assumption~\ref{ass:bounded-norm}), and non-linearity measure on the (inverse) link function (Assumption~\ref{ass:link-function}). In addition, similar to~\cite{AISTATS'21:SCB-forgetting}, we need assumptions on boundedness of reward, and for the convenience of analysis we let $L = 1$ which means $\|\x\|_2 \leq 1$ for all $\x \in \X$.
\begin{myAssum} \label{ass:bounded-rewards}
  The reward received at each round satisfies $0\leq r_t \leq m$ for all $t\in [T]$ and some constant $m>0$.
\end{myAssum}

\parag{Algorithm.~} We propose the \SCBweightours algorithm. Compared to GLB, we use a new local norm for projection and regret analysis which is the key to improving the order of $c_\mu^{-1}$. At iteration $t$, we first adopt the same maximum likelihood estimator as GLB which is defined in~\eqref{eq:GLB-estimator}. Different from GLB, here we use a new local norm to perform the projection onto the feasible set $\Theta$,
\begin{equation}
  \label{eq:SCB-projection}
    \thetat_t = \argmin_{\theta \in \Theta}\norm{g_t(\thetah_t) - g_t(\theta)}_{H_{t}^{-1}(\theta)},
\end{equation}
where $g_t(\theta)$ is the same as~\eqref{eq:GLB-gt} while $H_t(\theta)$ is defined as
\begin{equation}
  \begin{split}
      \label{eq:SCB-Ht}
      H_t(\theta) \define \lambda c_\mu I_d + \sum_{s=1}^{t-1}w_{t-1,s}\dmu(X_s^\T \theta)X_sX_s^\T.
  \end{split}
\end{equation}
Notably, compared to $V_t$, $H_t(\theta)$ depends on the function curvature along the dynamics and thus captures more \emph{local} information. Combining this projection step with the standard self-normalized concentration restated in~\pref{thm:self-normalized-weight-SCB} removes a constant $c_\mu^{-\sfrac{1}{2}}$ in the regret. For estimator~\eqref{eq:GLB-estimator} with projection~\eqref{eq:SCB-projection}, we construct the following estimation error bound.
\begin{myLemma}
  \label{lemma:SCB-estimation-error}
  For any $\x \in \X$, $\gamma\in(0,1)$ and $\delta \in (0,1)$, with probability at least $1-\delta$, the following holds for all $t \in [T]$
  \begin{equation}\nonumber
    \begin{split}
      {}&\abs{\mu(\x^\T\thetat_t) - \mu(\x^\T\theta_t)} \\ \leq {}&\frac{\sqrt{4+8S} k_\mu}{\sqrt{c_\mu}}\sbr{\sum_{p=1}^{t-1}\frac{C(p)}{\sqrt{c_\mu}}\norm{\theta_p - \theta_{p+1}}_2 + \betat_{t-1}\|\x\|_{V_{t-1}^{-1}}},
    \end{split}
  \end{equation}
  where $C(p) \define k_\mu L^2\sqrt{\frac{d}{\lambda}}\sqrt{\sum_{s=1}^{p}w_{t-1,s}}$, and $\betat_t$ is the radius of confidence region set by
\begin{equation}
  \label{eq:SCB-confidence-radius}
  \begin{split}
    \betat_t &={} \frac{\sqrt{\lambda c_\mu}}{2 m }+\frac{2 m }{\sqrt{\lambda c_\mu}}\sbr{\log\frac{1}{\delta}+d \log 2}\\
    {}&+\frac{d m }{\sqrt{\lambda c_\mu}} \log \sbr{1+\frac{ L^2k_\mu\sum_{s=1}^{t}w_{t,s}}{\lambda c_\mu d}}+\sqrt{\lambda c_\mu} S.
  \end{split}
\end{equation}
\end{myLemma}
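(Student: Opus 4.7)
The plan is to mimic the proof of Lemma~\ref{lemma:GLB-estimation-error}, with two specialized refinements that together replace the $c_\mu^{-1}$ prefactor by $c_\mu^{-1/2}$. The first refinement exploits the generalized self-concordance $|\ddmu|\leq\dmu$ to perform the bias--variance decomposition in the local $H_t^{-1}(\theta_t)$-norm rather than the ambient $V_{t-1}^{-1}$-norm; the second uses the projection~\eqref{eq:SCB-projection} in the $H_t^{-1}(\theta)$-norm to make this decomposition compatible with the optimality of $\thetat_t$. The ambient norm is recovered only once, via the pointwise bound $\dmu\geq c_\mu$ which yields $H_t(\theta)\succeq c_\mu V_{t-1}$, so only a single square root of $c_\mu$ is lost, in contrast to GLB where the conversion is effectively applied twice.

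The first concrete step is to bridge the prediction error with a normed error. By Lipschitzness, $|\mu(\x^\T\thetat_t)-\mu(\x^\T\theta_t)|\leq k_\mu|\x^\T(\thetat_t-\theta_t)|$. Writing the integral form $g_t(\thetat_t)-g_t(\theta_t)=\bar{H}_t(\thetat_t-\theta_t)$ with $\bar{H}_t\define\int_0^1 H_t(\theta_t+v(\thetat_t-\theta_t))\diff v$, Cauchy--Schwarz yields $|\x^\T(\thetat_t-\theta_t)|\leq\|\x\|_{\bar{H}_t^{-1}}\,\|g_t(\thetat_t)-g_t(\theta_t)\|_{\bar{H}_t^{-1}}$. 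Using generalized self-concordance together with $\|\thetat_t-\theta_t\|_2\leq 2S$, the standard Hessian-comparison bound gives $\bar{H}_t\succeq H_t(\theta_t)/(1+2S)$, converting both $\bar{H}_t^{-1}$-norms into $H_t^{-1}(\theta_t)$-norms at a cost of $\sqrt{1+2S}$ each. I then apply the triangle inequality $\|g_t(\thetat_t)-g_t(\theta_t)\|_{H_t^{-1}(\theta_t)}\leq\|g_t(\thetat_t)-g_t(\thetah_t)\|_{H_t^{-1}(\theta_t)}+\|g_t(\thetah_t)-g_t(\theta_t)\|_{H_t^{-1}(\theta_t)}$ and invoke the projection~\eqref{eq:SCB-projection}; because the projection is stated in the $H_t^{-1}(\thetat_t)$-norm, a second self-concordance comparison of $H_t(\thetat_t)$ with $H_t(\theta_t)$ is used to align the norms, producing the extra factor of $2$ that yields the prefactor $2\sqrt{1+2S}=\sqrt{4+8S}$. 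Finally, I convert to the ambient norm via $\|\cdot\|_{H_t^{-1}(\theta_t)}\leq c_\mu^{-1/2}\|\cdot\|_{V_{t-1}^{-1}}$, which is where the improved $c_\mu^{-1/2}$ dependence emerges.

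The residual quantity $\|g_t(\thetah_t)-g_t(\theta_t)\|_{V_{t-1}^{-1}}$ is then split via the first-order condition $g_t(\thetah_t)=\sum_s w_{t-1,s}(\mu(X_s^\T\theta_s)+\eta_s)X_s$ into a bias term $\sum_s w_{t-1,s}(\mu(X_s^\T\theta_s)-\mu(X_s^\T\theta_t))X_s$, a noise term $\sum_s w_{t-1,s}\eta_s X_s$, and a regularization term $-\lambda c_\mu\theta_t$. The bias term is handled exactly as in the LB and GLB proofs, by telescoping $\theta_s-\theta_t=\sum_{p=s}^{t-1}(\theta_p-\theta_{p+1})$, swapping sums, Lipschitzness, Cauchy--Schwarz, and the elliptical-potential bound $\sum_{s\leq p}w_{t-1,s}\|X_s\|_{V_{t-1}^{-1}}^2\leq d$, producing the $\sum_p C(p)\|\theta_p-\theta_{p+1}\|_2$ term. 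The noise and regularization terms yield the radius $\betat_{t-1}$ in~\eqref{eq:SCB-confidence-radius} via the Bernstein-type self-normalized concentration restated in Theorem~\ref{thm:self-normalized-weight-SCB}, applied to the reweighted process $\etat_s=\sqrt{w_{t-1,s}}\eta_s$, $\Xt_s=\sqrt{w_{t-1,s}}X_s$; this is legitimate because $w_{t-1,s}\in(0,1]$ preserves both the boundedness of rewards from Assumption~\ref{ass:bounded-rewards} and the variance structure required by the SCB concentration, so no specially designed weighted concentration tool is needed. The main obstacle is the careful calibration of the two self-concordance comparisons so that the final prefactor is exactly $2\sqrt{1+2S}$ rather than the linear $(1+2S)$ that a naive chain of inequalities would yield; once these two constants are tuned and the ambient-norm conversion is applied, the bias and variance analyses proceed in the same simple form as in the LB case, reflecting the simplification emphasized by the paper's refined framework.
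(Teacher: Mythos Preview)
Your overall strategy---Lipschitz, integral mean-value form, self-concordance to land in $H_t^{-1}$, projection, then split into bias and noise with the SCB concentration of Theorem~\ref{thm:self-normalized-weight-SCB}---is exactly the paper's. Two points in your description would, if carried out as written, fail.

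First, the sentence ``the residual quantity $\|g_t(\thetah_t)-g_t(\theta_t)\|_{V_{t-1}^{-1}}$ is then split\ldots'' is the heart of the matter, and it is wrong: the split must happen in the $H_t^{-1}(\theta_t)$-norm, not in $V_{t-1}^{-1}$. The whole point of the SCB improvement is that the variance piece $F_t=\|\sum_s w_{t-1,s}\eta_s X_s-\lambda c_\mu\theta_t\|_{H_t^{-1}(\theta_t)}$ is controlled \emph{directly} by Theorem~\ref{thm:self-normalized-weight-SCB} in the $H_t^{-1}$-norm, yielding $\betat_{t-1}$ with no extra $c_\mu^{-1/2}$. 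Only the bias piece $E_t$ is then converted to $V_{t-1}^{-1}$ via $H_t(\theta_t)\succeq c_\mu V_{t-1}$, which is precisely where the extra $1/\sqrt{c_\mu}$ on $C(p)$ in the lemma statement comes from. If you first pass the entire residual to $V_{t-1}^{-1}$, the variance term picks up another $c_\mu^{-1/2}$ and you are back to the GLB rate.

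Second, your route to the prefactor $2\sqrt{1+2S}$ does not add up: converting both $\bar H_t^{-1}$-norms to $H_t^{-1}(\theta_t)$ at $\sqrt{1+2S}$ each already costs $(1+2S)$, and a further alignment $H_t(\theta_t)\leftrightarrow H_t(\thetat_t)$ is exponential in $S$, not a factor $2$. The paper avoids this by \emph{not} sending $\|\x\|_{G_t^{-1}}$ through self-concordance at all---it goes straight to $c_\mu^{-1/2}\|\x\|_{V_{t-1}^{-1}}$ via $G_t\succeq c_\mu V_{t-1}$. Self-concordance (Lemma~\ref{lemma:SCB-G-H}) is applied only to the two halves of the triangle inequality on $\|g_t(\thetat_t)-g_t(\theta_t)\|_{G_t^{-1}}$, sending one half to $H_t^{-1}(\thetat_t)$ (so the projection applies verbatim) and the other to $H_t^{-1}(\theta_t)$, each at cost $\sqrt{1+2S}$; the projection then collapses the two halves, giving the factor $2$ and hence $2\sqrt{1+2S}\cdot k_\mu c_\mu^{-1/2}\,\|\x\|_{V_{t-1}^{-1}}\,\|g_t(\thetah_t)-g_t(\theta_t)\|_{H_t^{-1}(\theta_t)}$.
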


\iffullversion The proof of Lemma~\ref{lemma:SCB-estimation-error} is in Appendix~\ref{sec:SCB-estimation-error-proof}.\fi Based on Lemma~\ref{lemma:SCB-estimation-error}, we can select the arm $X_t$ as
\begin{equation}
\label{eq:SCB-select-criteria}
    \argmax_{\x \in \X} \bigg\{ \mu(\x^\T \thetat_t) + 2\sqrt{1+2S}\frac{k_\mu}{\sqrt{c_\mu}}\betat_{t-1}\|\x\|_{V_{t-1}^{-1}} \bigg\}.
\end{equation}
Our algorithm for SCB (named \SCBweightours) follows the same procedure of Algorithm~\ref{alg:GLB-WeightUCB}, and the difference is that $\thetat_t$ is computed by~\eqref{eq:SCB-projection}, $\betat_{t-1}$ is computed by~\eqref{eq:SCB-confidence-radius} and $X_t$ is computed by~\eqref{eq:SCB-select-criteria}. Further, we have the following guarantee for \SCBweightours algorithm.
\begin{myThm}
  \label{thm:SCB-regret}
  For all $\gamma\in(1/T,1)$, $\lambda = d\log(T)/c_\mu$, the dynamic regret of \SCBweightours is bounded with probability at least $1-1/T$, by
  \begin{equation}\nonumber
  \begin{split}
    \DReg_T\leq\Ot\sbr{\frac{k_\mu^2}{\sqrt{c_\mu}}\frac{1}{(1-\gamma)^{\sfrac{3}{2}}}P_T + \frac{k_\mu}{\sqrt{c_\mu}}d(1-\gamma)^{\sfrac{1}{2}}T}.
  \end{split}
  \end{equation}
  By setting $\gamma = 1-\max\{1/T, \sqrt{k_\mu P_T/(dT)}\}$, we achieve
  \begin{equation}\nonumber
      \DReg_T \leq
      \begin{cases}
      \Ot\sbr{\frac{k_\mu^{\sfrac{5}{4}}}{c_\mu^{\sfrac{1}{2}}}d^{\sfrac{3}{4}} P_T^{\sfrac{1}{4}}T^{\sfrac{3}{4}}} & \mbox{ when } P_T \geq \frac{d}{k_\mu T},\vspace{2mm}\\
      \Ot\sbr{\frac{k_\mu}{c_\mu^{\sfrac{1}{2}}}d\sqrt{T}} & \mbox{ when } 0\leq P_T < \frac{d}{k_\mu T}.
      \end{cases} 
  \end{equation}
\end{myThm}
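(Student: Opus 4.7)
The plan is to mirror the dynamic regret argument of Theorem~\ref{thm:LB-regret}, but feed in the sharper self-concordant estimation bound of Lemma~\ref{lemma:SCB-estimation-error}. First I would set up the standard UCB decomposition of the instantaneous regret. Let $\x_t^{\star} = \argmax_{\x \in \X} \mu(\x^{\T}\theta_t)$ and write the $\x$-independent bias piece as $B_t^{\star} \define \sum_{p=1}^{t-1} \tfrac{C(p)}{\sqrt{c_\mu}} \norm{\theta_p - \theta_{p+1}}_2$. Applying Lemma~\ref{lemma:SCB-estimation-error} at $\x_t^{\star}$ and at $X_t$, noting that the prefactor $\sqrt{4+8S}\,k_\mu/\sqrt{c_\mu}$ is exactly twice the $2\sqrt{1+2S}\,k_\mu/\sqrt{c_\mu}$ used inside the selection rule~\eqref{eq:SCB-select-criteria}, the UCB inequality $U_t(\x_t^{\star}) \le U_t(X_t)$ lets the variance terms telescope cleanly and gives the per-round bound
\[
\mu(\x_t^{\star\T}\theta_t) - \mu(X_t^{\T}\theta_t) \le \tfrac{2\sqrt{4+8S}\,k_\mu}{\sqrt{c_\mu}} \bigl( B_t^{\star} + \betat_{t-1} \norm{X_t}_{V_{t-1}^{-1}} \bigr).
\]

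Second, I would sum the two pieces independently. For the bias sum, using $\sum_{s=1}^{p} \gamma^{t-1-s} \le \gamma^{t-1-p}/(1-\gamma)$ and then swapping the order of summation over $t$ and $p$ followed by $1-\sqrt{\gamma} \ge (1-\gamma)/2$, one obtains
\[
\sum_{t=1}^{T} \sum_{p=1}^{t-1} \sqrt{\tfrac{\gamma^{t-1-p}}{1-\gamma}} \norm{\theta_p-\theta_{p+1}}_2 \le \tfrac{2 P_T}{(1-\gamma)^{3/2}}.
\]
Plugging in $C(p) = k_\mu L^2 \sqrt{d/\lambda}\sqrt{\sum_s w_{t-1,s}}$ and the prescribed choice $\lambda = d\log(T)/c_\mu$ (so that $\sqrt{d/\lambda} = \sqrt{c_\mu/\log T}$) exactly absorbs one factor of $\sqrt{c_\mu}$, yielding $\sum_t B_t^{\star} = \Ot(k_\mu P_T / (1-\gamma)^{3/2})$; multiplying by the $k_\mu/\sqrt{c_\mu}$ prefactor produces the bias contribution $\Ot(k_\mu^2 P_T / (\sqrt{c_\mu}(1-\gamma)^{3/2}))$. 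For the variance sum, the same choice of $\lambda$ renders~\eqref{eq:SCB-confidence-radius} into $\betat_T = \Ot(\sqrt{d})$, and invoking the weighted potential lemma (Lemma~\ref{lemma:potential-lemma}) exactly as in~\eqref{eq:LB-weighted-potential-sketch} gives $\sum_{t=1}^{T} \norm{X_t}_{V_{t-1}^{-1}} = \Ot(T\sqrt{d(1-\gamma)})$ via $\log(1/\gamma) \le (1-\gamma)/\gamma$. Combined, this is a variance contribution of $\Ot(k_\mu d T \sqrt{1-\gamma}/\sqrt{c_\mu})$, which together with the bias part proves the first display of the theorem.

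Third, I would balance the two terms in $\gamma$. Setting derivatives equal in $A(1-\gamma)^{-3/2} P_T + B(1-\gamma)^{1/2} T$ with $A/B = k_\mu/d$ gives the stated choice $1-\gamma \propto \sqrt{k_\mu P_T/(dT)}$, producing the $\Ot(k_\mu^{5/4} c_\mu^{-1/2} d^{3/4} P_T^{1/4} T^{3/4})$ bound whenever this prescribed value is at most $1-1/T$, i.e.\ $P_T \ge d/(k_\mu T)$; otherwise the boundary choice $\gamma = 1 - 1/T$ kicks in and yields the near-stationary bound $\Ot(k_\mu c_\mu^{-1/2} d\sqrt{T})$.

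I expect the only delicate point to be the constant bookkeeping: specifically, verifying that $\sqrt{4+8S} = 2\sqrt{1+2S}$ makes the UCB cancellation exact so that both the comparator's variance and its bias contribution are absorbed, and checking that the particular choice $\lambda = d\log(T)/c_\mu$ simultaneously (i) converts $\sqrt{d/\lambda}$ in $C(p)$ into a $\sqrt{c_\mu}$-factor that kills one $1/\sqrt{c_\mu}$ in the bias prefactor, and (ii) keeps $\betat_T$ of order $\sqrt{d}$ up to logarithmic factors, so that no stray $c_\mu$ dependence appears in the variance term. Once these algebraic checks are carried out, the rest of the argument is a direct transfer of the LB template.
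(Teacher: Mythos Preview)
Your proposal is correct and follows essentially the same approach as the paper's proof: the UCB decomposition via Lemma~\ref{lemma:SCB-estimation-error} and the selection rule~\eqref{eq:SCB-select-criteria}, the bias sum handled by swapping summation order exactly as in~\eqref{eq:LB-regret-bias-bound}, the variance sum via Cauchy--Schwarz and the weighted potential Lemma~\ref{lemma:potential-lemma}, and finally balancing in $\gamma$. The constant checks you flag (that $\sqrt{4+8S}=2\sqrt{1+2S}$ and that $\lambda=d\log(T)/c_\mu$ makes both $\sqrt{d/\lambda}$ cancel a $1/\sqrt{c_\mu}$ and keeps $\betat_T=\Ot(\sqrt{d})$) are precisely the ones the paper performs.
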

\noindent Compared to GLB, we improve the order of $c_\mu$ from $c_\mu^{-1}$ to $c_\mu^{-\sfrac{1}{2}}$  by exploiting the self-concordant properties. In near-stationary environments ($P_T$ is small enough), our result can recover to the performance of \SCBstatic algorithm~\cite{ICML'20:logistic-bandits}. \iffullversion Proof of~\pref{thm:SCB-regret} is presented in Appendix~\ref{sec:SCB-regret-proof}.\fi

In addition, for the piecewise-stationary SCB, we propose \SCBweightourspw algorithm that gets rid of influence of $c_\mu$ and thus directly improves upon~\cite{AISTATS'21:SCB-forgetting}.
\begin{myThm}
  \label{thm:SCB-PW-regret}       
  For all $\gamma\in(1/2,1)$, $D = \log(T)/\log(1/\gamma)$ and $\lambda = d\log(T)/c_\mu$, the regret of SCB-PW-WeightUCB is bounded with probability at least $1-1/T$, by
  \begin{equation}\nonumber
  \begin{split}
    \DReg_T\leq{}& \Ot\sbr{\frac{1}{1-\gamma}\Gamma_T + \frac{1}{\sqrt{1-\gamma}} + d\sqrt{(1-\gamma)}T}.
  \end{split}
  \end{equation}
  By setting $\gamma = 1-\max\{1/T, \sbr{\Gamma_T/(dT)}^{\sfrac{2}{3}}\}$, we achieve
  \begin{equation}\nonumber
      \DReg_T \leq
      \begin{cases}
      \Ot\sbr{d^{\sfrac{2}{3}}\Gamma_T^{\sfrac{1}{3}}T^{\sfrac{2}{3}}} & \mbox{ when } \Gamma_T \geq d/\sqrt{T},\vspace{2mm}\\
      \Ot\sbr{d\sqrt{T}} & \mbox{ when } 0\leq \Gamma_T < d/\sqrt{T}.
      \end{cases} 
  \end{equation}
\end{myThm}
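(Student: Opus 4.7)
The plan is to reuse the estimator and estimation error bound developed for drifting SCB (Lemma~\ref{lemma:SCB-estimation-error}), and to exploit the sparsity of parameter changes in the piecewise-stationary model in order to sharpen the bias contribution. Let $t_1 < \cdots < t_{\Gamma_T}$ denote the change points and define the \emph{effective memory length} $D = \log(T)/\log(1/\gamma)$, so that $\gamma^D = 1/T$ — beyond this horizon the weights shrink to $\Ot(1/T)$ and become negligible. I will call a round $t$ \emph{dirty} if some change point lies in $[t-D, t-1]$ and \emph{clean} otherwise; by construction the number of dirty rounds is at most $D\Gamma_T = \Ot(\Gamma_T/(1-\gamma))$.

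The UCB argument with the selection rule~\eqref{eq:SCB-select-criteria} bounds the instantaneous regret by twice the estimation error in Lemma~\ref{lemma:SCB-estimation-error} evaluated at $X_t$. I split the sum over rounds into dirty and clean contributions. For dirty rounds I use the trivial bound — the per-round regret is $\O(1)$ by the boundedness of rewards (Assumption~\ref{ass:bounded-rewards}) and the $k_\mu$-Lipschitzness of $\mu$ — giving a total contribution of $\Ot(\Gamma_T/(1-\gamma))$, which is the first term of the stated bound.

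For clean rounds, the bias sum $\sum_{p} (C(p)/\sqrt{c_\mu}) \norm{\theta_p - \theta_{p+1}}_2$ is supported only on change points $t_k \leq t-D$. Using the weighted geometric sum $\sum_{s=1}^{t_k} \gamma^{t-1-s} \leq \gamma^{t-1-t_k}/(1-\gamma)$, each such change contributes at most a factor $\gamma^{(t-1-t_k)/2}/\sqrt{1-\gamma}$; swapping the sums over $t$ and $k$ and invoking $\gamma^D = 1/T$ yields a clean-round bias contribution of $\Ot(1/\sqrt{1-\gamma})$, which is the second term. The variance contribution then parallels the drifting SCB analysis: with $\lambda = d\log(T)/c_\mu$ the confidence radius $\betat_{t-1}$ in~\eqref{eq:SCB-confidence-radius} becomes $\Ot(\sqrt{d\log T})$, and the weighted potential lemma (Lemma~\ref{lemma:potential-lemma}) yields $\sum_t \norm{X_t}_{V_{t-1}^{-1}} = \Ot(T\sqrt{d(1-\gamma)})$, producing the third term $\Ot(d\sqrt{1-\gamma}\,T)$. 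The key to removing $c_\mu$ from the final bound — which the drifting SCB analysis did not achieve — is to perform the per-round regret control in the local curvature-adjusted norm $H_t^{-1}(\theta)$ introduced for the projection step~\eqref{eq:SCB-projection}, using the self-concordance inequality $|\ddmu| \leq \dmu$ to pick up a $\dmu(X_t^\T \theta_t)$ factor that absorbs the $1/\sqrt{c_\mu}$ prefactor from Lemma~\ref{lemma:SCB-estimation-error} when combined with a self-concordant potential-style estimate on $\sum_t \dmu(X_t^\T \theta_t)\norm{X_t}^2_{V_{t-1}^{-1}}$.

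Finally, balancing the three terms by choosing $\gamma = 1 - (\Gamma_T/(dT))^{\sfrac{2}{3}}$ (valid when $\Gamma_T \geq d/\sqrt{T}$) equates the dirty-round and variance contributions and yields the $\Ot(d^{\sfrac{2}{3}}\Gamma_T^{\sfrac{1}{3}} T^{\sfrac{2}{3}})$ rate, while the middle term $1/\sqrt{1-\gamma}$ is dominated in this regime; when $\Gamma_T < d/\sqrt{T}$, setting $\gamma = 1 - 1/T$ recovers the near-stationary $\Ot(d\sqrt{T})$ guarantee. The main obstacle is the cancellation of $c_\mu$ in the variance contribution: a naive reuse of the drifting SCB bound in Theorem~\ref{thm:SCB-regret} would reintroduce $c_\mu^{-\sfrac{1}{2}}$ and fail to resolve the open problem posed by~\cite{AISTATS'21:SCB-forgetting}, so the proof must carefully exploit self-concordance together with the curvature-adjusted projection~\eqref{eq:SCB-projection} and a $\dmu$-weighted potential estimate to ensure the $c_\mu$ factors in the prefactor, the confidence radius, and the regularizer all cancel cleanly.
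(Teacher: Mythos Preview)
Your dirty/clean decomposition and the choice $D = \log T/\log(1/\gamma)$ match the paper, and the resulting $\Gamma_T D = \Ot(\Gamma_T/(1-\gamma))$ contribution from dirty rounds is correct. However, there is a genuine gap in how you propose to remove the $c_\mu$ dependence.

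The paper does \emph{not} reuse the bonus-based selection rule~\eqref{eq:SCB-select-criteria}; instead \SCBweightourspw uses a \emph{parameter-based} criterion $(X_t,\thetat_t) = \argmax_{\x\in\X,\ \theta \in \C_t(\delta)} \mu(\x^\top\theta)$ over a confidence set $\C_t(\delta)$, in the style of~\cite{AISTATS'21:optimal-logistic-bandits}. This algorithmic change is essential: with the bonus-based rule, the UCB argument bounds the instantaneous regret by twice the bonus, which already carries the $k_\mu/\sqrt{c_\mu}$ prefactor baked into the algorithm and cannot be undone by a subsequent local-curvature analysis. With the parameter-based rule, the mean-value theorem gives $R_t \leq \alpha(X_t,\thetat_t,\thetah_t)\,|X_t^\top(\thetat_t-\thetah_t)| + \alpha(X_t,\theta_t,\thetah_t)\,|X_t^\top(\theta_t-\thetah_t)|$, where $\alpha(\x,\theta_1,\theta_2) = \int_0^1 \dmu(v\x^\top\theta_2 + (1-v)\x^\top\theta_1)\,\diff v$ is the \emph{local} slope rather than the global $k_\mu$. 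One then bounds $|X_t^\top(\cdot-\cdot)|$ in the $G_t^{-1}$-norm (which also contains $\alpha$) and applies the weighted potential lemma to the rescaled features $\sqrt{\alpha(X_s,\cdot,\cdot)}\,X_s$; Cauchy--Schwarz yields $\sqrt{\sum_t \alpha} \leq \sqrt{k_\mu T}$, and every occurrence of $c_\mu$ appears only through the product $\lambda c_\mu = d\log T$, so it cancels completely.

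Your proposed remedy --- ``pick up a $\dmu(X_t^\top\theta_t)$ factor that absorbs the $1/\sqrt{c_\mu}$ prefactor'' --- gestures at this local-curvature mechanism but cannot be executed while retaining selection rule~\eqref{eq:SCB-select-criteria}. A secondary point: your clean-round bias computation does not produce $\Ot(1/\sqrt{1-\gamma})$; carrying the geometric sums through Lemma~\ref{lemma:SCB-estimation-error} gives a term of order $\Gamma_T/(\sqrt{c_\mu T}(1-\gamma)^{3/2})$, which is dominated in the final balance but still retains $c_\mu$. In the paper the $1/\sqrt{1-\gamma}$ term arises differently, as the product of the bias component of the confidence radius $\rho_T$ (of order $\gamma^D/((1-\gamma)\sqrt{\lambda c_\mu})$) with the $\alpha$-weighted potential sum.
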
   
\iffullversion \noindent The overall algorithm and analysis are in Appendix~\ref{sec:SCB-PW}.\fi
\section{Linear Mixture MDP}
\label{sec:LMDP}
In this section, we apply the weighted strategy to non-stationary linear mixture MDPs, and describe our \LMDPweightours algorithm and its theoretical guarantee. Our algorithm achieves the same regret bound as the previous restart-based algorithm~\cite{arxiv'22:ns_lin_mdp}.

\subsection{Problem Setting}
\label{sec:LMDP-problem-setting}
We build upon the previously established definition of episodic non-stationary MDPs~\cite{arxiv'22:ns_lin_mdp} and provide the corresponding learning protocol.

\parag{Episodic Non-stationary MDPs.} An episodic MDP is defined by a tuple $M = (\S, \A, H, \P, r)$, where $\mathcal{S}$ is the state space; $\mathcal{A}$ is the action space; $H$ is the length of each episode; $\P = \bbr{\P_h^k}_{h\in[H],k\in[K]}$, where $\P_h^k: \S\times \A \times \S \rightarrow [0,1]$ is the transition probability at $h$-th step of $k$-th episode; and $r = \bbr{r_h^k}_{h\in[H],k\in[K]}$, where $r_h^k: \S\times\A \rightarrow [0,1]$ is the reward function at $h$-th step of $k$-th episode. A policy is defined as $\pi = \bbr{\pi_h^k}_{h\in[H],k\in[K]}$, where each $\pi_h^k:\S\rightarrow\Delta(\A)$ is a function that maps a state $s$ to distributions over action space $\A$ at stage $h$ of the $k$-th episode. 

\parag{Learning Protocol.} At the beginning of each episode $k$, the learner chooses a policy $\pi^k = \bbr{\pi_{h}^k}_{h=1}^H$. At each stage $h\in[H]$, starting from the initial stage $s_{1}^k$, the learner observes the state $s_{h}^k$, chooses an action $a_{h}^k$ sampled from $\pi_h^k(s_{h}^k)$, obtains reward $r_h^k(s_{h}^k,a_{h}^k)$ and transitions to the next state $s_{h+1}^k\sim\P_h^k(\cdot\given s_{h}^k,a_{h}^k)$ for $h\in[H]$. The episode ends when $s_{H+1}^k$ is reached; when this happens, no action is taken, and the reward is equal to zero. For any policy $\pi = \bbr{\pi_h^k}_{h\in[H],k\in[K]}$ and $(s,a)\in\S\times \A$, we define the action-value function $Q_h^{k,\pi}$ and value function $V_h^{k,\pi}$ as 
\begin{equation*}
    \begin{split}
        Q_h^{k,\pi}(s, a)&={} \mathbb{E}\left[\sum_{h^{\prime}=h}^H r_{h^{\prime}}^k\big(s_{h^{\prime}}^k, \pi_{h^\prime}^k\left(s_{h^{\prime}}^k\right)\big) \givenn s_h^k=s, a_h^k=a\right]\\
        V_h^{k,\pi}(s) &={} \E_{a\sim\pi_h^k(\cdot\given s)}\mbr{Q_h^{k,\pi}(s,a)}.
    \end{split}
\end{equation*}
We define $\forall V: \S\rightarrow \R$, $\mbr{\P_h^k V}(s,a) = \E_{s^\prime\sim\P_h^k(\cdot\given s,a)} V(s^\prime)$, and the Bellman equation for policy $\pi$ is given by
\begin{equation*}
    \begin{split}
        Q_h^{k,\pi}(s,a) &=r_h^k(s, a)+\mbr{\P_h^k V_{h+1}^{k,\pi}}(s,a)\\
        V_{h}^{k,\pi}(s) &=\E_{a\sim\pi_h^k(\cdot\given s)}\mbr{Q_h^{k,\pi}(s,a)},\quad V_{H+1}^{k,\pi} = 0.
    \end{split}
\end{equation*}
The learner's goal is to minimize the following dynamic regret,
\begin{equation}\label{eq:LMDP-regret}
    \begin{split}
        \DReg_T=\sum_{k=1}^K V_1^{k,\pi_*^k}\left(s_1^k\right)-\sum_{k=1}^K V_1^{k,\pi^k}\left(s_1^k\right),
    \end{split}
\end{equation}
where we denote $T \define H\cdot K$, for consistency with the bandits notations. The dynamic regret measures the difference between the learner's policy and the optimal policy, namely, $\pi_*^k = \argmax_{\pi}V_{1}^{k,\pi}(s_1^k)$.

\parag{Non-stationary Linear Mixture MDP.~~} An MDP instance $M = (\S, \A, H, \P, r)$ is a linear mixture MDP if there exist known feature maps $\phi:\S\times\A \rightarrow \R^d$ and $\psi: \S\times\A\times\S\rightarrow \R^d$ and for any $k\in[K],h\in[H]$, there exist unknown vectors $\theta_h^k\in\R^d$ and $\w_h^k\in\R^d$ such that 
\begin{equation}\label{eq:LMDP-model}
    \begin{split}
        r_h^k(s,a) &= \inner{\phi(s,a)}{\theta_h^k}\\ \P_h^k(s^\prime \given s,a) &= \inner{\psi(s^\prime\given s,a)}{\w_h^k},
    \end{split}
\end{equation}
here we consider the drifting case where we use path length $P_T^\theta =  \sum_{k=2}^{K} \sum_{h=1}^{H}\norm{\theta_h^{k-1}-\theta_h^k}_2$ to measure the non-stationarity of $\theta_h^k$ and $P_T^\w = \sum_{k=2}^{K} \sum_{h=1}^{H}\norm{\w_h^{k-1}-\w_h^k}_2$ to measure the non-stationarity of $\w_h^k$, and we define $\Delta = P_T^\theta + P_T^\w$ as the total path length. We work under the following standard boundedness assumption~\cite{arxiv'22:ns_lin_mdp}.
\begin{myAssum}\label{ass:LMDP-bounded-norm} 
The feasible set and unknown parameters are assumed to be bounded: $\forall s\in\S, a\in\A, \norm{\phi(s,a)}\leq L_{\phi}$; for any bounded function $V: \S\rightarrow [0,1]$ and $\forall s\in\S, a\in\A$, $\norm{\psi_V(s,a)}_2\leq L_\psi, \text{where~} \psi_V(s,a) \define\sum_{s^\prime\in\S}\psi(s^\prime\given s,a) V(s^\prime)$; $\theta_h^k\in \Theta$ holds for all $k\in[K],h\in[H]$ where $\Theta \define \bbr{\theta \given \norm{\theta}_2\leq S_\theta}$ and $\w_h^k\in \W$ holds for all $k\in[K],h\in[H]$ where $\W \define \bbr{\w \given \norm{\w}_2\leq S_\w}$.
\end{myAssum}

\subsection{Algorithm and Regret Guarantee}
\label{sec:LMDP-algorithm}
We propose the \LMDPweightours algorithm in this section. We first give the employed reward estimator and transition estimator, and then derive its estimation error upper bound by our refined analysis framework, which is the key for algorithm design and regret analysis. Based on the estimation error bound, we propose our selection criterion and finally give the theoretical guarantee on its dynamic regret.

\parag{Reward Estimator.~}
At $h$-th stage of $k$-th episode, we use the following weighted least square estimator $\th_{h}^k$ to estimate the unknown parameter $\theta_h^k$, which is the minimizer of
\begin{equation*}
    \begin{split}
        \frac{\lamt}{2}\norm{\theta}_2^2 + \sum_{j=1}^{k-1}w_{k-1,j}\sbr{r_h^j(s_h^j,a_h^j) - \phi(s_h^j,a_h^j)^\T\theta}^2,
    \end{split}
\end{equation*}
where $\lambda_\theta > 0$ is the regularization coefficient and $w_{k-1,j}$ is the weighted factor. Similar to the Estimator~\eqref{eq:LB-estimator}, we set $w_{k,j} = \gamma^{k-j}$, where $\gamma \in (0, 1)$ is the discounted factor. Clearly, $\th_{h}^k$ has a closed-form solution:
\begin{equation}\label{eq:LMDP-estimate-r}
    \begin{split}
        \th_{h}^k = \sbr{\Lambda_{h}^{k-1}}^{-1}\sbr{\sum_{j=1}^{k-1}w_{k-1,j}r_h^j(s_h^j,a_h^j)\phi(s_h^j,a_h^j)},
    \end{split}
\end{equation}
where $\Lambda_{h}^k = \lamt I_d + \sum_{j=1}^{k}w_{k,j}\phi(s_h^j,a_h^j)\phi(s_h^j,a_h^j)^\T$.

\parag{Transition Estimator.} First notice that, based on model assumption~\eqref{eq:LMDP-model}, we have
\begin{align}
        \mbr{\mathbb{P}_h^k V_{h+1}^k}\sbr{s_h^k, a_h^k} ={}& \sum_{s^\prime}\inner{\psi(s^\prime\given s_h^k, a_h^k)}{\w^k_h}V_{h+1}^k(s^\prime)\nonumber\\={}& \inner{\sum_{s^\prime}\psi(s^\prime\given s_h^k, a_h^k)V_{h+1}^k(s^\prime)}{\w^k_h}\nonumber \\={}& \inner{\psi_{h+1}^k\left(s_h^k, a_h^k\right)}{ \w^k_h},\label{eq:LMDP-linear-transtion}
\end{align}
where we denote $\psi_{h+1}^k\left(s,a\right) \define \psi_{V_{h+1}^k}(s,a)$ for simplicity. Based on Assumption~\ref{ass:LMDP-bounded-norm}, we know that $\forall s\in\S, a\in\A, \norm{\psi_{h+1}^k\left(s,a\right)}_2\leq HL_\psi$. We adopt the following weighted least square estimator $\wh_{h}^k$ to estimate the unknown parameter $\w_h^k$, which is the minimizer of
\begin{equation*}
    \sum_{j=1}^{k-1}\alpha_{k-1,j}\sbr{\langle\psi_{h+1}^j(s_h^j, a_h^j),\w\rangle - V_{h+1}^j(s_{h+1}^j)}^2 + \frac{\lamw}{2}\norm{\w}_2^2,
\end{equation*}
where $\lamw>0$ is the regularization coefficient and $\alpha_{k-1,j}$ is the weighted factor, we set $\alpha_{k,j} = \gamma^{k-j}$, where $\gamma \in (0, 1)$ is the discounted factor. For this least square estimator, we have a closed-form solution for $\wh_{h}^k$:
\begin{equation}\label{eq:LMDP-estimate-v}
    \begin{split}
        \sbr{\Sigma_{h}^{k-1}}^{-1}\sbr{\sum_{j=1}^{k-1}\alpha_{k-1,j}V_{h+1}^j(s_{h+1}^j)\psi_{h+1}^j\sbr{s_h^j, a_h^j}},
    \end{split}
\end{equation}
where $\Sigma_{h}^k = \lamw I_d + \sum_{j=1}^{k}\alpha_{k,j}\psi_{h+1}^j(s_h^j, a_h^j)\psi_{h+1}^j(s_h^j, a_h^j)^\T$.

\begin{algorithm}[!t]
    \caption{\LMDPweightours}
    \label{alg:LMDP-weight}
    \begin{algorithmic}[1]
    \REQUIRE episode number $K$, time horizon $H$, discounted factor $\gamma$, confidence $\delta$, regularizer $\lamt,\lamw$, parameters $S_\theta, S_\w$, $L_\phi, L_\psi$\\
    \STATE Initialize $\bbr{\pi_h^0}_{h=1}^H$ as uniform distribution policies, $\bbr{Q_h^0}_{h=1}^H$ as zero functions.
    \STATE Set $\forall h\in[H], \Lambda_{h}^0 = \lamt I_d, \Sigma_{h}^0 = \lamw I_d$ for $k=1$
    \FOR{$k = 1,2,...,K$}
    \STATE Receive the initial state $s_1^k$
    \STATE Initialize $V_{H+1}^k$ as zero function
    \FOR{$h = H,H-1,...,1$}
    \STATE $\psi_{h+1}^k\left(\cdot, \cdot\right) = \sum_{s^\prime}\psi(s^\prime\given \cdot, \cdot)V_{h+1}^k(s^\prime)$
    \STATE Update $\Lambda_{h}^k = \gamma \Lambda_{h}^{k-1} + \phi(s_h^k,a_h^k)\phi(s_h^k,a_h^k)^\T +(1-\gamma) \lamt I_d $
    \STATE Update $\Sigma_{h}^k = \psi_{h+1}^k\sbr{s_h^k, a_h^k}\psi_{h+1}^k\sbr{s_h^k, a_h^k}^\T + \gamma\Sigma_{h}^{k-1} + (1-\gamma) \lamw I_d$
    \STATE Compute $\th_{h}^k$ by~\eqref{eq:LMDP-estimate-r} and $\wh_{h}^k$ by~\eqref{eq:LMDP-estimate-v}
    \STATE Compute optimistic value function $Q_h^k(s,a)$ and $V_h^k(s)$ by~\eqref{eq:LMDP-optimistic-Q}
    \ENDFOR
    \FOR{$h = 1,...,H$}
    \STATE Choose policy as $\pi_h^k(s)=\argmax_{a\in\A}Q_{h}^k(s,a)$
    \STATE Take action $a_h^k\sim \pi_h^k(s_h^k)$, then observe the reward $r_h^k(s_h^k,a_h^k)$ and receive the next state $s_{h+1}^k$
    \ENDFOR
    \ENDFOR
    
    \end{algorithmic}
    \end{algorithm}

\parag{Upper Confidence Bounds.} 
For estimator~\eqref{eq:LMDP-estimate-r} and~\eqref{eq:LMDP-estimate-v}, we provide the estimation error bounds, respectively.
\begin{myLemma}
    \label{lemma:LMDP-r-ucb}
    For any $s\in\S, a\in\A$, the following holds for all $k \in [K], h\in[H]$,
    \begin{equation*}
        \begin{split}
            \abs{\phi(s,a)^\T\sbr{\th_{h}^k -\theta_{h}^k}} \leq \Gamma_{h,\theta}^{k-1} + \beta_\theta\norm{\phi(s,a)}_{\sbr{\Lambda_{h}^{k-1}}^{-1}},
        \end{split}
    \end{equation*}
    where $\Gamma_{h,\theta}^{k-1} \define L_{\phi}^2\sqrt{\frac{d}{\lamt}}\sum_{p=1}^{k-1}\sqrt{\sum_{j=1}^{p}w_{k-1,j}}\norm{\theta_h^p-\theta_h^{p+1}}_2$, $\beta_\theta \define \sqrt{\lamt}S_\theta$. 
\end{myLemma}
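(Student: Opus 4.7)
The plan is to specialize the refined analysis framework of Lemma~\ref{lemma:LB-estimation-error} to the weighted ridge estimator $\th_h^k$ in~\eqref{eq:LMDP-estimate-r}. A crucial simplification relative to the LB setting is that under the linear mixture MDP model~\eqref{eq:LMDP-model}, the observed reward $r_h^j(s_h^j,a_h^j) = \phi(s_h^j,a_h^j)^\T \theta_h^j$ is noiseless, so the estimation error decomposes into only a bias term (from the drift of $\theta_h^\cdot$) plus a regularization term; no stochastic self-normalized concentration is needed, and the bound will hold deterministically (uniformly in $k,h$).

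First, I would substitute the closed form~\eqref{eq:LMDP-estimate-r} together with $r_h^j(s_h^j,a_h^j) = \phi(s_h^j,a_h^j)^\T \theta_h^j$, and use the identity $\sum_{j=1}^{k-1} w_{k-1,j}\phi_j\phi_j^\T \theta_h^k = \Lambda_h^{k-1}\theta_h^k - \lamt \theta_h^k$ (writing $\phi_j \define \phi(s_h^j,a_h^j)$) to obtain the decomposition
\begin{equation*}
  \th_h^k - \theta_h^k = \big(\Lambda_h^{k-1}\big)^{-1}\!\!\sum_{j=1}^{k-1} w_{k-1,j}\phi_j\phi_j^\T(\theta_h^j - \theta_h^k) \;-\; \lamt \big(\Lambda_h^{k-1}\big)^{-1}\theta_h^k.
\end{equation*}
Applying Cauchy--Schwarz in the single local norm $(\Lambda_h^{k-1})^{-1}$ then yields $|\phi(s,a)^\T(\th_h^k - \theta_h^k)| \le \|\phi(s,a)\|_{(\Lambda_h^{k-1})^{-1}}(A_k + B_k)$, where $A_k$ and $B_k$ are the $(\Lambda_h^{k-1})^{-1}$-norms of the bias vector and of $\lamt \theta_h^k$, respectively.

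For the regularization term, since $\Lambda_h^{k-1} \succeq \lamt I_d$ and $\|\theta_h^k\|_2 \le S_\theta$, I immediately get $B_k = \lamt\|\theta_h^k\|_{(\Lambda_h^{k-1})^{-1}} \le \sqrt{\lamt}\,S_\theta = \beta_\theta$. For the bias term, I telescope $\theta_h^j - \theta_h^k = \sum_{p=j}^{k-1}(\theta_h^p - \theta_h^{p+1})$, bound each inner product by $|\phi_j^\T(\theta_h^p-\theta_h^{p+1})| \le L_\phi\|\theta_h^p-\theta_h^{p+1}\|_2$, swap the order of summation over $j$ and $p$, and then apply Cauchy--Schwarz on the inner $j$-sum together with the standard trace identity $\sum_{j=1}^{p} w_{k-1,j}\|\phi_j\|^2_{(\Lambda_h^{k-1})^{-1}} \le \operatorname{trace}(I_d)=d$ (exactly as in the proof of Lemma~\ref{lemma:LB-A_t-bound}) to conclude $A_k \le L_\phi \sqrt{d}\sum_{p=1}^{k-1}\sqrt{\sum_{j=1}^{p} w_{k-1,j}}\,\|\theta_h^p - \theta_h^{p+1}\|_2$.

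To recover the precise form stated in the lemma, I would then absorb the factor $\|\phi(s,a)\|_{(\Lambda_h^{k-1})^{-1}} \le L_\phi/\sqrt{\lamt}$ into the bias part only, producing the coefficient $L_\phi^2\sqrt{d/\lamt}$ that defines $\Gamma_{h,\theta}^{k-1}$, while keeping the local-norm factor $\|\phi(s,a)\|_{(\Lambda_h^{k-1})^{-1}}$ multiplying $\beta_\theta$ so as not to loosen the variance-type part (which matters for the downstream weighted-potential summation). The only step that requires genuine care is the telescoping/Cauchy--Schwarz bookkeeping that extracts the $\sqrt{\sum_{j=1}^{p} w_{k-1,j}}$ factor; every other step is a direct transcription of the LB analysis, made noticeably easier by the absence of stochastic noise in the reward model.
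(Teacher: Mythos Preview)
Your proposal is correct and follows essentially the same approach as the paper's proof: the same bias-plus-regularization decomposition exploiting the noiseless reward model, the same Cauchy--Schwarz in the $(\Lambda_h^{k-1})^{-1}$-norm, the same telescoping and trace bound (Lemma~\ref{lemma:d}) for the bias term, and the same selective absorption of $\|\phi(s,a)\|_{(\Lambda_h^{k-1})^{-1}}\le L_\phi/\sqrt{\lamt}$ into the bias part only. There is no meaningful difference between your argument and the paper's.
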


\begin{myLemma}
    \label{lemma:LMDP-v-ucb}
    For any $s\in\S, a\in\A$, and $\delta \in (0,1)$, with probability at least $1-\delta$, the following holds for all $k \in [K], h\in[H]$
    \begin{align*}
    &\abs{\psi_{h+1}^k\sbr{s, a}^\T \sbr{\wh_{h}^k - \w_h^k}} \\
    & \leq \Gamma_{h,\w}^{k-1} + \beta_{\w}^{k-1}\norm{\psi_{h+1}^k\sbr{s, a}}_{\sbr{\Sigma_{h}^{k-1}}^{-1}},
    \end{align*}
    $\Gamma_{h,\w}^{k-1} \define  H^2L_{\psi}^2\sqrt{\frac{d}{\lamw}}\sum_{p=1}^{k-1}\sqrt{\sum_{j=1}^{p}\alpha_{k-1,j}}\norm{\w_h^p-\w_h^{p+1}}_2$, and $\beta_{\w}^{k}$ is the radius of confidence region defined by
    \begin{equation*}
      \begin{split}
      H \sqrt{\frac{1}{2}\log \frac{1}{\delta}+\frac{d}{4}\log\left(1+\frac{H^2 L_\psi^2\sum_{j=1}^{k}\alpha_{k,j}}{\lamw d}\right)} +\sqrt{\lamw}S_\w.
    \end{split}
    \end{equation*}
\end{myLemma}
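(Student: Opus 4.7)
The plan is to mirror the bias/variance decomposition of Lemma~\ref{lemma:LB-estimation-error}, now applied to the weighted regression problem defining $\wh_h^k$. Writing $\eta_h^j \define V_{h+1}^j(s_{h+1}^j) - \inner{\psi_{h+1}^j(s_h^j,a_h^j)}{\w_h^j}$, which is a martingale difference with respect to the natural filtration $\{\F_h^j\}$ generated by all observations up to, but excluding, $s_{h+1}^j$, substitution of the model equation~\eqref{eq:LMDP-linear-transtion} into the closed form~\eqref{eq:LMDP-estimate-v} gives
\begin{equation*}
\wh_h^k - \w_h^k = \underbrace{\sbr{\Sigma_h^{k-1}}^{-1}\sum_{j=1}^{k-1}\alpha_{k-1,j}\psi_{h+1}^j(s_h^j,a_h^j)\psi_{h+1}^j(s_h^j,a_h^j)^\T(\w_h^j-\w_h^k)}_{\text{bias}} + \underbrace{\sbr{\Sigma_h^{k-1}}^{-1}\sbr{\sum_{j=1}^{k-1}\alpha_{k-1,j}\eta_h^j\psi_{h+1}^j(s_h^j,a_h^j)-\lamw \w_h^k}}_{\text{variance}}.
\end{equation*}
Applying Cauchy--Schwarz with the \emph{same} local norm $\|\cdot\|_{(\Sigma_h^{k-1})^{-1}}$ on both parts (exactly as in~\eqref{eq:LB-decompose}) yields $|\psi_{h+1}^k(s,a)^\T(\wh_h^k-\w_h^k)|\leq \|\psi_{h+1}^k(s,a)\|_{(\Sigma_h^{k-1})^{-1}}\cdot(A_h^k+B_h^k)$ for appropriately defined bias term $A_h^k$ and variance term $B_h^k$.

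For the bias term $A_h^k$, I would follow the argument sketched in Section~\ref{sec:LB-analysis-framework} for Lemma~\ref{lemma:LB-A_t-bound}, telescoping $\w_h^j-\w_h^k=\sum_{p=j}^{k-1}(\w_h^p-\w_h^{p+1})$, interchanging the order of summation, and controlling $\|\psi_{h+1}^j(s_h^j,a_h^j)\|_{(\Sigma_h^{k-1})^{-1}}$ by the weighted potential-type bound. Using $\|\psi_{h+1}^j(s_h^j,a_h^j)\|_2\leq HL_\psi$ and a Cauchy--Schwarz step gives
\begin{equation*}
A_h^k\leq H^2 L_\psi^2\sqrt{\frac{d}{\lamw}}\sum_{p=1}^{k-1}\sqrt{\sum_{j=1}^{p}\alpha_{k-1,j}}\,\norm{\w_h^p-\w_h^{p+1}}_2 = \Gamma_{h,\w}^{k-1},
\end{equation*}
matching the first term in the claim.

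For the variance term $B_h^k$, the crucial observation is that $|\eta_h^j|\leq H$ because $V_{h+1}^j\in[0,H]$, so by Hoeffding's lemma $\eta_h^j$ is $(H/2)$-sub-Gaussian conditionally on $\F_h^j$. Since the weights satisfy $|\alpha_{k-1,j}|\leq 1$, the rescaled noise $\tilde{\eta}_h^j\define\sqrt{\alpha_{k-1,j}}\eta_h^j$ remains $(H/2)$-sub-Gaussian, and the rescaled regressor $\tilde{\psi}_h^j\define\sqrt{\alpha_{k-1,j}}\psi_{h+1}^j(s_h^j,a_h^j)$ satisfies $\Sigma_h^{k-1}=\lamw I_d+\sum_{j=1}^{k-1}\tilde{\psi}_h^j(\tilde{\psi}_h^j)^\T$. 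Exactly as in the variance analysis of Lemma~\ref{lemma:LB-estimation-error}, this lets me apply the \emph{standard} self-normalized concentration (Theorem~\ref{thm:snc-AY})---without appealing to any weighted variant---combined with $\|\w_h^k\|_2\leq S_\w$ to obtain $B_h^k\leq \beta_\w^{k-1}$ with $\beta_\w^{k-1}$ as defined in the statement, on the high-probability event.

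The main obstacle will be verifying that the predictability required for the self-normalized bound holds despite the fact that $V_{h+1}^j$ is data-dependent: concretely, I must check that $\psi_{h+1}^j(s_h^j,a_h^j)$ is $\F_h^j$-measurable while $\eta_h^j$ is a conditionally centered, conditionally sub-Gaussian increment with respect to $\F_h^j$. This holds because $V_{h+1}^j$ is computed at the start of episode $j$ from strictly prior data (see Algorithm~\ref{alg:LMDP-weight}), and $(s_h^j,a_h^j)$ precedes $s_{h+1}^j$ in the filtration. A technical care-point is that a uniform-in-$k$ high-probability guarantee is needed, which is obtained by a stopping-time argument standard in self-normalized martingale analysis; this gives the claim uniformly over all $k\in[K]$ and $h\in[H]$ with the stated confidence radius $\beta_\w^{k-1}$.
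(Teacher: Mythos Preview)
Your proposal is correct and mirrors the paper's proof almost step for step: the same bias/variance split under the single local norm $(\Sigma_h^{k-1})^{-1}$, the telescoping-plus-Cauchy--Schwarz bias bound (the paper's Lemma~\ref{lemma:LMDP-C}), and the rescaling $\tilde\eta_h^j=\sqrt{\alpha_{k-1,j}}\eta_h^j$, $\tilde\psi_h^j=\sqrt{\alpha_{k-1,j}}\psi_{h+1}^j$ combined with Hoeffding's lemma and the standard self-normalized bound (the paper's Lemma~\ref{lemma:LMDP-D}). One minor bookkeeping point: the paper obtains $A_h^k\leq HL_\psi\sqrt{d}\sum_p\sqrt{\sum_j\alpha_{k-1,j}}\|\w_h^p-\w_h^{p+1}\|_2$, and the extra factor $HL_\psi/\sqrt{\lamw}$ that turns this into $\Gamma_{h,\w}^{k-1}$ comes from bounding $\|\psi_{h+1}^k(s,a)\|_{(\Sigma_h^{k-1})^{-1}}\leq HL_\psi/\sqrt{\lamw}$ on the bias side only, not from $A_h^k$ itself---so your displayed inequality for $A_h^k$ should really be for $\|\psi_{h+1}^k(s,a)\|_{(\Sigma_h^{k-1})^{-1}}\cdot A_h^k$.
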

\iffullversion \noindent Proof of Lemma~\ref{lemma:LMDP-r-ucb} and Lemma~\ref{lemma:LMDP-v-ucb} are in Appendix~\ref{app:LMDP-lemma-r-ucb},~\ref{app:LMDP-lemma-v-ucb}.\fi

\parag{Arm Selection.~} Based on Lemma~\ref{lemma:LMDP-r-ucb} and Lemma~\ref{lemma:LMDP-v-ucb}, we define the optimistic value function $Q_h^k(s,a)$ and $V_{h}^k(s)$ as follows,
\begin{align}
      Q_{h}^k(s,a) &\define \min\Bigg\{H, \phi(s,a)^\T\th_{h}^k+\beta_\theta\norm{\phi(s,a)}_{\sbr{\Lambda_{h}^{k-1}}^{-1}}\nonumber \\&\hspace{-2em}+ \psi_{h+1}^k\sbr{s, a}^\top \wh_{h}^k+\beta_{\w}^{k-1}\norm{\psi_{h+1}^k\sbr{s, a}}_{\sbr{\Sigma_{h}^{k-1}}^{-1}}\Bigg\}\nonumber\\
      \label{eq:LMDP-optimistic-Q} V_{h}^k(s) &\define \max_{a\in\A}Q_{h}^k(s,a) = \E_{a\sim \pi_h^k(\cdot\given s)}\mbr{Q_{h}^k(s,a)}.
\end{align}
At state $s_h^k$, we can specify the action selection criterion of our policy $\pi_h^k(s_h^k)$ as $a_h^k=\argmax_{a\in\A}Q_{h}^k(s_h^k,a)$. The overall algorithm is summarized in Algorithm~\ref{alg:LMDP-weight}. We show that our algorithm enjoys the following regret guarantee.
\begin{myThm}\label{thm:LMDP-regret-bound}
    Let $T = KH$, $\delta = 1/(4T)$, $\lambda_\theta = d$, and $\lambda_\w = H^2 d$, $\forall k,j\in [K], w_{k,j} = \alpha_{k,j} = \gamma^{k-j}$, $\gamma\in(1/K,1)$, the dynamic regret $ \DReg_T$ is bounded with probability at least $1-1/T$ by
    \begin{equation*}
        \begin{split}
            \O\Bigg(Hd\sbr{\frac{1}{(1-\gamma)^{3 / 2}} \Delta + HK\sqrt{1-\gamma}} + H^{3/2}d\sqrt{HK}\Bigg),
    \end{split}
    \end{equation*}
    Furthermore, by setting the discounted factor optimally as $\gamma = 1-\max\bbr{1/K,\sqrt{\Delta/T}}$, we have
\begin{equation*}
    \DReg_T \leq \begin{cases}\Ot\sbr{Hd \Delta^{1/4}T^{3/4}} & \text { when } \Delta \geq H / K, \\ \Ot\sbr{dH^{3/2}\sqrt{T}}& \text { when } \Delta<H / K .\end{cases}
    \end{equation*}
\end{myThm}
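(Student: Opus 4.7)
The plan is to combine the optimistic value-iteration regret decomposition for episodic MDPs with the weighted bias/variance machinery from Section~\ref{sec:LB-analysis-framework}. Since the UCB bonuses in~\eqref{eq:LMDP-optimistic-Q} cover only the variance radii from Lemmas~\ref{lemma:LMDP-r-ucb} and~\ref{lemma:LMDP-v-ucb} and not the non-stationarity biases $\Gamma_{h,\theta}^{k-1}$ and $\Gamma_{h,\w}^{k-1}$, strict optimism fails. I would instead establish a \emph{biased optimism} by backward induction on $h$: whenever the clip at $H$ is inactive, the two error bounds imply
\begin{equation*}
V_h^{k,\pi_*^k}(s) \leq V_h^k(s) + \sum_{h'=h}^{H} \E_{\pi_*^k}\bigl[\Gamma_{h',\theta}^{k-1}+\Gamma_{h',\w}^{k-1}\bigm|s_h=s\bigr],
\end{equation*}
while when the clip is active the inequality is trivial since $V_h^{k,\pi_*^k}\le H$.

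Next, I would apply the standard value-difference lemma to unroll $V_1^k(s_1^k)-V_1^{k,\pi^k}(s_1^k)$ along the realized trajectory under $\pi^k$. Because $\pi^k$ is greedy with respect to $Q_h^k$, this recursion produces an $H$-step sum whose per-stage contributions are (i) the reward UCB radius $2\beta_\theta\|\phi(s_h^k,a_h^k)\|_{(\Lambda_h^{k-1})^{-1}}$ together with bias $2\Gamma_{h,\theta}^{k-1}$, (ii) the analogous transition pair $2\beta_\w^{k-1}\|\psi_{h+1}^k(s_h^k,a_h^k)\|_{(\Sigma_h^{k-1})^{-1}}+2\Gamma_{h,\w}^{k-1}$, and (iii) a martingale-difference term between $[\P_h^k V_{h+1}^k](s_h^k,a_h^k)$ and $V_{h+1}^k(s_{h+1}^k)$.

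I would then control each class of terms after summing over $k\in[K]$. For the variance contributions, Cauchy--Schwarz together with the weighted potential lemma (Lemma~\ref{lemma:potential-lemma}, in the spirit of~\eqref{eq:LB-weighted-potential-sketch}) yields $\sum_{k=1}^K\|\phi(s_h^k,a_h^k)\|_{(\Lambda_h^{k-1})^{-1}}=\Ot(K\sqrt{d(1-\gamma)})$ per stage, and likewise for $\psi_{h+1}^k$ under $(\Sigma_h^{k-1})^{-1}$. Multiplying by $\beta_\theta=\Ot(\sqrt{d})$ and $\beta_\w^{k-1}=\Ot(H\sqrt{d})$ and summing over $h\in[H]$ produces the $\Ot(H^2 d K\sqrt{1-\gamma})$ variance term. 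For the bias contributions $\sum_{k,h}(\Gamma_{h,\theta}^{k-1}+\Gamma_{h,\w}^{k-1})$ that come from both the biased-optimism envelope and the value-difference recursion, I would swap the order of summation in $k$ and $p$ exactly as in the sketch of Lemma~\ref{lemma:LB-estimation-error}; using $\sqrt{\sum_{j=1}^p w_{k-1,j}}\le \gamma^{(k-1-p)/2}/\sqrt{1-\gamma}$, the inner sum over $k>p$ collapses to $\Ot(1/(1-\gamma)^{3/2})$, which together with the choices $\lamt=d$ and $\lamw=H^2 d$ yields the bias bound $\Ot(Hd\,\Delta/(1-\gamma)^{3/2})$. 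The martingale-difference sum across $K$ episodes and $H$ stages is bounded by Azuma--Hoeffding, giving the lower-order $\Ot(H^{3/2} d\sqrt{HK})$ contribution.

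The main obstacle is the bias-side bookkeeping together with the biased-optimism induction: the transition bias $\Gamma_{h,\w}^{k-1}$ depends on $\psi_{h+1}^k(\cdot,\cdot)=\sum_{s'}\psi(s'|\cdot,\cdot)V_{h+1}^k(s')$, which itself is defined through the optimistic value function built from later stages, so the telescoping in $k$ must be carried out stage by stage without letting $V_{h+1}^k$ inflate the $H$-dependence beyond $H^2L_\psi^2$, and the biased-optimism induction must close without extra blow-up. Once the three contributions are assembled into the intermediate bound $\Ot\bigl(Hd(\Delta/(1-\gamma)^{3/2} + HK\sqrt{1-\gamma}) + H^{3/2}d\sqrt{HK}\bigr)$, tuning $\gamma=1-\max\{1/K,\sqrt{\Delta/T}\}$ balances the bias and variance terms and yields the claimed $\Ot(Hd\Delta^{1/4}T^{3/4})$ rate when $\Delta\ge H/K$, while the choice $1-\gamma=1/K$ recovers the $\Ot(dH^{3/2}\sqrt{T})$ rate in the near-stationary regime.
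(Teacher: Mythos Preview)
Your proposal is correct and matches the paper's proof almost exactly: the paper packages your ``biased optimism'' and trajectory unrolling into a single regret decomposition $\DReg_T\le\sum_{k,h}(\E_{\pi_*^k}[E_h^k]-E_h^k(s_h^k,a_h^k))+\text{martingales}$ via the model-prediction error $E_h^k=r_h^k+[\P_h^kV_{h+1}^k]-Q_h^k$, where the upper bound $E_h^k\le\Gamma_{h,\theta}^{k-1}+\Gamma_{h,\w}^{k-1}$ is precisely your biased optimism and the lower bound supplies the variance radii. Two small corrections: (i) your worry about $\psi_{h+1}^k$ depending on $V_{h+1}^k$ is a non-issue, since Lemma~\ref{lemma:LMDP-v-ucb} (via Assumption~\ref{ass:LMDP-bounded-norm}) only uses $\|\psi_{h+1}^j(s,a)\|_2\le HL_\psi$ regardless of which past optimistic value defines it; (ii) the $H^{3/2}d\sqrt{HK}$ term comes from the constant part of the weighted potential lemma (the $\log(1+\cdot)$ piece), not from Azuma--Hoeffding, which contributes only $\Ot(H\sqrt{T})$ without a $d$ factor.
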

\iffullversion The proof of Theorem~\ref{thm:LMDP-regret-bound} is in Appendix~\ref{app:proof-LMDP-regret}. \fi Compared to previous work~\cite{arxiv'22:ns_lin_mdp}, our results achieve the same order regret guarantees for dynamic regret in non-stationary environments. Furthermore, in near-stationary settings, our results recover the theoretical guarantees established for stationary environments~\cite{L4DC'20:lin_mix_mdp,ICML'20:lin_mix_mdp}.
\section{Multinomial Logit Mixture MDP}
\label{sec:MNL-MDP}
In this section, we explore another class of MDPs, known as the Multinomial Logit (MNL) Mixture MDP, under the non-stationary setting. We introduce the \MNLweightours algorithm, which applies the weighted strategy to non-stationary MNL Mixture MDPs, and provide the first theoretical guarantee for non-stationary MNL Mixture MDP.

\subsection{Problem Setting}
\label{sec:MMDP-problem-setting}
To address the limitation that linear function approximation cannot guarantee valid distribution, a new class called MNL mixture MDPs has been proposed recently~\cite{AAAI'23:MNL_MDP,NIPS'24:MNL_MDP_efficient}. 
The MNL mixture MDPs share the same episodic non-stationary MDPs structure and learning protocol as the linear mixture MDP, with the objective of minimizing \blue{dynamic regret}~\eqref{eq:LMDP-regret}. The key distinction lies in its modeling assumptions for the transition probabilities. Below, we present the formal definition of MNL Mixture MDPs.

\begin{myDef}[Reachable States] For any $(k, h, s, a) \in[K]\times [H] \times \S \times \A$, we define the "reachable states" as the set of states that can be reached from state $s$ taking action $a$ at stage $h$ of $k$-th episode within a single transition, i.e., $\S_{h}^k(s,a) \define\left\{s^{\prime} \in \S \given \P_h^k\left(s^{\prime} \given s, a\right)>0\right\}$. Also, we define $S_{h}^k(s,a) \define\abs{\S_{h}^k(s, a)}$ and further define $U \define \max _{(k, h, s, a)} S_{h}^k(s,a)$ as the maximum number of reachable states.
\end{myDef}

\parag{Non-stationary MNL Mixture MDP.~~} $M = (\S, \A, H, \P, r)$ is a MNL mixture MDP if there exist known feature maps $\phi:\S\times\A \rightarrow \R^d$ and $\psi: \S\times\A\times\S\rightarrow \R^d$ and for any $k\in[K],h\in[H]$, there exist unknown vectors $\theta_h^k\in\R^d$ and $\w_h^k\in\R^d$ such that 
\begin{equation}\label{eq:MNL-model}
    \begin{split}
        r_h^k(s,a) &= \inner{\phi(s,a)}{\theta_h^k}\\ \P_h^k(s^\prime \given s,a) &= \frac{\exp\sbr{\psi(s^\prime\given s,a)^\T \w_h^k}}{\sum_{\tilde{s}\in \S_{h}^k(s,a)}\exp\sbr{\psi(\tilde{s}\given s,a)^\T \w_h^k}}.
    \end{split}
\end{equation}
We work under the following standard assumptions of MNL Mixture MDP~\cite{AAAI'23:MNL_MDP,NIPS'24:MNL_MDP_efficient}.
\begin{myAssum}\label{ass:MNL-bounded-norm} 
    The feasible set and unknown parameters are assumed to be bounded: $\forall s\in\S, a\in\A, \norm{\phi(s,a)}\leq L_{\phi}$; $\forall s^\prime s\in\S, a\in\A, \norm{\psi(s^\prime\given s,a)}\leq L_{\psi}$; $\theta_h^k\in \Theta$ holds for all $k\in[K],h\in[H]$ where $\Theta \define \bbr{\theta \given \norm{\theta}_2\leq S_\theta}$ and $\w_h^k\in \W$ holds for all $k\in[K],h\in[H]$ where $\W \define \bbr{\w \given \norm{\w}_2\leq S_\w}$.
    \end{myAssum}
\begin{myAssum}\label{ass:MNL-kappa}
    There exists $0<\kappa<1$ such that for all $(s,a,h)\in\S\times\A\times[H]$ and $s^\prime,s^{\prime\prime}\in\S_{h}^k(s,a)$, it holds that $\inf_{\w\in \W}p_{s,a}^{s^\prime}(\w)p_{s,a}^{s^{\prime\prime}}(\w) \geq \kappa$.
\end{myAssum}

\subsection{Algorithm and Regret Guarantee}
\label{sec:MNL-algorithm}
We propose the \MNLweightours algorithm, which obtains the first dynamic regret guarantee for non-stationary MNL Mixture MDPs. We begin by presenting the estimator used in our approach and deriving its estimation error upper bound by our refined analysis framework, which is the key for algorithm design and regret analysis. Building on the estimation error bound, we propose our selection criterion and finally give the theoretical guarantee on its dynamic regret.

\parag{Reward Estimator.~} Since we use the same linear function as~\eqref{eq:LMDP-model} to model the reward function, here we still use estimator~\eqref{eq:LMDP-estimate-r} to estimate the unknown parameter $\theta_h^k$.

\parag{Transition Estimator.~} For the trajectory $\{(s_h^k,a_h^k)\}_{h=1}^H$ at episode $k$, we define the variable: $y_h^k\in\{0,1\}^{S_h^k}$, where $y_h^k(s^\prime) = \indicator(s_{h+1}^k = s^\prime)$ for $s^\prime \in \S_{h}^k\define\S_{h}^k(s_h^k,a_h^k)$ and $S_h^k \define \abs{\S_{h}^k}$. Furthermore, we denote $p_h^k(\psi(s^\prime\given s,a)^\T \w) = \frac{\exp\sbr{\psi(s^\prime\given s,a)^\T \w}}{\sum_{\tilde{s}\in \S_{h}^k(s,a)}\exp\sbr{\psi(\tilde{s}\given s,a)^\T \w}}$. We define $\psib_h^k(s^\prime) \define \psi(s^\prime\given s_h^k,a_h^k)$. Then $y_h^k$ is a sample from the multinomial distribution:
\begin{equation*}
    \begin{split}
        \operatorname{multinomial}\left(1,\left[p_h^k(\psib_h^k(s_1)^\T \w), \ldots, p_h^k(\psib_h^k(s_{N_{h}^k})^\T \w)\right]\right).
    \end{split}
\end{equation*}
We use the following weighted maximum likelihood estimation (MLE) $\wh_{h}^k$ to estimate the unknown parameter $\w_h^k$, which is the minimizer of
\begin{equation*}
    \begin{split}
        \frac{\lamw \kappa}{2}\norm{\w}_2^2 + \sum_{j=1}^{k-1}\alpha_{k-1,j} \sum_{s^\prime \in \S_{h}^j} - y_{h}^{j}(s^\prime)\log p_h^j(\psib_h^j(s^\prime)^\T \w),
    \end{split}
\end{equation*}
where $\lamw>0$ is the regularization coefficient and $\alpha_{k-1,j}$ is the weighted factor. We set $\alpha_{k,j} = \gamma^{k-j}$, where $\gamma \in (0, 1)$ is the discounted factor. Specifically the estimator $\wh_{h}^k$ is the solution of the following equation:
\begin{equation}\label{eq:MNL-estimate-p}
    \begin{split}
        \sum_{j=1}^{k-1}\alpha_{k-1,j} \sum_{s^\prime \in \S_{h}^j}\sbr{p_h^j(\psib_h^j(s^\prime)^\T \w) - y_{h}^{j}(s^\prime)} \psib_h^j(s^\prime)& \\
        + \lamw \kappa\w = 0&.
    \end{split}
\end{equation}
Given that $\wh_h^k$ may not belong to the feasible set $\W$ and $\kappa$ is defined over the parameter $\w \in \W$, we need to perform the following projection step
\begin{equation}\label{eq:MNL-projection}
    \begin{split}
        \wt_h^k = \argmin_{\w\in\W}\norm{g_h^k(\wh_h^k)-g_h^k(\w)}_{\sbr{\Sigmab_h^{k-1}}^{-1}},
    \end{split}
\end{equation}
where $\Sigmab_{h}^k = \lamw I_d + \sum_{j=1}^{k}\alpha_{k,j}\sum_{s^\prime \in \S_{h}^j} \psib_h^j(s^\prime) \psib_h^j(s^\prime)^\T$ and $g_h^k(\w)$ is defined as 
\begin{equation*}
    \begin{split}
        g_h^k(\w) \define \lamw \kappa\w+\sum_{j=1}^{k-1}\alpha_{k-1,j} \sum_{s^\prime \in \S_{h}^j} p_h^j(\psib_h^j(s^\prime)^\T \w) \psib_h^j(s^\prime).
    \end{split}
\end{equation*}

\begin{algorithm}[!t]
    \caption{\MNLweightours}
    \label{alg:MNL-weight}
    \begin{algorithmic}[1]
    \REQUIRE episode number $K$, time horizon $H$, discounted factor $\gamma$, confidence $\delta$, regularizer $\lamt,\lamw$, parameters $S_\theta, S_\w$, $L_\phi, L_\psi$\\
    \STATE Initialize $\bbr{\pi_h^0}_{h=1}^H$ as uniform distribution policies, $\bbr{\Qb_h^0}_{h=1}^H$ as zero functions.
    \FOR{$k = 1,2,...,K$}
    \STATE Receive the initial state $s_1^k$
    \STATE Initialize $\Vb_{H+1}^k$ as zero function
    \STATE Set $\forall h\in[H], \Lambda_{h}^0 = \lamt I_d, \Sigmab_{h}^0 = \lamw I_d$
    \FOR{$h = H,H-1,...,1$}
    \STATE Update $\Lambda_{h}^k = \gamma \Lambda_{h}^{k-1} + \phi(s_h^k,a_h^k)\phi(s_h^k,a_h^k)^\T +(1-\gamma) \lamt I_d $
    \STATE Update $\Sigmab_{h}^k = \sum_{s^\prime \in \S_{h}^j} \psib_h^j(s^\prime\given s_h^k,a_h^k) \psib_h^j(s^\prime\given s_h^k,a_h^k)^\T + \gamma\Sigmab_{h}^{k-1} +  (1-\gamma) \lamw I_d$
    \STATE Compute $\th_{h}^k$ by~\eqref{eq:LMDP-estimate-r} and $\wh_h^k$ by~\eqref{eq:MNL-estimate-p} 
    \IF{$\|\wh_h^k\|_2\leq S_\w$} 
    \STATE let $\wt_h^k = \wh_h^k$
    \ELSE 
    \STATE Do the projection and get $\wt_{h}^k$ by~\eqref{eq:MNL-projection}
    \ENDIF
    \STATE Compute optimistic value function $\Qb_h^k(s,a)$ and $\Vb_h^k(s)$ by~\eqref{eq:MNL-action-selection}
    \ENDFOR
    \FOR{$h = 1,...,H$}
    \STATE Choose policy as $\pi_h^k(s)=\argmax_{a\in\A}\Qb_{h}^k(s,a)$
    \STATE Take action $a_h^k\sim \pi_h^k(s_h^k)$, then observe the reward $r_h^k(s_h^k,a_h^k)$ and receive the next state $s_{h+1}^k$
    \ENDFOR
    \ENDFOR
    \end{algorithmic}
    \end{algorithm}

\parag{Upper Confidence Bounds.} 
Estimator~\eqref{eq:LMDP-estimate-r} can directly apply Lemma~\ref{lemma:LMDP-r-ucb} for UCB construction. And for estimator~\eqref{eq:MNL-estimate-p}, we provide the following estimation error bounds. For simplicity, we define for any function $V: \S\rightarrow \R$, $\mbr{\P_h^k V}(s,a) = \sum_{s^\prime\in\S_h^k}p_h^k(\psi(s^\prime\given s,a)^\T \w_h^k)V(s^\prime)$,~$\mbr{\Pt_h^k V}(s,a) = \sum_{s^\prime\in\S_h^k}p_h^k(\psi(s^\prime\given s,a)^\T \wt_h^k)V(s^\prime)$.

\begin{myLemma}
    \label{lemma:MNL-v-UCB}
    For any $\x \in \X$, and $\delta \in (0,1)$, $\forall k,j\in [K], \alpha_{k,j}\leq 1$, with probability at least $1-\delta$, the following holds for all $k \in [K], h\in[H]$
    \begin{align*}
        &\abs{\mbr{\Pt_h^k V}(s,a)-\mbr{\P_h^k V}(s,a)} \\&\qquad\leq {} \frac{H}{\kappa}\sbr{\Gamma_{h,\w}^{k-1} + \betab_{\w}^{k-1}\max_{s^\prime \in \S_h^k}\norm{\psi(s^\prime\given s,a)}_{\sbr{\Sigmab_{h}^{k-1}}^{-1}}},
    \end{align*}
    $\Gamma_{h,\w}^{k-1}\define  L_{\psi}^2\sqrt{\frac{d}{\lamw}}\sum_{p=1}^{k-1}\sqrt{\sum_{j=1}^{p}\alpha_{k-1,j}}\norm{\w_h^p-\w_h^{p+1}}_2$, and $\betab_{\w}^{k}$ is the radius of confidence region defined by
    \begin{equation*}
        \begin{split}
         \sqrt{\frac{1}{2}\log \frac{1}{\delta}+\frac{d}{4}\log\left(1+\frac{U L_\psi^2\sum_{j=1}^{k}\alpha_{k,j}}{\lamw  d}\right)} +\sqrt{\lamw}\kappa S_\w.
        \end{split}
    \end{equation*}
\end{myLemma}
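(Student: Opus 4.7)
The plan is to mirror the refined analysis framework used for the linear bandit (Lemma~\ref{lemma:LB-estimation-error}) while handling the multinomial logit structure and invoking Assumption~\ref{ass:MNL-kappa} to produce the $\kappa^{-1}$ factor. First, using the model~\eqref{eq:MNL-model} and the fact that $\sum_{s' \in \S_h^k} \Delta_{s'} = 0$ with $\Delta_{s'} \define p_h^k(\psi(s' \given s,a)^\T \wt_h^k) - p_h^k(\psi(s' \given s,a)^\T \w_h^k)$, I would write
\begin{align*}
\mbr{\Pt_h^k V}(s,a) - \mbr{\P_h^k V}(s,a) = \sum_{s' \in \S_h^k} \Delta_{s'} V(s').
\end{align*}
Applying the mean value theorem to linearize $\Delta_{s'}$ in $\wt_h^k - \w_h^k$ along the softmax map, and then using Cauchy--Schwarz in the $\Sigmab_h^{k-1}$-norm together with $\|V\|_\infty \leq H$, reduces the problem to controlling $\norm{\wt_h^k - \w_h^k}_{\Sigmab_h^{k-1}}$ while retaining the factor $H \cdot \max_{s' \in \S_h^k} \norm{\psi(s' \given s,a)}_{(\Sigmab_h^{k-1})^{-1}}$ that appears in the target bound.

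Next, I would convert the parameter error into an error on the score map $g_h^k$. By the mean value form of $g_h^k$, there exists a Jacobian $G$ with $g_h^k(\wt_h^k) - g_h^k(\w_h^k) = G(\wt_h^k - \w_h^k)$, whose multinomial structure combined with Assumption~\ref{ass:MNL-kappa} yields $G \succeq \kappa \Sigmab_h^{k-1}$. This delivers the core inequality
\begin{align*}
\norm{\wt_h^k - \w_h^k}_{\Sigmab_h^{k-1}} \leq \kappa^{-1} \norm{g_h^k(\wt_h^k) - g_h^k(\w_h^k)}_{(\Sigmab_h^{k-1})^{-1}},
\end{align*}
which is the source of the $1/\kappa$ factor in the statement. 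The projection step~\eqref{eq:MNL-projection} then bounds the right-hand side by a constant multiple of $\norm{g_h^k(\wh_h^k) - g_h^k(\w_h^k)}_{(\Sigmab_h^{k-1})^{-1}}$, and substituting the MLE optimality condition~\eqref{eq:MNL-estimate-p} splits this quantity into a drift contribution and a centered-noise contribution, both measured in the \emph{same} $(\Sigmab_h^{k-1})^{-1}$-norm, exactly as prescribed by the refined framework of Section~\ref{sec:LB-analysis-framework}.

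It remains to bound the two pieces separately. The bias piece, stemming from $\w_h^j \neq \w_h^k$ across past episodes, is handled by the telescoping argument behind Lemma~\ref{lemma:LB-A_t-bound}, yielding the $\Gamma_{h,\w}^{k-1}$ term with the weighted coefficient $\sqrt{\sum_{j=1}^p \alpha_{k-1,j}}$ and $L_\psi$ replacing $L$. The variance piece, involving the centered multinomial observations $y_h^j(s') - p_h^j(\psib_h^j(s')^\T \w_h^k)$ weighted by $\alpha_{k-1,j} \leq 1$, is a bounded martingale difference sequence in $[-1,1]$; after absorbing $\sqrt{\alpha_{k-1,j}}$ into both the feature and the noise I can apply the standard self-normalized concentration (Theorem~\ref{thm:snc-AY}) directly, producing the $\betab_\w^{k-1}$ term with the stated logarithmic factor. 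The main obstacle I anticipate is verifying the operator inequality $G \succeq \kappa \Sigmab_h^{k-1}$: the MNL Jacobian naturally takes a mean-centered outer-product form $\sum_{s'} p_h^j(\cdot)[\psib_h^j(s') - \bar\psi][\psib_h^j(s') - \bar\psi]^\T$ rather than $\sum_{s'} \psib_h^j(s')\psib_h^j(s')^\T$, so Assumption~\ref{ass:MNL-kappa} must be used to lower bound pairwise products $p_h^j(\cdot)\, p_h^j(\cdot) \geq \kappa$ and recover $\Sigmab_h^{k-1}$ up to this $\kappa$ factor.
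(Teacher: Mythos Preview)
Your plan is correct and tracks the paper's proof closely: the paper too reduces $\abs{[\Pt_h^k V]-[\P_h^k V]}$ to $H\max_{s'}\abs{\psi(s'\mid s,a)^\T(\wt_h^k-\w_h^k)}$ (via Lemma~2 of~\cite{NIPS'24:MNL_MDP_efficient}), then uses the mean value theorem on $g_h^k$ together with the operator inequality $\nabla g_h^k(\w)\succeq\kappa\Sigmab_h^{k-1}$ (precisely the obstacle you anticipate, established as Lemma~5 of~\cite{NIPS'24:MNL_MDP_efficient}) and the projection~\eqref{eq:MNL-projection} to pass to $\norm{g_h^k(\wh_h^k)-g_h^k(\w_h^k)}_{(\Sigmab_h^{k-1})^{-1}}$, and finally splits into bias and variance handled exactly as you describe.

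One small slip: in the variance piece you write the centered observations as $y_h^j(s')-p_h^j(\psib_h^j(s')^\T\w_h^{k})$, but this is not a martingale difference since $\E[y_h^j(s')\mid\F]=p_h^j(\psib_h^j(s')^\T\w_h^{j})$; the correct noise term is $\eta_h^j(s')=y_h^j(s')-p_h^j(\psib_h^j(s')^\T\w_h^{j})$, and the discrepancy $p_h^j(\cdot^\T\w_h^j)-p_h^j(\cdot^\T\w_h^k)$ is precisely the drift you already account for in the bias piece.
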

\iffullversion \noindent Proof of Lemma~\ref{lemma:MNL-v-UCB} is in Appendix~\ref{app:MNL-lemma-v-ucb}.\fi

\parag{Action Selection.~} Based on Lemma~\ref{lemma:LMDP-r-ucb} and Lemma~\ref{lemma:MNL-v-UCB}, we construct the optimistic value function $\Qb_h^k(s,a)$, $\Vb_h^k(s)$ and the action selection criteria as follow,
\begin{align}
      \Qb_{h}^k(s,a) &= \min\Bigg\{H, \phi(s,a)^\T\th_{h}^k+\beta_\theta\norm{\phi(s,a)}_{\sbr{\Lambda_{h}^{k-1}}^{-1}}\nonumber \\
      &\hspace{-2em} + [\Pt_h^k\Vb_{h+1}^k](s,a)+\frac{H}{\kappa}\betab_{\w}^{k-1}\max_{s^\prime \in \S_h^k}\norm{\psib_h^k(s^\prime)}_{\sbr{\Sigmab_{h}^{k-1}}^{-1}}\Bigg\}\nonumber\\
      \Vb_{h}^k(s) &= \max_{a\in\A}\Qb_{h}^k(s,a) = \E_{a\sim \pi_h^k(\cdot\given s)}\mbr{\Qb_{h}^k(s,a)}\nonumber\\
      \label{eq:MNL-action-selection}\pi_h^k(s)&=\argmax_{a\in\A}\Qb_{h}^k(s,a).
\end{align}
The overall algorithm is summarized in Algorithm~\ref{alg:MNL-weight}. We show that our algorithm has the following regret guarantee.
\begin{myThm}\label{thm:MNL-regret-bound}
    Let $\delta = 1/(4T)$, $\lambda_\theta = d$, and $\lambda_\w = d$, $\forall k,j\in [K], w_{k,j} = \alpha_{k,j} = \gamma^{k-j}$, $\gamma\in(1/K,1)$, the dynamic regret $\DReg_T$ is bounded with probability at least $1-1/T$, by
    \begin{equation*}
        \begin{split}
            \O\Bigg(\frac{Hd}{\kappa}\sbr{\frac{1}{(1-\gamma)^{3 / 2}} \Delta  + HK\sqrt{1-\gamma}}+ H^{3/2}d\sqrt{HK}\Bigg)&.
    \end{split}
    \end{equation*}
    Furthermore, by setting the discounted factor optimally as $\gamma = 1-\max\bbr{1/K,\sqrt{\Delta/T}}$, we have
    \begin{equation*}
        \DReg_T \leq \begin{cases}\Ot\sbr{\kappa^{-1}Hd \Delta^{1/4}T^{3/4}} & \text { when } \Delta \geq H / K, \\ \Ot\sbr{\kappa^{-1}dH^{3/2}\sqrt{T}}& \text { when } \Delta<H / K .\end{cases}
        \end{equation*}
\end{myThm}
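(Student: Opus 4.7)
The plan is to mirror the proof of Theorem~\ref{thm:LMDP-regret-bound}, substituting Lemma~\ref{lemma:MNL-v-UCB} for the linear transition bound of Lemma~\ref{lemma:LMDP-v-ucb}; the $\kappa^{-1}$ factor in the final bound is inherited directly from the former. I would first establish optimism by downward induction on $h\in[H]$: using Lemma~\ref{lemma:LMDP-r-ucb} for the reward and Lemma~\ref{lemma:MNL-v-UCB} for the transition on the joint high-probability event, the optimistic $\Qb_h^k$ from~\eqref{eq:MNL-action-selection} satisfies $\Qb_h^k(s,a)\ge Q_h^{k,\pi_*^k}(s,a)$, so $\Vb_h^k\ge V_h^{k,\pi_*^k}$ and hence $\DReg_T\le\sum_{k=1}^K(\Vb_1^k(s_1^k)-V_1^{k,\pi^k}(s_1^k))$.

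Next I would unroll each episode through the Bellman equation. Because $\pi_h^k$ is greedy with respect to $\Qb_h^k$, the per-step gap $\Vb_h^k(s_h^k)-V_h^{k,\pi^k}(s_h^k)=\Qb_h^k(s_h^k,a_h^k)-Q_h^{k,\pi^k}(s_h^k,a_h^k)$ is at most the reward UCB $\phi^\T(\th_h^k-\theta_h^k)+\beta_\theta\|\phi\|_{(\Lambda_h^{k-1})^{-1}}$ plus the transition piece $[\Pt_h^k\Vb_{h+1}^k-\P_h^k V_{h+1}^{k,\pi^k}](s_h^k,a_h^k)+\tfrac{H}{\kappa}\betab_\w^{k-1}\max_{s'\in\S_h^k}\|\psi(s'\given s_h^k,a_h^k)\|_{(\Sigmab_h^{k-1})^{-1}}$. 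Splitting the middle piece as $([\Pt_h^k\Vb_{h+1}^k]-[\P_h^k\Vb_{h+1}^k])+[\P_h^k(\Vb_{h+1}^k-V_{h+1}^{k,\pi^k})]$, bounding the first via Lemma~\ref{lemma:MNL-v-UCB}, and rewriting $[\P_h^k(\Vb_{h+1}^k-V_{h+1}^{k,\pi^k})](s_h^k,a_h^k)=(\Vb_{h+1}^k-V_{h+1}^{k,\pi^k})(s_{h+1}^k)+\zeta_h^k$ with $|\zeta_h^k|\le H$ a martingale difference, telescoping in $h$ reduces each episode to a sum of five ingredients: the reward bias $\Gamma_{h,\theta}^{k-1}$, the transition bias $\tfrac{H}{\kappa}\Gamma_{h,\w}^{k-1}$, the two variance bonuses, and $\zeta_h^k$.

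Summing over $(k,h)$, I would bound the biases by swapping summation order and using the geometric identity $\sum_{k>p}\sqrt{\sum_{j\le p}\gamma^{k-1-j}}\le 2/(1-\gamma)^{3/2}$, which converts $\sum_{k,h}\Gamma_{h,\theta}^{k-1}$ and $\sum_{k,h}\Gamma_{h,\w}^{k-1}$ into path-length expressions that combine to $\Ot(Hd\Delta/[\kappa(1-\gamma)^{3/2}])$ (recalling $\Delta=P_T^\theta+P_T^\w$ already aggregates over $h$). The variance bonuses are treated by the weighted potential lemma (Lemma~\ref{lemma:potential-lemma}): directly for $\sum_k\|\phi_h^k\|_{(\Lambda_h^{k-1})^{-1}}$, and for $\sum_k\max_{s'\in\S_h^k}\|\psi\|_{(\Sigmab_h^{k-1})^{-1}}$ after first invoking $\max_{s'}\|\psi\|^2\le\sum_{s'\in\S_h^k}\|\psib_h^k(s')\|^2_{(\Sigmab_h^{k-1})^{-1}}$ and a weighted elliptical-potential argument on the aggregated covariance $\Sigmab_h^{k-1}$; together with $\beta_\theta,\betab_\w^{k-1}=\Ot(\sqrt{d})$ this produces the $\Ot(H^2dK\sqrt{1-\gamma}/\kappa)$ total. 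Azuma--Hoeffding on $\sum_{k,h}\zeta_h^k$, combined with the near-stationary baseline stemming from the $(1-\gamma)$-independent parts of the confidence radii, contributes the additive $\Ot(H^{3/2}d\sqrt{HK})$. Adding these yields the stated $\gamma$-dependent bound, and balancing $\gamma=1-\max\{1/K,\sqrt{\Delta/T}\}$ equates the $(1-\gamma)^{-3/2}\Delta$ bias with the $HK\sqrt{1-\gamma}$ variance to give $\Ot(\kappa^{-1}Hd\Delta^{1/4}T^{3/4})$ when $\Delta\ge H/K$ and $\Ot(\kappa^{-1}dH^{3/2}\sqrt{T})$ otherwise. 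I expect the main obstacle to be the transition-variance summation: because $\Sigmab_h^{k-1}$ aggregates features of \emph{every} reachable state per step while the bonus involves only $\max_{s'}\|\psi\|_{(\Sigmab)^{-1}}$, applying the weighted potential cleanly requires a careful reduction from the max to an aggregated sum and an elliptical-potential argument on this enlarged covariance, which has no direct analogue in the bandit sections of the paper.
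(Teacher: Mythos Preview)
Your overall plan matches the paper's: it reuses the Linear-Mixture regret decomposition (Lemma~\ref{lemma:LMDP-regret-decomposition}) and replaces the transition bound by Lemma~\ref{lemma:MNL-v-UCB}, picking up the $H/\kappa$ factor. The bias and variance bookkeeping you describe---swapping summation order to extract $(1-\gamma)^{-3/2}\Delta$, and bounding the transition-variance via $\max_{s'}\|\psi\|^2\le\sum_{s'\in\S_h^k}\|\psib_h^k(s')\|^2$ followed by the weighted potential lemma on $\Sigmab_h^{k-1}$---is exactly what the paper does.

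There is, however, a real gap in your first step. You claim clean optimism $\Qb_h^k(s,a)\ge Q_h^{k,\pi_*^k}(s,a)$, but this is false in the non-stationary setting: the algorithm's bonus in~\eqref{eq:MNL-action-selection} includes only the variance radii $\beta_\theta\|\phi\|$ and $\tfrac{H}{\kappa}\betab_\w^{k-1}\max_{s'}\|\psi\|$, not the (unknown) bias terms $\Gamma_{h,\theta}^{k-1}$ and $\tfrac{H}{\kappa}\Gamma_{h,\w}^{k-1}$ that appear on the right-hand sides of Lemmas~\ref{lemma:LMDP-r-ucb} and~\ref{lemma:MNL-v-UCB}. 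Consequently the downward induction only gives $\Vb_1^k(s_1^k)\ge V_1^{k,\pi_*^k}(s_1^k)-\sum_{h=1}^H\bigl(\Gamma_{h,\theta}^{k-1}+\tfrac{H}{\kappa}\Gamma_{h,\w}^{k-1}\bigr)$, so $\DReg_T\le\sum_k(\Vb_1^k-V_1^{k,\pi^k})+\sum_{k,h}(\Gamma_{h,\theta}^{k-1}+\tfrac{H}{\kappa}\Gamma_{h,\w}^{k-1})$. Once you carry this extra copy of the bias (which does not change the order), your argument agrees with the paper. The paper avoids this slip by working with the model-prediction error $E_h^k=r_h^k+[\P_h^k\Vb_{h+1}^k]-\Qb_h^k$ and proving the two-sided bound $-\,(\text{variance})-(\text{bias})\le E_h^k\le(\text{bias})$ (Lemma~\ref{lemma:MNL-model-prediction-error}), then feeding this into the generic decomposition $\DReg_T\le\sum_{k,h}(\E_{\pi_*^k}[E_h^k]-E_h^k)+4H\sqrt{2T\log(1/\delta)}$ (Lemma~\ref{lemma:LMDP-regret-decomposition}); this packaging automatically produces the factor-$2$ bias and absorbs the martingale without ever asserting optimism.
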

\iffullversion \noindent Proof of Theorem~\ref{thm:MNL-regret-bound} is in Appendix~\ref{app:MNL-thm-regret-bound}.\fi 
\section{Experiments}
\label{sec:experiments}
In this section, we further empirically examine the performance of our proposed algorithms. We present two synthetic experiments on drifting LB and GLB, respectively. For each experiment, we set the dimension of the feature space to $d=2$, the number of rounds to $T=6000$, and the number of arms to $n=50$. The features of each arm are sampled from the normal distribution $\mathcal{N}(0,1)$ and subsequently rescaled to satisfy $L=1$. We initialize the time-varying parameter $\theta_t$ to $[1,0]$ and rotate it uniformly counterclockwise around the unit circle, completing one full revolution from $0$ to $2\pi$ over the course of $T$ rounds and returning to the starting point $[1,0]$.

\begin{figure*}[!t]
    \centering
    \subfloat[LB: Cumulative regret]{ \label{figure:LB-regret} 
    \includegraphics[width=0.3\textwidth, valign=t]{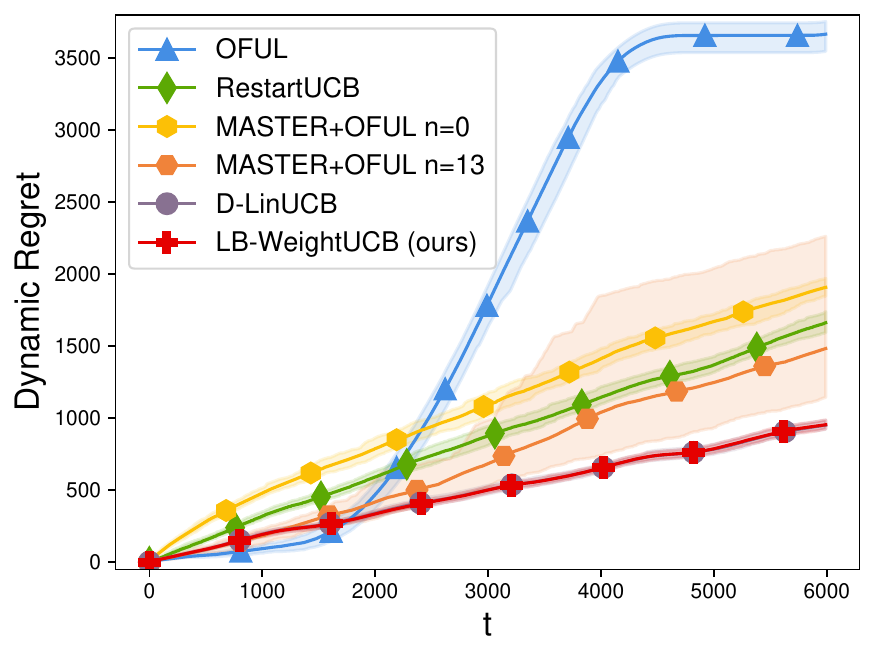}}
    \subfloat[GLB Algorithms ($S=1$)]{ \label{figure:GLB1} 
        \includegraphics[width=0.3\textwidth, valign=t]{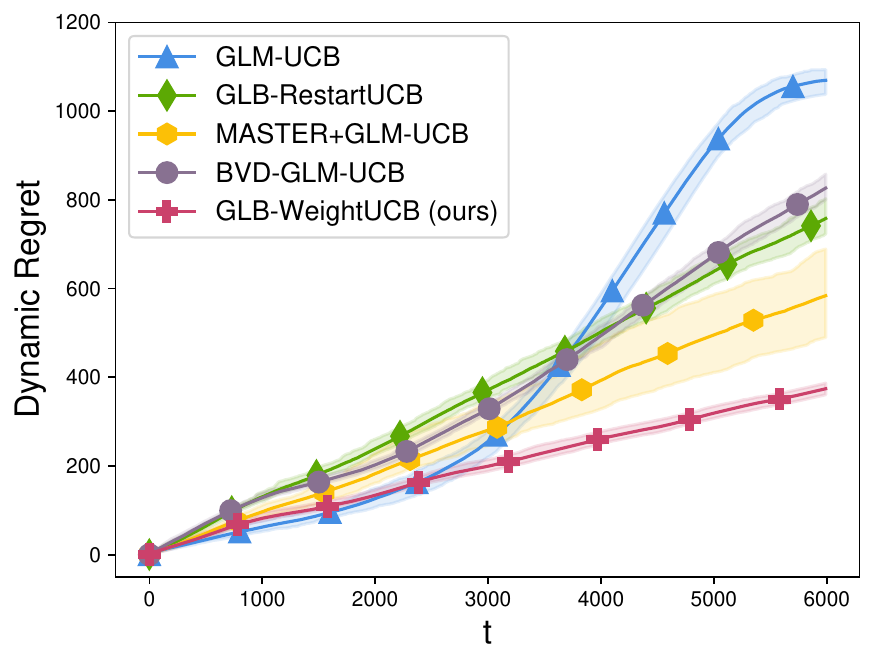}}
    \subfloat[SCB Algorithms ($S=1$)]{ \label{figure:SCB1}
        \includegraphics[width=0.3\textwidth, valign=t]{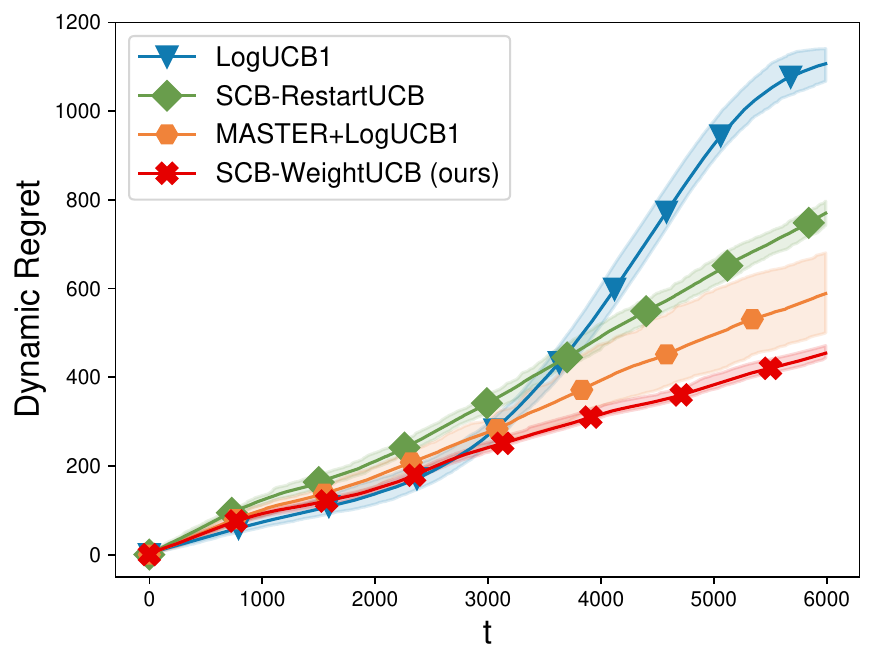}}\\
        \subfloat[LB: Average running time]{ \label{figure:LB-running-time} 
        \includegraphics[width=0.3\textwidth, valign=t]{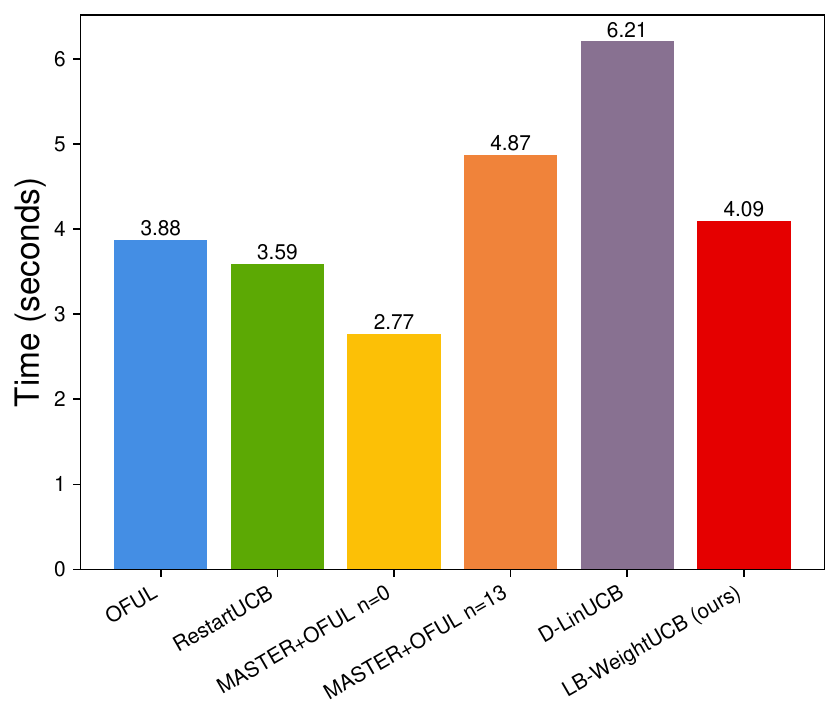}}
    \subfloat[GLB Algorithms ($S=5$)]{ \label{figure:GLB5} 
        \includegraphics[width=0.3\textwidth, valign=t]{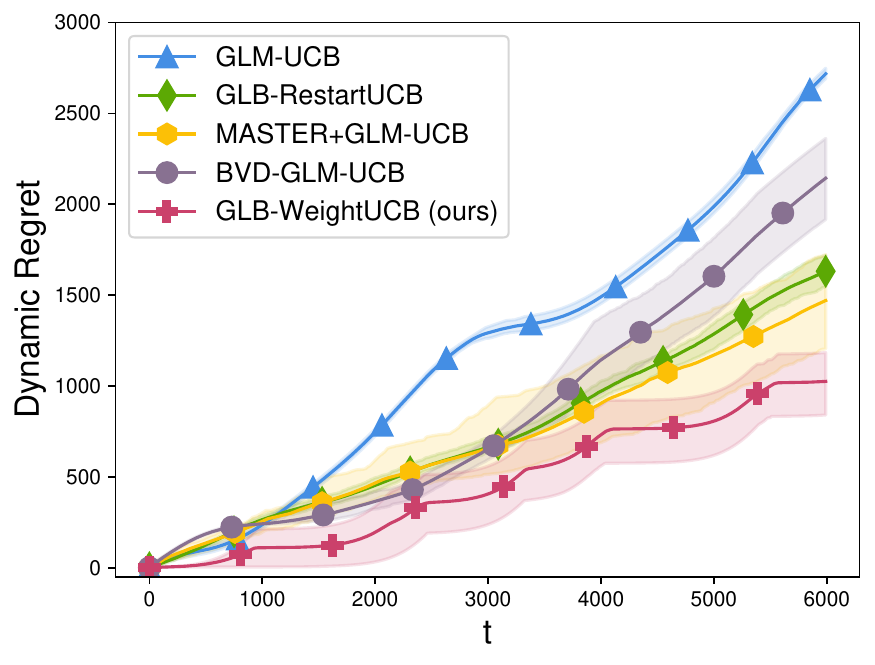}
        \vphantom{\includegraphics[width=0.3\textwidth,valign=t]{figure/lb_running_time.pdf}}}
    \subfloat[SCB Algorithms ($S=5$)]{ \label{figure:SCB5}
        \includegraphics[width=0.3\textwidth, valign=t]{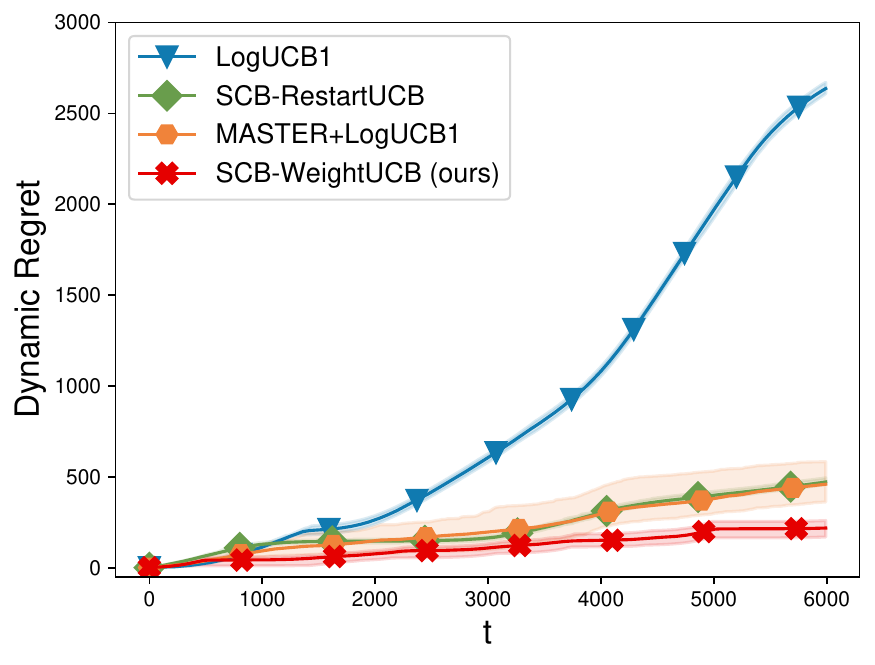}
        \vphantom{\includegraphics[width=0.3\textwidth,valign=t]{figure/lb_running_time.pdf}}} 
    \caption{Experiments of generalized linear bandits.}
    \label{figure:GLB}
\end{figure*}

\subsection{Linear Bandits}
\parag{Setting.}
We consider the linear model $r_t = X_t^\T\theta_t +\eta_t$ where the random noise $\eta_t$ is drawn from the normal distribution $\N(0,1)$ at each time $t$ independently. We compare the performance of our proposed \LBweightours algorithm to: (a) the static algorithm \LBstatic~\cite{NIPS'11:AY-linear-bandits}; (b) the restart-based algorithm \LBrestart~\cite{AISTATS'20:restart}; (c) the weight-based algorithm \LBweight~\cite{NIPS'19:weighted-LB}; and (d) the adaptive restart algorithm \LBMASTER~\cite{COLT'21:black-box}.~\blue{Among these algorithms, \LBrestart and \LBweight both require prior knowledge of $P_T$, whereas MASTER+OFUL does not. Although MASTER+OFUL operates under weaker requirements, we still include it in our comparison because it achieves theoretically optimal regret with a faster convergence rate, making it an important benchmark.} Since $P_T$ is computable, we set the discounted factor $\gamma = 1- \max\{1/T, \sqrt{P_T/(dT)}\}$ for \LBweightours and \LBweight, and set the window size $W$ and restarting period $H$ as $W = H = d^{1/4}\sqrt{T/(1+P_T)}$. For \MASTER, there is a parameter $n$ representing the initial value of a multi-scale exploration parameter (see the input of Procedure 1 in~\cite{COLT'21:black-box}), and the origin \MASTER algorithm lets it start from $0$ (i.e., $n = 0,1,...$). However, a small initial value of $n$ leads to frequent restarts and thus poor performance. To this end, we experiment with a larger initial value of $n = 13$, which yields substantially improved performance in our case. 

\parag{Results.}
The experimental results are averaged over 20 independent trials. Fig.~\ref{figure:LB-regret} shows the cumulative dynamic regret performance, where the shaded area denotes the variance of experimental results. Fig.~\ref{figure:LB-running-time} reports the average time per run, with each run containing 6000 rounds. Our \LBweightours algorithm performs as well as \LBweight but is significantly more efficient, with over 1.5 times speedup. Fig.~\ref{figure:LB-regret} also shows that when equipped with a fine-tuned $n$, \LBMASTER ($n=13$) performs better than \LBrestart, whereas a vanilla \LBMASTER ($n=0$) performs worse due to overly active restarts at the beginning. However, a larger initial value of $n$ results in greater time overhead, since at each restart, \LBMASTER needs to do Procedure~1 once, resulting in an $\mathcal{O}(n2^n)$ time complexity. More importantly, neither adaptive restart (\LBMASTER) nor periodical restart (\LBrestart) outperforms our weighted strategy in slowly-evolving environments.

\subsection{Generalized Linear Bandits}
\parag{Setting.}
We employ the logistic model in the GLB experiment, i.e., the reward satisfies $r_t \sim \text{Bernoulli}(\mu(X_t^\T\theta_t))$ with logistic function $\mu(x) = (1+e^{-x})^{-1}$. We consider two cases of $S=1$ and $S=5$, respectively. We compare the performance of our proposed \GLBweightours and \SCBweightours algorithm to: (a) \GLBstatic, static algorithm for GLB~\cite{NIPS10:GLM-infinite}; (b) \SCBstatic, static algorithm for LogB~\cite{ICML'20:logistic-bandits}; (c) \GLBweight, weight-based algorithm for GLB~\cite{arXiv'21:faury-driftingGLB}; (d) \GLBrestart, restart algorithm for GLB~\cite{AISTATS'20:restart}; (e) \SCBrestart, restart algorithm for SCB~\cite{AISTATS'20:restart}; (f) \GLBMASTER, adaptive restart algorithm for GLB~\cite{COLT'21:black-box}; and (g) \SCBMASTER, adaptive restart algorithm for LogB~\cite{COLT'21:black-box}. We set discounted factor $\gamma = 1- \max\{1/T, \sqrt{c_\mu P_T/(dT)}\}$ for \GLBweightours, $\gamma = 1- (P_T/(\sqrt{d}T))^{\sfrac{2}{5}}$ for \GLBweight and $\gamma = 1- \max\{1/T, \sqrt{P_T/(dT)}\}$ for \SCBweightours. We set restarting period $H = d^{1/4}\sqrt{T/(1+P_T)}$ for both \GLBrestart and \SCBrestart. We set regularizer $\lambda = d$ for \GLBstatic, \GLBweight, \GLBrestart and \GLBMASTER, $\lambda = d/c_\mu^2$ for \GLBweightours and $\lambda = d\log T/c_\mu$ for \SCBstatic, \SCBrestart, \SCBMASTER and \SCBweightours. Note that for LogB, $k_\mu = 1/4 < 1$, so we don't need to control the order of $k_\mu$. For two \MASTER algorithms, we set $n = 13$.

\parag{Results.}
We present the average cumulative dynamic regret results of our experiments on $20$ independent trials in Fig.~\ref{figure:GLB}. When $S$ is small ($S=1, c_\mu^{-1} \approx 5$), all of the weight-based algorithms outperform the static algorithms, and our \GLBweightours and \SCBweightours are better than \GLBweight. When $S$ is large ($S=5, c_\mu^{-1} \approx 152$), \SCBweightours significantly outperforms \GLBweightours, demonstrating the importance of considering the self-concordant property (recall that LogB is an instance of SCB). In contrast, the performance of \GLBweight drops dramatically, as it does not take the $c_\mu^{-1}$ issue into account. Similar to LB, the experimental results of GLB also demonstrate the empirical advantage of the weighted strategy over (adaptive) restart strategy in slowly-evolving environments. Specifically, we observe that \GLBweightours consistently outperforms \GLBMASTER, and \SCBweightours consistently outperforms \SCBMASTER.

\section{Conclusion}
\label{sec:conclusion}
This paper revisits the weight-based algorithms for three non-stationary parametric bandit models (LB, GLB, SCB) and two non-stationary MDP settings (Linear Mixture MDP, MNL Mixture MDP). We identify that the inadequacies of the previous work are due to the inadequate analysis of the estimation error. We thus propose a refined analysis framework that enables the usage of the same local norm for both the bias and variance parts in estimation error analysis. Our framework ensures more efficient algorithms for all three bandit models and two RL models, improves the regret bounds for GLB and SCB settings, and establishes the first dynamic regret bound for MNL Mixture MDP. 

The importance of our work lies in the fact that we have now made the weight-based algorithms for non-stationary parametric bandits and MDPs as competitive as the restart-based algorithms, in terms of both computational efficiency and regret guarantee.~\blue{Note that the current window-based, restart-based, and weight-based algorithms can only achieve a regret bound of $\widetilde{\mathcal{O}}(P_T^{1/4} T^{3/4})$, which does not match the optimal rate $\widetilde{\mathcal{O}}(P_T^{1/3} T^{2/3})$ attained by the MASTER algorithm, an adaptive restart strategy~\cite{COLT'21:black-box}. In the spirit of this best-known result, it is essential to design \mbox{\emph{adaptive}} weight-based algorithms that can achieve the optimal dynamic regret bound without requiring prior knowledge of the environment's non-stationarity, given that weighted strategies are particularly effective in gradually drifting environments, which are commonly encountered in real-world applications.} \blue{The current lower bound of $\Omega(P_T^{1/3} T^{2/3})$ is established under the fixed arm set assumption~\cite{Manage'22:window-LB}. The MASTER algorithm~\cite{COLT'21:black-box} matches this rate with the same assumption, making $\Theta(P_T^{1/3} T^{2/3})$ the minimax optimal rate for the fixed arm set case. However, the minimax rate remains open for time-varying arm sets.}

In this work, we employ $P_T = \sum_{t=2}^{T} \norm{\theta_{t-1} - \theta_{t}}_2$ as a measure to capture the gradually changing environment. However, this metric may not be precise enough in capturing only the gradual changes in the environment, as it can also include other types of variations, such as abrupt changes and restless changes~\cite{TIT'12:restless,TIT'24:restless2}. This might be able to explain why weight-based algorithms do not exhibit a significant theoretical advantage, yet perform remarkably well in experiments on gradually changing environments compared to restart-based algorithms. To overcome this limitation, future research could explore more refined characterizations of gradual changes, drawing inspiration from the ideas behind Sobolev or Holder classes~\cite{NIPS'19:TV_bound} or other information-theoretic tools~\cite{arXiv'22:VanRoy}.

{\appendices
\section{Analysis of \LBweightours}
\label{app:LB}
In this section, we provide the analysis for \LBweightours algorithm. In Appendix~\ref{sec:LB-review}, we review the \LBweight algorithm proposed by~\cite{NIPS'19:weighted-LB} and restate their estimation error analysis. In Appendix~\ref{sec:LB-estimation-error-proof}, we present our own estimation error analysis for the proposed \LBweightours algorithm, which is captured in Lemma~\ref{lemma:LB-estimation-error}. Finally, in Appendix~\ref{sec:LB-regret-proof}, we provide an analysis of dynamic regret, as stated in Theorem~\ref{thm:LB-regret}.

\subsection{Review Estimation Error Analysis of \LBweight Algorithm}
\label{sec:LB-review}
In this part, we review the previous estimation error analysis of the \LBweight algorithm~\cite{NIPS'19:weighted-LB}, which has the same estimator as ours~\eqref{eq:LB-estimator}. The first step is to divide the estimation error into the bias and variance parts, where the bias part represents the error caused by parameter drift and the variance part represents the error caused by stochastic noise. Based on the reward model assumption and estimator (same as~\eqref{eq:LB-model} and~\eqref{eq:LB-estimator}), the estimation error of \LBweight algorithm can be decomposed as
  \begin{align}
    \thetah_t - \theta_t = {}& V_{t-1}^{\prime-1}\sbr{\sum_{s=1}^{t-1}\gamma^{t-s-1}r_sX_s}- \theta_t\nonumber\\
    = {}& V_{t-1}^{\prime-1}\sbr{\sum_{s=1}^{t-1}\gamma^{t-s-1}\sbr{ X_s^\T \theta_s + \eta_s}X_s}\nonumber\\
    {}&- V_{t-1}^{\prime-1}\sbr{\lambda I_d + \sum_{s=1}^{t-1} \gamma^{t-s-1} X_s X_s^\T}\theta_t\nonumber\\
    = {}& V_{t-1}^{\prime-1}\sbr{\sum_{s=1}^{t-1}\gamma^{t-s-1}  X_s X_s^\T \theta_s + \sum_{s=1}^{t-1}\gamma^{t-s-1}\eta_sX_s }\nonumber\\{}&- V_{t-1}^{\prime-1}\sbr{\lambda I_d + \sum_{s=1}^{t-1} \gamma^{t-s-1} X_s X_s^\T}\theta_t\nonumber\\
    = {}& \underbrace{V_{t-1}^{\prime-1}\sbr{\sum_{s=1}^{t-1}\gamma^{t-s-1}  X_s X_s^\T \sbr{\theta_s -\theta_t}}}_{\bias}\nonumber\\ \label{eq:estimation_error_decomposition}{}&+ \underbrace{V_{t-1}^{\prime-1}\sbr{ \sum_{s=1}^{t-1}\gamma^{t-s-1}\eta_sX_s -\lambda\theta_t}}_{\variance},
  \end{align}
  where $V^{\prime}_{t} = \lambda I_d + \sum_{s=1}^{t-1} \gamma^{t-s} X_s X_s^\T$. Afterward,~\cite{NIPS'19:weighted-LB} uses different local norms (we will explain the reason for using different local norms later) for the bias and variance parts as 
  \begin{equation}
    \begin{split}
      \label{Russac:decompose}
        |\x^\T (\thetah_t - \theta_t)| \leq \norm{\x}_{2}A_t'+\norm{\x}_{V_{t-1}^{\prime-1}\Vt_{t-1}V_{t-1}^{\prime-1}}B_t',\\
    \end{split}
  \end{equation}  
  where $\Vt_{t} = \lambda I_d + \sum_{s=1}^{t} \gamma^{2(t-s)} X_s X_s^\T$ and 
  \begin{equation}\nonumber
      \begin{split}
        A_t' &= \norm{V_{t-1}^{\prime-1}\sum_{s=1}^{t-1}\gamma^{t-s-1}  X_s X_s^\T \sbr{\theta_s -\theta_t}}_{2}\\ B_t' &= \norm{\sum_{s=1}^{t-1}\gamma^{t-s-1}\eta_sX_s -\lambda\theta_t}_{\Vt_{t-1}^{-1}}.
      \end{split}
  \end{equation}
For the bias part,~\cite{NIPS'19:weighted-LB} divide it into two parts on the timeline by introducing a virtual window size $D$,
\begin{align*}
  A_t' \leq {}&\underbrace{\norm{\sum_{s=t-D}^{t-1}V_{t-1}^{\prime-1}\gamma^{t-s-1}  X_s X_s^\T \sbr{\theta_s -\theta_t}}_{2}}_{\mathtt{virtual~window}} \\&+ \underbrace{\norm{\sum_{s=1}^{t-D-1}V_{t-1}^{\prime-1}\gamma^{t-s-1}  X_s X_s^\T \sbr{\theta_s -\theta_t}}_{2}}_{\mathtt{small~term}},
\end{align*}
The first term can be considered as a virtual window containing the most recent data obtained after time $t-D$, and can be directly analyzed by the analysis of \LBwindow~\cite{AISTATS'19:window-LB} since it corresponds to the bias part of the estimation error of the window strategy, and this is why they use $l_2$-norm for the bias part. The second term reflects the influence formed by the outdated data obtained before time $t-D$. Since $\gamma^{t-s-1}$ will be very small when $s \leq t-D-1$, this small term is dominated by the first virtual window term, which means the bias part is actually controlled by the virtual window size $D$. 

For the variance part,~{\cite{NIPS'19:weighted-LB}} extend the previous self-normalized concentration~{\cite[Theorem 1]{NIPS'11:AY-linear-bandits}} to the weighted version. This concentration requires the use of $\Vt_t$ as the local norm. To this end, ~\cite{NIPS'19:weighted-LB} split the variance part as
\begin{equation}\nonumber
  \begin{split}
    {}&\abs{\x^\T V_{t-1}^{\prime-1}\sbr{ \sum_{s=1}^{t-1}\gamma^{t-s-1}\eta_sX_s -\lambda\theta_t}}\leq \norm{\x}_{V_{t-1}^{\prime-1}\Vt_{t-1}V_{t-1}^{\prime-1}}C_t',
  \end{split}
\end{equation} 
where
\begin{equation}\nonumber
  \begin{split}
    C_t' = {}&\norm{V_{t-1}^{\prime-1}\sbr{ \sum_{s=1}^{t-1}\gamma^{t-s-1}\eta_sX_s -\lambda\theta_t}}_{V_{t-1}^\prime\Vt_{t-1}^{-1}V_{t-1}^\prime} \\
    ={}& \norm{\sum_{s=1}^{t-1}\gamma^{t-s-1}\eta_sX_s -\lambda\theta_t}_{\Vt_{t-1}^{-1}} \\
    \leq {}&\norm{\sum_{s=1}^{t-1}\gamma^{t-s-1}\eta_sX_s}_{\Vt_{t-1}^{-1}} +\sqrt{\lambda} S.
  \end{split}
\end{equation} 
Then term $\|\sum_{s=1}^{t-1}\gamma^{t-s-1}\eta_sX_s\|_{\Vt_{t-1}^{-1}}$ can be bounded by the weighted version self-normalized concentration.
Finally, based on this analysis, \LBweight needs to use the following action selection criterion, which only depends on the variance part since the bias part doesn't contain $\x$, 
\begin{equation}\nonumber
  \begin{split}
    X_t = \argmax_{\x \in \X} \bbr{ \langle \x,\thetah_t \rangle + \beta_{t-1}\norm{\x}_{V_{t-1}^{\prime-1}\Vt_{t-1}V_{t-1}^{\prime-1}}},
  \end{split}
\end{equation}
where $\beta_{t-1}$ is the upper bound of $B_t'$ which is the same as~\eqref{eq:LB-confidence-radius}. From this selection criterion, it can be seen that \LBweight needs to maintain two covariance matrices, namely, $V_t^\prime$ and $\Vt_t$ at round $t$ during the algorithm running.

In the next section, we present our proof for the estimation error upper bound. The difference between our analysis and \LBweight's analysis mainly starts at step~\eqref{Russac:decompose}, which is the key step of analysis, and our new analysis framework allows us to employ \emph{same} local norm for both bias and variance parts.

\subsection{Proof of Lemma~\ref{lemma:LB-estimation-error}}
\label{sec:LB-estimation-error-proof}

\begin{proof}
  Using the same derivation in~\eqref{eq:estimation_error_decomposition}, the estimation error of \LBweightours algorithm can also be decomposed as
  \begin{align*}
    \thetah_t - \theta_t = {}&\underbrace{V_{t-1}^{-1}\sbr{\sum_{s=1}^{t-1}w_{t-1,s}  X_s X_s^\T \sbr{\theta_s -\theta_t}}}_{\bias} \\&+ \underbrace{V_{t-1}^{-1}\sbr{ \sum_{s=1}^{t-1}w_{t-1,s}\eta_sX_s -\lambda\theta_t}}_{\variance}.
  \end{align*}
  Therefore, by the Cauchy-Schwarz inequality, we know that for any $\x\in\X$,
  \begin{equation}
  \begin{split}
      \label{eq:LB-bound-cauchy}
      \abs{\x^\T\sbr{\thetah_t-\theta_t}} \leq \norm{\x}_{V_{t-1}^{-1}}(A_t+B_t),\\
  \end{split}
  \end{equation}
  where
  \begin{equation}\nonumber
      \begin{split}
      A_t &= \norm{\sum_{s=1}^{t-1}w_{t-1,s}  X_s X_s^\T \sbr{\theta_s -\theta_t}}_{V_{t-1}^{-1}}\\ B_t &= \norm{ \sum_{s=1}^{t-1}w_{t-1,s}\eta_sX_s -\lambda\theta_t}_{V_{t-1}^{-1}}.
      \end{split}
  \end{equation}
  The above two terms can be bounded separately, as summarized in the following two lemmas,
  \begin{myLemma}
    \label{lemma:LB-A_t-bound}
    For any $t \in [T]$, we have 
    \begin{equation}\nonumber
      \begin{split}
        \label{eq:LB-A_t-bound}
        {}&\norm{\sum_{s=1}^{t-1}w_{t-1,s}  X_s X_s^\T \sbr{\theta_s -\theta_t}}_{V_{t-1}^{-1}}\\ \leq{}& L\sqrt{d} \sum_{p=1}^{t-1}  \sqrt{\sum_{s=1}^{p}w_{t-1,s}}\norm{\theta_p -\theta_{p+1}}_2.
        \end{split}
    \end{equation}
  \end{myLemma}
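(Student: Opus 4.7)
The plan is to exploit the telescoping identity $\theta_s - \theta_t = \sum_{p=s}^{t-1}(\theta_p - \theta_{p+1})$ to re-express the drift in terms of single-step increments, then swap the order of summation so that each increment $\theta_p - \theta_{p+1}$ is multiplied by the partial weighted Gram matrix $M_p \define \sum_{s=1}^{p} w_{t-1,s} X_s X_s^\T$. After applying the triangle inequality in the $V_{t-1}^{-1}$-norm, the task reduces to bounding, for each $p$, the quantity $\|M_p (\theta_p - \theta_{p+1})\|_{V_{t-1}^{-1}}$ by $L\sqrt{d}\sqrt{\sum_{s=1}^p w_{t-1,s}}\,\|\theta_p - \theta_{p+1}\|_2$.

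For this per-$p$ bound, I would write $M_p (\theta_p - \theta_{p+1}) = \sum_{s=1}^{p} w_{t-1,s} X_s (X_s^\T (\theta_p - \theta_{p+1}))$, pull the scalar factor $|X_s^\T(\theta_p - \theta_{p+1})| \le L\|\theta_p - \theta_{p+1}\|_2$ out of the norm using Cauchy--Schwarz together with $\|X_s\|_2 \leq L$ (Assumption~\ref{ass:bounded-norm}), and then apply the triangle inequality to land on $L\|\theta_p - \theta_{p+1}\|_2 \sum_{s=1}^{p} w_{t-1,s} \|X_s\|_{V_{t-1}^{-1}}$. A second Cauchy--Schwarz on the weighted sum, splitting $w_{t-1,s}$ as $\sqrt{w_{t-1,s}}\cdot\sqrt{w_{t-1,s}}$, converts this into $\sqrt{\sum_{s=1}^{p} w_{t-1,s}} \cdot \sqrt{\sum_{s=1}^{p} w_{t-1,s}\|X_s\|_{V_{t-1}^{-1}}^2}$.

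The final ingredient is the trace bound $\sum_{s=1}^{p} w_{t-1,s}\|X_s\|_{V_{t-1}^{-1}}^2 = \trace(V_{t-1}^{-1} M_p) \leq d$, which follows from $M_p \preceq V_{t-1}$ (since $V_{t-1} = \lambda I_d + \sum_{s=1}^{t-1} w_{t-1,s} X_s X_s^\T \succeq M_p$) and hence $V_{t-1}^{-1/2} M_p V_{t-1}^{-1/2} \preceq I_d$. Combining the displayed inequalities and summing the triangle-inequality decomposition over $p \in [t-1]$ yields the claim.

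The only delicate step is controlling $\|M_p(\theta_p - \theta_{p+1})\|_{V_{t-1}^{-1}}$ cleanly without resorting to a different local norm for different pieces --- this is precisely where the previous framework of~\cite{NIPS'19:weighted-LB} switched to $l_2$-norm to mimic sliding-window analysis. The main observation that makes our route work is that even though $M_p$ uses only the first $p$ samples while the norm $V_{t-1}^{-1}$ uses all $t-1$ samples, the PSD ordering $M_p \preceq V_{t-1}$ is still strong enough to give the $\sqrt{d}$ factor through a trace argument, so a single consistent $V_{t-1}^{-1}$-norm suffices throughout.
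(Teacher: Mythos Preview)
Your proposal is correct and matches the paper's proof essentially step for step: the same telescoping and index swap, the same triangle inequality and Cauchy--Schwarz extraction of $L\|\theta_p-\theta_{p+1}\|_2$, the same second Cauchy--Schwarz on the weighted sum, and the same trace argument $\sum_{s=1}^p w_{t-1,s}\|X_s\|_{V_{t-1}^{-1}}^2 \le d$ (which the paper packages as Lemma~\ref{lemma:d} with $A_s = \sqrt{w_{t-1,s}}X_s$). Your closing remark about why a single $V_{t-1}^{-1}$-norm suffices is precisely the paper's main analytical point.
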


  \begin{myLemma}
    \label{lemma:LB-B_t-bound}
    For any $\delta \in (0,1)$, with probability at least $1-\delta$, the following holds for all $t \in [T]$,
    \begin{equation}\nonumber
      \begin{split}
        \label{eq:LB-B_t-bound}
        {}&\norm{ \sum_{s=1}^{t-1}w_{t-1,s}\eta_sX_s -\lambda\theta_t}_{V_{t-1}^{-1}}\\ \leq{}& \sqrt{\lambda}S+R\sqrt{2\log\frac{1}{\delta}+d\log\sbr{1+\frac{L^2\sum_{s=1}^{t-1}w_{t-1,s}}{\lambda d}}},
        \end{split}
    \end{equation}
  \end{myLemma}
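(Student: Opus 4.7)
The plan is to split $B_t$ via the triangle inequality and then reduce the weighted stochastic sum to a form that admits the standard self-normalized concentration, thereby sidestepping any weighted version of that inequality.

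First, I would apply the triangle inequality to peel off the regularization term:
\begin{equation*}
\norm{\sum_{s=1}^{t-1} w_{t-1,s}\eta_s X_s - \lambda\theta_t}_{V_{t-1}^{-1}} \leq \norm{\sum_{s=1}^{t-1} w_{t-1,s}\eta_s X_s}_{V_{t-1}^{-1}} + \lambda\norm{\theta_t}_{V_{t-1}^{-1}}.
\end{equation*}
The regularization term is routine: since $V_{t-1} \succeq \lambda I_d$ by construction, one immediately has $\lambda\norm{\theta_t}_{V_{t-1}^{-1}} \leq \sqrt{\lambda}\norm{\theta_t}_2 \leq \sqrt{\lambda}S$ by Assumption~\ref{ass:bounded-norm}.

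Next, I would reduce the stochastic term to the standard unweighted case exactly as flagged in the proof sketch of Section~\ref{sec:LB-analysis-framework}. Define $\Xt_s \define \sqrt{w_{t-1,s}}\,X_s$ and $\etat_s \define \sqrt{w_{t-1,s}}\,\eta_s$ for $s\in[t-1]$. Two bookkeeping identities
\begin{equation*}
\sum_{s=1}^{t-1} w_{t-1,s}\eta_s X_s = \sum_{s=1}^{t-1}\etat_s \Xt_s, \qquad V_{t-1} = \lambda I_d + \sum_{s=1}^{t-1}\Xt_s \Xt_s^\T
\end{equation*}
place the pair $(\Xt_s,\etat_s)$ directly in the hypothesis of the Abbasi-Yadkori self-normalized inequality (Theorem~\ref{thm:snc-AY}); the critical point is that $\sqrt{w_{t-1,s}}\in(0,1]$ is deterministic, so $\Xt_s$ stays predictable with respect to the original filtration and $\etat_s$ remains conditionally $R$-sub-Gaussian (its sub-Gaussian constant only shrinks). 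This yields $\norm{\sum \etat_s \Xt_s}_{V_{t-1}^{-1}} \leq R\sqrt{2\log(1/\delta) + \log(\det V_{t-1}/\det(\lambda I_d))}$ with probability at least $1-\delta$. Closing with the standard AM--GM trace bound and $\norm{\Xt_s}_2^2 \leq w_{t-1,s}L^2$ gives $\log(\det V_{t-1}/\det(\lambda I_d)) \leq d\log(1+L^2\sum_{s=1}^{t-1} w_{t-1,s}/(\lambda d))$, and combining the three pieces produces the advertised bound.

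The main conceptual obstacle here is filtration-related rather than computational. Because the weights $w_{t-1,s}$ depend on the outer index $t$, the rescaled sequence $(\Xt_s,\etat_s)$ is not a single martingale difference sequence across $t$ but a fresh one for each $t$. Consequently Theorem~\ref{thm:snc-AY} applied with fixed $t$ produces a bound uniform in the inner time $s$ but not in $t$; a uniform-in-$t$ statement then follows via a union bound over $t\in[T]$, whose extra $\log T$ factor is absorbed into the stated $\log(1/\delta)$ under the choice $\delta=1/T$ used in Theorem~\ref{thm:LB-regret}. Making this bookkeeping explicit is precisely the reason the standard (unweighted) inequality suffices here, in sharp contrast to the specially designed weighted concentration of \cite{NIPS'19:weighted-LB}.
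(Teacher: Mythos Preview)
Your proposal is correct and follows exactly the paper's own proof: triangle inequality to peel off $\sqrt{\lambda}S$, then rescale by $\sqrt{w_{t-1,s}}$ and invoke the standard Abbasi--Yadkori bound (Theorem~\ref{thm:snc-AY}) followed by the determinant inequality (Lemma~\ref{lemma:det-inequality}). Your closing remark on the $t$-dependence of the weights is in fact more careful than the paper itself, which applies Theorem~\ref{thm:snc-AY} without explicit comment on uniformity in the outer index $t$; the union-bound fix you propose is sound and, as you note, costless at the regret level once $\delta = \Theta(1/T)$.
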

  \noindent Based on the inequality~\eqref{eq:LB-bound-cauchy}, Lemma~\ref{lemma:LB-A_t-bound}, Lemma~\ref{lemma:LB-B_t-bound}, the boundedness assumption of the feasible set and the fact that for any $\x$, $\norm{\x}_{V_{t-1}^{-1}} \leq \norm{\x}_2/\sqrt{\lambda} $ since $V_{t-1} \succeq \lambda I_d$, for any $\x \in \X$, $\gamma\in (0,1)$ and $\delta \in (0,1)$, with probability at least $1-\delta$, the following holds for all $t \in [T]$,
  \begin{equation}\nonumber
    \begin{split}
      {}&|\x^\T(\th_t-\theta_t)|\\\leq{}& L^2\sqrt{\frac{d}{\lambda}} \sum_{p=1}^{t-1}  \sqrt{\sum_{s=1}^{p}w_{t-1,s}}\norm{\theta_p -\theta_{p+1}}_2 + \beta_{t-1}\|\x\|_{V_{t-1}^{-1}},
    \end{split}
  \end{equation}
  where $\beta_t \define \sqrt{\lambda}S+R\sqrt{2\log \frac{1}{\delta}+d\log\left(1+\frac{L^2\sum_{s=1}^{t}w_{t-1,s}}{d\lambda}\right)}$ is the confidence radius used in \LBweightours. Hence, we complete the proof.
\end{proof}

\begin{proof}[{Proof of Lemma~\ref{lemma:LB-A_t-bound}}]
  \label{sec:LB-A_t-bound-proof}
  The first step is to extract the variations of the parameter $\theta_t$ as follows, 
  \begin{align*}
    {}&\norm{\sum_{s=1}^{t-1}w_{t-1,s}  X_s X_s^\T \sbr{\theta_s -\theta_t}}_{V_{t-1}^{-1}}\\ = {} & \norm{\sum_{s=1}^{t-1}w_{t-1,s}  X_s X_s^\T \sum_{p=s}^{t-1}\sbr{\theta_p -\theta_{p+1}}}_{V_{t-1}^{-1}} \\
      = {} & \norm{\sum_{p=1}^{t-1}\sum_{s=1}^{p}w_{t-1,s}  X_s X_s^\T \sbr{\theta_p -\theta_{p+1}}}_{V_{t-1}^{-1}} \\
      \leq {} & \sum_{p=1}^{t-1} \norm{\sum_{s=1}^{p}w_{t-1,s}  X_s \|X_s\|_2 \|\theta_p -\theta_{p+1}\|_2}_{V_{t-1}^{-1}} \\
      \leq {} & L \sum_{p=1}^{t-1} \sum_{s=1}^{p}w_{t-1,s}\norm{X_s}_{V_{t-1}^{-1}}\|\theta_p -\theta_{p+1}\|_2,
  \end{align*}
  and term $\sum_{s=1}^{p}w_{t-1,s}\norm{X_s}_{V_{t-1}^{-1}}$ can further derive to an expression about the discounted factor $\gamma$ as follows,
  \begin{align}
    \sum_{s=1}^{p}w_{t-1,s} \norm{X_s}_{V_{t-1}^{-1}}  \leq{}&  \sqrt{\sum_{s=1}^{p}w_{t-1,s}}\sqrt{\sum_{s=1}^{p}w_{t-1,s}\norm{X_s}^2_{V_{t-1}^{-1}}}\nonumber  \\
        \leq{}& \sqrt{d}\sqrt{\sum_{s=1}^{p}w_{t-1,s}}.\label{eq:LB-At-gammapart}
  \end{align} 
  In above, the second last step holds by the Cauchy-Schwarz inequality. Besides, the last step follows the Lemma~\ref{lemma:d} by letting $A_s = \sqrt{w_{t-1,s}}X_s$ and $U_{t-1} = V_{t-1}$. Hence, we complete the proof.
\end{proof}

\begin{proof}[{Proof of Lemma~\ref{lemma:LB-B_t-bound}}]
  \label{sec:LB-B_t-bound-proof}
  \begin{equation}\nonumber
    \begin{split}
      {}&\norm{\sum_{s=1}^{t-1}w_{t-1,s}\eta_sX_s -\lambda\theta_t}_{V_{t-1}^{-1}}\\\leq{}&\norm{ \sum_{s=1}^{t-1}w_{t-1,s}\eta_sX_s}_{V_{t-1}^{-1}}+\sqrt{\lambda}S.
    \end{split}
\end{equation}
We define $\etat_s \define \sqrt{w_{t-1,s}}\eta_s$ and $\Xt_s \define \sqrt{w_{t-1,s}} X_s$, and notice that $\forall t\in[T] , s\in [t-1], \abs{w_{t-1,s}}\leq 1$, then $\etat_s$ is still $R$-sub-Gaussian, then by Theorem~\ref{thm:snc-AY}, we have 
    \begin{equation}\nonumber
        \begin{split}
          {}&\norm{ \sum_{s=1}^{t-1}w_{t-1,s}\eta_sX_s }_{V_{t-1}^{-1}} \\\leq{}& \sqrt{2R^2\log\left(\frac{\det(V_{t-1})^{\frac{1}{2}}\det(V_0)^{-\frac{1}{2}}}{\delta}\right)}.
        \end{split}
    \end{equation}
    Then, based on Lemma~\ref{lemma:det-inequality} and $\det(V_{0}) = \lambda^d$, we have 
    \begin{equation}\nonumber
        \begin{split}
            {}&\norm{ \sum_{s=1}^{t-1}w_{t-1,s}\eta_sX_s }_{V_{t-1}^{-1}} \\\leq{}& R\sqrt{2\log \frac{1}{\delta}+d\log\left(1+\frac{L^2\sum_{s=1}^{t-1}w_{t-1,s}}{d\lambda}\right)}.
        \end{split}
    \end{equation}
  which completes the proof.
\end{proof}

\subsection{Proof of~\pref{thm:LB-regret}}
\label{sec:LB-regret-proof}

\begin{proof}
  Let $X_t^* \define\argmax_{\x \in \X} \x^\T \theta_t$. Due to Lemma~\ref{lemma:LB-estimation-error} and the fact that $X_t^*,X_t\in \X$, each of the following holds with probability at least $1-\delta$,
  \begin{equation}\nonumber
    \begin{split}
        \forall t \in [T], X_t^{*\T}\theta_t \leq{}&  X_t^{*\T}\thetah_t + \beta_{t-1}\|X_t^*\|_{V_{t-1}^{-1}}\\{}&+L^2\sqrt{\frac{d}{\lambda}} \sum_{p=1}^{t-1}\sqrt{\sum_{s=1}^{p}w_{t-1,s}}\norm{\theta_p -\theta_{p+1}}_2,\\ 
        \forall t \in [T], X_t^{\T}\theta_t \geq{}&  X_t^{\T}\thetah_t - \beta_{t-1}\|X_t\|_{V_{t-1}^{-1}}\\{}&-L^2\sqrt{\frac{d}{\lambda}} \sum_{p=1}^{t-1} \sqrt{\sum_{s=1}^{p}w_{t-1,s}}\norm{\theta_p -\theta_{p+1}}_2 .
    \end{split}
    \end{equation}
    By the union bound, the following holds with probability at least $1-2\delta$, $\forall t \in [T]$,
  \begin{equation}\nonumber
  \begin{split}
     {}&X_t^{*\T}\theta_t - X_t^{\T}\theta_t \\\leq{}&  X_t^{*\T}\thetah_t -X_t^{\T}\thetah_t +2L^2\sqrt{\frac{d}{\lambda}} \sum_{p=1}^{t-1}\sqrt{\sum_{s=1}^{p}w_{t-1,s}}\norm{\theta_p -\theta_{p+1}}_2 \\
     &+ \beta_{t-1}(\|X_t^*\|_{V_{t-1}^{-1}}+\|X_t\|_{V_{t-1}^{-1}})\\
      \leq{}& 2L^2\sqrt{\frac{d}{\lambda}} \sum_{p=1}^{t-1} \sqrt{\sum_{s=1}^{p}w_{t-1,s}}\norm{\theta_p -\theta_{p+1}}_2 + 2\beta_{t-1}\|X_t\|_{V_{t-1}^{-1}},
  \end{split}
  \end{equation}
  where the last step comes from the arm selection criterion~\eqref{eq:LB-select-criteria} such that $$X_t^{*\T}\thetah_t +\beta_{t-1}\|X_t^*\|_{V_{t-1}^{-1}} \leq X_t^{\T}\thetah_t+\beta_{t-1}\|X_t\|_{V_{t-1}^{-1}}.$$
  Hence, the following dynamic regret bound holds with probability at least $1-2\delta$ and can be divided into two parts,
  \begin{equation}\nonumber
  \begin{split}
      \DReg_T = {}&  \sum_{t=1}^T \sbr{X_t^{*\T} \theta_t - X_t^\T \theta_t} \\\leq{}&  \underbrace{2L^2\sqrt{\frac{d}{\lambda}}\sum_{t=1}^T\sum_{p=1}^{t-1} \sqrt{\sum_{s=1}^{p}w_{t-1,s}}\norm{\theta_p -\theta_{p+1}}_2}_{\bias}\\
      &+\underbrace{2\beta_{T}\sum_{t=1}^T\|X_t\|_{V_{t-1}^{-1}}}_{\variance},
  \end{split}
  \end{equation}
  where $\beta_T = \sqrt{\lambda}S+R\sqrt{2\log\frac{1}{\delta}+d\log\sbr{1+\frac{L^2 (1-\gamma^{2T})}{\lambda d(1-\gamma^2)}}}$ is the confidence radius.

  Now we derive the upper bound for the bias and variance parts separately.

  \parag{Bias Part.} Notice that $w_{t-1,s} = \gamma^{t-s-1}$ with $\gamma \in (1/T,1)$. For the bias part, we need to extract the path length $P_T$ and show the control of the discounted factor $\gamma$ on $P_T$.
  \begin{align*}
    {}&2L^2\sqrt{\frac{d}{\lambda}}\sum_{t=1}^T\sum_{p=1}^{t-1} \sqrt{\sum_{s=1}^{p}w_{t-1,s}}\norm{\theta_p -\theta_{p+1}}_2 \\= {}&2L^2\sqrt{\frac{d}{\lambda}}\sum_{p=1}^{T-1}  \sum_{t=p+1}^{T} \sqrt{\sum_{s=1}^{p}w_{t-1,s}}\norm{\theta_p -\theta_{p+1}}_2\\
    = {}&2L^2\sqrt{\frac{d}{\lambda}}\sum_{p=1}^{T-1}  \sum_{t=p+1}^{T}\gamma^{\frac{t-1}{2}} \sqrt{\sum_{s=1}^{p}\gamma^{-s}}\norm{\theta_p -\theta_{p+1}}_2\\
    = {}&2L^2\sqrt{\frac{d}{\lambda}}\sum_{p=1}^{T-1} \frac{\gamma^{\frac{p}{2}}-\gamma^{\frac{T}{2}}}{1-\gamma^{\frac{1}{2}}}\sqrt{\frac{\gamma^{-p}-1}{1-\gamma}}\norm{\theta_p -\theta_{p+1}}_2\\
    \leq {}&2L^2\sqrt{\frac{d}{\lambda}}\sum_{p=1}^{T-1} \frac{\gamma^{\frac{p}{2}}\gamma^{-\frac{p}{2}}}{(1-\gamma^{\frac{1}{2}})\frac{1+\gamma^{\frac{1}{2}}}{2}\sqrt{1-\gamma}}\norm{\theta_p -\theta_{p+1}}_2\\
    \leq {}&4L^2\sqrt{\frac{d}{\lambda}}\frac{1}{(1-\gamma)^{3/2}}P_T.
\end{align*}
So for the bias part, we have
\begin{equation}
  \begin{split}
  \label{eq:LB-regret-bias-bound}
  {}&2L^2\sqrt{\frac{d}{\lambda}}\sum_{t=1}^T\sum_{p=1}^{t-1} \sqrt{\sum_{s=1}^{p}w_{t-1,s}}\norm{\theta_p -\theta_{p+1}}_2 \\\leq{}& 4L^2\sqrt{\frac{d}{\lambda}}\frac{1}{(1-\gamma)^{3/2}}P_T.\\
  \end{split}
\end{equation}

\parag{Variance Part.}
First, use the Cauchy-Schwarz inequality, we have
$
    2\beta_{T}\sum_{t=1}^T\|X_t\|_{V_{t-1}^{-1}}
    \leq 2\beta_{T}\sqrt{T\sum_{t=1}^T\|X_t\|_{V_{t-1}^{-1}}^2}.
$
Then by Lemma~\ref{lemma:potential-lemma} (weighted potential lemma) with $w_{t,t} = \gamma^{t-t} = 1, c = w_{t,s}/w_{t,s-1} = \gamma$, we have the following upper bound:
\begin{equation}
  \label{eq:LB-regret-variance-bound}
\begin{split}
  {}&2\beta_{T}\sum_{t=1}^T\|X_t\|_{V_{t-1}^{-1}}\leq 2\beta_{T}\sqrt{2\max\{1,\frac{L^2}{\lambda}\}dT}\\{}&\hspace{5em}\cdot\sqrt{T\log\frac{1}{\gamma}+\log\sbr{1+ \frac{L^2\sum_{s=1}^T w_{T,s}}{d\lambda}}}.
\end{split}
\end{equation}
Notice that the main differences between weighted LB and standard LB in analysis are the need for \pathlength analysis and the use of the weighted potential lemma. Further we have $\sum_{s=1}^t w_{t,s} = \frac{1-\gamma^{t}}{1-\gamma}\leq \frac{1}{1-\gamma}$. Combining the upper bounds of the bias and variance parts and with confidence level $\delta = 1/(2T)$, by union bound, we have the following dynamic regret bound with probability at least $1-1/T$,
\begin{equation}\nonumber
\begin{split}
    \DReg_T\leq{}& 4L^2\sqrt{\frac{d}{\lambda}}\frac{1}{(1-\gamma)^{\sfrac{3}{2}}}P_T+ 2\beta_{T}\sqrt{2\max\bbr{1,\frac{L^2}{\lambda}}dT}\\{}&\cdot\sqrt{T\log\frac{1}{\gamma}+\log\sbr{1+\frac{L^2}{\lambda d(1-\gamma)}}}, 
\end{split}
\end{equation}
$\beta_T = \sqrt{\lambda}S+R\sqrt{2\log T+2\log 2+d\log\sbr{1+\frac{L^2 (1-\gamma^{T})}{\lambda d(1-\gamma)}}}$. Since the regret bound contains a term $T \sqrt{\log (1/\gamma)}$, we cannot let $\gamma$ close to $0$, so we set $\gamma \geq 1/T$ and have $\log(1/\gamma) \leq C (1-\gamma)$,
where $C = \log T/(1-1/T)$. Then, ignoring logarithmic factors in time horizon $T$, and let $\lambda = d$, we finally obtain
\begin{equation}\label{eq:dregret-lb}
  \begin{split}
      \DReg_T\leq{}& \Ot\sbr{\frac{1}{(1-\gamma)^{\sfrac{3}{2}}}P_T + d(1-\gamma)^{\sfrac{1}{2}}T}.
  \end{split}
  \end{equation}
  When $P_T< d/T$ (which corresponds to a small amount of non-stationarity), we simply set $\gamma = 1-1/T$ and achieve an $\Ot(d\sqrt{T})$ regret bound.  Besides, when coming to the non-degenerated case ($P_T\geq d/T$), We set the discounted factor optimally as $1-\gamma = \sqrt{P_T/(dT)}$ and attain an $\Ot(d^{\sfrac{3}{4}}P_T^{\sfrac{1}{4}}T^{\sfrac{3}{4}})$ regret bound, which completes the proof.
\end{proof}

\begin{myLemma}[Weighted Version Potential Lemma]
  \label{lemma:potential-lemma} 
  Suppose $V_{t} = \sum_{s=1}^{t}w_{t,s}X_s X_s^\T + \lambda I_d, V_{0} = \lambda I_d$, the weight satisfies that, $\forall t\in[T], s\in[t-1], w_{t,s}/w_{t-1,s} = c\leq 1$, $\forall t, s\in[T], w_{t,s} \in(0,1), w_{t,t} = 1$ and $\norm{X_t}_2 \leq L$ for all $t \geq 1$, then the following inequality holds,
  \begin{equation*}
  \begin{split}
      {}& \sum_{t=1}^T\|w_{t,t}X_t\|_{V_{t-1}^{-1}}^2 \\
      {}& \leq 2\max\bbr{1,\frac{L^2}{\lambda}}d \Bigg(T\log\frac{1}{c}+\log\Big(1+ \frac{L^2\sum_{s=1}^T w_{T,s}}{d\lambda}\Big)\Bigg).
  \end{split}
  \end{equation*}
\end{myLemma}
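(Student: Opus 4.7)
The plan is to mimic the classical Abbasi--Yadkori potential argument, but to absorb the weighted structure through a per-step determinant inequality that keeps track of the discount factor $c$. The starting observation is that, by using $w_{t,s} = c\,w_{t-1,s}$ for $s\le t-1$ together with $w_{t,t}=1$, the covariance admits the recursion
\begin{equation*}
V_t \;=\; c\,V_{t-1} \;+\; X_t X_t^{\T} \;+\; (1-c)\lambda I_d \;\succeq\; c\,V_{t-1} + X_t X_t^{\T}.
\end{equation*}
Combined with the matrix determinant lemma applied to $cV_{t-1}+X_t X_t^{\T}$, this yields $\det V_t \ge c^{d}\det V_{t-1}\bigl(1 + c^{-1}\norm{X_t}_{V_{t-1}^{-1}}^2\bigr) \ge c^{d}\det V_{t-1}\bigl(1 + \norm{X_t}_{V_{t-1}^{-1}}^2\bigr)$, where the last step uses $c\le 1$.

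Taking logarithms and telescoping over $t=1,\dots,T$ gives
\begin{equation*}
\sum_{t=1}^{T}\log\!\bigl(1+\norm{X_t}_{V_{t-1}^{-1}}^2\bigr) \;\le\; \log\frac{\det V_T}{\det V_0} + Td\log\frac{1}{c},
\end{equation*}
which is exactly the classical potential bound plus the linear-in-$T$ correction introduced by the weighting. The ratio $\det V_T/\det V_0$ is then controlled routinely by the AM--GM inequality on the trace: since $\operatorname{trace}(V_T)\le \lambda d + L^2\sum_{s=1}^T w_{T,s}$ and $\det V_0=\lambda^d$, we obtain $\log(\det V_T/\det V_0)\le d\log\bigl(1 + L^2\sum_{s=1}^T w_{T,s}/(d\lambda)\bigr)$.

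To pass from $\log(1+\norm{X_t}_{V_{t-1}^{-1}}^2)$ back to $\norm{X_t}_{V_{t-1}^{-1}}^2$, the plan is to invoke the elementary inequality $u \le 2M\log(1+u)$ valid for all $u\in[0,M]$ with any $M\ge 1$. Here I take $M=\max\{1,L^2/\lambda\}$ and verify the range condition via $\norm{X_t}_{V_{t-1}^{-1}}^2 \le \norm{X_t}_2^2/\lambda \le L^2/\lambda \le M$, using $V_{t-1}\succeq \lambda I_d$. Multiplying the telescoped bound by $2M$ and using $\|w_{t,t}X_t\|_{V_{t-1}^{-1}}^2=\norm{X_t}_{V_{t-1}^{-1}}^2$ since $w_{t,t}=1$ produces exactly the claimed right-hand side.

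\textbf{Main obstacle.} The only non-routine step is justifying the step $\det V_t \ge \det(cV_{t-1}+X_t X_t^{\T})$ cleanly together with the subsequent extraction of the factor $c^d$: one must be careful that the $(1-c)\lambda I_d$ term is dropped (giving a lower bound on $\det V_t$, which is the direction we need), and that the $c^{-1}$ picked up inside the matrix determinant lemma is discarded using $c\le 1$ rather than kept, so that the final bound reads $\log(1+\norm{X_t}_{V_{t-1}^{-1}}^2)$ with no residual $1/c$ inside the logarithm. Once this is done, the rest is bookkeeping.
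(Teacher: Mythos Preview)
Your proposal is correct and follows essentially the same route as the paper's proof: both establish $\det V_t \ge c^d \det V_{t-1}\bigl(1+\norm{X_t}_{V_{t-1}^{-1}}^2\bigr)$ from the recursion, telescope, and bound $\det V_T/\det V_0$ via the trace. The only cosmetic differences are that the paper factors out $c$ as $V_t \succeq c(V_{t-1}+X_tX_t^{\T})$ before applying the determinant identity (so no $c^{-1}$ appears and then gets discarded), and for the $\log\to$ linear step the paper normalizes $\norm{X_t}_{V_{t-1}^{-1}}^2$ by $M$ first and then invokes $\log(1+u)\ge u/2$ on $[0,1]$, which is equivalent to your direct inequality $u\le 2M\log(1+u)$ on $[0,M]$.
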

\begin{proof}[Proof of Lemma~\ref{lemma:potential-lemma}]
  \begin{align*}
          V_{t} &= \sum_{s=1}^{t}w_{t,s} X_s X_s^\T + \lambda I_d\\
          &= \sum_{s=1}^{t-1}w_{t,s} X_s X_s^\T + w_{t,t} X_t X_t^\T + \lambda I_d\\
          &= c\sum_{s=1}^{t-1}w_{t-1,s} X_s X_s^\T +  X_t X_t^\T + \lambda I_d \tag*{($w_{t,t} = 1$)}\\
          &\succeq c\sbr{\sum_{s=1}^{t-1}w_{t-1,s}X_s X_s^\T +  X_t X_t^\T + \lambda I_d},\quad \tag*{($c<1$)}\\
          &= c\sbr{V_{t-1} + X_t X_t^\T}\\
          &= c V_{t-1}^{1/2}\sbr{I_d +V_{t-1}^{-1/2}X_t X_t^\T V_{t-1}^{-1/2}}V_{t-1}^{1/2},
      \end{align*}
  Taking the determinant on both sides and we obtain,
  \begin{equation*}
    \begin{split}
      \det(V_t) &\geq \det(cV_{t-1})\det\sbr{I_d +V_{t-1}^{-1/2}X_t X_t^\T V_{t-1}^{-1/2}}\\
      \det(V_t) &\geq c^d\det(V_{t-1})\sbr{1+\norm{X_t}_{V_{t-1}^{-1}}^2}\\
      \log\det(V_t) &\geq d\log c + \log\det(V_{t-1}) + \log\sbr{1+\norm{X_t}_{V_{t-1}^{-1}}^2}\\
      d\log \frac{1}{c}&+\log\frac{\det(V_t)}{det(V_{t-1})} \geq \log\sbr{1+\norm{X_t}_{V_{t-1}^{-1}}^2}
    \end{split}
    \end{equation*}
      Then summing from $1$ to $T$, and telescoping we have,
      \begin{equation*}
          \begin{split}
              {}&dT\log \frac{1}{c}+\log\sbr{\frac{\det(V_T)}{\det(V_0)}} \\ \geq{}& \sum_{t=1}^T\log\sbr{1+\norm{X_t}_{V_{t-1}^{-1}}^2}\\
              \geq{}& \sum_{t=1}^T\log\sbr{1+\frac{1}{\max\bbr{1,L^2/\lambda}}\norm{X_t}_{V_{t-1}^{-1}}^2}\\
              \geq{}& \frac{1}{2\max\bbr{1,L^2/\lambda}}\sum_{t=1}^T\norm{X_t}_{V_{t-1}^{-1}}^2
          \end{split}
          \end{equation*}
          So we have 
          \begin{equation*}
              \begin{split}
                {}&\sum_{t=1}^T\norm{X_t}_{V_{t-1}^{-1}}^2\\{}&\leq 2\max\bbr{1,\frac{L^2}{\lambda}}\cdot\sbr{dT\log\frac{1}{c}+\log\sbr{\frac{\det(V_T)}{\det(V_0)}}}.
              \end{split}
              \end{equation*}          
  Finally, by using Lemma~\ref{lemma:det-inequality} and the fact $\det(V_0) = \lambda^d$, we complete the proof. 
\end{proof}

\iffullversion
\section{Analysis of \GLBweightours}
\label{sec:GLB-regret}
In this section, we provide analysis for \GLBweightours algorithm. In Appendix~\ref{sec:GLB-previous-work}, we review the projection issue of GLB and restate the \GLBweight algorithm of~\cite{arXiv'21:faury-driftingGLB}. In Appendix~\ref{sec:GLB-estimation-error-proof}, we present the proof of the estimation error upper bound of our \GLBweightours algorithm (namely, Lemma~\ref{lemma:GLB-estimation-error}). Finally, in Appendix~\ref{sec:GLB-regret-proof}, we provide the proof of dynamic regret upper bound as stated in Theorem~\ref{thm:GLB-regret}.

\subsection{Review Projection Step of \GLBweight Algorithm}
\label{sec:GLB-previous-work}
As mentioned in Section~\ref{sec:GLB-algorithm}, the main difficulty of GLB is that the result of MLE or QMLE estimator $\thetah_t$ may not belong to the feasible set $\Theta$ and $c_\mu$ is defined over the parameter $\theta \in \Theta$. Under stationary environments,~\cite{NIPS10:GLM-infinite} overcame this difficulty by introducing a projection step as
\begin{equation}
  \label{eq:simple-projection}
    \thetat_t = \argmin_{\theta \in \Theta}\|g_t(\thetah_t) - g_t(\theta)\|_{V_{t-1}^{-1}},
\end{equation}
where $V_{t} = \lambda I_d + \sum_{s=1}^{t}X_s X_s^\T$ and $g_t(\theta) = \lambda c_\mu \theta + \sum_{s=1}^{t-1}\mu(X_s^\T \theta)X_s$ are the static version (by setting $\gamma = 1$). Based on the QMLE, we know that 
  \begin{equation}
      g_t(\thetah_t) = \lambda c_\mu \thetah_t + \sum_{s=1}^{t-1}\mu(X_s^\T \thetah_t)X_s = \sum_{s=1}^{t-1}r_sX_s,
  \end{equation}
  and then by the mean value theorem, we know that 
  \begin{equation}
      g_t(\theta_1) - g_t(\theta_2)= G_t(\theta_1, \theta_2)(\theta_1 - \theta_2),
  \end{equation}
  where $G_t(\theta_1, \theta_2) \triangleq \int_{0}^1 \nabla g_t(s\theta_2+(1-s)\theta_1)\diff{s}\in \R^{d\times d}$. Notice that for any $\theta \in \Theta$, the gradient of $g_t$ satisfies 
  \begin{equation}\nonumber
      \nabla g_t(\theta) = \lambda c_\mu I_d + \sum_{s=1}^{t-1}\dmu(X_s^\T \theta)X_sX_s^\T \succeq c_\mu V_{t-1},
  \end{equation}
  which clearly implies $\forall \theta_1, \theta_2 \in \Theta, G_t(\theta_1,\theta_2)\succeq c_\mu V_{t-1}$. By this projection step,~\cite{NIPS10:GLM-infinite} can analyze the estimation error like,
\begin{align*} {}&|\mu(\x^\T\thetat_t) - \mu(\x^\T\theta_t)|  \leq k_\mu |\x^\T(\thetat_t - \theta_t)|\\
      = {}& k_\mu |\x^\T G_t^{-1}(\theta_t,\thetat_t)(g_t(\thetat_t) - g_t(\theta_t))|\\
      \leq {}& k_\mu \|\x\|_{G_t^{-1}(\theta_t,\thetat_t)}\|g_t(\thetat_t) - g_t(\theta_t)\|_{G_t^{-1}(\theta_t,\thetat_t)}\\
      \leq {}& \frac{k_\mu}{c_\mu} \|\x\|_{V_{t-1}^{-1}}\|g_t(\thetat_t) - g_t(\theta_t)\|_{V_{t-1}^{-1}}\\
      \leq {}& \frac{2k_\mu}{c_\mu} \|\x\|_{V_{t-1}^{-1}}\|g_t(\thetah_t) - g_t(\theta_t)\|_{V_{t-1}^{-1}},
  \end{align*}
  where the last step comes from the projection step. After doing the projection step, term $g_t(\thetah_t) - g_t(\theta_t)$ is the estimation error of the MLE without projection. 
Notice that in piecewise-stationary case,~\cite{AISTATS'21:SCB-forgetting} can also use this projection step.~\cite{arXiv'21:faury-driftingGLB} believe that these two previous works could use this projection operation mainly due to their stationary or piecewise-stationary setting. They mention that for the drifting case, the estimation error is always divided into the bias (tracking error) and variance (learning error) part, and this simple projection operation ignores the bias part which needs to be generalized to adapt to the two sources of deviation. In the analysis, the problem is that after the projection step estimation error term $g_t(\thetah_t) - g_t(\theta_t)$ need to be separate into the bias part and variance parts, and~\cite{arXiv'21:faury-driftingGLB} need to use $l_2$-norm for bias part and $V_{t-1}^{-1}$ for variance part. But the whole estimation error is already normed by $V_{t-1}^{-1}$, which means they cannot use the previous analysis of the window strategy for the bias part.

\begin{algorithm}[!t]
  \caption{\GLBweight~\cite{arXiv'21:faury-driftingGLB}}
  \label{alg:BVD-GLM-UCB}
\begin{algorithmic}[1]
\REQUIRE time horizon $T$, discounted factor $\gamma$, confidence $\delta$, regularizer $\lambda$, inverse link function $\mu$, parameters $S$, $L$ and $R$\\
\STATE Set $V_0 = \lambda I_d$, $\thetah_1 = \mathbf{0}$ and compute $k_\mu$ and $c_\mu$
\FOR{$t = 1,2,...,T$}
  \STATE Solving $ \theta_t^p \in \argmin_{\theta \in \mathbb{R}^d}\Big\{\norm{g_t(\theta)-g_t(\hat{\theta}_t)}_{V_{t}^{-2}}$ s.t $\Theta \cap \mathcal{E}_t^\delta(\theta) \neq \emptyset\Big\}$
  \STATE Select $\thetat_t \in \Theta \cap \mathcal{E}_t^\delta(\theta_t^p)$ where $\mathcal{E}_t^\delta(\theta):=\left\{\theta^{\prime} \in \R^d \givenn \norm{g_t\left(\theta^{\prime}\right)-g_t(\theta)}_{\Vt_{t}^{-1}} \leq \betab_t(\delta)\right\}$
  \STATE Get $\betab_{t-1} = R\sqrt{2\log\frac{1}{\delta}+d\log\sbr{1+\frac{L^2 (1-\gamma^{2t-2})}{\lambda d(1-\gamma^2)}}}+\sqrt{\lambda}c_\mu S$
  \STATE Get $X_t =\argmax_{\x \in \X} \bbr{ \mu(\x^\T \thetat_t)+ \frac{2k_\mu}{c_\mu}\betab_{t-1}\norm{\x}_{V_{t-1}^{-1}}}$
  \STATE Receive the reward $r_t$
  \STATE Update $V_{t} = \gamma V_{t-1} + X_t X_t^\T +(1-\gamma)\lambda I_d$, $\Vt_{t} = \gamma^2 V_{t-1} + X_t X_t^\T +(1-\gamma^2)\lambda I_d$
  \STATE Compute $\thetah_{t+1}$ by $\sum_{s=1}^{t}\gamma^{t-s}\sbr{\mu(X_s^\T \theta) - r_s}X_s+\lambda c_\mu \theta  = 0$
\ENDFOR
\end{algorithmic}
\end{algorithm}

To this end,~\cite{arXiv'21:faury-driftingGLB} propose the \GLBweight algorithm for drifting generalized linear bandits, as restated in Algorithm~\ref{alg:BVD-GLM-UCB}, where a new projection step is devised to solve this problem. Specifically, at each round $t$, the first step is to construct the confidence set $\mathcal{E}_t^\delta(\theta)$ which represents the influence of the stochastic noise.
\begin{equation}
  \label{Faury:confidence-set}
  \mathcal{E}_t^\delta(\theta):=\left\{\theta^{\prime} \in \R^d \givenn \norm{g_t\left(\theta^{\prime}\right)-g_t(\theta)}_{\Vt_{t}^{-1}} \leq \betab_t(\delta)\right\}.
\end{equation}
The second step is to find a confidence set $\mathcal{E}_t^\delta(\theta_t^p)$ that intersects with the feasible set, and the gap between $\theta_t^p$ and $\thetah_t$ represents the influence of parameter drift.
\begin{equation}
  \begin{split}
  \label{Faury:projection}
  \theta_t^p \in {}&\argmin_{\theta \in \mathbb{R}^d}\norm{g_t(\theta)-g_t(\hat{\theta}_t)}_{V_{t}^{-2}}\\{}&\text { s.t } \Theta \cap \mathcal{E}_t^\delta(\theta) \neq \emptyset\Bigg.
  \end{split}
\end{equation}
After obtaining the solution $\theta_t^p$ via computing the optimization problem~\eqref{Faury:projection}, the third step is to select $\thetat_t$ from $\Theta \cap \mathcal{E}_t^\delta(\theta_t^p)$. Based on this projection step,~\cite{arXiv'21:faury-driftingGLB} can separate the bias and variance parts before projection as follows,
\begin{align*}\nonumber
  {}&|\mu(\x^\T\thetat_t) - \mu(\x^\T\theta_t)| \leq k_\mu |\x^\T(\thetat_t - \theta_t)|\\
      = {}& k_\mu |\x^\T G_t^{-1}(\theta_t,\thetat_t)(g_t(\thetat_t) - g_t(\theta_t))|\\
      \leq {}& k_\mu |\x^\T G_t^{-1}(\theta_t,\thetat_t)(g_t(\thetat_t) - g_t(\theta_t^p)+ g_t(\theta_t^p)- g_t(\thetah_t)\\&+g_t(\thetah_t)-g_t(\thetab_t)+g_t(\thetab_t) - g_t(\theta_t))|\\
      \leq {}& \underbrace{k_\mu |\x^\T G_t^{-1}(\theta_t,\thetat_t)(g_t(\thetat_t) - g_t(\theta_t^p)+g_t(\thetah_t)-g_t(\thetab_t))|}_{\bias}\\
      {}&+\underbrace{k_\mu |\x^\T G_t^{-1}(\theta_t,\thetat_t)(g_t(\theta_t^p)- g_t(\thetah_t)+g_t(\thetab_t) - g_t(\theta_t))|}_{\variance}.
  \end{align*}
Their bias-variance decomposition motivates the choice of \emph{different} local norms for bounding bias and variance parts in their algorithm and analysis. Notably, due to the complications of the projection step (see~\eqref{Faury:confidence-set} and~\eqref{Faury:projection}), the overall algorithm is fairly complicated and less attractive for practical implementations, and moreover, it needs to maintain two covariance matrices $V_t$ and $\Vt_t$ (due to the constructed confidence region~\eqref{Faury:confidence-set}) at each round $t$ during the algorithm running. In the next section, we will show that the simple projection used in the stationary GLB~\eqref{eq:simple-projection} can be sufficient for coping with drifting GLB via our refined analysis framework.

\subsection{Proof of Lemma~\ref{lemma:GLB-estimation-error}}
\label{sec:GLB-estimation-error-proof}
\begin{proof}
  Based on the estimator equation~\eqref{eq:GLB-estimator}, we know that 
  \begin{equation}
  \begin{split}
  \label{eq:GLB-gt-thetah}
      g_t(\thetah_t) &= \lambda c_\mu \thetah_t + \sum_{s=1}^{t-1}w_{t-1,s}\mu(X_s^\T \thetah_t)X_s \\&= \sum_{s=1}^{t-1}w_{t-1,s}r_sX_s,
  \end{split}
  \end{equation}
  and then by the mean value theorem, we know that 
  \begin{equation}
  \label{eq:GLB-gt-mvt}
      g_t(\theta_1) - g_t(\theta_2)= G_t(\theta_1, \theta_2)(\theta_1 - \theta_2),
  \end{equation}
  where $G_t(\theta_1, \theta_2) \triangleq \int_{0}^1 \nabla g_t(s\theta_2+(1-s)\theta_1)\diff{s}\in \R^{d\times d}$. Notice that for any $\theta \in \Theta$, the gradient of $g_t$ is 
  \begin{equation}\nonumber
  \label{eq:GLB-gt-gradient}
      \nabla g_t(\theta) = \lambda c_\mu I_d + \sum_{s=1}^{t-1} w_{t-1,s}\dmu(X_s^\T \theta)X_sX_s^\T \succeq c_\mu V_{t-1},
  \end{equation}
  which clearly implies $\forall \theta_1, \theta_2 \in \Theta, G_t(\theta_1,\theta_2)\succeq c_\mu V_{t-1}$.

  By Assumption~\ref{ass:link-function}, the mean value theorem~\eqref{eq:GLB-gt-mvt} on $g_t$ and the projection~\eqref{eq:GLB-projection}, we have 
  \begin{align*} {}&|\mu(\x^\T\thetat_t) - \mu(\x^\T\theta_t)| \leq k_\mu |\x^\T(\thetat_t - \theta_t)|\\
      = {}& k_\mu |\x^\T G_t^{-1}(\theta_t,\thetat_t)(g_t(\thetat_t) - g_t(\theta_t))|\\
      \leq {}& k_\mu \|\x\|_{G_t^{-1}(\theta_t,\thetat_t)}\|g_t(\thetat_t) - g_t(\theta_t)\|_{G_t^{-1}(\theta_t,\thetat_t)}\\
      \leq {}& \frac{k_\mu}{c_\mu} \|\x\|_{V_{t-1}^{-1}}\|g_t(\thetat_t) - g_t(\theta_t)\|_{V_{t-1}^{-1}}\\
      \leq {}& \frac{2k_\mu}{c_\mu} \|\x\|_{V_{t-1}^{-1}}\|g_t(\thetah_t) - g_t(\theta_t)\|_{V_{t-1}^{-1}},
  \end{align*}
  then based on the model assumption, the function $g_t$~\eqref{eq:GLB-gt} and $g_t(\thetah_t)$~\eqref{eq:GLB-gt-thetah}, we have,
  \begin{align*}\nonumber
   {}&g_t(\theta_t)-g_t(\thetah_t)\\= {}& \lambda c_\mu\theta_t +\sum_{s=1}^{t-1}w_{t-1,s}\mu(X_s^\T\theta_t)X_s-\sum_{s=1}^{t-1}w_{t-1,s}r_sX_s\\
  = {}&\lambda c_\mu\theta_t +\sum_{s=1}^{t-1}w_{t-1,s}\mu(X_s^\T\theta_t)X_s\\{}&-\sum_{s=1}^{t-1}w_{t-1,s}(\mu(X_s^\T \theta_s) + \eta_s)X_s\\
  = {}&\underbrace{\sum_{s=1}^{t-1}w_{t-1,s}(\mu(X_s^\T\theta_t) - \mu(X_s^\T \theta_s) )X_s}_{\bias} \\{}&+ \underbrace{\lambda c_\mu \theta_t -\sum_{s=1}^{t-1}w_{t-1,s}\eta_sX_s}_{\variance}.
  \end{align*}
  Then, by the Cauchy-Schwarz inequality, we know that for any $\x\in\X$,
  \begin{equation}
  \begin{split}
  \label{eq:GLB-bound-cauchy}
      \abs{\mu(\x^\T\thetat_t) - \mu(\x^\T\theta_t)} \leq \frac{2k_\mu}{c_\mu}\|\x\|_{V_{t-1}^{-1}}\sbr{C_t + D_t},
  \end{split}
  \end{equation}
  where 
  \begin{equation}\nonumber
  \begin{split}
      C_t &= \norm{\sum_{s=1}^{t-1}w_{t-1,s}(\mu(X_s^\T\theta_t) - \mu(X_s^\T \theta_s) )X_s}_{V_{t-1}^{-1}}\\ D_t &= \norm{\sum_{s=1}^{t-1}w_{t-1,s}\eta_sX_s-\lambda c_\mu \theta_t}_{V_{t-1}^{-1}}.
  \end{split}
  \end{equation}
  This two terms can be bounded separately, as summarized in the following lemmas.
  \begin{myLemma}
    \label{lemma:GLB-C_t-bound}
    For any $t \in [T]$, we have 
    \begin{equation}
      \begin{split}
        {}&\norm{\sum_{s=1}^{t-1}w_{t-1,s}(\mu(X_s^\T\theta_t) - \mu(X_s^\T \theta_s) )X_s}_{V_{t-1}^{-1}}\\ \leq{}& Lk_\mu\sqrt{d}\sum_{p=1}^{t-1}\sqrt{\sum_{s=1}^{p}w_{t-1,s}} \norm{\theta_p - \theta_{p+1}}_2.
      \end{split}
    \end{equation}
    \end{myLemma}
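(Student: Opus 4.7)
The plan is to mirror the bias-part analysis for LB (Lemma~\ref{lemma:LB-A_t-bound}), with only one extra step at the beginning to strip away the nonlinearity introduced by the link function $\mu$. In LB the coefficient in front of $X_s$ is already linear in $\theta_s - \theta_t$, whereas here the coefficient is the scalar difference $\mu(X_s^\T \theta_t) - \mu(X_s^\T \theta_s)$; the $k_\mu$-Lipschitz property of $\mu$ from Assumption~\ref{ass:link-function} together with $\norm{X_s}_2 \leq L$ from Assumption~\ref{ass:bounded-norm} reduces this scalar difference to at most $k_\mu L \norm{\theta_t - \theta_s}_2$, which restores exactly the structure of the LB bias term at the cost of a multiplicative factor of $k_\mu L$.

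With that reduction in hand, the concrete execution proceeds in three routine steps. First, I would apply the triangle inequality to move the outer sum outside the $V_{t-1}^{-1}$-norm, upper bounding the left-hand side by $\sum_{s=1}^{t-1} w_{t-1,s}\, \abs{\mu(X_s^\T\theta_t) - \mu(X_s^\T \theta_s)}\cdot \norm{X_s}_{V_{t-1}^{-1}}$, and then apply the Lipschitz-plus-Cauchy-Schwarz reduction above to replace the absolute scalar by $k_\mu L \norm{\theta_t - \theta_s}_2$. Second, I would telescope $\theta_t - \theta_s = \sum_{p=s}^{t-1}(\theta_{p+1} - \theta_p)$, apply the triangle inequality on $\norm{\cdot}_2$, and swap the order of summation over $s$ and $p$ so that $\norm{\theta_p - \theta_{p+1}}_2$ is exposed on the outside while $\sum_{s=1}^{p} w_{t-1,s} \norm{X_s}_{V_{t-1}^{-1}}$ remains on the inside. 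Third, the inner sum is handled exactly as in the derivation of~\eqref{eq:LB-At-gammapart}: Cauchy-Schwarz bounds it by $\sqrt{\sum_{s=1}^{p} w_{t-1,s}}\cdot\sqrt{\sum_{s=1}^{p} w_{t-1,s} \norm{X_s}_{V_{t-1}^{-1}}^2}$, and the second factor is controlled by $\sqrt{d}$ via Lemma~\ref{lemma:d} applied with $A_s = \sqrt{w_{t-1,s}}\,X_s$ and $U_{t-1} = V_{t-1}$. Chaining these three estimates delivers the claimed bound $k_\mu L\sqrt{d} \sum_{p=1}^{t-1} \sqrt{\sum_{s=1}^{p} w_{t-1,s}}\, \norm{\theta_p - \theta_{p+1}}_2$.

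I do not anticipate any genuine obstacle beyond bookkeeping: the refined framework of Section~\ref{sec:LB-analysis-framework} was designed precisely so that the bias term can be controlled in the same $V_{t-1}^{-1}$-norm regardless of whether the reward model is linear or generalized-linear, so the GLB case inherits the LB machinery almost verbatim once $\mu$ has been Lipschitz-linearized. The one point that merits a sanity check is the combination of constants, making sure that the $L$ from $\norm{X_s}_2 \leq L$ and the $\sqrt{d}$ from Lemma~\ref{lemma:d} multiply with $k_\mu$ to give exactly $k_\mu L\sqrt{d}$ in the final bound, with no extra stray factor of $L$ or $\sqrt{\lambda}$ slipping in from the weighted regularization term.
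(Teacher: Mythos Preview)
Your proposal is correct and follows essentially the same route as the paper's proof. The only cosmetic difference is the order of operations: you apply the Lipschitz bound to the full difference $\mu(X_s^\T\theta_t)-\mu(X_s^\T\theta_s)$ and then telescope $\theta_t-\theta_s$, whereas the paper first telescopes at the level of $\mu$, writes each increment via the mean value theorem as $\alpha(X_s,\theta_p,\theta_{p+1})\,X_s^\T(\theta_p-\theta_{p+1})$, and only then bounds $|\alpha|\le k_\mu$; both collapse to the same inner sum $\sum_{s=1}^{p} w_{t-1,s}\norm{X_s}_{V_{t-1}^{-1}}$ handled by~\eqref{eq:LB-At-gammapart} and Lemma~\ref{lemma:d}.
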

    
    \begin{myLemma}
    \label{lemma:GLB-D_t-bound}
    For any $\delta \in (0,1)$, with probability at least $1-\delta$, the following holds for all $t \in [T]$,
    \begin{equation}
      \begin{split}
        {}&\norm{\sum_{s=1}^{t-1}w_{t-1,s}\eta_sX_s-\lambda c_\mu \theta_t}_{V_{t-1}^{-1}}\leq \sqrt{\lambda}c_\mu S\\{}&+R\sqrt{2\log\frac{1}{\delta}+d\log\sbr{1+\frac{L^2 \sum_{s=1}^{t-1}w_{t-1,s}}{\lambda d}}}.
      \end{split}
    \end{equation}
    \end{myLemma}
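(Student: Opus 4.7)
The plan is to mirror the variance-part analysis used for LB in Lemma~\ref{lemma:LB-B_t-bound}, with only the regularization constant replaced by $\lambda c_\mu$ (as dictated by the definition of $g_t$ in~\eqref{eq:GLB-gt}). First I would apply the triangle inequality to split the target into a pure-noise term $\norm{\sum_{s=1}^{t-1} w_{t-1,s} \eta_s X_s}_{V_{t-1}^{-1}}$ and a regularization term $\lambda c_\mu \norm{\theta_t}_{V_{t-1}^{-1}}$. The regularization term is immediate: $V_{t-1}\succeq\lambda I_d$ together with $\norm{\theta_t}_2\leq S$ from Assumption~\ref{ass:bounded-norm} gives $\norm{\theta_t}_{V_{t-1}^{-1}}\leq S/\sqrt{\lambda}$, so this contribution is at most $\sqrt{\lambda}\, c_\mu S$, which is the first summand of the stated bound.

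For the pure-noise term, the key move---in line with our refined framework---is to absorb the weights into rescaled noise and feature variables, namely $\tilde{\eta}_s \define \sqrt{w_{t-1,s}}\,\eta_s$ and $\tilde{X}_s \define \sqrt{w_{t-1,s}}\, X_s$. Since $w_{t-1,s}=\gamma^{t-1-s}\in(0,1]$ for $\gamma\in(0,1)$, the rescaled noise $\tilde{\eta}_s$ remains $R$-sub-Gaussian, and the covariance factorizes cleanly as $V_{t-1}=\lambda I_d+\sum_{s=1}^{t-1}\tilde{X}_s\tilde{X}_s^\top$. Consequently the weighted noise sum equals $\sum_{s=1}^{t-1}\tilde{\eta}_s\tilde{X}_s$, and I can directly invoke the standard self-normalized concentration (Theorem~\ref{thm:snc-AY}) to obtain, with probability at least $1-\delta$, the bound $R\sqrt{2\log(\det(V_{t-1})^{1/2}\det(\lambda I_d)^{-1/2}/\delta)}$. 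Combining this with the determinant-trace inequality (Lemma~\ref{lemma:det-inequality}) and $\det(\lambda I_d)=\lambda^d$ recovers exactly the second summand $R\sqrt{2\log(1/\delta)+d\log(1+L^2\sum_{s=1}^{t-1} w_{t-1,s}/(\lambda d))}$ of the stated bound.

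I do not foresee a genuine obstacle: structurally the argument is almost identical to the LB variance bound, and the main conceptual point is already visible---because our refined framework uses the same $V_{t-1}^{-1}$ local norm for both bias and variance, the standard self-normalized concentration suffices, and no new weighted deviation inequality (as~\cite{NIPS'19:weighted-LB} designed for LB, and which would otherwise need a fresh design for GLB) is required. The only minor care is to verify sub-Gaussianity of $\tilde{\eta}_s$ under the weighting and that $V_{t-1}$ exactly matches the Gram matrix of the $\tilde{X}_s$'s plus $\lambda I_d$; both follow immediately from $w_{t-1,s}\in(0,1]$.
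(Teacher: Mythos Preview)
Your proposal is correct and matches the paper's proof essentially line for line: triangle inequality to split off the $\sqrt{\lambda}c_\mu S$ regularization term via $V_{t-1}\succeq\lambda I_d$, absorbing the weights into $\tilde{\eta}_s=\sqrt{w_{t-1,s}}\eta_s$ and $\tilde{X}_s=\sqrt{w_{t-1,s}}X_s$, then applying Theorem~\ref{thm:snc-AY} and Lemma~\ref{lemma:det-inequality}. The paper itself remarks that the argument is identical to Lemma~\ref{lemma:LB-B_t-bound} with only the extra $c_\mu$ factor in the regularization term.
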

    \noindent Based on the inequality~\eqref{eq:GLB-bound-cauchy}, Lemma~\ref{lemma:GLB-C_t-bound}, Lemma~\ref{lemma:GLB-D_t-bound}, and the boundedness assumption of the feasible set, we have for any $\x \in \X$, $\gamma \in (0,1)$, $\delta \in (0,1)$, with probability at least $1-\delta$, the following holds for all $t \in [T]$,
    \begin{equation}\nonumber
    \begin{split}
        {}&\abs{\mu(\x^\T\thetat_t) - \mu(\x^\T\theta_t)}\leq \frac{2k_\mu}{c_\mu}\|\x\|_{V_{t-1}^{-1}}\\{}&\hspace{3em}\cdot\sbr{ Lk_\mu\sqrt{d}\sum_{p=1}^{t-1}\sqrt{\sum_{s=1}^{p}w_{t-1,s}} \norm{\theta_p - \theta_{p+1}}_2 + \betab_{t-1} }\\
        \leq{}&\frac{2k_\mu}{c_\mu}\Bigg( L^2k_\mu\sqrt{\frac{d}{\lambda}}\sum_{p=1}^{t-1}\sqrt{\sum_{s=1}^{p}w_{t-1,s}} \norm{\theta_p - \theta_{p+1}}_2 \\{}&\hspace{17em}+\betab_{t-1} \|\x\|_{V_{t-1}^{-1}} \Bigg),
    \end{split}
    \end{equation}
    where $\betab_t \triangleq \sqrt{\lambda}c_\mu S+R\sqrt{2\log\frac{1}{\delta}+d\log\sbr{1+\frac{L^2 \sum_{s=1}^{t}w_{t,s}}{\lambda d}}}$ is the confidence radius used in \GLBweightours. Hence we complete the proof.
\end{proof}
\begin{proof}[{Proof of Lemma~\ref{lemma:GLB-C_t-bound}}]

  Here we need to extract the variations of the time-varying parameter $\theta_t$
  \begin{align*}
  {}&\norm{\sum_{s=1}^{t-1}w_{t-1,s}(\mu(X_s^\T\theta_t) - \mu(X_s^\T \theta_s))X_s}_{V_{t-1}^{-1}}\\
  \leq {}& \norm{\sum_{s=1}^{t-1}w_{t-1,s}\sum_{p=s}^{t-1}(\mu(X_s^\T\theta_p) - \mu(X_s^\T \theta_{p+1}) )X_s}_{V_{t-1}^{-1}}\\
  = {}& \norm{\sum_{p=1}^{t-1}\sum_{s=1}^{p}w_{t-1,s}\alpha(X_s, \theta_p, \theta_{p+1})(X_s^\T\theta_p - X_s^\T \theta_{p+1})X_s}_{V_{t-1}^{-1}}\\
  \leq {}& \sum_{p=1}^{t-1}\norm{\sum_{s=1}^{p}w_{t-1,s}\alpha(X_s, \theta_p, \theta_{p+1})X_s\|X_s\|_2\|\theta_p - \theta_{p+1}\|_2}_{V_{t-1}^{-1}}\\
  \leq {}& L\sum_{p=1}^{t-1}\sum_{s=1}^{p}w_{t-1,s}|\alpha(X_s, \theta_p, \theta_{p+1})| \norm{X_s}_{V_{t-1}^{-1}} \|\theta_p - \theta_{p+1}\|_2\\
  \leq {}& Lk_\mu\sum_{p=1}^{t-1}\sum_{s=1}^{p}w_{t-1,s} \norm{X_s}_{V_{t-1}^{-1}} \|\theta_p - \theta_{p+1}\|_2.
  \end{align*}
  where the fourth equation is due to the mean value theorem where $\alpha(\x, \theta_1, \theta_2) = \int_{0}^1 \dmu(v\x^\T\theta_2+(1-v)x^\T\theta_1)\diff{v}$:
  $$\mu(X_s^\T\theta_p) - \mu(X_s^\T \theta_{p+1}) = \alpha(X_s, \theta_p, \theta_{p+1})(X_s^\T\theta_p - X_s^\T \theta_{p+1}).$$
  Next, the derivation of bound of term $\sum_{s=1}^{p}w_{t-1,s} \norm{X_s}_{V_{t-1}^{-1}}$ is the same as the inequality~\eqref{eq:LB-At-gammapart} in Appendix~\ref{sec:LB-A_t-bound-proof}, hence we complete the proof.
\end{proof}

\begin{proof}[{Proof of Lemma~\ref{lemma:GLB-D_t-bound}}]

  Same as the linear case, we define $\etat_s \define \sqrt{w_{t-1,s}}\eta_s$ and $\Xt_s \define \sqrt{w_{t-1,s}} X_s$, and notice that $\forall t\in[T] , s\in [t-1], \abs{w_{t-1,s}}\leq 1$, then $\etat_s$ is still $R$-sub-Gaussian, then by Theorem~\ref{thm:snc-AY}, we have 
  \begin{align*}
    D_t ={}& \norm{\sum_{s=1}^{t-1}w_{t-1,s}\eta_sX_s-\lambda c_\mu \theta_t}_{V_{t-1}^{-1}}\\
    \leq {}& \norm{\sum_{s=1}^{t-1}w_{t-1,s}\eta_sX_s}_{V_{t-1}^{-1}}+\norm{\lambda c_\mu \theta_t}_{V_{t-1}^{-1}}\\
    \leq {}& \norm{\sum_{s=1}^{t-1}\etat_s\Xt_s}_{V_{t-1}^{-1}} +\sqrt{\lambda}c_\mu S\\
    \leq {}& R\sqrt{2\log \frac{1}{\delta}+d\log\left(1+\frac{L^2\sum_{s=1}^{t-1}w_{t-1,s}}{d\lambda}\right)}+\sqrt{\lambda}c_\mu S.
  \end{align*}
The proof here is the same as the proof of Lemma~\ref{lemma:LB-B_t-bound} in \ref{sec:LB-B_t-bound-proof}, the only difference is an extra $c_\mu$ in the second term.
\end{proof}

\subsection{Proof of~\pref{thm:GLB-regret}}
\label{sec:GLB-regret-proof}
\begin{proof}
  Let $X_t^* \triangleq \argmax_{\x\in \X}\mu(\x^\T \theta_t)$. Due to Lemma~\ref{lemma:GLB-estimation-error} and the fact that $X_t^*,X_t\in \X$, each of the following holds with probability at least $1-\delta$,
  \begin{equation}\nonumber
    \begin{split}
      &\forall t \in [T], \mu(X_t^{*\T}\theta_t) \leq  \mu(X_t^{*\T}\thetat_t)\\&+\frac{2k_\mu}{c_\mu}\Bigg( L^2k_\mu\sqrt{\frac{d}{\lambda}}\sum_{p=1}^{t-1}\sqrt{\sum_{s=1}^{p}w_{t-1,s}} \norm{\theta_p - \theta_{p+1}}_2 \\{}&\hspace{15em} + \betab_{t-1}\|X_t^*\|_{V_{t-1}^{-1}} \Bigg),\\
      &\forall t \in [T], \mu(X_t^{\T}\theta_t) \geq \mu(X_t^{\T}\thetat_t)\\ &-\frac{2k_\mu}{c_\mu}\Bigg( L^2k_\mu\sqrt{\frac{d}{\lambda}}\sum_{p=1}^{t-1}\sqrt{\sum_{s=1}^{p}w_{t-1,s}} \norm{\theta_p - \theta_{p+1}}_2 \\{}&\hspace{15em} + \betab_{t-1}\|X_t\|_{V_{t-1}^{-1}} \Bigg).
    \end{split}
  \end{equation}
  By the union bound, the following holds with probability at least $1-2\delta$: $\forall t\in [T]$
  \begin{align*}
      {}& \mu(X_t^{*\T}\theta_t) - \mu(X_t^{\T}\theta_t) \leq \mu(X_t^{*\T}\thetat_t) -\mu(X_t^{\T}\thetat_t)\\{}&\qquad+\frac{2k_\mu}{c_\mu}\sbr{\betab_{t-1}\|X_t^*\|_{V_{t-1}^{-1}}+\betab_{t-1}\|X_t\|_{V_{t-1}^{-1}} }\\{}&\qquad + \frac{4L^2k_\mu^2}{c_\mu}\sqrt{\frac{d}{\lambda}}\sum_{p=1}^{t-1}\sqrt{\sum_{s=1}^{p}w_{t-1,s}} \norm{\theta_p - \theta_{p+1}}_2\\
      \leq{}& \frac{4L^2k_\mu^2}{c_\mu}\sqrt{\frac{d}{\lambda}}\sum_{p=1}^{t-1}\sqrt{\sum_{s=1}^{p}w_{t-1,s}} \norm{\theta_p - \theta_{p+1}}_2\\{}&\qquad+ \frac{4k_\mu}{c_\mu}\betab_{t-1}\|X_t\|_{V_{t-1}^{-1}},
  \end{align*}
  where the last step comes from the arm selection criterion~\eqref{eq:GLB-select-criteria} such that 
  \begin{align*}
    {}&\mu(X_t^{*\T} \thetat_t)+ \frac{2k_\mu}{c_\mu}\betab_{t-1}\|X_t^*\|_{V_{t-1}^{-1}}\\ \leq{}& \mu(X_t^{\T} \thetat_t)+ \frac{2k_\mu}{c_\mu}\betab_{t-1}\|X_t\|_{V_{t-1}^{-1}}. 
  \end{align*}
  Hence the following dynamic regret bound holds with probability at least $1-2\delta$ and can be divided into two parts,
  \begin{align*}\nonumber
      \DReg_T = {}& \sum_{t=1}^T \max_{\x\in \X}\mu(\x^\T \theta_t) - \mu(X_t^\T \theta_t)\\
      \leq {}& \underbrace{\frac{4L^2k_\mu^2}{c_\mu}\sqrt{\frac{d}{\lambda}}\sum_{t=1}^T\sum_{p=1}^{t-1}\sqrt{\sum_{s=1}^{p}w_{t-1,s}} \norm{\theta_p - \theta_{p+1}}_2}_{\bias}\\{}&+ \underbrace{\frac{4k_\mu}{c_\mu}\betab_T\sum_{t=1}^T\|X_t\|_{V_{t-1}^{-1}}}_{\variance} .
  \end{align*}
  where $\betab_{t}= \sqrt{\lambda}c_\mu S+R\sqrt{2\log\frac{1}{\delta}+d\log\sbr{1+\frac{L^2 (1-\gamma^{2t})}{\lambda d(1-\gamma^2)}}}$ is the confidence radius.

  Now we derive the upper bound for these two parts.

  \parag{Bias Part.~}
  Similar to the proof of inequality~\eqref{eq:LB-regret-bias-bound}, we have
  \begin{align*}
    {}&\frac{4L^2k_\mu^2}{c_\mu}\sqrt{\frac{d}{\lambda}}\sum_{t=1}^T \sum_{p=1}^{t-1}   \sqrt{\sum_{s=1}^{p}w_{t-1,s}}\norm{\theta_p -\theta_{p+1}}_2 \\
    \leq {}& \frac{8L^2k_\mu^2}{c_\mu}\sqrt{\frac{d}{\lambda}}\frac{1}{(1-\gamma)^{\sfrac{3}{2}}}P_T.
  \end{align*}

  \parag{Variance Part.}
  Similar to the proof of inequality~\eqref{eq:LB-regret-variance-bound}, let $C^{\text{GLB}}_T \define \frac{4k_\mu}{c_\mu}\betab_T\sqrt{2\max\bbr{1,L^2/\lambda}dT}$ we have
  \begin{equation}\nonumber
  \begin{split}
      {}&\frac{4k_\mu}{c_\mu}\betab_T\sqrt{T}\sqrt{\sum_{t=1}^T\|X_t\|_{V_{t-1}^{-1}}^2} \\ \leq {}& C^{\text{GLB}}_T\sqrt{ T\log\frac{1}{\gamma}+\log\sbr{1+ \frac{L^2}{\lambda d(1-\gamma)}}}.
  \end{split}
  \end{equation}
  Combine the upper bound for the bias and variance parts, and let $\delta = 1/(2T^2)$, we have the following regret bound with probability at least $1-1/T$,
  \begin{equation}\nonumber
    \begin{split}
      \DReg_T \leq {}& \frac{8L^2k_\mu^2}{c_\mu}\sqrt{\frac{d}{\lambda}}\frac{1}{(1-\gamma)^{\sfrac{3}{2}}}P_T\\{}&+ C^{\text{GLB}}_T\sqrt{ T\log\frac{1}{\gamma}+\log\sbr{1+ \frac{L^2}{\lambda d(1-\gamma)}}}.
  \end{split}
  \end{equation}
  where $\betab_{t}= R\sqrt{4\log T+2\log 2+d\log\sbr{1+\frac{L^2 (1-\gamma^{t})}{\lambda d(1-\gamma)}}}+\sqrt{\lambda}c_\mu S$. We set $\gamma \geq 1/T$ and $\lambda = d/c_\mu^2$, and obtain that,
  \begin{equation}\nonumber
  \begin{split}
      \DReg_T\leq{}& \Ot\sbr{k_\mu^2\frac{1}{(1-\gamma)^{\sfrac{3}{2}}}P_T + \frac{k_\mu}{c_\mu}d(1-\gamma)^{\sfrac{1}{2}}T}.
  \end{split}
  \end{equation}
  When $P_T< d/(k_\mu c_\mu T)$, we set $\gamma = 1-1/T$ and achieve an $\Ot(k_\mu c_\mu^{-1} d\sqrt{T})$ regret bound. When $P_T\geq d/(k_\mu c_\mu T)$, We set $\gamma$ optimally as $1-\gamma = \sqrt{k_\mu c_\mu P_T/(dT)}$ and attain an $\Ot(k_\mu^{\sfrac{5}{4}}c_\mu^{-\sfrac{3}{4}}d^{\sfrac{3}{4}}P_T^{\sfrac{1}{4}}T^{\sfrac{3}{4}})$ regret bound. Notice that, if $k_\mu < 1$, we just let $1-\gamma = \sqrt{c_\mu P_T/(dT)}$ and the regret bound becomes $\Ot(k_\mu^{2}c_\mu^{-\sfrac{3}{4}}d^{\sfrac{3}{4}}P_T^{\sfrac{1}{4}}T^{\sfrac{3}{4}})$.
\end{proof}
\section{Analysis of \SCBweightours}
\label{sec:SCB-regret}
In this section, we first present \SCBweightours algorithm in Algorithm~\ref{alg:SCB-WeightUCB}, Then, in Appendix~\ref{sec:SCB-estimation-error-proof} we present the proof of the estimation error upper bound of our \SCBweightours algorithm (Lemma~\ref{lemma:SCB-estimation-error}). Finally, in Appendix~\ref{sec:SCB-regret-proof}, we provide the proof of dynamic regret upper bound (Theorem~\ref{thm:SCB-regret}).

\begin{algorithm}[!t]
    \caption{\SCBweightours}
    \label{alg:SCB-WeightUCB}
  \begin{algorithmic}[1]
  \REQUIRE time horizon $T$, discounted factor $\gamma$, confidence $\delta$, regularizer $\lambda$, inverse link function $\mu$, parameters $S$, $L$ and $m$\\
  \STATE Set $V_0 = \lambda I_d$, $\thetah_1 = \mathbf{0}$ and compute $k_\mu$ and $c_\mu$
  \FOR{$t = 1,2,...,T$}
    \IF{$\|\thetah_t\|_2\leq S$} 
    \STATE let $\thetat_t = \thetah_t$
    \ELSE 
    \STATE Do the projection and get $\thetat_t$ by~\eqref{eq:SCB-projection}
    \ENDIF
    \STATE Compute $\betat_{t-1}$ by~\eqref{eq:SCB-confidence-radius}
    \STATE Select $X_t$ by~\eqref{eq:SCB-select-criteria}
    \STATE Receive the reward $r_t$
    \STATE Update $V_{t} = \gamma V_{t-1} + X_t X_t^\T +(1-\gamma)\lambda I_d$
    \STATE Compute $\thetah_{t+1}$ according to~\eqref{eq:GLB-estimator}
  \ENDFOR
  \end{algorithmic}
  \end{algorithm}

  \subsection{Proof of Lemma~\ref{lemma:SCB-estimation-error}}
\label{sec:SCB-estimation-error-proof}

\begin{proof}
  Based on the estimator equation~\eqref{eq:GLB-estimator}, we know that 
  \begin{equation}
    \begin{split}
  \label{eq:SCB-gt-thetah}
      g_t(\thetah_t) ={}& \lambda c_\mu \thetah_t + \sum_{s=1}^{t-1}w_{t-1,s}\mu(X_s^\T \thetah_t)X_s\\ ={}& \sum_{s=1}^{t-1}w_{t-1,s}r_sX_s,
  \end{split}
  \end{equation}
  and then by the mean value theorem, we know that 
  \begin{equation}
  \label{eq:SCB-gt-mvt}
      g_t(\theta_1) - g_t(\theta_2)= G_t(\theta_1, \theta_2)(\theta_1 - \theta_2),
  \end{equation}
  where $G_t(\theta_1, \theta_2) \triangleq \int_{0}^1 \nabla g_t(s\theta_2+(1-s)\theta_1)\diff{s}\in \R^{d\times d}$. Notice that for any $\theta \in \Theta$, the gradient of $g_t$ is 
  \begin{equation}\nonumber
  \label{eq:SCB-gt-gradient}
      \nabla g_t(\theta) = \lambda c_\mu I_d + \sum_{s=1}^{t-1} w_{t-1,s}\dmu(X_s^\T \theta)X_sX_s^\T \succeq c_\mu V_{t-1},
  \end{equation}
  which clearly implies $\forall \theta_1, \theta_2 \in \Theta, G_t(\theta_1,\theta_2)\succeq c_\mu V_{t-1}$ and $\forall \theta, H_t(\theta)\succeq c_\mu V_{t-1}$, where $H_t(\theta)$ is defined as
  \begin{equation}
    \label{eq:SCB-H}
    H_t(\theta) \define \lambda c_\mu I_d + \sum_{s=1}^{t-1}w_{t-1,s}\dmu(X_s^\T \theta)X_sX_s^\T.
    \end{equation}
  By Assumption~\ref{ass:link-function}, the mean value theorem~\eqref{eq:GLB-gt-mvt} on $g_t$, the projection~\eqref{eq:SCB-projection} and Lemma~\ref{lemma:SCB-G-H}, we have 
  \begin{align*}
    &|\mu(\x^\T\thetat_t) - \mu(\x^\T\theta_t)| \leq  k_\mu |\x^\T(\thetat_t - \theta_t)|\\
      = {}& k_\mu |\x^\T G_t^{-1}(\theta_t,\thetat_t)(g_t(\thetat_t) - g_t(\theta_t))|\\
      \leq {}& k_\mu \|\x\|_{G_t^{-1}(\theta_t,\thetat_t)}\|g_t(\thetat_t) - g_t(\theta_t)\|_{G_t^{-1}(\theta_t,\thetat_t)}\\
      \leq {}& k_\mu \|\x\|_{G_t^{-1}(\theta_t,\thetat_t)}\Big(\|g_t(\thetat_t) - g_t(\thetah_t)\|_{G_t^{-1}(\theta_t,\thetat_t)}\\{}& \quad+\|g_t(\thetah_t) - g_t(\theta_t)\|_{G_t^{-1}(\theta_t,\thetat_t)}\Big)\\
      \leq {}& \sqrt{1+2S}k_\mu \|\x\|_{G_t^{-1}(\theta_t,\thetat_t)}\Big(\|g_t(\thetat_t) - g_t(\thetah_t)\|_{H_t^{-1}(\thetat_t)}\\{}&\quad +\|g_t(\thetah_t) - g_t(\theta_t)\|_{H_t^{-1}(\theta_t)}\Big)\\
      \leq {}& 2\sqrt{1+2S}\frac{k_\mu}{\sqrt{c_\mu}} \|\x\|_{V_{t-1}^{-1}}\|g_t(\thetah_t) - g_t(\theta_t)\|_{H_t^{-1}(\theta_t)},
  \end{align*}
  then based on the model assumption~\eqref{eq:SCB-model-assume}, the function $g_t$~\eqref{eq:GLB-gt} and the $g_t(\thetah_t)$~\eqref{eq:SCB-gt-thetah}, we have,
  \begin{align*}
   {}&g_t(\theta_t)-g_t(\thetah_t)\\= {}& \lambda c_\mu\theta_t +\sum_{s=1}^{t-1}w_{t-1,s}\mu(X_s^\T\theta_t)X_s-\sum_{s=1}^{t-1}w_{t-1,s}r_sX_s\\
  = {}&\lambda c_\mu\theta_t +\sum_{s=1}^{t-1}w_{t-1,s}\mu(X_s^\T\theta_t)X_s\\{}&\quad-\sum_{s=1}^{t-1}w_{t-1,s}(\mu(X_s^\T \theta_s) + \eta_s)X_s\\
  = {}&\sum_{s=1}^{t-1}w_{t-1,s}(\mu(X_s^\T\theta_t) - \mu(X_s^\T \theta_s) )X_s+ \lambda c_\mu \theta_t \\{}&\quad-\sum_{s=1}^{t-1}w_{t-1,s}\eta_sX_s,
  \end{align*}
  then, by Cauchy-Schwarz inequality, we have 
  \begin{equation}
  \begin{split}
  \label{eq:SCB-bound-cauchy}
      {}&\abs{\mu(\x^\T\thetat_t) - \mu(\x^\T\theta_t)}\\ \leq{}& 2\sqrt{1+2S}\frac{k_\mu}{\sqrt{c_\mu}} \|\x\|_{V_{t-1}^{-1}}\sbr{E_t + F_t},
  \end{split}
  \end{equation}
  where 
  \begin{equation}\nonumber
  \begin{split}
      E_t &= \norm{\sum_{s=1}^{t-1}w_{t-1,s}(\mu(X_s^\T\theta_t) - \mu(X_s^\T \theta_s) )X_s}_{H_t^{-1}(\theta_t)}\\ F_t &= \norm{\sum_{s=1}^{t-1}w_{t-1,s}\eta_sX_s-\lambda c_\mu \theta_t}_{H_t^{-1}(\theta_t)}.
  \end{split}
  \end{equation}
  This two terms can be bounded separately.

  \begin{myLemma}
    \label{lemma:SCB-E_t-bound}
    For any $t \in [T]$, we have 
    \begin{equation}
      \begin{split}
        {}&\norm{\sum_{s=1}^{t-1}w_{t-1,s}(\mu(X_s^\T\theta_t) - \mu(X_s^\T \theta_s) )X_s}_{H_t^{-1}(\theta_t)}\\ \leq{}& L\frac{k_\mu}{\sqrt{c_\mu}}\sqrt{d}\sum_{p=1}^{t-1}\sqrt{\sum_{s=1}^{p}w_{t-1,s}} \norm{\theta_p - \theta_{p+1}}_2.
      \end{split}
    \end{equation}
    \end{myLemma}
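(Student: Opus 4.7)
The plan is to follow the bias-part argument of Lemma~\ref{lemma:GLB-C_t-bound} almost verbatim, the only new ingredient being the conversion between the local norms $H_t^{-1}(\theta_t)$ and $V_{t-1}^{-1}$. First I would telescope along the parameter trajectory,
$$\mu(X_s^\T\theta_t) - \mu(X_s^\T\theta_s) = \sum_{p=s}^{t-1}\sbr{\mu(X_s^\T\theta_{p+1}) - \mu(X_s^\T\theta_p)},$$
and swap the order of summation so that the quantity of interest becomes an outer sum over $p\in[t-1]$ of inner sums $\sum_{s=1}^p w_{t-1,s}\sbr{\mu(X_s^\T\theta_{p+1})-\mu(X_s^\T\theta_p)}X_s$. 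Applying the mean value theorem to each increment produces a scalar factor $\alpha(X_s,\theta_p,\theta_{p+1})X_s^\T(\theta_{p+1}-\theta_p)$ with $|\alpha|\leq k_\mu$ by Assumption~\ref{ass:link-function}. Pushing the triangle inequality through the $H_t^{-1}(\theta_t)$-norm and using $\norm{X_s}_2\leq L$ then lets me extract $\norm{\theta_p-\theta_{p+1}}_2$ out of the inner sum, reducing the task to controlling $\sum_{s=1}^p w_{t-1,s}\norm{X_s}_{H_t^{-1}(\theta_t)}$.

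The key new step, and essentially the only point at which the proof diverges from Lemma~\ref{lemma:GLB-C_t-bound}, is the PSD comparison $H_t(\theta_t)\succeq c_\mu V_{t-1}$. This is legitimate precisely because the true parameter $\theta_t$ lies in $\Theta$, so Assumption~\ref{ass:link-function} guarantees $\dmu(X_s^\T\theta_t)\geq c_\mu$ for every $s\in[t-1]$, and the definition~\eqref{eq:SCB-Ht} of $H_t$ yields the inequality immediately. Inverting gives $H_t^{-1}(\theta_t)\preceq c_\mu^{-1} V_{t-1}^{-1}$ and hence $\norm{X_s}_{H_t^{-1}(\theta_t)}\leq c_\mu^{-1/2}\norm{X_s}_{V_{t-1}^{-1}}$, which accounts for the $1/\sqrt{c_\mu}$ factor in the target bound.

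From here the remainder is the identical calculation appearing in~\eqref{eq:LB-At-gammapart}: Cauchy--Schwarz splits $\sum_{s=1}^p w_{t-1,s}\norm{X_s}_{V_{t-1}^{-1}}$ as $\sqrt{\sum_{s=1}^p w_{t-1,s}}\cdot\sqrt{\sum_{s=1}^p w_{t-1,s}\norm{X_s}^2_{V_{t-1}^{-1}}}$, and the second square-root factor is bounded by $\sqrt{d}$ via Lemma~\ref{lemma:d} applied with $A_s=\sqrt{w_{t-1,s}}X_s$ and $U_{t-1}=V_{t-1}$. Collecting constants produces the stated bound $Lk_\mu c_\mu^{-1/2}\sqrt{d}\sum_{p=1}^{t-1}\sqrt{\sum_{s=1}^p w_{t-1,s}}\norm{\theta_p-\theta_{p+1}}_2$.

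The main obstacle I anticipate is precisely the legitimacy of the PSD comparison at $\theta_t$: it is critical that the curvature lower bound $\dmu\geq c_\mu$ from Assumption~\ref{ass:link-function} is evaluated at the \emph{true} parameter $\theta_t\in\Theta$ rather than at $\thetah_t$ or $\thetat_t$, which a priori need not lie in $\Theta$. This is, of course, the whole rationale for choosing the local norm $\norm{\cdot}_{H_t^{-1}(\theta_t)}$ for the bias analysis in the first place, and it dovetails with the projection~\eqref{eq:SCB-projection} together with the $\sqrt{1+2S}$ self-concordance factor used earlier in the proof of Lemma~\ref{lemma:SCB-estimation-error}.
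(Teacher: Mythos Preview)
Your proposal is correct and essentially the same as the paper's proof. The only cosmetic difference is the order in which the norm conversion is applied: the paper first uses $H_t(\theta_t)\succeq c_\mu V_{t-1}$ on the \emph{entire} vector sum to pass from the $H_t^{-1}(\theta_t)$-norm to $c_\mu^{-1/2}$ times the $V_{t-1}^{-1}$-norm, and then simply cites Lemma~\ref{lemma:GLB-C_t-bound} verbatim; you instead inline the argument of Lemma~\ref{lemma:GLB-C_t-bound} and apply the norm conversion term-by-term on each $\norm{X_s}_{H_t^{-1}(\theta_t)}$. Both orderings are equivalent and yield the same bound.
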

    
    \begin{myLemma}
    \label{lemma:SCB-F_t-bound}
    For any  $\delta \in (0,1)$, with probability at least $1-\delta$, we have for all $t \in [T]$,
    \begin{equation}
    \begin{split}
        {}&\norm{\sum_{s=1}^{t-1}w_{t-1,s}\eta_sX_s-\lambda c_\mu \theta_t}_{H_t^{-1}(\theta_t)} \\ \leq{}& \frac{\sqrt{\lambda c_\mu}}{2 m }+\frac{2 m }{\sqrt{\lambda c_\mu}}\log\frac{1}{\delta}+\frac{2 m }{\sqrt{\lambda c_\mu}} d \log (2)+\sqrt{\lambda c_\mu} S\\
        &+\frac{d m }{\sqrt{\lambda c_\mu}} \log \sbr{1+\frac{ L^2k_\mu\sum_{s=1}^{t-1}w_{t-1,s}}{\lambda c_\mu d}},
    \end{split}
    \end{equation}
    \end{myLemma}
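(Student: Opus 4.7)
The plan is to split the left-hand side by the triangle inequality into a regularization piece and a pure-noise self-normalized piece, and then bound each separately, in the same spirit as Lemma~\ref{lemma:LB-B_t-bound} but with the curvature-aware norm $\norm{\cdot}_{H_t^{-1}(\theta_t)}$ replacing $\norm{\cdot}_{V_{t-1}^{-1}}$. First I would write
\begin{equation*}
\norm{\sum_{s=1}^{t-1}w_{t-1,s}\eta_sX_s-\lambda c_\mu \theta_t}_{H_t^{-1}(\theta_t)} \leq \norm{\sum_{s=1}^{t-1}w_{t-1,s}\eta_sX_s}_{H_t^{-1}(\theta_t)} + \lambda c_\mu\norm{\theta_t}_{H_t^{-1}(\theta_t)}.
\end{equation*}
The definition~\eqref{eq:SCB-Ht} gives $H_t(\theta_t)\succeq \lambda c_\mu I_d$, hence $H_t^{-1}(\theta_t)\preceq(\lambda c_\mu)^{-1} I_d$, so the second term is at most $\sqrt{\lambda c_\mu}\norm{\theta_t}_2\leq \sqrt{\lambda c_\mu}\, S$ by Assumption~\ref{ass:bounded-norm}; this already accounts for the additive $\sqrt{\lambda c_\mu}\,S$ in the target bound.

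For the stochastic term, I would rescale by the square roots of the weights, setting $\etat_s \define \sqrt{w_{t-1,s}}\eta_s$ and $\Xt_s \define \sqrt{w_{t-1,s}}X_s$, so that $\sum_{s=1}^{t-1}w_{t-1,s}\eta_sX_s = \sum_{s=1}^{t-1}\etat_s\Xt_s$ and $H_t(\theta_t) = \lambda c_\mu I_d + \sum_{s=1}^{t-1}\dmu(X_s^\T\theta_t)\Xt_s\Xt_s^\T$. Because $w_{t-1,s}\in(0,1]$ and $0\leq r_s\leq m$ (Assumption~\ref{ass:bounded-rewards}), the centered noise $\eta_s$ is bounded by $m$, and the rescaling preserves this bound, so the rescaled sequence meets the hypotheses of the self-concordant self-normalized concentration of~\cite{ICML'20:logistic-bandits} restated as Theorem~\ref{thm:self-normalized-weight-SCB}. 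Applying that result in the $H_t^{-1}(\theta_t)$-norm yields a bound of the form $\sqrt{\lambda c_\mu}/(2m) + (2m/\sqrt{\lambda c_\mu})\bigl(\log(1/\delta)+\tfrac{1}{2}\log\det(H_t(\theta_t)/(\lambda c_\mu I_d))+ d\log 2\bigr)$. To reach the displayed constants I would then bound the log-determinant using the Lipschitz property $\dmu\leq k_\mu$ from Assumption~\ref{ass:link-function}: this gives $H_t(\theta_t)\preceq \lambda c_\mu I_d + k_\mu\sum_{s=1}^{t-1}w_{t-1,s}X_sX_s^\T$, whence Lemma~\ref{lemma:det-inequality} produces $\log\det\bigl(H_t(\theta_t)/(\lambda c_\mu I_d)\bigr)\leq d\log\bigl(1 + L^2 k_\mu\sum_{s=1}^{t-1}w_{t-1,s}/(\lambda c_\mu d)\bigr)$. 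Collecting the three pieces recovers the claimed expression exactly.

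\textbf{Main obstacle.} The delicate step is the invocation of the self-concordant self-normalized concentration in the weighted regime: the bound of~\cite{ICML'20:logistic-bandits} is stated for an unweighted sequence, and one must verify that the Laplace-method exponential supermartingale construction underlying it survives the $\sqrt{w_{t-1,s}}$ rescaling. Because every weight lies in $(0,1]$, both the noise boundedness $|\etat_s|\leq m$ and the conditional variance proxy used in the generalized self-concordant argument are preserved, so the proof should transfer with identical constants; the remainder is routine algebra. This is precisely what our refined framework exploits: the same rescaling trick that made the standard self-normalized inequality suffice for LB in Lemma~\ref{lemma:LB-B_t-bound} also lets the stationary SCB concentration suffice here, obviating the need for a bespoke weighted self-normalized inequality.
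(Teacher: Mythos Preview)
Your proposal is correct and follows essentially the same route as the paper's proof: triangle-inequality split, bound the regularization term by $\sqrt{\lambda c_\mu}\,S$ using $H_t(\theta_t)\succeq \lambda c_\mu I_d$, then apply the $\sqrt{w_{t-1,s}}$-rescaling trick to reduce the noise term to an instance of the stationary SCB self-normalized concentration (Theorem~\ref{thm:self-normalized-weight-SCB}), finishing with $\dmu\leq k_\mu$ and Lemma~\ref{lemma:det-inequality} for the log-determinant. The only presentational difference is that the paper makes the normalization by $m$ explicit---defining $\etat_s=\sqrt{w_{t-1,s}}\eta_s/m$ and the rescaled matrix $\Ht_t(\theta)=H_t(\theta)/m^2$ so that $|\etat_s|\leq 1$ and the regularizer becomes $\lambda c_\mu/m^2$, which is exactly what Theorem~\ref{thm:self-normalized-weight-SCB} requires---whereas you leave this step implicit; since you recover the correct $m$-dependence in the final bound, this is a matter of bookkeeping rather than a gap.
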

    \noindent Based on the inequality~\eqref{eq:SCB-bound-cauchy}, Lemma~\ref{lemma:GLB-C_t-bound} and Lemma~\ref{lemma:GLB-D_t-bound}, and the boundedness assumption of the feasible set, we have for any $\x \in \X$,  $\delta \in (0,1)$, with probability at least $1-\delta$, we have for all $t \in [T]$,
    \begin{equation}\nonumber
    \begin{split}
      {}&\abs{\mu(\x^\T\thetat_t) - \mu(\x^\T\theta_t)}\\\leq {}& 2\sqrt{1+2S}\frac{k_\mu}{\sqrt{c_\mu}}\|\x\|_{V_{t-1}^{-1}}\Bigg(\betat_{t-1}\\{}&\qquad\qquad +L\frac{k_\mu}{\sqrt{c_\mu}}\sqrt{d}\sum_{p=1}^{t-1}\sqrt{\sum_{s=1}^{p}w_{t-1,s}} \norm{\theta_p - \theta_{p+1}}_2\Bigg)\\
        \leq {}& 2\sqrt{1+2S}\frac{k_\mu}{\sqrt{c_\mu}}\Bigg(\betat_{t-1}\|\x\|_{V_{t-1}^{-1}}\\{}&\qquad\quad +L^2\frac{k_\mu}{\sqrt{\lambda c_\mu}}\sqrt{d}\sum_{p=1}^{t-1}\sqrt{\sum_{s=1}^{p}w_{t-1,s}} \norm{\theta_p - \theta_{p+1}}_2\Bigg),\\
    \end{split}
    \end{equation}
    where $\betat_t \define \frac{d m }{\sqrt{\lambda c_\mu}} \log \sbr{1+\frac{ L^2k_\mu\sum_{s=1}^{t-1}w_{t-1,s}}{\lambda c_\mu d}}+\frac{\sqrt{\lambda c_\mu}}{2 m }+\frac{2 m }{\sqrt{\lambda c_\mu}}\log\frac{1}{\delta}+\frac{2 m }{\sqrt{\lambda c_\mu}} d \log (2)+\sqrt{\lambda c_\mu} S$ is the confidence radius used in \SCBweightours. Hence we completes the proof.
\end{proof}

\begin{proof}[{Proof of Lemma~\ref{lemma:SCB-E_t-bound}}]
  Since $\forall \theta, H_t(\theta)\succeq c_\mu V_{t-1}$, we have 
  \begin{equation}\nonumber
    \begin{split}
      {}&\norm{\sum_{s=1}^{t-1}w_{t-1,s}(\mu(X_s^\T\theta_t) - \mu(X_s^\T \theta_s) )X_s}_{H_t^{-1}(\theta_t)}\\ \leq{}& \frac{1}{\sqrt{c_\mu}}\norm{\sum_{s=1}^{t-1}w_{t-1,s}(\mu(X_s^\T\theta_t) - \mu(X_s^\T \theta_s) )X_s}_{V_{t-1}^{-1}}.
    \end{split}
    \end{equation}
    Then use Lemma~\ref{lemma:GLB-C_t-bound} and we complete the proof.
\end{proof}

\begin{proof}[{Proof of Lemma~\ref{lemma:SCB-F_t-bound}}]
We define $\etat_s \define \frac{\sqrt{w_{t-1,s}}\eta_s}{\abs{m}}$, $\Xt_s \define \sqrt{w_{t-1,s}}X_s$ and notice that $\forall t\in[T] , s\in [t-1], \abs{w_{t-1,s}}\leq 1$, then $\etat_s$ is bounded by $1$ with variance $\tilde{\sigma}_s$, then we have 
  \begin{align*}
    F_t &= \norm{\sum_{s=1}^{t-1}w_{t-1,s}\eta_sX_s-\lambda c_\mu \theta_t}_{H_t^{-1}(\theta_t)}\\&\leq \norm{\sum_{s=1}^{t-1}w_{t-1,s}\eta_sX_s}_{H_t^{-1}(\theta_t)}+ \sqrt{\lambda c_\mu} S\\&= \norm{\sum_{s=1}^{t-1}\etat_s\Xt_s}_{\Ht_t^{-1}(\theta_t)}+\sqrt{\lambda c_\mu} S
  \end{align*}
  where  $\Ht_t(\theta) \define \frac{\lambda c_\mu}{m^2} I_d + \sum_{s=1}^{t-1}\frac{\dmu(X_s^\T \theta)}{m^2}\Xt_s\Xt_s^\T$. By the model assumption~\eqref{eq:SCB-model-assume}, we know that $\tilde{\sigma}_t^2 = \frac{\E[\etat_t^2|\F_{t-1}]}{m^2} \leq \E[\eta_t^2|\F_t] = \frac{\Var[r_t|\F_t]}{m^2} = \frac{\dmu(X_t\theta_t)}{m^2}$, then from the Self-normalized concentration inequality for self-concordant bandits~\cite[Theorem 1]{ICML'20:logistic-bandits}, restated in~\pref{thm:self-normalized-weight-SCB}, we can get the bound for the first term $\norm{\sum_{s=1}^{t-1}\etat_s\Xt_s}_{\Ht_t^{-1}(\theta_t)}$ as follows,
  \begin{equation}\nonumber
    \begin{split}
      {}&\norm{\sum_{s=1}^{t-1}w_{t-1,s}\etat_sX_s}_{H_t^{-1}(\theta_t)} \\\leq{}& \frac{\sqrt{\lambda c_\mu}}{2 m }+\frac{2 m }{\sqrt{\lambda c_\mu}} \log \sbr{\frac{\det(H_t)^{1 / 2}}{\delta (\lambda c_\mu )^{d / 2}}}+\frac{2 m }{\sqrt{\lambda c_\mu}} d \log (2),
    \end{split}
  \end{equation}
  then we have 
  \begin{equation}\nonumber
    \begin{split}
      {}&\norm{ \sum_{s=1}^{t-1}\gamma^{-s}\eta_sX_s}_{\Ht_t^{-1}(\theta_t)} \leq \frac{\sqrt{\lambda c_\mu}}{2 m }+\frac{2 m }{\sqrt{\lambda c_\mu}}\log\frac{1}{\delta}\\{}&+\frac{d m }{\sqrt{\lambda c_\mu}} \log \sbr{1+\frac{ L^2k_\mu\sum_{s=1}^{t-1}w_{t-1,s}}{\lambda c_\mu d}}+\frac{2 m }{\sqrt{\lambda c_\mu}} d \log (2).
    \end{split}
  \end{equation}
  Therefore, we get the upper bound for $F_t$ term.
\end{proof}

\subsection{Proof of~\pref{thm:SCB-regret}}
\label{sec:SCB-regret-proof}

\begin{proof}
  Let $X_t^* \triangleq \argmax_{\x\in \X}\mu(\x^\T \theta_t)$. Due to Lemma~\ref{lemma:GLB-estimation-error} and the fact that $X_t^*,X_t\in \X$, each of the following holds with probability at least $1-\delta$,
  \begin{equation}\nonumber
    \begin{split}
      \forall t\in [T], {}&\mu(X_t^{*\T}\theta_t)\\ \leq{}&  \mu(X_t^{*\T}\thetat_t) +2\sqrt{1+2S}\frac{k_\mu}{\sqrt{c_\mu}}\Bigg(\betat_{t-1}\|X_t^*\|_{V_{t-1}^{-1}}\\{}&+ L^2\frac{k_\mu}{\sqrt{\lambda c_\mu}}\sqrt{d}\sum_{p=1}^{t-1}\sqrt{\sum_{s=1}^{p}w_{t-1,s}} \norm{\theta_p - \theta_{p+1}}_2\Bigg),\\
      \forall t\in [T], {}&\mu(X_t^{\T}\theta_t)\\ \geq{}&  \mu(X_t^{\T}\thetat_t) -2\sqrt{1+2S}\frac{k_\mu}{\sqrt{c_\mu}}\Bigg(\betat_{t-1}\|X_t\|_{V_{t-1}^{-1}}\\{}&+ L^2\frac{k_\mu}{\sqrt{\lambda c_\mu}}\sqrt{d}\sum_{p=1}^{t-1}\sqrt{\sum_{s=1}^{p}w_{t-1,s}} \norm{\theta_p - \theta_{p+1}}_2\Bigg).
    \end{split}
  \end{equation}
  By the union bound, the following holds with probability at least $1-2\delta$: $\forall t\in [T]$
  \begin{equation}\nonumber
  \begin{split}
      {}&\mu(X_t^{*\T}\theta_t) - \mu(X_t^{\T}\theta_t) \leq \mu(X_t^{*\T}\thetat_t) -\mu(X_t^{\T}\thetat_t)\\ {}&+ 2\sqrt{1+2S}\bigg(\frac{2L^2k_\mu^2}{c_\mu}\sqrt{\frac{d}{\lambda}}\sum_{p=1}^{t-1}\sqrt{\sum_{s=1}^{p}w_{t-1,s}} \norm{\theta_p - \theta_{p+1}}_2 \\
      {}&+ \frac{k_\mu}{\sqrt{c_\mu}}\sbr{\betat_{t-1}\|X_t^*\|_{V_{t-1}^{-1}}+\betat_{t-1}\|X_t\|_{V_{t-1}^{-1}} }\bigg)\\
      \leq{}& \frac{4\sqrt{1+2S}L^2k_\mu^2}{c_\mu}\sqrt{\frac{d}{\lambda}}\sum_{p=1}^{t-1}\sqrt{\sum_{s=1}^{p}w_{t-1,s}} \norm{\theta_p - \theta_{p+1}}_2\\{}& + \frac{4\sqrt{1+2S}k_\mu}{\sqrt{c_\mu}}\betat_{t-1}\|X_t\|_{V_{t-1}^{-1}},
  \end{split}
  \end{equation}
  where the last step comes from the arm selection criterion~\eqref{eq:SCB-select-criteria} such that 
  \begin{align*}
    {}&\mu(X_t^{*\T} \thetat_t)+ 2\sqrt{1+2S}\frac{k_\mu}{\sqrt{c_\mu}}\betat_{t-1}\|X_t^*\|_{V_{t-1}^{-1}} \\\leq{}& \mu(X_t^{\T} \thetat_t)+ 2\sqrt{1+2S}\frac{k_\mu}{\sqrt{c_\mu}}\betat_{t-1}\|X_t\|_{V_{t-1}^{-1}}.
  \end{align*}
  Hence, the following dynamic regret bound holds with probability at least $1-2\delta$ and can be divided into two parts,
  \begin{equation}\nonumber
  \begin{split}
    &\DReg_T =  \sum_{t=1}^T \mu(X_t^{*\T} \theta_t) - \mu(X_t^\T \theta_t)\\
    &\leq \underbrace{\frac{4\sqrt{1+2S}L^2k_\mu^2}{c_\mu}\sqrt{\frac{d}{\lambda}}\sum_{t=1}^T\sum_{p=1}^{t-1}\sqrt{\sum_{s=1}^{p}w_{t-1,s}} \norm{\theta_p - \theta_{p+1}}_2}_{\bias}\\{}&\quad + \underbrace{\frac{4\sqrt{1+2S}k_\mu}{\sqrt{c_\mu}}\betat_T\sum_{t=1}^T\|X_t\|_{V_{t-1}^{-1}}}_{\variance}.
  \end{split}
  \end{equation}
  where $\betat_t = \frac{d m }{\sqrt{\lambda c_\mu}} \log \sbr{1+\frac{ L^2k_\mu(1-\gamma^{t})}{\lambda c_\mu d(1-\gamma)}}+\frac{2 m }{\sqrt{\lambda c_\mu}}\log\frac{1}{\delta}+\frac{\sqrt{\lambda c_\mu}}{2 m }+\frac{2 m }{\sqrt{\lambda c_\mu}} d \log (2)+\sqrt{\lambda c_\mu} S$ is the confidence radius.

  Now we derive the upper bound for these two parts.

  \parag{Bias Part.} Similar to the proof of inequality~\eqref{eq:LB-regret-bias-bound}, we have 
  \begin{align*}
    {}&\frac{4\sqrt{1+2S}L^2k_\mu^2}{c_\mu}\sqrt{\frac{d}{\lambda}}\sum_{t=1}^T \sum_{p=1}^{t-1}   \sqrt{\sum_{s=1}^{p}w_{t-1,s}}\norm{\theta_p -\theta_{p+1}}_2 \\\leq{}& \frac{8\sqrt{1+2S}L^2k_\mu^2}{c_\mu}\sqrt{\frac{d}{\lambda}}\frac{1}{(1-\gamma)^{\sfrac{3}{2}}}P_T.
  \end{align*}

  \parag{Variance Part.}
  First use the Cauchy-Schwarz inequality, we know that
  \begin{equation}\nonumber
  \begin{split}
    {}&\frac{4\sqrt{1+2S}k_\mu}{\sqrt{c_\mu}}\betat_T\sum_{t=1}^T\|X_t\|_{V_{t-1}^{-1}}\\
      \leq {}& \frac{4\sqrt{1+2S}k_\mu}{\sqrt{c_\mu}}\betat_T\sqrt{T}\sqrt{\sum_{t=1}^T\|X_t\|_{V_{t-1}^{-1}}^2}.
  \end{split}
  \end{equation}
For term $\sqrt{\sum_{t=1}^T\|X_t\|_{V_{t-1}^{-1}}^2}$, we can use the Lemma~\ref{lemma:potential-lemma} to bound it, Let $C^{\text{SCB}}_T \define \frac{4\sqrt{1+2S}k_\mu}{\sqrt{c_\mu}}\betat_T\sqrt{2\max\{1,L^2/\lambda\}dT}$ then 
  \begin{equation}\nonumber
  \begin{split}
    {}&\frac{4\sqrt{1+2S}k_\mu}{\sqrt{c_\mu}}\betat_T\sqrt{T}\sqrt{\sum_{t=1}^T\|X_t\|_{V_{t-1}^{-1}}^2} \\\leq{}& C^{\text{SCB}}_T\sqrt{ T\log\frac{1}{\gamma}+\log\sbr{1+ \frac{L^2}{\lambda d(1-\gamma)}}}.
  \end{split}
  \end{equation}
  Combining the upper bound for the bias and variance parts, and letting $\delta = 1/(2T)$, we have the following regret bound with probability at least $1-1/T$,
  \begin{equation}\nonumber
    \begin{split}
      \DReg_T \leq{}& \frac{8\sqrt{1+2S}L^2k_\mu^2}{c_\mu}\sqrt{\frac{d}{\lambda}}\frac{1}{(1-\gamma)^{\sfrac{3}{2}}}P_T\\{}&+ C^{\text{SCB}}_T\sqrt{ T\log\frac{1}{\gamma}+\log\sbr{1+ \frac{L^2}{\lambda d(1-\gamma)}}}.
  \end{split}
  \end{equation}
  where $\betat_t = \frac{d m }{\sqrt{\lambda c_\mu}} \log \sbr{1+\frac{ L^2k_\mu(1-\gamma^{t})}{\lambda c_\mu d(1-\gamma)}}+\frac{2 m }{\sqrt{\lambda c_\mu}}\log\sbr{2T}+\frac{\sqrt{\lambda c_\mu}}{2 m }+\frac{2 m }{\sqrt{\lambda c_\mu}} d \log (2)+\sqrt{\lambda c_\mu} S$. Since there is a $T \sqrt{\log (1/\gamma)}$ term in the regret bound, which means that we cannot let $\gamma$ close to $0$, so we set $\gamma \geq 1/T$, then we have $\log(1/\gamma) \leq C (1-\gamma)$, where $C = \log T/(1-1/T)$. Then, ignoring logarithmic factors in time horizon $T$, and let $\lambda = d\log(T)/c_\mu$, we finally obtain that,
  \begin{equation}\nonumber
  \begin{split}
      \DReg_T\leq{}& \Ot\sbr{\frac{k_\mu^2}{\sqrt{c_\mu}}\frac{1}{(1-\gamma)^{\sfrac{3}{2}}}P_T + \frac{k_\mu}{\sqrt{c_\mu}}d(1-\gamma)^{\sfrac{1}{2}}T}.
  \end{split}
  \end{equation}
  When $P_T< d/(k_\mu T)$ (which corresponds a small amount of non-stationarity), we simply set $\gamma = 1-1/T$ and achieve an $\Ot(k_\mu c_\mu^{-\sfrac{1}{2}}d\sqrt{T})$ regret bound.  Besides, when coming to the non-degenerated case of $P_T\geq d/(k_\mu T)$, We set the discounted factor optimally as $1-\gamma = \sqrt{k_\mu P_T/(dT)}$ and attain an $\Ot(k_\mu^{\sfrac{5}{4}}c_\mu^{-\sfrac{1}{2}}d^{\sfrac{3}{4}}P_T^{\sfrac{1}{4}}T^{\sfrac{3}{4}})$ regret bound, which completes the proof.
\end{proof}
\section{Piecewise-Stationary SCB}
\label{sec:SCB-PW}
In this section, we study SCB under piecewise-stationary environment and our work is a direct improvement over~\cite{AISTATS'21:SCB-forgetting}. Next, we will first propose our \SCBweightourspw algorithm, and then, present the analysis of the confidence set. Finally, we give the proof of the dynamic regret upper bound.

\subsection{SCB-PW-WeightUCB Algorithm}
\label{sec:SCB-PW-algorithm}
Inspired by \cite{AISTATS'21:optimal-logistic-bandits}, we make a direct improvement over \cite{AISTATS'21:SCB-forgetting}. Just like~\cite{AISTATS'21:SCB-forgetting}, for $D\geq 1$, define $\mathcal{T}(D) = \{1\leq t\leq T, \text{~such that~} \theta_s = \theta_t \text{~for~} t-D\leq s\leq t-1\}$. $t\in\mathcal{T}(D)$ when $t$ is at least $D$ steps away from the previous closest changing point. But the difference is that~\cite{AISTATS'21:SCB-forgetting} considers $D$ as an analysis parameter, and we treat $D$ as a tunable algorithm parameter. Notice that, the $D$ here is \emph{not} a virtual window size, but the algorithm's estimate of how durable the environment is stationary.

\parag{Estimator.} At iteration $t$, we adopt the same maximum likelihood estimator~\eqref{eq:GLB-estimator} with $w_{t,s} = \gamma^{t-s}$ as in the drifting case. 

\parag{Confidence Set.} We further construct confidence set for the real $\theta_t$. For $\delta \in (0,1)$, we define,
\begin{equation*}
  \label{eq:SCB-PW-confidence-set}
  \begin{split}
    \C_t(\delta) \define \bbr{\theta \in \Theta \givenn \|g_t(\theta) - g_t(\thetah_t)\|_{H_t^{-1}(\theta)}\leq \rho_t},
  \end{split}
\end{equation*}
where $\rho_t = \frac{2 L^2S k_\mu}{\sqrt{\lambda c_\mu}} \frac{\gamma^D}{1-\gamma} + \frac{Lm}{\sqrt{\lambda c_\mu}}\frac{\gamma^D}{1-\gamma} + \betabr_t$ and $\betabr_t = \frac{d m }{\sqrt{\lambda c_\mu}}\log \sbr{1 + \frac{ L^2k_\mu(1-\gamma^{2D})}{\lambda c_\mu d(1-\gamma)}}+\frac{\sqrt{\lambda c_\mu}}{2 m }+\frac{2 m }{\sqrt{\lambda c_\mu}}\log\frac{1}{\delta} +\frac{2 m }{\sqrt{\lambda c_\mu}} d \log (2) + \sqrt{\lambda c_\mu}S$.
\begin{myLemma}
  \label{lemma:SCB-PW-confidence-set}
  For any $\delta \in (0,1)$, with probability at least $1-\delta$, we have $\forall t\in \mathcal{T}(D), \theta_t \in \C_t(\delta)$.
  \begin{equation*}
    \begin{split}
      \C_t(\delta) =\Bigg\{\theta \in \Theta \given {}&\|g_t(\theta) - g_t(\thetah_t)\|_{H_t^{-1}(\theta)}\\\leq{}&  \frac{2 L^2S k_\mu}{\sqrt{\lambda c_\mu}} \frac{\gamma^D}{1-\gamma} + \frac{Lm}{\sqrt{\lambda c_\mu}}\frac{\gamma^D}{1-\gamma} + \betabr_t \Bigg\},
    \end{split}
  \end{equation*}
  where $\betabr_t = \frac{d m }{\sqrt{\lambda c_\mu}}\log \sbr{1 + \frac{ L^2k_\mu(1-\gamma^{2D})}{\lambda c_\mu d(1-\gamma)}}+\frac{\sqrt{\lambda c_\mu}}{2 m }+\frac{2 m }{\sqrt{\lambda c_\mu}}\log\frac{1}{\delta} +\frac{2 m }{\sqrt{\lambda c_\mu}} d \log (2) + \sqrt{\lambda c_\mu}S$.
\end{myLemma}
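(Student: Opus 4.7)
The plan is to decompose $g_t(\theta_t) - g_t(\thetah_t)$ into three pieces and bound each in the $H_t^{-1}(\theta_t)$-norm. Using $g_t(\thetah_t) = \sum_{s=1}^{t-1} w_{t-1,s} r_s X_s$ from~\eqref{eq:SCB-gt-thetah} and the model $r_s = \mu(X_s^\T \theta_s) + \eta_s$, the decomposition reads
\begin{equation*}
g_t(\theta_t) - g_t(\thetah_t) = \lambda c_\mu \theta_t + \sum_{s=1}^{t-1} w_{t-1,s}\bigl(\mu(X_s^\T \theta_t) - \mu(X_s^\T \theta_s)\bigr)X_s - \sum_{s=1}^{t-1} w_{t-1,s}\eta_s X_s.
\end{equation*}
The regularization piece is immediate from $H_t(\theta_t) \succeq \lambda c_\mu I_d$, which yields $\|\lambda c_\mu\theta_t\|_{H_t^{-1}(\theta_t)} \le \sqrt{\lambda c_\mu}\,S$ and accounts for the last term in $\betabr_t$.

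For the bias piece, I would exploit the piecewise-stationary structure: since $t\in\mathcal{T}(D)$, we have $\theta_s = \theta_t$ for all $s\in[t-D,t-1]$, so the sum collapses to $s\le t-D-1$, where $w_{t-1,s}=\gamma^{t-1-s}\le\gamma^D$. Combining the $k_\mu$-Lipschitz property with Assumption~\ref{ass:bounded-norm} to get $|\mu(X_s^\T\theta_t) - \mu(X_s^\T\theta_s)|\le 2k_\mu LS$, using $\|X_s\|_2\le L$, applying $\|\cdot\|_{H_t^{-1}(\theta_t)}\le(\lambda c_\mu)^{-1/2}\|\cdot\|_2$, and summing the geometric tail $\sum_{s\le t-D-1}\gamma^{t-1-s}\le\gamma^D/(1-\gamma)$ produces the first drift term $2L^2Sk_\mu\gamma^D/((1-\gamma)\sqrt{\lambda c_\mu})$ of $\rho_t$.

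The noise piece is the main technical step, and I split it at $s=t-D$. For the tail $s\le t-D-1$, no martingale argument is needed: Assumption~\ref{ass:bounded-rewards} together with $\mu(X_s^\T\theta_s)\in[0,m]$ gives $|\eta_s|\le m$, and the same scalar-to-$H_t^{-1}$ conversion with the geometric sum produces the contribution $Lm\gamma^D/((1-\gamma)\sqrt{\lambda c_\mu})$, which is the second drift term of $\rho_t$. For the head $s\in[t-D,t-1]$, the samples share the common parameter $\theta_t$, so the partial sum is a genuine self-normalized martingale; applying the SCB self-normalized inequality (Theorem~\ref{thm:self-normalized-weight-SCB}) with the rescaling $\etat_s,\Xt_s$ as in the proof of Lemma~\ref{lemma:SCB-F_t-bound}, combined with the monotonicity $H_t(\theta_t)\succeq\lambda c_\mu I_d + \sum_{s=t-D}^{t-1}w_{t-1,s}\dmu(X_s^\T\theta_t)X_sX_s^\T$ to transfer the bound from the head-Gram to $H_t^{-1}(\theta_t)$, and invoking Lemma~\ref{lemma:det-inequality}, produces the self-normalized term $\betabr_t - \sqrt{\lambda c_\mu}S$; the $\log$-potential factor $\log(1 + L^2k_\mu(1-\gamma^{2D})/(\lambda c_\mu d(1-\gamma)))$ appears because only the head window contributes to the relevant potential.

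Summing the three bounds by the triangle inequality gives $\|g_t(\theta_t) - g_t(\thetah_t)\|_{H_t^{-1}(\theta_t)} \le \rho_t$, which is exactly $\theta_t\in\C_t(\delta)$; the uniform-in-$t$ guarantee is inherited from the stopping-time/Laplace argument underlying Theorem~\ref{thm:self-normalized-weight-SCB}. The hardest part is the head-noise analysis: one has to justify restricting the martingale concentration to the $D$ most recent weighted samples while still measuring the outcome in the full $H_t^{-1}(\theta_t)$-norm, and the monotonicity device above is what makes this transfer lossless up to the stated constants.
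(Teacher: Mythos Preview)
Your proposal is correct and follows essentially the same approach as the paper: the same three-way decomposition (bias collapsed to $s\le t-D-1$ via piecewise stationarity, noise split at $s=t-D$ with the tail bounded crudely via $|\eta_s|\le m$ and the head handled by Theorem~\ref{thm:self-normalized-weight-SCB} together with the monotonicity $H_t(\theta_t)\succeq H_{t-D:t}(\theta_t)$). The only cosmetic difference is that the paper groups the regularization term $\lambda c_\mu\theta_t$ with the head-noise piece (their Term~(c)) rather than peeling it off first as you do, which makes no difference after the triangle inequality.
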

The proof of Lemma~\ref{lemma:SCB-PW-confidence-set} is presented in Appendix~\ref{sec:SCB-PW-estimation-error-proof}.

\parag{Selection Criteria.}
Algorithms discussed earlier for drifting cases are using bonus-based selection criteria. But here we use a parameter-based selection criterion as follows,
\begin{equation}
\label{eq:SCB-PW-select-criteria}
\begin{split}
  (X_t,\thetat_t) = \argmax_{\x\in\X, \theta\in \C_t(\delta)}\mu(\x^\T\theta).
\end{split}
\end{equation}
The main difference between parameter-based and bonus-based selection criteria is discussed in Section 3.2 of~\cite{AISTATS'21:optimal-logistic-bandits}. The overall algorithm is summarized in Algorithm~\ref{alg:SCB-PW-WeightUCB}.

\begin{algorithm}[!t]
  \caption{SCB-PW-WeightUCB}
  \label{alg:SCB-PW-WeightUCB}
\begin{algorithmic}[1]
\REQUIRE time horizon $T$, discounted factor $\gamma$, confidence $\delta$, regularizer $\lambda$, inverse link function $\mu$, parameters $S$, $L$ and $m$, changing confidence $D$\\
\STATE Set $\thetah_0 = \mathbf{0}$ and compute $k_\mu$ and $c_\mu$
\FOR{$t = 1,2,3,...,T$}
  \STATE Compute $(X_t,\thetat_t) = \argmax_{\x\in\X, \theta\in \C_t(\delta)}\mu(\x^\T\theta)$ 
  \STATE Select $X_t$ and receive the reward $r_t$
  \STATE Compute $\thetah_{t+1}$ according to~\eqref{eq:GLB-estimator}
\ENDFOR
\end{algorithmic}
\end{algorithm}

\subsection{Proof of Lemma~\ref{lemma:SCB-PW-confidence-set}}
\label{sec:SCB-PW-estimation-error-proof}

\begin{proof}
  Based on the model assumption~\eqref{eq:SCB-model-assume}, the function $g_t$~\eqref{eq:GLB-gt} and the $g_t(\thetah_t)$~\eqref{eq:SCB-gt-thetah}, we have,
  \begin{align*}
    {}&g_t(\theta_t)-g_t(\thetah_t)\\= {}& \lambda c_\mu\theta_t +\sum_{s=1}^{t-1}\gamma^{t-s-1}\mu(X_s^\T\theta_t)X_s-\sum_{s=1}^{t-1}\gamma^{t-s-1}r_sX_s\\
    = {}&\lambda c_\mu\theta_t +\sum_{s=1}^{t-1}\gamma^{t-s-1}\mu(X_s^\T\theta_t)X_s\\{}&-\sum_{s=1}^{t-1}\gamma^{t-s-1}(\mu(X_s^\T \theta_s) + \eta_s)X_s\\
    = {}&\sum_{s=1}^{t-1}\gamma^{t-s-1}(\mu(X_s^\T\theta_t) - \mu(X_s^\T \theta_s) )X_s + \lambda c_\mu \theta_t \\{}&-\sum_{s=1}^{t-1}\gamma^{t-s-1}\eta_sX_s.
  \end{align*}
  Then,
  \begin{align*}
    {}&\|g_t(\theta_t) - g_t(\thetah_t)\|_{H_t^{-1}(\theta_t)} \\ ={}& \Bigg\|\sum_{s=1}^{t-1}\gamma^{t-s-1}(\mu(X_s^\T\theta_t) - \mu(X_s^\T \theta_s) )X_s \\{}&\hspace{5em} + \lambda c_\mu \theta_t-\sum_{s=1}^{t-1}\gamma^{t-s-1}\eta_sX_s\Bigg\|_{H_t^{-1}(\theta_t)} \\
    \leq {}& \norm{\sum_{s=1}^{t-1}\gamma^{t-s-1}(\mu(X_s^\T\theta_t) - \mu(X_s^\T \theta_s) )X_s}_{H_t^{-1}(\theta_t)} \\
    &+ \norm{\lambda c_\mu \theta_t -\sum_{s=1}^{t-1}\gamma^{t-s-1}\eta_sX_s}_{H_t^{-1}(\theta_t)}\\
    \leq {}& \underbrace{\norm{\sum_{s=1}^{t-1}\gamma^{t-s-1}(\mu(X_s^\T\theta_t) - \mu(X_s^\T \theta_s) )X_s}_{H_t^{-1}(\theta_t)}}_{\term{a}}\\ &+ \underbrace{\norm{\sum_{s=1}^{t-D-1}\gamma^{t-s-1}\eta_sX_s}_{H_t^{-1}(\theta_t)}}_{\term{b}}\\
    &+ \underbrace{\norm{\sum_{s=t-D}^{t-1}\gamma^{t-s-1}\eta_sX_s - \lambda c_\mu \theta_t}_{H_t^{-1}(\theta_t)}}_{\term{c}}.
  \end{align*}
  \parag{Term~(a).}
  Since $t \in \mathcal{T}(D)$, we have
  \begin{align*}
    {}&\norm{\sum_{s=1}^{t-1}\gamma^{t-s-1}(\mu(X_s^\T\theta_t) - \mu(X_s^\T \theta_s) )X_s}_{H_t^{-1}(\theta_t)} \\ ={}& \norm{\sum_{s=1}^{t-D-1}\gamma^{t-s-1}(\mu(X_s^\T\theta_t) - \mu(X_s^\T \theta_s) )X_s}_{H_t^{-1}(\theta_t)}\\
    \leq {}& \norm{\sum_{s=1}^{t-D-1}\gamma^{t-s-1}k_\mu X_s^\T(\theta_t-\theta_s)X_s}_{H_t^{-1}(\theta_t)}\\
    \leq{}& \sum_{s=1}^{t-D-1}\gamma^{t-s-1}k_\mu \|X_s\|_2\|(\theta_t-\theta_s)\|_2\norm{X_s}_{H_t^{-1}(\theta_t)}\\
    \leq{}& \frac{2 L^2S k_\mu}{\sqrt{\lambda c_\mu}} \frac{\gamma^D}{1-\gamma}. 
  \end{align*}

  \parag{Term~(b).}
  \begin{equation}\nonumber
    \begin{split}
      \norm{\sum_{s=1}^{t-D-1}\gamma^{t-s-1}\eta_sX_s}_{H_t^{-1}(\theta_t)} \leq{}& \sum_{s=1}^{t-D-1}\gamma^{t-s-1} m \norm{X_s}_{H_t^{-1}(\theta_t)}\\\leq{}& \frac{Lm}{\sqrt{\lambda c_\mu}}\sum_{s=1}^{t-D-1}\gamma^{t-s-1}\\\leq{}& \frac{Lm}{\sqrt{\lambda c_\mu}}\frac{\gamma^D}{1-\gamma}. \\
    \end{split}
  \end{equation}

  \parag{Term~(c).}We define $\etat_s \define \frac{\sqrt{\gamma^{t-s-1}}\eta_s}{\abs{m}}$, $\Xt_s \define \sqrt{\gamma^{t-s-1}}X_s$ and notice that $\forall t\in[T] , s\in [t-1], \abs{\gamma^{t-s-1}}\leq 1$, then $\etat_s$ is bounded by $1$ with variance $\tilde{\sigma}_s$. Let $\Ht_t(\theta) \define \frac{\lambda c_\mu}{m^2} I_d + \sum_{s=1}^{t-1}\frac{\dmu(X_s^\T \theta)}{m^2}\Xt_s\Xt_s^\T$ and $\Ht_{t-D:t}(\theta) = \lambda c_\mu I_d + \sum_{s=t-D}^{t-1}\frac{\dmu(X_s^\T \theta)}{m^2}\Xt_s\Xt_s^\T$,
  \begin{align*}
      {}&\norm{\sum_{s=t-D}^{t-1}\gamma^{t-s-1}\eta_sX_s - \lambda c_\mu \theta_t}_{H_t^{-1}(\theta_t)}\\\leq{}& \norm{\sum_{s=t-D}^{t-1}\gamma^{t-s-1}\eta_sX_s}_{H_t^{-1}(\theta_t)} + \sqrt{\lambda c_\mu} S \\\leq{}& \norm{\sum_{s=t-D}^{t-1}\etat_s\Xt_s}_{\Ht_{t-D:t}^{-1}(\theta_t)} + \sqrt{\lambda c_\mu} S,
  \end{align*}
  where $\Ht_{t}(\theta) \succeq \Ht_{t-D:t}(\theta)$. Next, we need to bound the term $\|\sum_{s=t-D}^{t-1}\etat_s\Xt_s\|_{\Ht_{t-D:t}^{-1}(\theta_t)}$ using self-normalization bound~\cite[Theorem 1]{ICML'20:logistic-bandits}, restated in Theorem~\ref{thm:self-normalized-weight-SCB}, similar to the proof of Lemma~\ref{lemma:SCB-F_t-bound}, we have
  \begin{align*}
      &\norm{ \sum_{s=t-D}^{t-1}\gamma^{-s}\eta_sX_s}_{\Ht_{t-D:t}^{-1}(\theta_t)}\\
       \leq& \frac{\sqrt{\lambda c_\mu}}{2 m }+\frac{2 m }{\sqrt{\lambda c_\mu}} \log \sbr{\frac{\det\sbr{\Ht_{t-D:t}}^{1 / 2}}{\delta (\lambda c_\mu)^{d / 2}}}+\frac{2 m }{\sqrt{\lambda c_\mu}} d \log (2)\\
       \leq& \frac{\sqrt{\lambda c_\mu}}{2 m }+\frac{2 m }{\sqrt{\lambda c_\mu}}\log\frac{1}{\delta} +\frac{d m }{\sqrt{\lambda c_\mu}}\log \sbr{1 + \frac{ L^2k_\mu(1-\gamma^{2D})}{\lambda c_\mu d(1-\gamma)}}\\&+\frac{2 m }{\sqrt{\lambda c_\mu}} d \log (2).
    \end{align*} 
  Let $\betabr_t \define \frac{d m }{\sqrt{\lambda c_\mu}}\log \sbr{1 + \frac{ L^2k_\mu(1-\gamma^{2D})}{\lambda c_\mu d(1-\gamma)}}+\frac{2 m }{\sqrt{\lambda c_\mu}}\log\frac{1}{\delta}+\frac{\sqrt{\lambda c_\mu}}{2 m } +\frac{2 m }{\sqrt{\lambda c_\mu}} d \log (2) + \sqrt{\lambda c_\mu}S$
  , finally we have,
  \begin{equation}\nonumber
    \begin{split}
      {}&\|g_t(\theta_t) - g_t(\thetah_t)\|_{H_t^{-1}(\theta_t)}\\ \leq{}& \frac{2 L^2S k_\mu}{\sqrt{\lambda c_\mu}} \frac{\gamma^D}{1-\gamma} + \frac{Lm}{\sqrt{\lambda c_\mu}}\frac{\gamma^D}{1-\gamma} + \betabr_t,
    \end{split}
  \end{equation}
  which completes the proof. 
\end{proof}

\subsection{Proof of~\pref{thm:SCB-PW-regret}}
\label{sec:SCB-PW-regret-proof}

\begin{proof}
  Let $R_t = \mu(X_t^{*\T} \theta_t) - \mu(X_t^\T \theta_t)$
  \begin{equation}\nonumber
    \begin{split}
        \DReg_T = \sum_{t=1}^T R_t ={}& \sum_{t\notin \mathcal{T}(D)} R_t + \sum_{t\in \mathcal{T}(D)} R_t \\={}& \Gamma_T D + \sum_{t\in \mathcal{T}(D)} R_t.\\
    \end{split}
  \end{equation}
  For $t\in \mathcal{T}(D)$, by selection criterion~\eqref{eq:SCB-PW-select-criteria}, 
  \begin{equation}\nonumber
  \begin{split}
    &R_t = \mu(X_t^{*\T}\theta_t) - \mu(X_t^{\T}\theta_t) \\
    \leq {}&\mu(X_t^{\T}\thetat_t) - \mu(X_t^{\T}\thetah_t) + \mu(X_t^{\T}\thetah_t) -\mu(X_t^{\T}\theta_t) \\
    \leq {}&\alpha(X_t,\thetat_t,\thetah_t)\abs{X_t^\T\sbr{\thetat_t - \thetah_t}} + \alpha(X_t,\theta_t,\thetah_t)\abs{X_t^\T\sbr{\theta_t - \thetah_t}}\\
    \leq {}&\sqrt{1+2S}\\&\cdot\bigg(\alpha(X_t,\thetat_t,\thetah_t)\norm{X_t}_{G_t^{-1}(\thetat_t,\thetah_t)} \norm{g_t(\thetat_t) - g_t(\thetah_t)}_{H_t^{-1}(\thetat_t)} \\
    & +\alpha(X_t,\theta_t,\thetah_t)\norm{X_t}_{G_t^{-1}(\theta_t,\thetah_t)} \norm{g_t(\theta_t) - g_t(\thetah_t)}_{H_t^{-1}(\theta_t)}\bigg).
  \end{split}
  \end{equation}
  where $\alpha(\x, \theta_1, \theta_2) = \int_{0}^1 \dmu(v\x^\T\theta_2+(1-v)x^\T\theta_1)\diff{v}$, and the last second inequality comes from the mean value theorem $
  \mu(\x^\T\theta_1) - \mu(\x^\T \theta_2) = \alpha(\x, \theta_1, \theta_2)(\x^\T\theta_1 - \x^\T \theta_2)$. Since that $\thetat_t \in \C_t(\delta)$ and with probability at least $1-\delta$, $\forall t\in [T], \theta_t \in \C_t(\delta)$, and by union bound, the following dynamic regret bound hold with probability at least $1-\delta$,
  \begin{equation}\nonumber
    \begin{split}
      \sum_{t\in \mathcal{T}(D)} R_t 
      \leq {}&\sum_{t\in \mathcal{T}(D)}\sqrt{1+2S} \big(\alpha(X_t,\thetat_t,\thetah_t)\norm{X_t}_{G_t^{-1}(\thetat_t,\thetah_t)}\rho_t\\{}& + \alpha(X_t,\theta_t,\thetah_t)\norm{X_t}_{G_t^{-1}(\theta_t,\thetah_t)} \rho_t\big)\\
      \leq {}&\sqrt{1+2S} \rho_T  \Bigg(\sum_{t\in \mathcal{T}(D)}\alpha(X_t,\thetat_t,\thetah_t)\norm{X_t}_{G_t^{-1}(\thetat_t,\thetah_t)}\\{}&+ \sum_{t\in \mathcal{T}(D)}\alpha(X_t,\theta_t,\thetah_t)\norm{X_t}_{G_t^{-1}(\theta_t,\thetah_t)} \Bigg).\\
    \end{split}
  \end{equation}
    Now we try to derive the upper bound for term $\sum_{t\in \mathcal{T}(D)}\alpha(X_t,\thetat_t,\thetah_t)\norm{X_t}_{G_t^{-1}(\thetat_t,\thetah_t)}$. 
    
  Based on the definition of $g_t$~\eqref{eq:GLB-gt}, we have
  \begin{equation*}
    \begin{split}
        {}&g_t(\theta_1) - g_t(\theta_2)\\ ={}&\lambda c_\mu (\theta_1 - \theta_2) + \sum_{s=1}^{t-1}\gamma^{t-s-1}(\mu(X_s^\T \theta_1)-\mu(X_s^\T \theta_2))X_s\\
        ={}& \lambda c_\mu (\theta_1 - \theta_2) + \sum_{s=1}^{t-1}\gamma^{t-s-1}\alpha(X_s, \theta_1, \theta_2) X_s^\T X_s (\theta_1-\theta_2)\\
        ={}& \sbr{\lambda c_\mu + \sum_{s=1}^{t-1}\gamma^{t-s-1}\alpha(X_s, \theta_1, \theta_2) X_s^\T X_s}(\theta_1-\theta_2).
    \end{split}
  \end{equation*}
  Then based on the definition of $G_t$~\eqref{eq:SCB-gt-mvt}, we know $G_t(\theta_1, \theta_2) = \lambda c_\mu + \sum_{s=1}^{t-1}\gamma^{t-s-1}\alpha(X_s, \theta_1, \theta_2) X_s^\T X_s.$ which means $G_t(\thetat_t,\thetah_t) = \lambda c_\mu I_d + \sum_{s=1}^{t-1}\gamma^{t-s-1}\alpha(X_s,\thetat_t,\thetah_t)X_sX_s^\T$, if we let $\Xt_s = \sqrt{\alpha(X_s,\thetat_t,\thetah_t)}X_s$, then
    \begin{align*}\nonumber
        {}&\sum_{t\in \mathcal{T}(D)} \alpha(X_t,\thetat_t,\thetah_t)\norm{X_t}_{G_t^{-1}(\thetat_t,\thetah_t)} \\\leq{}& \sqrt{\sum_{t=1}^T\alpha(X_t,\thetat_t,\thetah_t)}\sqrt{\sum_{t=1}^T \alpha(X_t,\thetat_t,\thetah_t) \norm{X_t}_{G_t^{-1}(\thetat_t,\thetah_t)}^2 } \\
        \leq{}& \sqrt{k_\mu T}\sqrt{\sum_{t=1}^T \norm{\Xt_t}_{G_t^{-1}(\thetat_t,\thetah_t)}^2 }.
      \end{align*}

  Then for the term $\sqrt{\sum_{t=1}^T \|\Xt_t\|_{G_t^{-1}(\thetat_t,\thetah_t)}^2 }$, we can directly use the Lemma~\ref{lemma:potential-lemma} to bound it, let $C_T^{\text{PWSCB}} \define \sqrt{2k_\mu\max\{1,L^2k_\mu/(\lambda c_\mu)\}dT}$ we have
  \begin{equation}\nonumber
  \begin{split}
    {}&\sqrt{k_\mu T}\sqrt{\sum_{t=1}^T \norm{\Xt_t}_{G_t^{-1}(\thetat_t,\thetah_t)}^2 }\\\leq {}& C_T^{\text{PWSCB}}\sqrt{ T\log\frac{1}{\gamma}+\log\sbr{1+ \frac{L^2k_\mu}{\lambda c_\mu d(1-\gamma)}}}.
  \end{split}
  \end{equation}
  We can bound term $\sum_{t\in \mathcal{T}(D)}\alpha(X_t,\theta_t,\thetah_t)\norm{X_t}_{G_t^{-1}(\theta_t,\thetah_t)} $ in the same way and get, 
  \begin{equation}\nonumber
    \begin{split}
      {}&\sum_{t\in \mathcal{T}(D)}\alpha(X_t,\theta_t,\thetah_t)\norm{X_t}_{G_t^{-1}(\theta_t,\thetah_t)} \\\leq {}& C_T^{\text{PWSCB}}\sqrt{ T\log\frac{1}{\gamma}+\log\sbr{1+ \frac{L^2k_\mu}{\lambda c_\mu d(1-\gamma)}}}.
    \end{split}
    \end{equation}

  Combine these two bounds and let $\delta = 1/T$, we have the following regret bound with probability at least $1-1/T$,
  \begin{equation}\nonumber
    \begin{split}
      &\DReg_T \leq \Gamma_T D\\&{} + 2\sqrt{1+2S} \rho_T C_T^{\text{PWSCB}}\sqrt{ T\log\frac{1}{\gamma}+\log\sbr{1+ \frac{L^2k_\mu}{\lambda c_\mu d(1-\gamma)}}},
    \end{split}
  \end{equation}
  where $\rho_t = \frac{2 L^2S k_\mu}{\sqrt{\lambda c_\mu}} \frac{\gamma^D}{1-\gamma} + \frac{Lm}{\sqrt{\lambda c_\mu}}\frac{\gamma^D}{1-\gamma} + \betabr_t$ and $\betabr_t = \frac{d m }{\sqrt{\lambda c_\mu}}\log \sbr{1 + \frac{ L^2k_\mu(1-\gamma^{2D})}{\lambda c_\mu d(1-\gamma)}}+\frac{\sqrt{\lambda c_\mu}}{2 m }+\frac{2 m }{\sqrt{\lambda c_\mu}}\log\sbr{T} +\frac{2 m }{\sqrt{\lambda c_\mu}} d \log (2) + \sqrt{\lambda c_\mu}S$.  Since there is a $T \sqrt{\log (1/\gamma)}$ term in the regret bound, which means that we cannot let $\gamma$ close to $0$, so we set $\gamma \geq 1/2$, then we have $\log(1/\gamma) \leq 2\log(2) (1-\gamma)$. Then, we set $D = \log(T)/\log(1/\gamma)$, noticing that $0< 1/\gamma -1<1$ and using $\log(1+x)\geq x/2$ for $0<x<1$, we have 
  \begin{equation}\nonumber
    \log\frac{1}{\gamma} = \log(1+1/\gamma -1) \geq \frac{1-\gamma}{2\gamma}.
\end{equation}
  Therefore, we have $D\leq \frac{2\gamma\log(T)}{1-\gamma}$. Then, ignoring logarithmic factors in time horizon $T$, and let $\lambda = d\log(T)/c_\mu$, we finally obtain that,
  \begin{equation}\nonumber
  \begin{split}
    {}&\DReg_T\\\leq{}& \Ot\sbr{\frac{1}{1-\gamma}\Gamma_T + \sbr{\frac{1}{\sqrt{d}}\frac{1}{1-\gamma}\frac{1}{T}+\sqrt{d}}\sqrt{d(1-\gamma)}T}\\
      \leq{}& \Ot\sbr{\frac{1}{1-\gamma}\Gamma_T + \frac{1}{\sqrt{1-\gamma}} + d\sqrt{(1-\gamma)}T}.
  \end{split}
  \end{equation}
  When $\Gamma_T< d/\sqrt{T}$ (which corresponds a small amount of non-stationarity), we simply set $\gamma = 1-1/T$ and achieve an $\Ot(d\sqrt{T})$ regret bound.  Besides, when coming to the non-degenerate case of $\Gamma_T > d/\sqrt{T}$, We set the discounted factor optimally as $1-\gamma = \sbr{\Gamma_T/(dT)}^{\sfrac{2}{3}}$ and attain an $\Ot(d^{\sfrac{2}{3}}\Gamma_T^{\sfrac{1}{3}}T^{\sfrac{2}{3}})$ regret, which completes the proof.
\end{proof}
\section{Analysis of \LMDPweightours}
\label{app:LMDP}
\subsection{Proof of Lemma~\ref{lemma:LMDP-r-ucb}}
\label{app:LMDP-lemma-r-ucb}
\begin{proof}
    Fix $h\in[H]$, based on the reward model assumption~\eqref{eq:LMDP-model} and the estimator~\eqref{eq:LMDP-estimate-r}, the estimation error of reward estimation can be decomposed as
    \begin{align*}
        {}&\th_{h}^k -\theta_{h}^k \\ ={}& \sbr{\Lambda_{h}^{k-1}}^{-1}\sbr{\sum_{j=1}^{k-1}w_{k-1,j}r_h^j(s_h^j,a_h^j)\phi(s_h^j,a_h^j)} - \theta_{h}^k\\
        ={}& \sbr{\Lambda_{h}^{k-1}}^{-1}\sbr{\sum_{j=1}^{k-1}w_{k-1,j}\phi(s_h^j,a_h^j)^\T\theta_h^j\phi(s_h^j,a_h^j)} - \theta_{h}^k\\
        ={}& \sbr{\Lambda_{h}^{k-1}}^{-1}\sbr{\sum_{j=1}^{k-1}w_{k-1,j}\phi(s_h^j,a_h^j)\phi(s_h^j,a_h^j)^\T\theta_h^j}\\
        &- \sbr{\Lambda_{h}^{k-1}}^{-1}\sbr{\lamt I_d + \sum_{j=1}^{k-1}w_{k-1,j}\phi(s_h^j,a_h^j)\phi(s_h^j,a_h^j)^\T}\theta_{h}^k\\
        ={}& \underbrace{\sbr{\Lambda_{h}^{k-1}}^{-1}\sbr{\sum_{j=1}^{k-1}w_{k-1,j}\phi(s_h^j,a_h^j)\phi(s_h^j,a_h^j)^\T\sbr{\theta_h^j-\theta_h^k}}}_{\bias}\\&\qquad\qquad\qquad - \underbrace{\sbr{\Lambda_{h}^{k-1}}^{-1}\lamt \theta_{h}^k}_{\variance}.
\end{align*}
Then, by the Cauchy-Schwarz inequality, we know that for any $s\in\S, a\in \A$, 
\begin{equation}\label{eq:LMDP-decompose-r}
    \begin{split}
        \abs{\phi(s,a)^\T\sbr{\th_{h}^k -\theta_{h}^k}} &= \norm{\phi(s,a)}_{\sbr{\Lambda_{h}^{k-1}}^{-1}}\sbr{A_h^k+B_h^k},
\end{split}
\end{equation}  
where
\begin{equation*}
    \begin{split}
        A_h^k &= \norm{\sum_{j=1}^{k-1}w_{k-1,j}\phi(s_h^j,a_h^j)\phi(s_h^j,a_h^j)^\T\sbr{\theta_h^j-\theta_h^k}}_{\sbr{\Lambda_{h}^{k-1}}^{-1}},\\ B_h^k &= \norm{\lamt \theta_{h}^k}_{\sbr{\Lambda_{h}^{k-1}}^{-1}}.
    \end{split}
\end{equation*}
The above two terms can be bounded separately, 

\parag{Term $A_h^k$.~} The first step is to extract the variations of the parameter $\theta_h^k$ as follows,
    \begin{align*}
            {}&\norm{\sum_{j=1}^{k-1}w_{k-1,j}\phi(s_h^j,a_h^j)\phi(s_h^j,a_h^j)^\T\sbr{\theta_h^j-\theta_h^k}}_{\sbr{\Lambda_{h}^{k-1}}^{-1}} \\ ={}& \norm{\sum_{j=1}^{k-1}w_{k-1,j}\phi(s_h^j,a_h^j)\phi(s_h^j,a_h^j)^\T\sum_{p=j}^{k-1}\sbr{\theta_h^p-\theta_h^{p+1}}}_{\sbr{\Lambda_{h}^{k-1}}^{-1}}\\
            ={}&\norm{\sum_{p=1}^{k-1}\sum_{j=1}^{p}w_{k-1,j}\phi(s_h^j,a_h^j)\phi(s_h^j,a_h^j)^\T\sbr{\theta_h^p-\theta_h^{p+1}}}_{\sbr{\Lambda_{h}^{k-1}}^{-1}}\\
            \leq{}&\sum_{p=1}^{k-1}\norm{\sum_{j=1}^{p}w_{k-1,j}\phi(s_h^j,a_h^j)\norm{\phi(s_h^j,a_h^j)}_2\norm{\theta_h^p-\theta_h^{p+1}}_2}_{\sbr{\Lambda_{h}^{k-1}}^{-1}}\\
            \leq{}&L_{\phi}\sum_{p=1}^{k-1}\sum_{j=1}^{p}w_{k-1,j}\norm{\phi(s_h^j,a_h^j)}_{\sbr{\Lambda_{h}^{k-1}}^{-1}}\norm{\theta_h^p-\theta_h^{p+1}}_2,
    \end{align*}
    and term $\sum_{j=1}^{p}w_{k-1,j}\norm{\phi(s_h^j,a_h^j)}_{\sbr{\Lambda_{h}^{k-1}}^{-1}}$ can be able to further derive an expression about weight $w_{k-1,j}$ as follows,
    \begin{equation*}
        \begin{split}
            {}&\sum_{j=1}^{p}w_{k-1,j}\norm{\phi(s_h^j,a_h^j)}_{\sbr{\Lambda_{h}^{k-1}}^{-1}} \\ \leq{}& \sqrt{\sum_{j=1}^{p}w_{k-1,j}}\sqrt{\sum_{j=1}^{p}w_{k-1,j}\norm{\phi(s_h^j,a_h^j)}_{\sbr{\Lambda_{h}^{k-1}}^{-1}}^2}\\
            \leq{}& \sqrt{d}\sqrt{\sum_{j=1}^{p}w_{k-1,j}}.
        \end{split}
    \end{equation*}
    In above, the first step holds by the Cauchy-Schwarz inequality. Besides, the last step follows the Lemma~\ref{lemma:d} by letting $X_j = \sqrt{w_{k-1,j}}\phi(s_h^j,a_h^j)$ and $U_{k-1} = \Lambda_{h}^{k-1}$, which means for Term 1 we have
    \begin{align}
            {}&\norm{\sum_{j=1}^{k-1}w_{k-1,j}\phi(s_h^j,a_h^j)\phi(s_h^j,a_h^j)^\T\sbr{\theta_h^j-\theta_h^k}}_{\sbr{\Lambda_{h}^{k-1}}^{-1}}\nonumber\\ \leq{}& L_{\phi}\sqrt{d}\sum_{p=1}^{k-1}\sqrt{\sum_{j=1}^{p}w_{k-1,j}}\norm{\theta_h^p-\theta_h^{p+1}}_2,\label{eq:LMDP-term1-r}
    \end{align}

    \parag{Term $B_h^k$.~} 
    \begin{equation}\label{eq:LMDP-term2-r}
        \begin{split}
            \norm{\lamt \theta_{h}^k}_{\sbr{\Lambda_{h}^{k-1}}^{-1}} \leq \frac{\lamt}{\sqrt{\lambda_{\min}(\Lambda_{h}^{k-1})}}\norm{\theta_{h}^k}_2\leq \sqrt{\lamt}S_\theta.
        \end{split}
    \end{equation}
    Plug Eq~\eqref{eq:LMDP-term1-r} and Eq~\eqref{eq:LMDP-term2-r} into Eq~\eqref{eq:LMDP-decompose-r} and we have 
    \begin{equation*}
        \begin{split}
            {}&\norm{\th_{h}^k -\theta_{h}^k}_{\Lambda_{h}^{k-1}} \\\leq{}& L_{\phi}\sqrt{d}\sum_{p=1}^{k-1}\sqrt{\sum_{j=1}^{p}w_{k-1,j}}\norm{\theta_h^p-\theta_h^{p+1}}_2 + \sqrt{\lamt}S_\theta,
        \end{split}
    \end{equation*}
    further we have 
    \begin{equation*}
        \begin{split}
            {}&\abs{\phi(s,a)^\T\sbr{\th_{h}^k -\theta_{h}^k}} \\\leq{}& \norm{\phi(s,a)}_{\sbr{\Lambda_{h}^{k-1}}^{-1}}\\{}&\cdot\Bigg(L_{\phi}\sqrt{d}\sum_{p=1}^{k-1}\sqrt{\sum_{j=1}^{p}w_{k-1,j}}\norm{\theta_h^p-\theta_h^{p+1}}_2 + \sqrt{\lamt}S_\theta\Bigg)\\
            \leq{}& L_{\phi}^2\sqrt{\frac{d}{\lamt}}\sum_{p=1}^{k-1}\sqrt{\sum_{j=1}^{p}w_{k-1,j}}\norm{\theta_h^p-\theta_h^{p+1}}_2\\{}&\qquad\qquad\qquad\qquad\qquad\qquad + \beta_\theta\norm{\phi(s,a)}_{\sbr{\Lambda_{h}^{k-1}}^{-1}},
        \end{split}
    \end{equation*}    
    where $\norm{\phi(s,a)}_{\sbr{\Lambda_{h}^{k-1}}^{-1}}\leq \norm{\phi(s,a)}_{2}/\sqrt{\lambda_\theta}$ and $\beta_\theta \define \sqrt{\lamt}S_\theta$, which completes the proof.
\end{proof}

\subsection{Proof of Lemma~\ref{lemma:LMDP-v-ucb}}
\label{app:LMDP-lemma-v-ucb}
\begin{proof}
Fix $h\in[H]$, based on the model assumption~\eqref{eq:LMDP-model} and estimator~\eqref{eq:LMDP-estimate-v}, we have 
    \begin{align}
        {}&\wh_{h}^k = \sbr{\Sigma_{h}^{k-1}}^{-1}\sbr{\sum_{j=1}^{k-1}\alpha_{k-1,j}V_{h+1}^j(s_{h+1}^j)\psi_{h+1}^j\sbr{s_h^j, a_h^j}}\nonumber\\
        {}&=\sbr{\Sigma_{h}^{k-1}}^{-1}\Bigg(\sum_{j=1}^{k-1}\alpha_{k-1,j}\sbr{\psi_{h+1}^j\sbr{s_h^j, a_h^j}^\T\w_h^j +\eta_{h+1}^j}\nonumber\\{}&\hspace{13em}\cdot\psi_{h+1}^j\sbr{s_h^j, a_h^j}\Bigg),\label{eq:LMDP-wh}
    \end{align}
    where we define the noise as $\eta_{h+1}^j \define V_{h+1}^j\sbr{s_{h+1}^j} - \mbr{\mathbb{P}_h^j V_{h+1}^j}\sbr{s_h^j, a_h^j}$, and we have 
    \begin{equation}\label{eq:LMDP-wh*}
        \begin{split}
            &\w_h^k = \sbr{\Sigma_{h}^{k-1}}^{-1} \Bigg(\lamw I_d\\{}& + \sum_{j=1}^{k-1}\alpha_{k-1,j}\psi_{h+1}^j\sbr{s_h^j, a_h^j}\psi_{h+1}^j\sbr{s_h^j, a_h^j}^\T\Bigg) \w_h^k,
        \end{split}
    \end{equation}
    combine Eq~\eqref{eq:LMDP-wh} and Eq~\eqref{eq:LMDP-wh*} and we have the estimation error 
    \begin{equation*}
        \begin{split}
            {}&\wh_{h}^k - \w_h^k \\={}& \underbrace{
                \substack{
                \displaystyle
                \sbr{\Sigma_{h}^{k-1}}^{-1}
                \Bigg(
                \sum_{j=1}^{k-1}
                \alpha_{k-1,j}
                \psi_{h+1}^j\sbr{s_h^j, a_h^j}
                \psi_{h+1}^j\sbr{s_h^j, a_h^j}^\T \\\displaystyle
                \hspace{13em}
                \cdot \sbr{\w_h^j - \w_h^k}
                \Bigg)
                }
                }_{\bias}\\
            {}&+\underbrace{\sbr{\Sigma_{h}^{k-1}}^{-1}\sbr{\sum_{j=1}^{k-1}\alpha_{k-1,j}\eta_{h+1}^j \psi_{h+1}^j\sbr{s_h^j, a_h^j}-\lamw \w_h^k}}_{\variance}.
        \end{split}
    \end{equation*}
    Then, by the Cauchy-Schwarz inequality, we know that for any $s\in\S, a\in \A$, 
    \begin{equation}\label{eq:LMDP-decomposition-v}
        \begin{split}
            {}&\abs{\psi_{h+1}^k\sbr{s, a}^\T \sbr{\wh_{h}^k - \w_h^k}} \\\leq{}& \norm{\psi_{h+1}^k\sbr{s, a}}_{\sbr{\Sigma_{h}^{k-1}}^{-1}}\sbr{C_h^k+D_h^k},
        \end{split}
    \end{equation}  
    where 
    \begin{equation*}
        \begin{split}
            C_h^k &= \Bigg\|\sum_{j=1}^{k-1}\alpha_{k-1,j}\\{}&\hspace{1em}\cdot\psi_{h+1}^j\sbr{s_h^j, a_h^j}\psi_{h+1}^j\sbr{s_h^j, a_h^j}^\T\sbr{\w_h^j-\w_h^k}\Bigg\|_{\sbr{\Sigma_{h}^{k-1}}^{-1}}\\
            D_h^k &= \norm{\sum_{j=1}^{k-1}\alpha_{k-1,j}\eta_{h+1}^j \psi_{h+1}^j\sbr{s_h^j, a_h^j}-\lamw \w_h^k}_{\sbr{\Sigma_{h}^{k-1}}^{-1}}.
        \end{split}
    \end{equation*}
    The above two terms can be bounded separately, as summarized in the following two lemmas,
    \begin{myLemma}\label{lemma:LMDP-C}
        For any $k\in[K]$, we have
        \begin{equation*}
            \begin{split}
                C_h^k\leq HL_{\psi}\sqrt{d}\sum_{p=1}^{k-1}\sqrt{\sum_{j=1}^{p}\alpha_{k-1,j}}\norm{\w_h^p-\w_h^{p+1}}_2.
            \end{split}
        \end{equation*}
    \end{myLemma}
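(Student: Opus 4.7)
The plan is to mirror the bias argument already used for the reward estimator in the proof of Lemma~\ref{lemma:LMDP-r-ucb} (term $A_h^k$) and, even more closely, the linear-bandit bias analysis of Lemma~\ref{lemma:LB-A_t-bound}, specialized to the transition feature $\psi_{h+1}^{j}(s_h^j,a_h^j)$. The key structural ingredients are: (i) Assumption~\ref{ass:LMDP-bounded-norm}, which gives $\|\psi_{h+1}^j(s_h^j,a_h^j)\|_2 \le H L_\psi$ because $V_{h+1}^j$ takes values in $[0,H]$ and hence $V_{h+1}^j/H$ is a $[0,1]$-valued function to which the assumption applies; and (ii) the fact that $\Sigma_h^{k-1}$ is assembled from exactly the vectors $\psi_{h+1}^j(s_h^j,a_h^j)$ weighted by $\alpha_{k-1,j}$, which lets us invoke Lemma~\ref{lemma:d}.

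First I would telescope the parameter drift: write
\[
\w_h^j - \w_h^k \;=\; \sum_{p=j}^{k-1}\bigl(\w_h^p - \w_h^{p+1}\bigr),
\]
substitute into $C_h^k$, and swap the order of the sums over $j$ and $p$ so that the outer sum runs over $p \in [k-1]$ and the inner sum over $j \in [p]$. Then I would pull the vector $(\w_h^p-\w_h^{p+1})$ out of the inner block as a scalar factor using $\|u u^\T v\|_{M^{-1}} \le \|u\|_{M^{-1}} \|u\|_2 \|v\|_2$, and apply the triangle inequality across the outer sum over $p$ together with the boundedness $\|\psi_{h+1}^j(s_h^j,a_h^j)\|_2 \le H L_\psi$. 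This yields
\[
C_h^k \;\le\; H L_\psi \sum_{p=1}^{k-1} \Bigl(\sum_{j=1}^{p} \alpha_{k-1,j} \bigl\|\psi_{h+1}^{j}(s_h^j,a_h^j)\bigr\|_{(\Sigma_h^{k-1})^{-1}}\Bigr) \|\w_h^p - \w_h^{p+1}\|_2.
\]

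Next I would control the inner bracket. By Cauchy-Schwarz,
\[
\sum_{j=1}^{p}\alpha_{k-1,j}\|\psi_{h+1}^{j}(s_h^j,a_h^j)\|_{(\Sigma_h^{k-1})^{-1}} \le \sqrt{\textstyle\sum_{j=1}^{p}\alpha_{k-1,j}}\;\sqrt{\textstyle\sum_{j=1}^{p}\alpha_{k-1,j}\|\psi_{h+1}^{j}(s_h^j,a_h^j)\|_{(\Sigma_h^{k-1})^{-1}}^{2}}.
\]
Setting $\widetilde\psi_j := \sqrt{\alpha_{k-1,j}}\,\psi_{h+1}^{j}(s_h^j,a_h^j)$ so that $\Sigma_h^{k-1} = \lambda_\w I_d + \sum_{j=1}^{k-1} \widetilde\psi_j \widetilde\psi_j^\T$, Lemma~\ref{lemma:d} (the same device used in~\eqref{eq:LB-At-gammapart} and in the reward estimator's proof) gives $\sum_{j=1}^{p}\|\widetilde\psi_j\|_{(\Sigma_h^{k-1})^{-1}}^{2} \le d$. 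Chaining the two steps produces the factor $\sqrt{d}\sqrt{\sum_{j=1}^{p}\alpha_{k-1,j}}$, which combines with the leading $H L_\psi$ to give exactly the claimed bound.

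The argument is essentially routine once the right telescoping and the right local norm are chosen, so there is no real obstacle; the only subtlety is the first step, namely justifying $\|\psi_{h+1}^j(s_h^j,a_h^j)\|_2 \le H L_\psi$, which must be drawn from Assumption~\ref{ass:LMDP-bounded-norm} applied to the normalized value function $V_{h+1}^j/H \in [0,1]$. Everything else is a direct transcription of the linear-bandit bias analysis to the value-targeted regression setting.
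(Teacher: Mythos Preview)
Your proposal is correct and follows essentially the same approach as the paper: telescope $\w_h^j-\w_h^k$, swap the sums over $j$ and $p$, apply the triangle inequality together with $\|\psi_{h+1}^j(s_h^j,a_h^j)\|_2\le HL_\psi$, then Cauchy--Schwarz and Lemma~\ref{lemma:d} on the reweighted vectors $\sqrt{\alpha_{k-1,j}}\,\psi_{h+1}^j(s_h^j,a_h^j)$. Your remark on drawing the bound $HL_\psi$ from Assumption~\ref{ass:LMDP-bounded-norm} applied to $V_{h+1}^j/H$ is exactly how the paper obtains it as well.
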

    \begin{myLemma}\label{lemma:LMDP-D}
        If $\forall k,j\in[K], \alpha_{k-1,j}\leq 1$, for any $\delta \in(0,1)$, with probability at least $1-\delta$, the following holds for all $k\in[K]$,
        \begin{equation*}
            \begin{split}
                D_h^k  \leq{}& H \sqrt{\frac{1}{2}\log \frac{1}{\delta}+\frac{d}{4}\log\left(1+\frac{H^2 L_\psi^2\sum_{j=1}^{k-1}\alpha_{k-1,j}}{\lamw d}\right)}\\{}&\hspace{13em}+\sqrt{\lamw}S_\w.
            \end{split}
        \end{equation*}
    \end{myLemma}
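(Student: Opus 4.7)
The plan is to mirror the proof of Lemma~\ref{lemma:LB-B_t-bound} and reduce the bound on $D_h^k$ entirely to the standard self-normalized concentration of Theorem~\ref{thm:snc-AY}, with no need for a weighted version of that inequality. First, I would split $D_h^k$ by the triangle inequality into a stochastic piece and a regularization piece:
\begin{align*}
D_h^k \leq \norm{\sum_{j=1}^{k-1}\alpha_{k-1,j}\eta_{h+1}^j \psi_{h+1}^j(s_h^j, a_h^j)}_{(\Sigma_h^{k-1})^{-1}} + \norm{\lamw \w_h^k}_{(\Sigma_h^{k-1})^{-1}}.
\end{align*}
The second term is immediate: since $\Sigma_h^{k-1} \succeq \lamw I_d$, one has $\|\lamw \w_h^k\|_{(\Sigma_h^{k-1})^{-1}} \leq \sqrt{\lamw}\|\w_h^k\|_2 \leq \sqrt{\lamw} S_\w$ by Assumption~\ref{ass:LMDP-bounded-norm}.

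For the stochastic term I would apply the ``absorb the weights into the summands'' trick that the paper uses in the linear bandit case. Define $\tilde\eta_j \define \sqrt{\alpha_{k-1,j}}\,\eta_{h+1}^j$ and $\tilde\psi_j \define \sqrt{\alpha_{k-1,j}}\,\psi_{h+1}^j(s_h^j, a_h^j)$, so that the numerator rewrites as $\sum_{j=1}^{k-1}\tilde\eta_j\tilde\psi_j$ while at the same time $\Sigma_h^{k-1} = \lamw I_d + \sum_{j=1}^{k-1}\tilde\psi_j\tilde\psi_j^\T$, i.e.\ exactly the design-matrix form required by Theorem~\ref{thm:snc-AY}. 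Because the optimistic value function satisfies $V_{h+1}^j\in[0,H]$ by construction, the innovation $\eta_{h+1}^j = V_{h+1}^j(s_{h+1}^j) - [\P_h^j V_{h+1}^j](s_h^j,a_h^j)$ is a conditionally zero-mean random variable with conditional range at most $H$; Hoeffding's lemma then gives that it is $(H/2)$-sub-Gaussian conditionally. Under the hypothesis $\alpha_{k-1,j}\leq 1$, the rescaled $\tilde\eta_j$ remains $(H/2)$-sub-Gaussian, so Theorem~\ref{thm:snc-AY} applies with $R=H/2$ and yields, with probability at least $1-\delta$,
\begin{align*}
\norm{\sum_{j=1}^{k-1}\tilde\eta_j\tilde\psi_j}_{(\Sigma_h^{k-1})^{-1}} \leq \frac{H}{\sqrt{2}}\sqrt{\log\tfrac{1}{\delta}+\tfrac{1}{2}\log\frac{\det(\Sigma_h^{k-1})}{\lamw^d}}.
\end{align*}

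To finish, I would invoke the determinant–trace inequality (Lemma~\ref{lemma:det-inequality}) together with the bound $\|\psi_{h+1}^j(s,a)\|_2\leq HL_\psi$ (which follows from Assumption~\ref{ass:LMDP-bounded-norm} after rescaling $V_{h+1}^j/H$ into $[0,1]$) to get $\det(\Sigma_h^{k-1})/\lamw^d \leq (1+H^2L_\psi^2\sum_{j=1}^{k-1}\alpha_{k-1,j}/(\lamw d))^d$, then factor an $H$ outside the square root to recover the stated constants $\tfrac{1}{2}\log(1/\delta)+\tfrac{d}{4}\log(1+\cdots)$. Adding the regularization piece $\sqrt{\lamw}S_\w$ yields the lemma. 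The one place that requires care, and which I would flag as the main obstacle, is the filtration bookkeeping needed to legitimately apply the martingale self-normalized inequality: one must fix the $\sigma$-algebra $\F_{h,\text{pre}}^j$ generated by all transitions up to and including the choice of $(s_h^j,a_h^j)$, verify that $V_{h+1}^j$ (which is computed only from data of episodes $1,\dots,j-1$) and hence $\psi_{h+1}^j(s_h^j,a_h^j)$ are $\F_{h,\text{pre}}^j$-measurable, and check that $\eta_{h+1}^j$ is conditionally mean zero and $(H/2)$-sub-Gaussian given $\F_{h,\text{pre}}^j$. Once this adaptedness is in place for each fixed $h$, the sequence $(\tilde\eta_j,\tilde\psi_j)_{j\geq 1}$ fits the hypotheses of Theorem~\ref{thm:snc-AY} and the remaining calculation is routine.
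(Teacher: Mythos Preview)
Your proposal is correct and essentially identical to the paper's own proof: the same triangle-inequality split, the same absorption of $\sqrt{\alpha_{k-1,j}}$ into both the noise and the feature so that Theorem~\ref{thm:snc-AY} applies with $R=H/2$ via Hoeffding's lemma, and the same determinant--trace bound to finish. The filtration bookkeeping you flag is handled only implicitly in the paper, so your extra care there is welcome but not a point of divergence.
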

    Based on the inequality~\eqref{eq:LMDP-decomposition-v}, Lemma~\ref{lemma:LMDP-C}, Lemma~\ref{lemma:LMDP-D}, and $\norm{\psi_{h+1}^k\sbr{s, a}}_{\sbr{\Sigma_{h}^{k-1}}^{-1}}\leq \norm{\psi_{h+1}^k\sbr{s, a}}_{2}/\sqrt{\lambda_\w}\leq HL_{\psi}/\sqrt{\lambda_\w}$, with probability at least $1-\delta$, the following holds for all $k\in[K]$,
    \begin{equation*}
        \begin{split}
            {}&\abs{\psi_{h+1}^k\sbr{s, a}^\T \sbr{\wh_{h}^k - \w_h^k}} \\\leq{}& \Gamma_{h,\w}^{k-1} + \beta_{\w}^{k-1}\norm{\psi_{h+1}^k\sbr{s, a}}_{\sbr{\Sigma_{h}^{k-1}}^{-1}},
        \end{split}
    \end{equation*}
    where 
    \begin{equation*}
        \begin{split}
            \Gamma_{h,\w}^{k-1}\define{}&  H^2L_{\psi}^2\sqrt{\frac{d}{\lamw}}\sum_{p=1}^{k-1}\sqrt{\sum_{j=1}^{p}\alpha_{k-1,j}}\norm{\w_h^p-\w_h^{p+1}}_2\\
            \beta_{\w}^{k-1} \define{}& H \sqrt{\frac{1}{2}\log \frac{1}{\delta}+\frac{d}{4}\log\left(1+\frac{H^2 L_\psi^2\sum_{j=1}^{k-1}\alpha_{k-1,j}}{\lamw d}\right)}\\
            {}&\qquad\qquad\qquad\qquad\qquad\qquad\qquad +\sqrt{\lamw}S_\w,
        \end{split}
    \end{equation*}
    which completes the proof.
\end{proof}

\begin{proof}[Proof of Lemma~\ref{lemma:LMDP-C}]
    The first step is to extract the variations of the parameter $\w_h^k$ as follows,
    \begin{align*}
            {}&\Bigg\|\sum_{j=1}^{k-1}\alpha_{k-1,j}\psi_{h+1}^j\sbr{s_h^j, a_h^j}\\{}&\hspace{6em}\cdot\psi_{h+1}^j\sbr{s_h^j, a_h^j}^\T\sbr{\w_h^j-\w_h^k}\Bigg\|_{\sbr{\Sigma_{h}^{k-1}}^{-1}}\\
            ={}& \Bigg\|\sum_{j=1}^{k-1}\alpha_{k-1,j}\psi_{h+1}^j\sbr{s_h^j, a_h^j}\\{}&\hspace{4em}\cdot\psi_{h+1}^j\sbr{s_h^j, a_h^j}^\T\sum_{p=j}^{k-1}\sbr{\w_h^p-\w_h^{p+1}}\Bigg\|_{\sbr{\Sigma_{h}^{k-1}}^{-1}}\\
            ={}& \Bigg\|\sum_{p=1}^{k-1}\sum_{j=1}^{p}\alpha_{k-1,j}\psi_{h+1}^j\sbr{s_h^j, a_h^j}\\{}&\hspace{6em}\cdot\psi_{h+1}^j\sbr{s_h^j, a_h^j}^\T\sbr{\w_h^p-\w_h^{p+1}}\Bigg\|_{\sbr{\Sigma_{h}^{k-1}}^{-1}}\\
            \leq{}& \sum_{p=1}^{k-1}\Bigg\|\sum_{j=1}^{p}\alpha_{k-1,j}\psi_{h+1}^j\sbr{s_h^j, a_h^j}\\{}&\hspace{6em}\cdot\psi_{h+1}^j\sbr{s_h^j, a_h^j}^\T\sbr{\w_h^p-\w_h^{p+1}}\Bigg\|_{\sbr{\Sigma_{h}^{k-1}}^{-1}}\\
            \leq{}& HL_{\psi}\sum_{p=1}^{k-1}\sum_{j=1}^{p}\alpha_{k-1,j}\norm{\psi_{h+1}^j\sbr{s_h^j, a_h^j}}_{\sbr{\Sigma_{h}^{k-1}}^{-1}}\\{}&\hspace{15em}\cdot\norm{\w_h^p-\w_h^{p+1}}_2,
    \end{align*}
    and term $\sum_{j=1}^{p}\alpha_{k-1,j}\norm{\psi_{h+1}^j\sbr{s_h^j, a_h^j}}_{\sbr{\Sigma_{h}^{k-1}}^{-1}}$ can be able to further derive an expression about weight $\alpha_{k-1,j}$ as follows,
    \begin{equation*}
        \begin{split}
            {}&\sum_{j=1}^{p}\alpha_{k-1,j}\norm{\psi_{h+1}^j\sbr{s_h^j, a_h^j}}_{\sbr{\Sigma_{h}^{k-1}}^{-1}} \\\leq{}& \sqrt{\sum_{j=1}^{p}\alpha_{k-1,j}}\sqrt{\sum_{j=1}^{p}\alpha_{k-1,j}\norm{\psi_{h+1}^j\sbr{s_h^j, a_h^j}}_{\sbr{\Sigma_{h}^{k-1}}^{-1}}^2}\\
            \leq{}& \sqrt{d}\sqrt{\sum_{j=1}^{p}\alpha_{k-1,j}}.
        \end{split}
    \end{equation*}
    In above, the second last step holds by the Cauchy-Schwarz inequality. Besides, the last step follows the Lemma~\ref{lemma:d} by letting $X_j = \sqrt{\alpha_{k-1,j}}\psi_{h+1}^j\sbr{s_h^j, a_h^j}$ and $U_{k-1} = \Sigma_{h}^{k-1}$. Hence we complete the proof.
\end{proof}

\begin{proof}[Proof of Lemma~\ref{lemma:LMDP-D}]
    \begin{equation*}
        \begin{split}
            {}&\norm{\sum_{j=1}^{k-1}\alpha_{k-1,j}\eta_{h+1}^j \psi_{h+1}^j\sbr{s_h^j, a_h^j}-\lamw \w_h^k}_{\sbr{\Sigma_{h}^{k-1}}^{-1}} \\\leq{}& \norm{\sum_{j=1}^{k-1}\alpha_{k-1,j}\eta_{h+1}^j \psi_{h+1}^j\sbr{s_h^j, a_h^j}}_{\sbr{\Sigma_{h}^{k-1}}^{-1}}\\{}&\hspace{13em}+ \norm{\lamw \w_h^k}_{\sbr{\Sigma_{h}^{k-1}}^{-1}}\\
            \leq{}& \norm{\sum_{j=1}^{k-1}\alpha_{k-1,j}\eta_{h+1}^j \psi_{h+1}^j\sbr{s_h^j, a_h^j}}_{\sbr{\Sigma_{h}^{k-1}}^{-1}}+ \sqrt{\lamw}S_\w.
        \end{split}
    \end{equation*}
     Let $\etat_{h+1}^j \define \sqrt{\alpha_{k-1,j}}\eta_{h+1}^j$ and $X_j \define \sqrt{\alpha_{k-1,j}}\psi_{h+1}^j\sbr{s_h^j, a_h^j}$, then we have 
    notice that since the reward $r\in [0,1]$, and $\alpha_{k-1,j} \leq 1$, the noise $\etat_{h+1}^j$ is bounded by:
    \begin{equation*}
        \begin{split}
            \etat_{h+1}^j = \sqrt{\alpha_{k-1,j}}\sbr{V_{h+1}^j\sbr{s_{h+1}^j} - \mbr{\mathbb{P}_h V_{h+1}^j}\sbr{s_h^j, a_h^j}} \leq H,
        \end{split}
    \end{equation*}
    based on Lemma~\ref{lemma:Hoeffding}, we find that the noise $\etat_{h+1}^j$ is $\frac{H}{2}$-sub-Gaussian. Then, by Theorem~\ref{thm:snc-AY}, we have with probability at least $1-\delta$, the following holds for all $k\in[K]$.
    \begin{equation*}
        \begin{split}
            {}&\norm{\sum_{j=1}^{k-1}\etat_{h+1}^j X_j}_{\sbr{\Sigma_{h}^{k-1}}^{-1}} \\\leq{}& \sqrt{\frac{H^2}{2}\log\left(\frac{\det(\Sigma_{h}^{k-1})^{\frac{1}{2}}\det(\Sigma_{0,h})^{-\frac{1}{2}}}{\delta}\right)}
        \end{split}
    \end{equation*}
    where
    \begin{equation*}
        \begin{split}
            \det(\Sigma_{h}^{k-1}) \leq{}& \sbr{\frac{\trace(\Sigma_{h}^{k-1})}{d}}^d\\ ={}& \sbr{\frac{d\lamw + \sum_{j=1}^{k-1}\norm{\alpha_{k-1,j}\psi_{h+1}^j\sbr{s_h^j, a_h^j}}_{2}^2}{d}}^d\\ ={}& \sbr{\frac{d\lambda_\w+H^2 L_\psi^2\sum_{j=1}^{k-1}\alpha_{k-1,j}}{d}}^d\\
            \det(\Sigma_{0,h}) \leq{}& \lamw^d,
        \end{split}
    \end{equation*}
    so we have 
    \begin{equation*}
        \begin{split}
            {}&\norm{\sum_{j=1}^{k-1}\alpha_{k-1,j}\eta_{h+1}^j \psi_{h+1}^j\sbr{s_h^j, a_h^j}}_{\sbr{\Sigma_{h}^{k-1}}^{-1}}\\\leq{}& H \sqrt{\frac{1}{2}\log \frac{1}{\delta}+\frac{d}{4}\log\left(1+\frac{H^2 L_\psi^2\sum_{j=1}^{k-1}\alpha_{k-1,j}}{\lamw d}\right)}.
        \end{split}
    \end{equation*}
    which completes the proof.
\end{proof}

\subsection{Proof of Theorem~\ref{thm:LMDP-regret-bound}}
\label{app:proof-LMDP-regret}
\begin{proof}
    To prove the theorem, we first introduce the following lemma
    \begin{myLemma}
        \label{lemma:LMDP-model-prediction-error}
        We define the model prediction error as 
        \begin{align}\label{eq:LMDP-model-prediction-error}
            E_h^k(s,a) = r_h^k(s,a) + \P_h^kV_{h+1}^k(s,a) - Q_h^k(s,a),
        \end{align}
        then with probability at least $1-2\delta$, the following holds for all $k\in[K]$, $h\in[H]$ and $\forall s \in\S,a\in\A$,
        \begin{align*}
            {}&-2\beta_\theta\norm{\phi(s,a)}_{\sbr{\Lambda_{h}^{k-1}}^{-1}} - 2\beta_{\w}^{k-1}\norm{\psi_{h+1}^k\sbr{s, a}}_{\sbr{\Sigma_{h}^{k-1}}^{-1}}\\{}& - \Gamma_{h,\theta}^{k-1}-\Gamma_{h,\w}^{k-1} \leq E_h^k(s,a) \leq \Gamma_{h,\theta}^{k-1}+\Gamma_{h,\w}^{k-1}.
        \end{align*}
    \end{myLemma}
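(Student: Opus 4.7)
The plan is to expand $E_h^k(s,a)$ using the linear mixture structure and compare it term-by-term against the optimistic $Q_h^k(s,a)$. By the model assumption~\eqref{eq:LMDP-model} and the definition~\eqref{eq:LMDP-linear-transtion}, one has $r_h^k(s,a)+\mbr{\mathbb{P}_h^k V_{h+1}^k}(s,a)=\phi(s,a)^\T\theta_h^k+\psi_{h+1}^k(s,a)^\T \w_h^k$. The idea is then to subtract the two terms inside the $\min$ in the definition~\eqref{eq:LMDP-optimistic-Q} of $Q_h^k(s,a)$ and invoke the two UCB lemmas (Lemma~\ref{lemma:LMDP-r-ucb} and Lemma~\ref{lemma:LMDP-v-ucb}) to bound the reward estimation error and the transition estimation error, respectively. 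A union bound over the two stochastic concentration events yields the $1-2\delta$ confidence level.

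For the two-sided bound, I would split into cases according to whether the truncation $\min\{H,\cdot\}$ is active. In the non-truncated case, $Q_h^k(s,a)=\phi(s,a)^\T\th_h^k+\beta_\theta\norm{\phi(s,a)}_{(\Lambda_h^{k-1})^{-1}}+\psi_{h+1}^k(s,a)^\T\wh_h^k+\beta_\w^{k-1}\norm{\psi_{h+1}^k(s,a)}_{(\Sigma_h^{k-1})^{-1}}$, so that
\[
E_h^k(s,a)=\phi(s,a)^\T\bigl(\theta_h^k-\th_h^k\bigr)+\psi_{h+1}^k(s,a)^\T\bigl(\w_h^k-\wh_h^k\bigr)-\beta_\theta\norm{\phi(s,a)}_{(\Lambda_h^{k-1})^{-1}}-\beta_\w^{k-1}\norm{\psi_{h+1}^k(s,a)}_{(\Sigma_h^{k-1})^{-1}}.
\]
Applying the upper bounds from Lemma~\ref{lemma:LMDP-r-ucb} and Lemma~\ref{lemma:LMDP-v-ucb} to the two inner-product terms cancels the two bonus terms and leaves at most $\Gamma_{h,\theta}^{k-1}+\Gamma_{h,\w}^{k-1}$. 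For the lower bound, applying the same two lemmas in the reverse direction gives the extra $-2\beta_\theta\norm{\phi(s,a)}_{(\Lambda_h^{k-1})^{-1}}-2\beta_\w^{k-1}\norm{\psi_{h+1}^k(s,a)}_{(\Sigma_h^{k-1})^{-1}}$ contribution, as stated.

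The truncated case ($Q_h^k(s,a)=H$) only needs to be checked for the upper bound: since $r_h^k\leq 1$ and $V_{h+1}^k\leq H-h\leq H$, we have $r_h^k(s,a)+\mbr{\mathbb{P}_h^k V_{h+1}^k}(s,a)\leq H=Q_h^k(s,a)$, which gives $E_h^k(s,a)\leq 0\leq \Gamma_{h,\theta}^{k-1}+\Gamma_{h,\w}^{k-1}$. For the lower bound in the truncated case, since $\min\{H,x\}\leq x$, the same chain of inequalities as in the non-truncated case goes through (the truncation can only make $Q_h^k$ smaller, hence make $E_h^k$ larger, so the lower bound is preserved).

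The main technical subtlety, rather than an obstacle, is the proper handling of the randomness: Lemma~\ref{lemma:LMDP-r-ucb} is a deterministic bias decomposition and holds pointwise, while Lemma~\ref{lemma:LMDP-v-ucb} is a high-probability statement that already holds uniformly over all $k\in[K]$ and $h\in[H]$ at confidence $1-\delta$. To obtain a simultaneous guarantee across all $k,h$ for both reward and transition components, I apply Lemma~\ref{lemma:LMDP-v-ucb} with confidence $\delta$ twice (once using the upper estimate, once in the reverse direction) and combine via a single union bound, yielding the $1-2\delta$ probability in the lemma statement.
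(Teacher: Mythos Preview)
Your proposal is correct and follows essentially the same approach as the paper: expand $E_h^k$ via the linear mixture structure, subtract the optimistic $Q_h^k$, and apply Lemma~\ref{lemma:LMDP-r-ucb} and Lemma~\ref{lemma:LMDP-v-ucb} to each piece. You are in fact slightly more careful than the paper, which silently ignores the $\min\{H,\cdot\}$ truncation; your case split handles it cleanly. One minor remark: Lemma~\ref{lemma:LMDP-v-ucb} already bounds the absolute value, so a single invocation covers both directions and you do not really need to ``apply it twice'' to justify the $1-2\delta$; this does not affect correctness, only the accounting of the failure probability.
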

    We can further connect the dynamic regret to the model prediction error, by the following Lemma.
    \begin{myLemma}
        \label{lemma:LMDP-regret-decomposition}
        For the policies $\{\pi_h^k\}_{h\in[H],k\in[K]}$ with $a_h^k=\argmax_{a\in\A}Q_{h}^k(s_h^k,a)$, and the optimal policies and $\delta \in (0,1)$, we have the following decompostion holds with probability at least $1-2\delta$,
        \begin{equation*}
            \begin{split}
                \DReg_T \leq{}& \sum_{k=1}^K\sum_{h=1}^H\sbr{\E_{\pi_{*,h}^k}\mbr{E_h^k(s_h^k,a_h^k)}-E_h^k(s_h^k,a_h^k)}\\{}&\qquad\qquad \qquad \qquad \qquad   + 4H\sqrt{2T\log(1/\delta)}.
        \end{split}
        \end{equation*}
    \end{myLemma}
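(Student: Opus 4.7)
The plan is to apply a standard regret-decomposition argument in the style of Cai et al.~\cite{COLT'20:ChiJin} (equivalently, the ``extended value difference lemma''). First, I would rewrite the per-episode regret $V_1^{k,\pi_*^k}(s_1^k)-V_1^{k,\pi^k}(s_1^k)$ using the identity, valid for any $(s,a,h,k)$,
\[
Q_h^{k,\pi^k}(s,a)-Q_h^{k}(s,a)=E_h^k(s,a)+\mbr{\P_h^k\sbr{V_{h+1}^{k,\pi^k}-V_{h+1}^{k}}}(s,a),
\]
which follows from the Bellman equation for $Q_h^{k,\pi^k}$ and the definition of the model prediction error $E_h^k$ in~\eqref{eq:LMDP-model-prediction-error}. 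Unfolding this identity along the algorithm's trajectory $\{(s_h^k,a_h^k)\}_{h=1}^H$ and, separately, along the trajectory induced by $\pi_*^k$ from $s_1^k$, yields the decomposition
\begin{align*}
V_1^{k,\pi_*^k}(s_1^k)-V_1^{k,\pi^k}(s_1^k)
&= \sum_{h=1}^{H}\E_{\pi_*^k}\mbr{\inner{Q_h^k(s_h,\cdot)}{\pi_{*,h}^k(\cdot\given s_h)-\pi_h^k(\cdot\given s_h)}}\\
&\quad+\sum_{h=1}^{H}\sbr{\E_{\pi_{*,h}^k}\mbr{E_h^k(s_h^k,a_h^k)}-E_h^k(s_h^k,a_h^k)}+\sum_{h=1}^{H}\mathcal{M}_h^k,
\end{align*}
where $\mathcal{M}_h^k$ collects the two martingale differences obtained when replacing the conditional expectations $[\P_h^k V_{h+1}^k](s_h^k,a_h^k)$ and $[\P_h^k V_{h+1}^{k,\pi^k}](s_h^k,a_h^k)$ by the realized next-step values $V_{h+1}^k(s_{h+1}^k)$ and $V_{h+1}^{k,\pi^k}(s_{h+1}^k)$, respectively.

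Next, I would exploit the greedy choice of $\pi_h^k$. Since $\pi_h^k(s)=\argmax_{a\in\A}Q_h^k(s,a)$, the first (policy-gap) term is pointwise non-positive:
\[
\inner{Q_h^k(s_h,\cdot)}{\pi_{*,h}^k(\cdot\given s_h)-\pi_h^k(\cdot\given s_h)}\leq\E_{a\sim\pi_{*,h}^k}\mbr{Q_h^k(s_h,a)}-\max_{a\in\A}Q_h^k(s_h,a)\leq 0.
\]
Dropping it and summing over $k\in[K]$ gives the desired inequality modulo the martingale sum $\sum_{k,h}\mathcal{M}_h^k$.

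Finally, because $V_h^k,V_h^{k,\pi^k}\in[0,H]$, each increment of $\mathcal{M}_h^k$ is bounded by $2H$ in magnitude and the sequence $\{\mathcal{M}_h^k\}$ in lexicographic order is a martingale-difference sequence with respect to the natural filtration generated by the algorithm's trajectory together with the (coupled) optimal-policy rollout. Applying Azuma--Hoeffding separately to the two constituent martingales (one tied to $V_{h+1}^k$, one to $V_{h+1}^{k,\pi^k}$) and taking a union bound yields $\abs{\sum_{k,h}\mathcal{M}_h^k}\leq 4H\sqrt{2T\log(1/\delta)}$ with probability at least $1-2\delta$.

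The main obstacle I foresee is handling the martingale structure carefully: the comparator term $\E_{\pi_{*,h}^k}[E_h^k(s_h^k,a_h^k)]$ is an expectation under the optimal rollout rather than the realized trajectory, so getting it to appear cleanly (without leftover ``shift'' terms) requires invoking the tower property along the optimal-policy filtration and coupling it with the algorithm's filtration when aggregating the two sources of stochasticity. Once this is set up, the remaining telescoping and concentration steps are routine.
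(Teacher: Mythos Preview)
Your proposal is correct and follows essentially the same route as the paper: split $V_1^{k,\pi_*^k}(s_1^k)-V_1^{k,\pi^k}(s_1^k)$ via the optimistic estimate $V_1^k$, bound the comparator side $V_1^{k,\pi_*^k}-V_1^k$ by recursively unrolling along $\pi_*^k$ (using greediness to drop the policy-gap term and obtaining $\sum_h\E_{\pi_*^k}[E_h^k]$), and handle $V_1^k-V_1^{k,\pi^k}$ along the realized trajectory, introducing two bounded martingale differences controlled by Azuma--Hoeffding. The obstacle you anticipate does not actually arise: because the comparator side is unrolled purely in expectation over $\pi_*^k$'s trajectory, the term $\E_{\pi_*^k}[E_h^k]$ appears directly with no coupling to the algorithm's filtration; martingales enter only on the realized-trajectory side, so no joint filtration is needed.
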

    Based on Lemma~\ref{lemma:LMDP-regret-decomposition} and notice that $\forall s\in\S, a\in\A$, $\abs{E_h^k(s,a)}\leq 2H$, we have
    \begin{equation*}
        \begin{split}
            {}&\E_{\pi_{*,h}^k}\mbr{E_h^k(s_h^k,a_h^k)}-E_h^k(s_h^k,a_h^k)\\
            \leq{}& \min\big\{4H,2\Gamma_{h,\theta}^{k-1} + 2\Gamma_{h,\w}^{k-1} + 2\beta_\theta\norm{\phi(s_h^k,a_h^k)}_{\sbr{\Lambda_{h}^{k-1}}^{-1}}\\{}&+ 2\beta_{\w}^{k-1}\norm{\psi_{h+1}^k\sbr{s_h^k,a_h^k}}_{\sbr{\Sigma_{h}^{k-1}}^{-1}}\big\}\\
            \leq{}& 2\Gamma_{h,\theta}^{k-1} + 2\Gamma_{h,\w}^{k-1} +\min\bbr{4H, 2\beta_\theta\norm{\phi(s_h^k,a_h^k)}_{\sbr{\Lambda_{h}^{k-1}}^{-1}}}\\{}&+\min\bbr{4H, 2\beta_{\w}^{k-1}\norm{\psi_{h+1}^k\sbr{s_h^k,a_h^k}}_{\sbr{\Sigma_{h}^{k-1}}^{-1}}}\\
            \leq{}& 2\Gamma_{h,\theta}^{k-1} + 2\Gamma_{h,\w}^{k-1} +4H\beta_\theta\min\bbr{1, \norm{\phi(s_h^k,a_h^k)}_{\sbr{\Lambda_{h}^{k-1}}^{-1}}}\\{}&+4\beta_{\w}^{k-1}\min\bbr{1, \norm{\psi_{h+1}^k\sbr{s_h^k,a_h^k}}_{\sbr{\Sigma_{h}^{k-1}}^{-1}}},
    \end{split}
    \end{equation*}
    the last inequality comes from that $\beta_{\theta}>1, \beta_{\w}^{k-1}\geq H$.
    so we have with probability at least $1-4\delta$,
    \begin{equation*}
        \begin{split}
            {}&\DReg_T\leq \underbrace{4H\beta_\theta\sum_{k=1}^K\sum_{h=1}^{H}\min\bbr{1, \norm{\phi(s_h^k,a_h^k)}_{\sbr{\Lambda_{h}^{k-1}}^{-1}}}}_{\variance \mathtt{1}}\\{}& +\underbrace{4\sum_{k=1}^K\sum_{h=1}^{H}\beta_{\w}^{k-1}\min\bbr{1, \norm{\psi_{h+1}^k\sbr{s_h^k,a_h^k}}_{\sbr{\Sigma_{h}^{k-1}}^{-1}}}}_{\variance \mathtt{2}}\\
            {}&+ \underbrace{2\sum_{k=1}^K\sum_{h=1}^{H}\Gamma_{h,\theta}^{k-1}+2\sum_{k=1}^K\sum_{h=1}^{H}\Gamma_{h,\w}^{k-1}}_{\bias} + 4H\sqrt{2T\log(1/\delta)}.
    \end{split}
    \end{equation*}

\parag{Bias.} Now we set $w_{k, j}=\gamma^{k-j}, \gamma\in(0,1)$,
\begin{align}
        {}&2\sum_{k=1}^K\sum_{h=1}^{H}\Gamma_{h,\theta}^{k-1}\nonumber \\ ={}&  2 L_{\phi}^2\sqrt{\frac{d}{\lamt}}\sum_{k=1}^K\sum_{h=1}^{H}\sum_{p=1}^{k-1}\sqrt{\sum_{j=1}^{p}w_{k-1,j}}\norm{\theta_h^p-\theta_h^{p+1}}_2\nonumber\\
        ={}&  2 L_{\phi}^2\sqrt{\frac{d}{\lamt}}\sum_{p=1}^{K-1} \sum_{h=1}^{H}\sum_{k=p+1}^K \sqrt{\sum_{j=1}^{p}w_{k-1,j}}\norm{\theta_h^p-\theta_h^{p+1}}_2\nonumber\\
        ={}&  2 L_{\phi}^2\sqrt{\frac{d}{\lamt}}\sum_{p=1}^{K-1} \sum_{h=1}^{H}\sum_{k=p+1}^K\gamma^{\frac{k-1}{2}} \sqrt{\sum_{j=1}^{p}\gamma^{-j}}\norm{\theta_h^p-\theta_h^{p+1}}_2\nonumber\\
        ={}&  2 L_{\phi}^2\sqrt{\frac{d}{\lamt}}\sum_{p=1}^{K-1} \sum_{h=1}^{H}\frac{\gamma^\frac{p}{2}-\gamma^\frac{K}{2}}{1-\gamma^{\frac{1}{2}}} \sqrt{\frac{\gamma^{-p}-1}{1-\gamma}}\norm{\theta_h^p-\theta_h^{p+1}}_2\nonumber\\
        &\leq  4 L_{\phi}^2\sqrt{\frac{d}{\lamt}}\frac{1}{(1-\gamma)^{3 / 2}} \sum_{p=1}^{K-1} \sum_{h=1}^{H}\norm{\theta_h^p-\theta_h^{p+1}}_2,\label{eq:LMDP-regret-bias-theta}
\end{align}
then we set $\alpha_{k, j}=\gamma^{k-j}, \gamma\in(0,1)$ and have
\begin{align}
        {}&2\sum_{k=1}^K\sum_{h=1}^{H}\Gamma_{h,\w}^{k-1}\nonumber \\ ={}& 2H^2L_{\psi}^2\sqrt{\frac{d}{\lamw}}\sum_{k=1}^K\sum_{h=1}^{H}\sum_{p=1}^{k-1}\sqrt{\sum_{j=1}^{p}\alpha_{k-1,j}}\norm{\w_h^p-\w_h^{p+1}}_2\nonumber\\
        ={}& 2H^2L_{\psi}^2\sqrt{\frac{d}{\lamw}}\sum_{p=1}^{K-1} \sum_{h=1}^{H}\sum_{k=p+1}^K \sqrt{\sum_{j=1}^{p}\alpha_{k-1,j}}\norm{\w_h^p-\w_h^{p+1}}_2\nonumber\\
        ={}& 2H^2L_{\psi}^2\sqrt{\frac{d}{\lamw}}\sum_{p=1}^{K-1} \sum_{h=1}^{H}\sum_{k=p+1}^K \gamma^{\frac{k-1}{2}} \sqrt{\sum_{j=1}^{p}\gamma^{-j}}\norm{\w_h^p-\w_h^{p+1}}_2\nonumber\\
        ={}& 2H^2L_{\psi}^2\sqrt{\frac{d}{\lamw}}\sum_{p=1}^{K-1} \sum_{h=1}^{H}\frac{\gamma^{\frac{p}{2}}-\gamma^{\frac{K}{2}}}{1-\gamma^{\frac{1}{2}}} \sqrt{\frac{\gamma^{-p}-1}{1-\gamma}}\norm{\w_h^p-\w_h^{p+1}}_2\nonumber\\
        \leq{}& 4H^2L_{\psi}^2\sqrt{\frac{d}{\lamw}}\frac{1}{(1-\gamma)^{3 / 2}}\sum_{p=1}^{K-1} \sum_{h=1}^{H}\norm{\w_h^p-\w_h^{p+1}}_2\label{eq:LMDP-regret-bias-w},
\end{align}

\parag{Variance.~}
For variance part 1, we have 
\begin{equation*}
    \begin{split}
        {}&4H\beta_\theta\sum_{k=1}^K\sum_{h=1}^{H}\min\bbr{1, \norm{\phi(s_h^k,a_h^k)}_{\sbr{\Lambda_{h}^{k-1}}^{-1}}} \\ \leq{}& 4H^2\beta_\theta\sqrt{K}\sqrt{\sum_{k=1}^K\min\bbr{1,\norm{\phi(s_h^k,a_h^k)}_{\sbr{\Lambda_{h}^{k-1}}^{-1}}^2}}.
\end{split}
\end{equation*}
Based on the Lemma~\ref{lemma:potential-lemma} (Potential Lemma), and let $X_k = \phi(s_h^k,a_h^k)$, $U_k = \Lambda_{h}^{k-1}$, we know that $\forall h\in[H]$, we have 
\begin{equation*}
    \begin{split}
        {}&\sum_{k=1}^K\min\bbr{1,\norm{\phi(s_h^k,a_h^k)}_{\sbr{\Lambda_{h}^{k-1}}^{-1}}^2}\\ \leq{}& 2d\sbr{K \log\frac{1}{\gamma}+\log\sbr{1+\frac{L_\phi^2 }{\lamt d(1-\gamma)}}}, 
\end{split}
\end{equation*}
so we have 
\begin{equation*}
    \begin{split}
        {}&4H\beta_\theta\sum_{k=1}^K\sum_{h=1}^{H}\min\bbr{1, \norm{\phi(s_h^k,a_h^k)}_{\sbr{\Lambda_{h}^{k-1}}^{-1}}} \\ \leq{}& 4H^2\beta_\theta\sqrt{K}\sqrt{2d\sbr{K \log\frac{1}{\gamma}+\log\sbr{1+\frac{L_\phi^2 }{\lamt d(1-\gamma)}}}}.
\end{split}
\end{equation*}
For variance part 2, we have
\begin{align}
        {}&4\sum_{k=1}^K\sum_{h=1}^{H}\beta_{\w}^{k-1}\min\bbr{1, \norm{\psi_{h+1}^k\sbr{s_h^k,a_h^k}}_{\sbr{\Sigma_{h}^{k-1}}^{-1}}} \nonumber\\ \leq{}& 4\beta_{\w}^{K}\sum_{k=1}^K\sum_{h=1}^{H} \min\bbr{1,\norm{\psi_{h+1}^k\sbr{s_h^k, a_h^k}}_{\sbr{\Sigma_{h}^{k-1}}^{-1}}}\label{eq:LMDP-potential-theta}\\
        \leq{}&  4H\beta_{\w}^{K}\sqrt{K}\sqrt{\sum_{k=1}^K\min\bbr{1,\norm{\psi_{h+1}^k\sbr{s_h^k, a_h^k}}_{\sbr{\Sigma_{h}^{k-1}}^{-1}}^2}}\nonumber
\end{align}
Based on potential lemma, we know that $\forall h\in[H]$, we have 
\begin{equation*}
    \begin{split}
       {}&\sum_{k=1}^K \min\bbr{1,\norm{\psi_{h+1}^k\sbr{s_h^k, a_h^k}}_{\sbr{\Sigma_{h}^{k-1}}^{-1}}^2} \\ \leq{}& 2d\sbr{K \log\frac{1}{\gamma}+\log\sbr{1+\frac{H^2L_\psi^2 }{\lamw d(1-\gamma)}}}, 
\end{split}
\end{equation*}
so we have 
\begin{equation*}
    \begin{split}
        {}&4\sum_{k=1}^K\sum_{h=1}^{H}\beta_{\w}^{k-1}\min\bbr{1, \norm{\psi_{h+1}^k\sbr{s_h^k,a_h^k}}_{\sbr{\Sigma_{h}^{k-1}}^{-1}}} \\ \leq{}& 4H\beta_{\w}^{K}\sqrt{K}\sqrt{2d\sbr{K \log\frac{1}{\gamma}+\log\sbr{1+\frac{H^2L_\psi^2 }{\lamw d(1-\gamma)}}}}.
\end{split}
\end{equation*}
Since there is a term $HK \sqrt{\log (1 / \gamma)}$ in the regret bound, we cannot let $\gamma$ close to 0 , so we set $\gamma \geq 1 / K$ and have $\log (1 / \gamma) \leq C(1-\gamma)$, where $C=\log K /(1-1 / K)$. We set $\lambda_\theta = d$, and $\lambda_\w = H^2 d$. Combining the upper bounds of the bias and variance parts and with confidence level $\delta=1 /(4 T)$, by union bound we have the following dynamic regret bound with probability at least $1-1 / T$,
\begin{equation*}
    \begin{split}
        {}&\DReg_T\\\leq{}& \O\Big(\frac{1}{(1-\gamma)^{3 / 2}} P_T^\theta + H\frac{1}{(1-\gamma)^{3 / 2}}P_T^\w + HdHK\sqrt{1-\gamma}\\{}&+ H^{3/2}d\sqrt{HK}\Big)\\
        \leq{}& \O\sbr{Hd\sbr{\frac{1}{(1-\gamma)^{3 / 2}} \Delta + HK\sqrt{1-\gamma}}+ H^{3/2}d\sqrt{HK}}
\end{split}
\end{equation*}
Furthermore, by setting the discounted factor optimally as $\gamma = 1-\max\bbr{1/K,\sqrt{\Delta/T}}$, we have
\begin{equation}
    \DReg_T \leq \begin{cases}\Ot\sbr{Hd \Delta^{1/4}T^{3/4}} & \text { when } \Delta \geq H / K, \\ \Ot\sbr{dH^{3/2}\sqrt{T}}& \text { when } \Delta<H / K .\end{cases}
    \end{equation}
\end{proof}
\begin{proof}[Proof of Lemma~\ref{lemma:LMDP-model-prediction-error}]
    We first consider the upper bound of $E_h^k$, based on the definition of $Q_h^k$~\eqref{eq:LMDP-optimistic-Q} and model assumption~\eqref{eq:LMDP-model} and Eq.~\eqref{eq:LMDP-linear-transtion}, we have $\forall a\in\A,s\in\S$, 
    \begin{equation*}
        \begin{split}
            {}& r_h^k(s, a)+\mbr{\P_h^k V_{h+1}^{k}}(s,a) - Q_h^{k}(s,a)\\
            ={}& r_h^k(s, a)+\mbr{\P_h^k V_{h+1}^{k}}(s,a) - \phi(s,a)^\T\th_{h}^k - \psi_{h+1}^k\sbr{s, a}^\top \wh_{h}^k\\{}&-\beta_\theta\norm{\phi(s,a)}_{\sbr{\Lambda_{h}^{k-1}}^{-1}}-\beta_{\w}^{k-1}\norm{\psi_{h+1}^k\sbr{s, a}}_{\sbr{\Sigma_{h}^{k-1}}^{-1}}\\
            ={}&\phi(s,a)^\T\sbr{\theta_{h}^k - \th_{h}^k} + \psi_{h+1}^k\sbr{s, a}^\top \sbr{\w_h^k-\wh_{h}^k}\\{}&-\beta_\theta\norm{\phi(s,a)}_{\sbr{\Lambda_{h}^{k-1}}^{-1}} - \beta_{\w}^{k-1}\norm{\psi_{h+1}^k\sbr{s, a}}_{\sbr{\Sigma_{h}^{k-1}}^{-1}}\\
            \leq{}&\Gamma_{h,\theta}^{k-1}+\Gamma_{h,\w}^{k-1},
        \end{split}
    \end{equation*}
    where the last inequality comes from Lemma~\ref{lemma:LMDP-r-ucb} and Lemma~\ref{lemma:LMDP-v-ucb}. Similarly, we can get the lower bound of $E_h^k$, $\forall a\in\A,s\in\S$, 
    \begin{equation*}
        \begin{split}
            {}& Q_h^{k}(s,a) - r_h^k(s, a)-\mbr{\P_h^k V_{h+1}^{k}}(s,a)\\
            ={}&  \phi(s,a)^\T\th_{h}^k+\beta_\theta\norm{\phi(s,a)}_{\sbr{\Lambda_{h}^{k-1}}^{-1}} + \psi_{h+1}^k\sbr{s, a}^\top \wh_{h}^k\\{}&+\beta_{\w}^{k-1}\norm{\psi_{h+1}^k\sbr{s, a}}_{\sbr{\Sigma_{h}^{k-1}}^{-1}} - r_h^k(s, a)-\mbr{\P_h^k V_{h+1}^{k}}(s,a) \\
            ={}&\phi(s,a)^\T\sbr{\th_{h}^k - \theta_{h}^k} + \psi_{h+1}^k\sbr{s, a}^\top \sbr{\wh_{h}^k - \w_h^k}\\{}&+\beta_\theta\norm{\phi(s,a)}_{\sbr{\Lambda_{h}^{k-1}}^{-1}} + \beta_{\w}^{k-1}\norm{\psi_{h+1}^k\sbr{s, a}}_{\sbr{\Sigma_{h}^{k-1}}^{-1}}\\
            \leq{}&\Gamma_{h,\theta}^{k-1}+\Gamma_{h,\w}^{k-1} + 2\beta_\theta\norm{\phi(s,a)}_{\sbr{\Lambda_{h}^{k-1}}^{-1}}\\{}& + 2\beta_{\w}^{k-1}\norm{\psi_{h+1}^k\sbr{s, a}}_{\sbr{\Sigma_{h}^{k-1}}^{-1}},
        \end{split}
    \end{equation*}
    thus completes the proof.
\end{proof}

\begin{proof}[Proof of Lemma~\ref{lemma:LMDP-regret-decomposition}]
    We first decompose the one step dynamic regret,
    \begin{equation*}
        \begin{split}
            {}&V_1^{k,\pi_*^k}\sbr{s_1^k}-V_1^{k,\pi^k}\sbr{s_1^k} \\ ={}& \underbrace{V_1^{k,\pi_*^k}\sbr{s_1^k}- V_1^{k}\sbr{s_1^k}}_{\term{1}}+\underbrace{V_1^{k}\sbr{s_1^k} -V_1^{k,\pi^k}\sbr{s_1^k}}_{\term{2}}.
        \end{split}
    \end{equation*}
    
\parag{\term{1}.~} We first have $\forall s\in \S$, 
\begin{equation*}
    \begin{split}
        {}& V_h^{k,\pi_*^k}\sbr{s}- V_h^{k}\sbr{s} \\
        ={}& \E_{a\sim \pi_{*,h}^k(s)}\mbr{Q_h^{k,\pi_*^k}(s,a)} - \E_{a\sim \pi_h^k(s)}\mbr{Q_h^{k}(s,a)}\\
        ={}& \E_{a\sim \pi_{*,h}^k(s)}\mbr{Q_h^{k,\pi_*^k}(s,a)}-\E_{a\sim \pi_{*,h}^k(s)}\mbr{Q_h^{k}(s,a)}\\{}&+\E_{a\sim \pi_{*,h}^k(s)}\mbr{Q_h^{k}(s,a)} - \E_{a\sim \pi_h^k(s)}\mbr{Q_h^{k}(s,a)}\\
        \leq{}& \E_{a\sim \pi_{*,h}^k(s)}\mbr{Q_h^{k,\pi_*^k}(s,a)-Q_h^{k}(s,a)},
        \end{split}
\end{equation*}
where the last inequality comes from $\pi_h^k(s)=\argmax_{a\in\A}Q_{h}^k(s,a)$. Then we have
\begin{equation*}
    \begin{split}
        {}& Q_h^{k,\pi_*^k}(s,a)-Q_h^{k}(s,a)\\
        ={}& r_h^k(s, a)+\mbr{\P_h^k V_{h+1}^{k,\pi_*^k}}(s,a) - r_h^k(s, a)-\mbr{\P_h^k V_{h+1}^{k}}(s,a)\\{}&+r_h^k(s, a)+\mbr{\P_h^k V_{h+1}^{k}}(s,a) - Q_h^{k}(s,a)\\
        ={}& \mbr{\P_h^k \sbr{V_{h+1}^{k,\pi_*^k}-V_{h+1}^{k}}}(s,a)+E_h^k(s,a),
    \end{split}
\end{equation*}
where the last equality comes from the definition of model prediction error. For notational simplicity, we define the operators $\Jb_h^k f(s) = \inner{f(x,\cdot)}{\pi_{*,h}^k(\cdot\given s)}$, then we have
\begin{equation*}
    \begin{split}
        {}&V_h^{k,\pi_*^k}\sbr{s}- V_h^{k}\sbr{s}\\ \leq{}& \Jb_h^k\P_h^k \sbr{V_{h+1}^{k,\pi_*^k}-V_{h+1}^{k}}(s,a)+\Jb_h^kE_h^k(s,a),
    \end{split}
\end{equation*}
recursively expanding the above inequality and we have 
\begin{equation*}
    \begin{split}
        {}&V_1^{k,\pi_*^k}\sbr{s_1^k}- V_1^{k}\sbr{s_1^k} \\ \leq{}& \sbr{\prod_{h=1}^H \Jb_h^k\P_h^k} \sbr{V_{H+1}^{k,\pi_*^k}(s_{H+1}^k)-V_{H+1}^{k}(s_{H+1})}\\{}& + \sum_{h=1}^H\sbr{\prod_{j=1}^H \Jb_j^k\P_j^k}\Jb_h^kE_h^k(s_h^k,a_h^k)\\
        \leq{}& \sum_{h=1}^H\E_{\pi_{*,h}^k}\mbr{E_h^k(s_h^k,a_h^k)},
    \end{split}
\end{equation*}
where the last inequality comes from that $\forall \pi, V_{H+1}^{k,\pi}(\cdot) = 0,V_{H+1}^{k}(\cdot) = 0$. Then we have 
\begin{equation}\label{eq:LMDP-regret-term1}
    \begin{split}
        {}&\sum_{k=1}^K V_1^{k,\pi_*^k}\sbr{s_h^k}- V_1^{k}\sbr{s_h^k} \\\leq{}& \sum_{k=1}^K\sum_{h=1}^H\E_{\pi_{*,h}^k}\mbr{E_h^k(s_h^k,a_h^k)}.
    \end{split}
\end{equation}

\parag{\term{2}.~}  Based on Eq~\eqref{eq:LMDP-model-prediction-error}, we have $\forall s\in\S, a\in\A$,
\begin{equation*}
    \begin{split}
        E_h^k(s,a) ={}& r_h^k(s, a)+\mbr{\P_h^k V_{h+1}^{k}}(s,a) - Q_h^{k}(s,a)\\
        ={}& r_h^k(s, a)+\mbr{\P_h^k V_{h+1}^{k}}(s,a) -Q_h^{k,\pi^k}(s,a)\\{}&+Q_h^{k,\pi^k}(s,a)- Q_h^{k}(s,a)\\
        ={}&\sbr{\mbr{\P_h^k V_{h+1}^{k}}(s,a) -\mbr{\P_h^k V_{h+1}^{k,\pi^k}}(s,a)}\\{}&+Q_h^{k,\pi^k}(s,a)- Q_h^{k}(s,a).
    \end{split}
\end{equation*}
By applying this equality, we further have 
\begin{equation*}
    \begin{split}
        {}& V_{h}^k\sbr{s_{h}^k}- V_h^{k,\pi^k}\sbr{s_{h}^k}\\
        ={}&  \E_{a\sim \pi_h^k(s_h^k)}\mbr{Q_h^{k}(s_h^k,a)}- \E_{a\sim \pi_h^k(s_h^k)}\mbr{Q_h^{k,\pi^k}(s_h^k,a)}\\{}&\hspace{15em}+E_h^k-E_h^k\\
        ={}& \E_{a\sim \pi_h^k(s_h^k)}\mbr{Q_h^{k}(s_h^k,a)}- \E_{a\sim \pi_h^k(s_h^k)}\mbr{Q_h^{k,\pi^k}(s_h^k,a)}\\
         {}&+ \sbr{\mbr{\P_h^k V_{h+1}^{k}}\sbr{s_h^k,a_h^k} -\mbr{\P_h^k V_{h+1}^{k,\pi^k}}\sbr{s_h^k,a_h^k}}\\{}&+Q_h^{k,\pi_h^k}\sbr{s_h^k,a_h^k}- Q_h^{k}\sbr{s_h^k,a_h^k} - E_h^k\sbr{s_h^k,a_h^k}.
\end{split}
\end{equation*}
we define 
\begin{equation*}
   \begin{split}
       \M_{h,V}^k ={}& \sbr{\mbr{\P_h^k V_{h+1}^{k}}\sbr{s_h^k,a_h^k} -\mbr{\P_h^k V_{h+1}^{k,\pi_h^k}}\sbr{s_h^k,a_h^k}}\\{}& - \sbr{V_{h+1}^k\sbr{s_{h+1}^k}- V_{h+1}^{k,\pi^k}\sbr{s_{h+1}^k}}\\
       \M_{h,Q}^k ={}& \E_{a\sim \pi_h^k(s)}\mbr{Q_h^{k}(s_h^k,a) - Q_h^{k,\pi_h^k}(s_h^k,a)}\\{}& - \sbr{Q_h^{k}\sbr{s_h^k,a_h^k} - Q_h^{k,\pi_h^k}\sbr{s_h^k,a_h^k}},
\end{split}
\end{equation*}
then we have 
\begin{equation*}
    \begin{split}
        {}&V_{h}^k\sbr{s_{h}^k}- V_h^{\pi^k}\sbr{s_{h}^k} -\sbr{V_{h+1}^k\sbr{s_{h+1}^k}- V_{h+1}^{\pi^k}\sbr{s_{h+1}^k}}\\={}& \M_{h,V}^k +\M_{h,Q}^k -E_h^k\sbr{s_h^k,a_h^k}.
\end{split}
\end{equation*}
Summing up for $k\in[K]$ and $h\in[H]$, since $V_{H+1}^k =0$, $V_{H+1}^{k,\pi} = 0$, we have
\begin{equation*}
\begin{split}
    {}&\sum_{k=1}^KV_{1}^k\sbr{s_{1}^k}- V_1^{\pi^k}\sbr{s_{1}^k}\\ \leq{}& \sum_{k=1}^K\sum_{h=1}^{H}\sbr{-E_h^k\sbr{s_h^k,a_h^k}}+\sum_{k=1}^K\sum_{h=1}^{H}\M_{h,V}^k + \sum_{k=1}^K\sum_{h=1}^{H}\M_{h,Q}^k.
\end{split}
\end{equation*}
Since $\M_{h,V}^k$ and $\M_{h,Q}^k$ are martingale difference which bounded by $2H$, then based on Lemma~\ref{lemma:Azuma}, we have the following holds each with probability at least $1-\delta$,
    \begin{equation*}
        \begin{split}
            \sum_{k=1}^K\sum_{h=1}^{H}\M_{h,V}^k &\leq 2H\sqrt{2T\log(1/\delta)},\\ \sum_{k=1}^K\sum_{h=1}^{H}\M_{h,Q}^k &\leq 2H\sqrt{2T\log(1/\delta)},
    \end{split}
    \end{equation*}
    where $T=KH$. Then the following holds with probability at least $1-2\delta$,
    \begin{equation}\label{eq:LMDP-regret-term2}
        \begin{split}
            {}&\sum_{k=1}^K \sbr{V_{1}^k\sbr{s_{1}^k}- V_1^{\pi^k}\sbr{s_{1}^k}}\\ \leq{}& \sum_{k=1}^K\sum_{h=1}^{H}\sbr{-E_h^k\sbr{s_h^k,a_h^k}} + 4H\sqrt{2T\log(1/\delta)}.
    \end{split}
    \end{equation}
    Combining Eq~\eqref{eq:LMDP-regret-term1} and Eq~\eqref{eq:LMDP-regret-term2} and we have 
    \begin{equation*}
        \begin{split}
            {}&V_1^{k,\pi_*^k}\sbr{s_1^k}-V_1^{k,\pi^k}\sbr{s_1^k}\\ ={}& \sum_{k=1}^K\sum_{h=1}^H\sbr{\E_{\pi_{*,h}^k}\mbr{E_h^k(s_h^k,a_h^k)} - E_h^k\sbr{s_h^k,a_h^k}}\\{}&\hspace{12em}+ 4H\sqrt{2T\log(1/\delta)}.
        \end{split}
    \end{equation*}
\end{proof}
\section{Analysis of \MNLweightours}
\label{app:MNL}
\subsection{Proof of Lemma~\ref{lemma:MNL-v-UCB}}
\label{app:MNL-lemma-v-ucb}
\begin{proof}
    Based on Lemma~2 of~\cite{NIPS'24:MNL_MDP_efficient}, we know that 
    \begin{equation*}
        \begin{split}
            {}&\abs{\mbr{\Pt_h^k V}(s,a)-\mbr{\P_h^k V}(s,a)}\\ ={}& \Bigg|\sum_{s^\prime\in\S_h^k}p_h^k(\psi(s^\prime\given s,a)^\T \wt_h^k)V(s^\prime)\\{}&\qquad\qquad\qquad\qquad-\sum_{s^\prime \in\S_h^k}p_h^k(\psi(s^\prime\given s,a)^\T \w_h^k)V(s^\prime)\Bigg|\\
             \leq{}& H \max_{s^\prime \in \S_h^k}\abs{\psi(s^\prime\given s,a)^\T \sbr{\wt_{h}^k - \w_h^k}}.
        \end{split}
    \end{equation*}
    We first construct the following Lemma.
\begin{myLemma}
    \label{lemma:MNL-p-ucb}
    For any $\x \in \X$, and $\delta \in (0,1)$, $\forall k,j\in [K], \alpha_{k,j}\leq 1$, with probability at least $1-\delta$, the following holds for all $k \in [K], h\in[H]$
    \begin{align*}
        {}&\abs{\psi(s^\prime\given s,a)^\T \sbr{\wt_{h}^k - \w_h^k}}\\ \leq{}& \frac{1}{\kappa}\sbr{\Gamma_{\w}^{k-1} + \betab_{\w}^{k-1}\norm{\psi(s^\prime\given s,a)}_{\sbr{\Sigmab_{h}^{k-1}}^{-1}}}.
    \end{align*}
    $\Gamma_{\w}^{k-1}\define  L_{\psi}^2\sqrt{\frac{d}{\lamw}}\sum_{p=1}^{k-1}\sqrt{\sum_{j=1}^{p}\alpha_{k-1,j}}\norm{\w_h^p-\w_h^{p+1}}_2$, and $\betab_{\w}^{k}$ is the radius of confidence region set by
    \begin{align*}
        \betab_{\w}^{k} \define{}& \sqrt{\frac{1}{2}\log \frac{1}{\delta}+\frac{d}{4}\log\left(1+\frac{U L_\psi^2\sum_{j=1}^{k}\alpha_{k,j}}{\lamw  d}\right)}\\{}&\hspace{15em}+\sqrt{\lamw}\kappa S_\w.
    \end{align*}
    \end{myLemma}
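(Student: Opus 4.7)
The plan is to adapt the refined analysis framework developed for GLB (Lemma~\ref{lemma:GLB-estimation-error}) to the MNL setting, with $c_\mu$ replaced by $\kappa$ from Assumption~\ref{ass:MNL-kappa} and the scalar sub-Gaussian noise replaced by the vector-valued multinomial residual. First I would apply Cauchy--Schwarz to write $|\psi(s'\given s,a)^\T(\wt_h^k - \w_h^k)| \leq \|\psi(s'\given s,a)\|_{(\Sigmab_h^{k-1})^{-1}}\|\wt_h^k - \w_h^k\|_{\Sigmab_h^{k-1}}$, so that both bias and variance are controlled by the \emph{same} local norm, which is the crucial ingredient of our framework. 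Using the mean value theorem on the score map $g_h^k$, we get $g_h^k(\wt_h^k) - g_h^k(\w_h^k) = G_h^k(\wt_h^k, \w_h^k)(\wt_h^k - \w_h^k)$ where $G_h^k$ is the integrated Jacobian. A Jacobian lower bound of the form $G_h^k \succeq \kappa\,\Sigmab_h^{k-1}$, together with the optimality of the projection~\eqref{eq:MNL-projection}, reduces the task to bounding $\|g_h^k(\wh_h^k) - g_h^k(\w_h^k)\|_{(\Sigmab_h^{k-1})^{-1}}$, at the cost of a prefactor $2/\kappa$.

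Next, the score equation~\eqref{eq:MNL-estimate-p} defining $\wh_h^k$ together with the model~\eqref{eq:MNL-model} lets me decompose $g_h^k(\w_h^k) - g_h^k(\wh_h^k)$ into a bias piece $\sum_{j,s'}\alpha_{k-1,j}\bigl(p_h^j(\psib_h^j(s')^\T\w_h^k) - p_h^j(\psib_h^j(s')^\T\w_h^j)\bigr)\psib_h^j(s')$ (caused by parameter drift) and a variance piece $\lamw\kappa\w_h^k - \sum_{j,s'}\alpha_{k-1,j}\varepsilon_h^j(s')\psib_h^j(s')$, where $\varepsilon_h^j(s') \define y_h^j(s') - p_h^j(\psib_h^j(s')^\T\w_h^j)$. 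For the bias piece, I would telescope $\w_h^k - \w_h^j = \sum_{p=j}^{k-1}(\w_h^p - \w_h^{p+1})$, use the Lipschitzness of the MNL link, and invoke the inner estimate $\sum_{j=1}^p \alpha_{k-1,j}\|\psib_h^j(s')\|_{(\Sigmab_h^{k-1})^{-1}} \leq \sqrt{d}\sqrt{\sum_{j=1}^p \alpha_{k-1,j}}$ (analogous to Lemma~\ref{lemma:LB-A_t-bound}), producing the $\Gamma_{\w}^{k-1}$ term. For the variance piece, $\sum_{s'\in\S_h^j}\varepsilon_h^j(s')\psib_h^j(s')$ is a mean-zero martingale difference in $\R^d$ whose norm is bounded by $\sqrt{U}L_\psi$; after reweighting by $\sqrt{\alpha_{k-1,j}}\leq 1$, which preserves sub-Gaussianity, the standard self-normalized concentration (Theorem~\ref{thm:snc-AY}) yields the $\betab_{\w}^{k-1}$ bound, with the factor $U$ in the logarithm arising from the determinant-to-trace estimate on $\Sigmab_h^{k-1}$.

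The main obstacle is the Jacobian lower bound $G_h^k \succeq \kappa\,\Sigmab_h^{k-1}$. Unlike the logistic case, where a pointwise estimate $\dmu \geq c_\mu$ suffices, the MNL Hessian has a pairwise-difference form $\lamw\kappa I_d + \tfrac{1}{2}\sum_{j,s'\neq s''}\alpha_{k-1,j}\,p_{s'}p_{s''}\bigl(\psib_h^j(s')-\psib_h^j(s'')\bigr)\bigl(\psib_h^j(s')-\psib_h^j(s'')\bigr)^\T$, so the lower bound cannot be read off directly. The key step is to use Assumption~\ref{ass:MNL-kappa}, namely $p_{s'}(\w)p_{s''}(\w) \geq \kappa$ uniformly over $\w\in\W$, together with the algebraic identity $\tfrac{1}{2}\sum_{s'\neq s''}(a_{s'}-a_{s''})^2 = U\sum_{s'}(a_{s'}-\bar a)^2$ (with $\bar a$ the uniform mean), to convert the pairwise-difference quadratic form into a lower bound involving $\sum_{s'}\psib_h^j(s')\psib_h^j(s')^\T$, and hence $\Sigmab_h^{k-1}$ (possibly absorbing a dimension-free constant into $\kappa$). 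Once this PSD lower bound is secured, assembling the bias and variance estimates and taking a union bound over $k\in[K]$ yields the claimed high-probability inequality.
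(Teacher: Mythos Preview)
Your overall plan matches the paper's proof: the same mean-value/Cauchy--Schwarz/projection reduction to $\|g_h^k(\wh_h^k)-g_h^k(\w_h^k)\|_{(\Sigmab_h^{k-1})^{-1}}$, followed by the bias telescoping (the paper's Lemma~\ref{lemma:MNL-E}) and self-normalized concentration on the reweighted multinomial residuals (Lemma~\ref{lemma:MNL-F}).

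The gap is in your derivation of the Jacobian lower bound $G_h^k\succeq\kappa\,\Sigmab_h^{k-1}$. After invoking Assumption~\ref{ass:MNL-kappa} and your pairwise identity you arrive at $\kappa\,N\sum_{s'}(a_{s'}-\bar a)^2$ with $a_{s'}=v^\T\psib_h^j(s')$, and you then assert this dominates $\kappa\sum_{s'}a_{s'}^2$ up to an absorbable constant. That comparison fails: if all $a_{s'}$ equal a common nonzero value, the centered sum vanishes while $\sum_{s'}a_{s'}^2>0$, so no dimension-free constant can bridge the two. Equivalently, when all features $\psib_h^j(s')$ happen to coincide, the MNL Hessian block is zero but $\sum_{s'}\psib_h^j(s')\psib_h^j(s')^\T$ is not. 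The paper does not attempt this algebra at all; it simply cites Lemma~5 of~\cite{NIPS'24:MNL_MDP_efficient} as a black box for $\nabla g_h^k(\w)\succeq\kappa\,\Sigmab_h^{k-1}$ and proceeds. You should either import that external lemma directly or examine its actual argument---the centered-to-uncentered step you sketch is not the mechanism by which it goes through.
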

    \noindent Then we have 
    \begin{equation*}
        \begin{split}
            {}&\abs{\mbr{\Pt_h^k V}(s,a)-\mbr{\P_h^k V}(s,a)}\\ \leq{}& \frac{H}{\kappa}\sbr{\Gamma_{\w}^{k-1} + \betab_{\w}^{k-1}\max_{s^\prime \in \S_h^k}\norm{\psi(s^\prime\given s,a)}_{\sbr{\Sigmab_{h}^{k-1}}^{-1}}},
        \end{split}
    \end{equation*}
    where $\Gamma_{\w}^{k-1}\define  L_{\psi}^2\sqrt{\frac{d}{\lamw}}\sum_{p=1}^{k-1}\sqrt{\sum_{j=1}^{p}\alpha_{k-1,j}}\norm{\w_h^p-\w_h^{p+1}}_2$, $\betab_{\w}^{k}$ is the radius of confidence region set by
    \begin{align*}
        \betab_{\w}^{k} \define{}& \sqrt{\frac{1}{2}\log \frac{1}{\delta}+\frac{d}{4}\log\left(1+\frac{U L_\psi^2\sum_{j=1}^{k}\alpha_{k,j}}{\lamw  d}\right)}\\{}&\hspace{15em}+\sqrt{\lamw}\kappa S_\w.
    \end{align*}
\end{proof}

\begin{proof}[Proof of Lemma~\ref{lemma:MNL-p-ucb}]
    Fix $h\in[H]$, based on the estimator~\eqref{eq:MNL-estimate-p}, we have 
    \begin{equation*}
        \begin{split}
            g_h^k(\wh_{h}^k) &= \lamw \kappa\wh_{h}^k+\sum_{j=1}^{k-1}\alpha_{k-1,j} \sum_{s^\prime \in \S_{h}^j} p_h^j(\psib_h^j(s^\prime)^\T \wh_{h}^k) \psib_h^j(s^\prime) \\
            &= \sum_{j=1}^{k-1}\alpha_{k-1,j} \sum_{s^\prime \in \S_{h}^j}y_{h}^{j}(s^\prime) \psib_h^j(s^\prime)\\
            &= \sum_{j=1}^{k-1}\alpha_{k-1,j} \sum_{s^\prime \in \S_{h}^j}\sbr{p_h^j(\psib_h^j(s^\prime)^\T \w_{h}^j) +\eta_h^j(s^\prime)} \psib_h^j(s^\prime),
        \end{split}
    \end{equation*}
    where  we define $\eta_h^j(s^\prime) \define y_{h}^{j}(s^\prime) - p_h^j(\psib_h^j(s^\prime)^\T \w_{h}^j)$. Then by the mean value theorem, we know that 
    \begin{equation}\label{eq:MNL-MLE}
        \begin{split}
            g_h^k(\w_1) - g_h^k(\w_2) = G_h^k(\w_1-\w_2), 
        \end{split}
    \end{equation}
    where $ G_h^k(\w_1-\w_2)\define \int_0^1 \nabla g_h^k\left(s \w_2+(1-s) \w_1\right) \mathrm{d} s \in \R^{d \times d}$. Notice that for any $\w\in\W$, the gradient of $g_h^k$ is 
    \begin{equation*}
        \begin{split}
            &\nabla g_h^k(\w) = \lamw \kappa I_d\\{}& + \sum_{j=1}^{k-1}\alpha_{k-1,j} \Bigg(\sum_{s^\prime \in \S_{h}^j} p_h^j(\psib_h^j(s^\prime)^\T \w) \psib_h^j(s^\prime)\psib_h^j(s^\prime)^\T\\{}& - \sum_{s^\prime \in \S_{h}^j} \sum_{s^{\prime\prime} \in \S_{h}^j}p_h^j(\psib_h^j(s^\prime)^\T \w) p_h^j(\psib_h^j(s^{\prime\prime})^\T \w) \psib_h^j(s^\prime)\psib_h^j(s^{\prime\prime})^\T\Bigg)
        \end{split}
    \end{equation*}
    Based on Lemma~5 of~\cite{NIPS'24:MNL_MDP_efficient}, we know that 
    \begin{equation*}
        \begin{split}
            &\nabla g_h^k(\w)\\ &\succeq \lamw \kappa I_d + \kappa\sum_{j=1}^{k-1}\alpha_{k-1,j} \sum_{s^\prime \in \S_{h}^j} \psib_h^j(s^\prime)\psib_h^j(s^\prime)^\T = \kappa \Sigmab_h^{k-1},
        \end{split}
    \end{equation*}
    which clearly implies $\forall \w_1,\w_2\in\W, G_h^k(\w_1,\w_2)\succeq \kappa \Sigmab_h^{k-1}$. By Assumption~\ref{ass:MNL-kappa}, the mean value theorem~\eqref{eq:MNL-MLE} on $g_h^k$ and the projection~\eqref{eq:MNL-projection}, we have 
    \begin{equation*}
        \begin{split}
            {}&\abs{\psi(s^\prime\given s,a)^\T \sbr{\wt_{h}^k - \w_h^k}} \\ ={}& \abs{\psi(s^\prime\given s,a)^\T \sbr{{G_{h}^k}(\wt_{h}^k, \w_h^k)}^{-1}\sbr{g_h^k(\wt_{h}^k) - g_h^k(\w_h^k)}}\\
            \leq{}& \norm{\psi(s^\prime\given s,a)}_{\sbr{{G_{h}^k}(\wt_{h}^k, \w_h^k)}^{-1}}\\{}&\quad\cdot\norm{g_h^k(\wt_{h}^k) - g_h^k(\w_h^k)}_{\sbr{{G_{h}^k}(\wt_{h}^k, \w_h^k)}^{-1}}\\
            \leq{}& \frac{1}{\kappa}\norm{\psi(s^\prime\given s,a)}_{\sbr{\Sigmab_h^{k-1}}^{-1}}\norm{g_h^k(\wt_{h}^k) - g_h^k(\w_h^k)}_{\sbr{\Sigmab_h^{k-1}}^{-1}}\\
            \leq{}& \frac{1}{\kappa}\norm{\psi(s^\prime\given s,a)}_{\sbr{\Sigmab_h^{k-1}}^{-1}}\norm{g_h^k(\wh_{h}^k) - g_h^k(\w_h^k)}_{\sbr{\Sigmab_h^{k-1}}^{-1}},
        \end{split}
    \end{equation*}  
    we further have 
    \begin{align*}
            {}&g_h^k(\wh_{h}^k) - g_h^k(\w_h^k)\\
            ={}& \sum_{j=1}^{k-1}\alpha_{k-1,j} \sum_{s^\prime \in \S_{h}^j}\sbr{p_h^j(\psib_h^j(s^\prime)^\T \w_{h}^j) +\eta_h^j(s^\prime)} \psib_h^j(s^\prime)\\{}&-\lamw \kappa\w_{h}^k-\sum_{j=1}^{k-1}\alpha_{k-1,j} \sum_{s^\prime \in \S_{h}^j}p_h^j(\psib_h^j(s^\prime)^\T \w_{h}^k) \psib_h^j(s^\prime)\\
            ={}& \underbrace{\sum_{j=1}^{k-1}\alpha_{k-1,j} \sum_{s^\prime \in \S_{h}^j}\sbr{p_h^j(\psib_h^j(s^\prime)^\T \w_{h}^j) -p_h^j(\psib_h^j(s^\prime)^\T \w_{h}^k)} \psib_h^j(s^\prime)}_{\bias}\\{}&\qquad\qquad-\underbrace{\lamw \kappa\w_{h}^k+\sum_{j=1}^{k-1}\alpha_{k-1,j} \sum_{s^\prime \in \S_{h}^j} \eta_h^j(s^\prime) \psib_h^j(s^\prime)}_{\variance}.
        \end{align*} 
    Then, by the Cauchy-Schwarz inequality, we know that for any $s\in\S, a\in \A$, 
    \begin{equation}\label{eq:MNL-decomposation}
        \begin{split}
            {}&\abs{\psi(s^\prime\given s,a)^\T \sbr{\wh_{h}^k - \w_h^k}}\\ \leq{}& \frac{1}{\kappa}\norm{\psi(s^\prime\given s,a)}_{\sbr{\Sigmab_h^{k-1}}^{-1}}\sbr{E_h^k+F_h^k},
        \end{split}
    \end{equation}  
    where 
    \begin{equation*}
        \begin{split}
            E_h^k ={}& \Bigg\|\sum_{j=1}^{k-1}\alpha_{k-1,j} \sum_{s^\prime \in \S_{h}^j}\bigg(p_h^j(\psib_h^j(s^\prime)^\T \w_{h}^j)\\{}&\qquad\qquad\qquad -p_h^j(\psib_h^j(s^\prime)^\T \w_{h}^k)\bigg) \psib_h^j(s^\prime)\Bigg\|_{\sbr{\Sigmab_{h}^{k-1}}^{-1}}\\
            F_h^k ={}& \norm{-\lamw \kappa\w_{h}^k+\sum_{j=1}^{k-1}\alpha_{k-1,j} \sum_{s^\prime \in \S_{h}^j} \eta_h^j(s^\prime) \psib_h^j(s^\prime)}_{\sbr{\Sigmab_{h}^{k-1}}^{-1}}.
        \end{split}
    \end{equation*}
    The above two terms can be bounded separately, as summarized in the following two lemmas,
    \begin{myLemma}\label{lemma:MNL-E}
        For any $k\in[K]$, we have
        \begin{equation*}
            \begin{split}
                E_h^k\leq L_{\psi}\sqrt{d}\sum_{p=1}^{k-1}\sqrt{\sum_{j=1}^{p}\alpha_{k-1,j}}\norm{\w_h^p-\w_h^{p+1}}_2.
            \end{split}
        \end{equation*}
    \end{myLemma}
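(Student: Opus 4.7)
The plan is to follow the template of the bias-part derivation in Lemma~\ref{lemma:LMDP-C}, adapted to handle the softmax nonlinearity of the multinomial logit model. First I would linearize the softmax difference via the integral mean value theorem applied to the scalar map $\w\mapsto q_h^j(\w,s')\define p_h^j(\psib_h^j(s')^\T\w)$, yielding
\begin{equation*}
    q_h^j(\w_h^j,s')-q_h^j(\w_h^k,s') = \nu_h^j(s')^\T(\w_h^j-\w_h^k), \quad \nu_h^j(s')\define\int_0^1\nabla_\w q_h^j(v\w_h^j+(1-v)\w_h^k,s')\,\mathrm{d}v.
\end{equation*}
Using the softmax identity $\nabla_\w q_h^j(\w,s')=q_h^j(\w,s')\bigl[\psib_h^j(s')-\sum_{\tilde{s}}q_h^j(\w,\tilde{s})\psib_h^j(\tilde{s})\bigr]$ and summing over $s'\in\S_h^j$ rewrites the vector inside the outer $\|\cdot\|_{(\Sigmab_h^{k-1})^{-1}}$-norm as $\bar\Omega_h^j(\w_h^j-\w_h^k)$, where $\bar\Omega_h^j$ is the integrated softmax covariance, is PSD, and is dominated by $B_h^j\define\sum_{s'\in\S_h^j}\psib_h^j(s')\psib_h^j(s')^\T$.

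Next I would extract the path length by telescoping $\w_h^j-\w_h^k=\sum_{p=j}^{k-1}(\w_h^p-\w_h^{p+1})$, swapping the order of summation over $j$ and $p$, and applying the triangle inequality to obtain
\begin{equation*}
    E_h^k \leq \sum_{p=1}^{k-1}\Bigl\|\sum_{j=1}^{p}\alpha_{k-1,j}\sum_{s'\in\S_h^j}\bigl[\nu_h^j(s')^\T(\w_h^p-\w_h^{p+1})\bigr]\psib_h^j(s')\Bigr\|_{(\Sigmab_h^{k-1})^{-1}}.
\end{equation*}
The scalar factor $|\nu_h^j(s')^\T(\w_h^p-\w_h^{p+1})|$ is controlled by an $L_\psi$-Lipschitz estimate on the softmax Jacobian afforded by Assumption~\ref{ass:MNL-bounded-norm}. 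The remaining $\{\psib_h^j(s')\}$-factor is then handled in exactly the same manner as in the proofs of Lemmas~\ref{lemma:LB-A_t-bound}, \ref{lemma:LMDP-C}, and~\ref{lemma:LMDP-r-ucb}: a Cauchy--Schwarz step separates $\sqrt{\alpha_{k-1,j}}$ from the norm terms, and the potential estimate of Lemma~\ref{lemma:d} applied to the family $\{\sqrt{\alpha_{k-1,j}}\psib_h^j(s')\}_{j\leq p,\,s'\in\S_h^j}$, which literally assembles $\Sigmab_h^{k-1}$, delivers the $\sqrt{d}$ factor.

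The hard part will be ensuring that the $s'$-summation arising from the softmax Jacobian does not introduce a spurious $\sqrt{U}$, since a naive Cauchy--Schwarz taken over $(j,s')$-pairs would produce $\sqrt{\sum_{j=1}^{p}\alpha_{k-1,j}|\S_h^j|}$ rather than the desired $\sqrt{\sum_{j=1}^{p}\alpha_{k-1,j}}$. I expect to sidestep this by exploiting the zero-mean identity $\sum_{s'\in\S_h^j}[q_h^j(\w_h^j,s')-q_h^j(\w_h^k,s')]=0$ together with the matrix-level domination $\bar\Omega_h^j\preceq B_h^j$, so that the aggregated softmax Jacobian is matched against $\Sigmab_h^{k-1}$ as a whole and the $s'$-index is absorbed into the covariance structure rather than being paid for by an additional scalar Cauchy--Schwarz step. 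Once this alignment is in place, the remaining bookkeeping is entirely parallel to~\eqref{eq:LMDP-term1-r} in the linear mixture MDP case, and the claimed bound $L_\psi\sqrt{d}\sum_{p=1}^{k-1}\sqrt{\sum_{j=1}^{p}\alpha_{k-1,j}}\|\w_h^p-\w_h^{p+1}\|_2$ drops out.
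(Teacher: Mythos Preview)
Your proposal is correct and follows essentially the same route as the paper: linearize the softmax via the mean-value theorem, telescope to expose the increments $\w_h^p-\w_h^{p+1}$, use the softmax structure to prevent the $s'$-sum from contributing a $\sqrt{U}$, and finish with Cauchy--Schwarz over $j$ together with the trace bound of Lemma~\ref{lemma:d} applied to the family $\{\sqrt{\alpha_{k-1,j}}\psib_h^j(s')\}$ that assembles $\Sigmab_h^{k-1}$. The only cosmetic difference is the order of operations: the paper telescopes the softmax values directly, writing $p_h^j(\cdot,\w_h^j)-p_h^j(\cdot,\w_h^k)=\sum_p[p_h^j(\cdot,\w_h^p)-p_h^j(\cdot,\w_h^{p+1})]$, and then linearizes each increment at a point $\bar\w_h^p$ between $\w_h^p$ and $\w_h^{p+1}$; you linearize once between $\w_h^j$ and $\w_h^k$ and then telescope the linear factor. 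Both lead to scalar coefficients (integrated softmax probabilities) that sum to $1$ over $s'$, which is exactly what lets one replace the $s'$-sum by a $\max_{s'}$ and later re-expand $\max_{s'}\|\cdot\|^2\le\sum_{s'}\|\cdot\|^2$ inside the trace bound---this is the concrete mechanism behind your ``$s'$-index is absorbed into the covariance structure'' remark, and it matches the paper's argument.
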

    \begin{myLemma}\label{lemma:MNL-F}
        If $\forall k\in[K],\forall j\in[k-1], \alpha_{k-1,j}\leq 1$, for any $\delta \in(0,1)$, with probability at least $1-\delta$, the following holds for all $k\in[K]$,
        \begin{equation*}
            \begin{split}
                F_h^k  \leq{}& \sqrt{\frac{1}{2}\log \frac{1}{\delta}+\frac{d}{4}\log\left(1+\frac{U L_\psi^2\sum_{j=1}^{k-1}\alpha_{k-1,j}}{\lamw  d}\right)}\\{}&\qquad\qquad\qquad\qquad\qquad\qquad\qquad +\sqrt{\lamw}\kappa S_\w.
            \end{split}
        \end{equation*}
    \end{myLemma}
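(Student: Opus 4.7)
The plan is to follow the template used for the variance-term bounds in Lemma~\ref{lemma:LB-B_t-bound} and Lemma~\ref{lemma:LMDP-D}, adapted to the multinomial noise structure. The first step is to split $F_h^k$ by the triangle inequality into a weighted self-normalized noise term and a regularization term. For the regularization term, $\Sigmab_h^{k-1}\succeq\lamw I_d$ together with $\norm{\w_h^k}_2\leq S_\w$ from Assumption~\ref{ass:MNL-bounded-norm} immediately gives $\norm{\lamw\kappa\w_h^k}_{\sbr{\Sigmab_h^{k-1}}^{-1}}\leq\sqrt{\lamw}\kappa S_\w$, which contributes the last summand of the claimed bound.

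The core step is to control the weighted stochastic term. Following the refined analysis framework of the paper, I would absorb the discounting into the features and noises by defining $\etat_h^j(s')\define\sqrt{\alpha_{k-1,j}}\,\eta_h^j(s')$ and $\widetilde{\psi}_h^j(s')\define\sqrt{\alpha_{k-1,j}}\,\psib_h^j(s')$, so that
\begin{equation*}
\Sigmab_h^{k-1}=\lamw I_d+\sum_{j=1}^{k-1}\sum_{s'\in\S_h^j}\widetilde{\psi}_h^j(s')\widetilde{\psi}_h^j(s')^\T,
\end{equation*}
and the weighted noise sum reduces to $\sum_{j,s'}\etat_h^j(s')\widetilde{\psi}_h^j(s')$. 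Since $\eta_h^j(s')=y_h^j(s')-p_h^j(\psib_h^j(s')^\T\w_h^j)\in[-1,1]$ is centered conditional on $\F_{j-1}$ together with the selected $(s_h^j,a_h^j)$, Lemma~\ref{lemma:Hoeffding} makes $\eta_h^j(s')$ a $\tfrac12$-sub-Gaussian variable, and the assumption $\alpha_{k-1,j}\leq 1$ preserves this property for $\etat_h^j(s')$. Applying Theorem~\ref{thm:snc-AY} with $R=\tfrac12$ then yields a deviation bounded by $\sqrt{\tfrac12\log\sbr{\det(\Sigmab_h^{k-1})^{1/2}\lamw^{-d/2}/\delta}}$.

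The final step is a routine determinant bound via AM-GM. Using $|\S_h^j|\leq U$ and $\norm{\psib_h^j(s')}_2\leq L_\psi$, we obtain $\trace(\Sigmab_h^{k-1})\leq d\lamw+UL_\psi^2\sum_{j=1}^{k-1}\alpha_{k-1,j}$, so that
\begin{equation*}
\log\sbr{\det(\Sigmab_h^{k-1})/\lamw^d}\leq d\log\sbr{1+\frac{UL_\psi^2\sum_{j=1}^{k-1}\alpha_{k-1,j}}{\lamw d}},
\end{equation*}
and plugging this back together with the regularization contribution produces exactly the bound in the lemma statement.

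The main technical obstacle lies in justifying the application of the scalar self-normalized concentration Theorem~\ref{thm:snc-AY} when the inner sum over $s'\in\S_h^j$ comes from a single multinomial observation: the indicators $\{y_h^j(s')\}_{s'\in\S_h^j}$ are coupled via $\sum_{s'}y_h^j(s')=1$, so they do not form a standard scalar martingale difference sequence under any refinement of the per-episode filtration. I would resolve this by grouping each episode's contribution $\sum_{s'\in\S_h^j}\etat_h^j(s')\widetilde{\psi}_h^j(s')$ into a single vector-valued noise step that is conditionally centered and bounded given $\F_{j-1}$, and invoking a vector version of the self-normalized bound (or equivalently, the specialized weighted-MNL concentration used in prior MNL bandit work); the rest of the derivation then goes through unchanged.
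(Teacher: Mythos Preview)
Your approach is essentially identical to the paper's: triangle inequality to isolate the regularization term $\norm{\lamw\kappa\w_h^k}_{(\Sigmab_h^{k-1})^{-1}}\leq\sqrt{\lamw}\kappa S_\w$, absorption of $\sqrt{\alpha_{k-1,j}}$ into the noise and feature, Hoeffding's lemma to certify $\tfrac12$-sub-Gaussianity, the scalar self-normalized bound of Theorem~\ref{thm:snc-AY} with $R=\tfrac12$, and the trace/AM-GM determinant bound yielding the $U L_\psi^2$ factor.

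The one place you and the paper diverge is exactly the point you flag as an obstacle. The paper does \emph{not} address the coupling among $\{y_h^j(s')\}_{s'\in\S_h^j}$: it simply defines $\etat_j=\sqrt{\alpha_{k-1,j}}\eta_j$ and $X_j=\sqrt{\alpha_{k-1,j}}\psib_h^j(s')$ (with an implicit double index over $(j,s')$) and invokes Theorem~\ref{thm:snc-AY} directly, treating each $(j,s')$ pair as one step of the martingale. So your concern is well-placed, but the paper's own proof does not supply the vector-martingale or MNL-specific concentration argument you propose; it proceeds exactly as in Lemma~\ref{lemma:LMDP-D} without further justification of the filtration structure across $s'\in\S_h^j$.
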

    Based on the inequality~\eqref{eq:MNL-decomposation}, Lemma~\ref{lemma:MNL-E}, Lemma~\ref{lemma:MNL-F}, and $\norm{\psi(s^\prime\given s,a)}_{\sbr{\Sigmab_{h}^{k-1}}^{-1}}\leq \norm{\psi(s^\prime\given s,a)}_{2}/\sqrt{\lambda_\w}\leq L_{\psi}/\sqrt{\lambda_\w}$, with probability at least $1-\delta$, the following holds for all $k\in[K]$,
    \begin{equation*}
        \begin{split}
            {}&\abs{\psi(s^\prime\given s,a)^\T \sbr{\wt_{h}^k - \w_h^k}} \\\leq {}& \frac{1}{\kappa}\sbr{\Gamma_{\w}^{k-1} + \betab_{\w}^{k-1}\norm{\psi(s^\prime\given s,a)}_{\sbr{\Sigmab_{h}^{k-1}}^{-1}}},
        \end{split}
    \end{equation*}
    where 
    \begin{equation*}
        \begin{split}
            \Gamma_{\w}^{k-1}&\define  L_{\psi}^2\sqrt{\frac{d}{\lamw}}\sum_{p=1}^{k-1}\sqrt{\sum_{j=1}^{p}\alpha_{k-1,j}}\norm{\w_h^p-\w_h^{p+1}}_2\\
            \betab_{\w}^{k-1} &\define \sqrt{\frac{1}{2}\log \frac{1}{\delta}+\frac{d}{4}\log\left(1+\frac{U L_\psi^2\sum_{j=1}^{k-1}\alpha_{k-1,j}}{\lamw  d}\right)}\\{}&\qquad\qquad\qquad\qquad\qquad\qquad\qquad +\sqrt{\lamw}\kappa S_\w,
        \end{split}
    \end{equation*}
    which completes the proof.
\end{proof}
\begin{proof}[Proof of Lemma~\ref{lemma:MNL-E}]
    The first step is to extract the variations of the parameter $\w_h^k$ as follows,
    \begin{align*}
            {}&\Bigg\|\sum_{j=1}^{k-1}\alpha_{k-1,j} \sum_{s^\prime \in \S_{h}^j}\bigg(p_h^j(\psib_h^j(s^\prime)^\T \w_{h}^j) \\
            {}&\qquad\qquad\qquad\qquad -p_h^j(\psib_h^j(s^\prime)^\T \w_{h}^k)\bigg) \psib_h^j(s^\prime)\Bigg\|_{\sbr{\Sigmab_{h}^{k-1}}^{-1}}\\
            ={}&\Bigg\|\sum_{j=1}^{k-1}\alpha_{k-1,j} \sum_{s^\prime \in \S_{h}^j}\sum_{p=j}^{k-1}\bigg(p_h^j(\psib_h^j(s^\prime)^\T \w_{h}^p) \\
            {}&\qquad\qquad\qquad\qquad -p_h^j(\psib_h^j(s^\prime)^\T \w_{h}^{p+1})\bigg) \psib_h^j(s^\prime)\Bigg\|_{\sbr{\Sigmab_{h}^{k-1}}^{-1}}\\
            ={}&\Bigg\|\sum_{p=1}^{k-1} \sum_{j=1}^{p}\alpha_{k-1,j} \sum_{s^\prime \in \S_{h}^j}\bigg(p_h^j(\psib_h^j(s^\prime)^\T \w_{h}^p) \\
            {}&\qquad\qquad\qquad\qquad -p_h^j(\psib_h^j(s^\prime)^\T \w_{h}^{p+1})\bigg) \psib_h^j(s^\prime)\Bigg\|_{\sbr{\Sigmab_{h}^{k-1}}^{-1}}.
    \end{align*}
    The gradient of $p_h^j(\psib_h^j(s^\prime)^\T \w)$ is given by 
    \begin{equation*}
        \begin{split}
            {}&\nabla p_h^j(\psib_h^j(s^\prime)^\T \w) = p_h^j(\psib_h^j(s^\prime)^\T \w)\psib_h^j(s^\prime)^\T \\{}&\qquad\qquad  - p_h^j(\psib_h^j(s^\prime)^\T \w) \sum_{s^{\prime\prime}\in\S_h^j} p_h^j(\psib_h^j(s^{\prime\prime})^\T \w)\psib_h^j(s^{\prime\prime})^\T,
        \end{split}
    \end{equation*}
    by the mean value theorem, there exist $\wb_h^p = \nu \w_{h}^p + (1-\nu) \w_{h}^{p+1}$ for some $\nu\in[0,1]$, such that
    \begin{align*}
            &p_h^j(\psib_h^j(s^\prime)^\T \w_{h}^p) -p_h^j(\psib_h^j(s^\prime)^\T \w_{h}^{p+1})\\={}&p_h^j(\psib_h^j(s^\prime)^\T \wb_{h}^p) \Bigg(\psib_h^j(s^\prime)^\T- \sum_{s^{\prime\prime}\in\S_h^j} p_h^j(\psib_h^j(s^{\prime\prime})^\T \wb_{h}^p)\psib_h^j(s^{\prime\prime})^\T\Bigg)\\& \cdot\sbr{\w_{h}^p - \w_{h}^{p+1}}\\
            \leq{}&L_\psi p_h^j(\psib_h^j(s^\prime)^\T \wb_{h}^p)\Bigg(1- \sum_{s^{\prime\prime}\in\S_h^j} p_h^j(\psib_h^j(s^{\prime\prime})^\T \wb_{h}^p)\Bigg)\\&\cdot\norm{\w_{h}^p - \w_{h}^{p+1}}_2\\
            \leq{}&L_\psi\nabla p_h^j(\psib_h^j(s^\prime)^\T \wb_{h}^p) \norm{\w_{h}^p - \w_{h}^{p+1}}_2,
    \end{align*}
    where we define $\nabla p_h^j(\psib_h^j(s^\prime)^\T \wb_{h}^p) = p_h^j(\psib_h^j(s^\prime)^\T \wb_{h}^p)  - p_h^j(\psib_h^j(s^\prime)^\T \wb_{h}^p) \sum_{s^{\prime\prime}\in\S_h^j} p_h^j(\psib_h^j(s^{\prime\prime})^\T \wb_{h}^p)$. Then we have
    \begin{align*}
            {}&\Bigg\|\sum_{p=1}^{k-1} \sum_{j=1}^{p}\alpha_{k-1,j} \sum_{s^\prime \in \S_{h}^j}\bigg(p_h^j(\psib_h^j(s^\prime)^\T \w_{h}^p)\\{}&\quad -p_h^j(\psib_h^j(s^\prime)^\T \w_{h}^{p+1})\bigg) \psib_h^j(s^\prime)\Bigg\|_{\sbr{\Sigmab_{h}^{k-1}}^{-1}}\\
            \leq{}&L_\psi\sum_{p=1}^{k-1}\norm{ \sum_{j=1}^{p}\alpha_{k-1,j} \sum_{s^\prime \in \S_{h}^j}\nabla p_h^j(\psib_h^j(s^\prime)^\T \wb_{h}^p) \psib_h^j(s^\prime)}_{\sbr{\Sigmab_{h}^{k-1}}^{-1}}\\{}& \quad \cdot\norm{\w_{h}^p - \w_{h}^{p+1}}_2\\
            \leq{}&L_\psi\sum_{p=1}^{k-1} \sum_{j=1}^{p}\alpha_{k-1,j} \abs{\sum_{s^\prime \in \S_{h}^j}\nabla p_h^j(\psib_h^j(s^\prime)^\T \wb_{h}^p)}\\{}&\quad \cdot\norm{\max_{s^\prime \in \S_h^j}\psib_h^j(s^\prime)}_{\sbr{\Sigmab_{h}^{k-1}}^{-1}}\norm{\w_{h}^p - \w_{h}^{p+1}}_2\\
            \leq{}&L_\psi\sum_{p=1}^{k-1} \sum_{j=1}^{p}\alpha_{k-1,j} \norm{\max_{s^\prime \in \S_h^j}\psib_h^j(s^\prime)}_{\sbr{\Sigmab_{h}^{k-1}}^{-1}}\norm{\w_{h}^p - \w_{h}^{p+1}}_2,
    \end{align*}
    where the last inequality comes from the fact that $\abs{\sum_{s^\prime \in \S_{h}^j}\nabla p_h^j(\psib_h^j(s^\prime)^\T \wb_{h}^p)}\leq 1$. Further, for the term $\sum_{j=1}^{p}\alpha_{k-1,j} \norm{\max_{s^\prime \in \S_h^j}\psib_h^j(s^\prime)}_{\sbr{\Sigmab_{h}^{k-1}}^{-1}}$ can be able to further derive an expression about weight $\alpha_{k-1,j}$ as follows,
    \begin{align*}
            {}&\sum_{j=1}^{p}\alpha_{k-1,j} \norm{\max_{s^\prime \in \S_h^j}\psib_h^j(s^\prime)}_{\sbr{\Sigmab_{h}^{k-1}}^{-1}}\\ \leq{}& \sqrt{\sum_{j=1}^{p}\alpha_{k-1,j}}\sqrt{\sum_{j=1}^{p}\alpha_{k-1,j}\norm{\max_{s^\prime \in \S_h^j}\psib_h^j(s^\prime)}_{\sbr{\Sigmab_{h}^{k-1}}^{-1}}^2}\\
            \leq{}& \sqrt{\sum_{j=1}^{p}\alpha_{k-1,j}}\sqrt{\sum_{j=1}^{p}\sum_{s^\prime\in\S_h^j}\alpha_{k-1,j}\norm{\psib_h^j(s^\prime)}_{\sbr{\Sigmab_{h}^{k-1}}^{-1}}^2}\\
            \leq{}& \sqrt{d}\sqrt{\sum_{j=1}^{p}\alpha_{k-1,j}}.
    \end{align*}
    In above, the second last step holds by the Cauchy-Schwarz inequality. Besides, the last step follows that
    \begin{align*}
        {} &\sum_{j=1}^{p}\sum_{s^\prime\in\S_h^j}\alpha_{k-1,j}\norm{\psib_h^j(s^\prime)}_{\sbr{\Sigmab_{h}^{k-1}}^{-1}}^2 \\
         {} &\sum_{j=1}^{p}\sum_{s^\prime\in\S_h^j} \mathrm{Tr}(\alpha_{k-1,j}\psib_h^j(s^\prime)^\T \sbr{\Sigmab_{h}^{k-1}}^{-1} \psib_h^j(s^\prime))\\
       = {}& \mathrm{Tr}\sbr{\sbr{\Sigmab_{h}^{k-1}}^{-1} \sum_{j=1}^{p}\sum_{s^\prime\in\S_h^j}\alpha_{k-1,j} \psib_h^j(s^\prime)\psib_h^j(s^\prime)^\T} \\
       \leq {} & \mathrm{Tr}\sbr{\sbr{\Sigmab_{h}^{k-1}}^{-1} \sum_{j=1}^{k-1}\sum_{s^\prime\in\S_h^j}\alpha_{k-1,j} \psib_h^j(s^\prime)\psib_h^j(s^\prime)^\T}\\{}& + \mathrm{Tr}\sbr{U_{t-1}^{-1} \lambda \sum_{i=1}^{d} \mathbf{e}_i \mathbf{e}_i^{\T}} \\
       = {} & \mathrm{Tr}(I_d) = d.
     \end{align*} 
    Hence we complete the proof.
\end{proof}

\begin{proof}[Proof of Lemma~\ref{lemma:MNL-F}]
    \begin{align*}
            {}&\norm{-\lamw \kappa\w_{h}^k+\sum_{j=1}^{k-1}\alpha_{k-1,j} \sum_{s^\prime \in \S_{h}^j} \eta_h^j(s^\prime) \psib_h^j(s^\prime)}_{\sbr{\Sigmab_{h}^{k-1}}^{-1}}\\ \leq{}& \norm{\sum_{j=1}^{k-1}\alpha_{k-1,j} \sum_{s^\prime \in \S_{h}^j} \eta_h^j(s^\prime) \psib_h^j(s^\prime)}_{\sbr{\Sigmab_{h}^{k-1}}^{-1}}+ \norm{\lamw \kappa\w_h^k}_{\sbr{\Sigmab_{h}^{k-1}}^{-1}}\\
            \leq{}& \norm{\sum_{j=1}^{k-1}\alpha_{k-1,j} \sum_{s^\prime \in \S_{h}^j} \eta_h^j(s^\prime) \psib_h^j(s^\prime)}_{\sbr{\Sigmab_{h}^{k-1}}^{-1}}+ \sqrt{\lamw}\kappa S_\w.
    \end{align*}
     we define $\etat_j = \sqrt{\alpha_{k-1,j}}\eta_j$ and $X_j = \sqrt{\alpha_{k-1,j}}\psib_h^j(s^\prime)$, then we have 
    notice that since the reward $r\in [0,1]$, and $\alpha_{k-1,j} \leq 1$, the noise $\etat_j$ is bounded by:
    \begin{equation*}
        \begin{split}
            \etat_j = \sqrt{\alpha_{k-1,j}}\sbr{y_{h}^{j}(s^\prime) - p_h^j(\psib_h^j(s^\prime)^\T \w_{h}^j)} \leq 1,
        \end{split}
    \end{equation*}
    based on Lemma~\ref{lemma:Hoeffding}, we find that the noise $\etat_j$ is $\frac{1}{2}$-sub-Gaussian. Then, by Theorem~\ref{thm:snc-AY}, we have with probability at least $1-\delta$, the following holds for all $k\in[K]$.
    \begin{equation*}
        \begin{split}
            \norm{\sum_{j=1}^{k-1}\etat_j X_j}_{\sbr{\Sigmab_{h}^{k-1}}^{-1}} \leq \sqrt{\frac{1}{2}\log\left(\frac{\det(\Sigmab_{h}^{k-1})^{\frac{1}{2}}\det(\Sigmab_{h}^0)^{-\frac{1}{2}}}{\delta}\right)}
        \end{split}
    \end{equation*}
    where
    \begin{equation*}
        \begin{split}
            \det(\Sigmab_{h}^{k-1}) &\leq \sbr{\frac{\trace(\Sigmab_{h}^{k-1})}{d}}^d\\ &= \sbr{\frac{d\lamw + \sum_{j=1}^{k-1}\sum_{s^\prime \in \S_{h}^j}\alpha_{k-1,j}\norm{\psib_h^j(s^\prime)}_{2}^2}{d}}^d\\ &=\sbr{\frac{d\lamw+U L_\psi^2\sum_{j=1}^{k-1}\alpha_{k-1,j}}{d}}^d\\
            \det(\Sigmab_{h}^0) &\leq (\lamw)^d,
        \end{split}
    \end{equation*}
    so we have 
    \begin{equation*}
        \begin{split}
            {}&\norm{\sum_{j=1}^{k-1}\alpha_{k-1,j}\eta_j \psib_{h+1}^j\sbr{s_h^j, a_h^j}}_{\sbr{\Sigmab_{h}^{k-1}}^{-1}}\\\leq{}& \sqrt{\frac{1}{2}\log \frac{1}{\delta}+\frac{d}{4}\log\left(1+\frac{U L_\psi^2\sum_{j=1}^{k-1}\alpha_{k-1,j}}{\lamw  d}\right)}.
        \end{split}
    \end{equation*}
    which completes the proof.
\end{proof}

\subsection{Proof of Theorem~\ref{thm:MNL-regret-bound}}
\label{app:MNL-thm-regret-bound}
\begin{proof}
To prove the theorem, we first introduce the following lemma
    \begin{myLemma}
        \label{lemma:MNL-model-prediction-error}
We define the model prediction error as 
\begin{align}\label{eq:MNL-model-prediction-error}
    E_h^k(s,a) = r_h^k(s,a) + \P_h^k\Vb_{h+1}^k(s,a) - \Qb_h^k(s,a),
\end{align}
then with probability at least $1-2\delta$, the following holds for all $k\in[K]$, $h\in[H]$ and $\forall s \in\S,a\in\A$,
\begin{align*}
    {}&-\Gamma_{h,\theta}^{k-1}-\frac{H}{\kappa}\Gamma_{h,\w}^{k-1} - 2\beta_\theta\norm{\phi(s,a)}_{\sbr{\Lambda_{h}^{k-1}}^{-1}}\\{}& - 2\frac{H}{\kappa}\betab_{\w}^{k-1}\max_{s^\prime \in \S_h^k}\norm{\psi(s^\prime\given s,a)}_{\sbr{\Sigmab_{h}^{k-1}}^{-1}}\leq E_h^k(s,a) \\{}&\leq \Gamma_{h,\theta}^{k-1}+\frac{H}{\kappa}\Gamma_{h,\w}^{k-1}.
\end{align*}
\end{myLemma}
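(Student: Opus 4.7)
My plan is to mirror the argument used for Lemma~\ref{lemma:LMDP-model-prediction-error} in the linear mixture case, substituting the MNL transition UCB (Lemma~\ref{lemma:MNL-v-UCB}) for Lemma~\ref{lemma:LMDP-v-ucb} and handling the extra $\min\{H,\cdot\}$ clipping that appears in the definition of $\Qb_h^k$ in~\eqref{eq:MNL-action-selection}. Using $r_h^k(s,a)=\phi(s,a)^{\T}\theta_h^k$ and the identity $[\P_h^k\Vb_{h+1}^k](s,a)-[\Pt_h^k\Vb_{h+1}^k](s,a)$, the model prediction error decomposes cleanly into a reward-estimation piece controlled by Lemma~\ref{lemma:LMDP-r-ucb} and a transition-estimation piece controlled by Lemma~\ref{lemma:MNL-v-UCB}.

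For the upper bound on $E_h^k(s,a)$, I consider two cases. If the minimum in~\eqref{eq:MNL-action-selection} is attained by $H$, then since $r_h^k(s,a)\in[0,1]$ and $\Vb_{h+1}^k\le H$ implies $r_h^k(s,a)+[\P_h^k\Vb_{h+1}^k](s,a)\le H$, we immediately get $E_h^k(s,a)\le 0$, which is trivially dominated by $\Gamma_{h,\theta}^{k-1}+\frac{H}{\kappa}\Gamma_{h,\w}^{k-1}$. If the minimum is attained by the explicit expression, then
\begin{align*}
E_h^k(s,a)={}&\phi(s,a)^{\T}(\theta_h^k-\th_h^k)+\bigl([\P_h^k\Vb_{h+1}^k]-[\Pt_h^k\Vb_{h+1}^k]\bigr)(s,a)\\
{}&-\beta_\theta\norm{\phi(s,a)}_{(\Lambda_h^{k-1})^{-1}}-\tfrac{H}{\kappa}\betab_\w^{k-1}\max_{s'\in\S_h^k}\norm{\psib_h^k(s')}_{(\Sigmab_h^{k-1})^{-1}},
\end{align*}
and Lemmas~\ref{lemma:LMDP-r-ucb} and~\ref{lemma:MNL-v-UCB} cancel the two bonus terms, leaving exactly $\Gamma_{h,\theta}^{k-1}+\tfrac{H}{\kappa}\Gamma_{h,\w}^{k-1}$.

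For the lower bound, I bound $-E_h^k(s,a)=\Qb_h^k(s,a)-r_h^k(s,a)-[\P_h^k\Vb_{h+1}^k](s,a)$ from above. When the clip is inactive, the same decomposition as above yields the two estimation-error terms plus the two bonuses, and Lemmas~\ref{lemma:LMDP-r-ucb} and~\ref{lemma:MNL-v-UCB} give the stated bound with the factor of $2$ on the bonuses. The slightly delicate case is when $\Qb_h^k(s,a)=H$; here I use that $H$ is the minimum, so it is upper bounded by the explicit UCB expression, and then the same estimation-error argument applies. A union bound over the event of Lemma~\ref{lemma:MNL-v-UCB} (which fails with probability at most $\delta$) and the deterministic bound of Lemma~\ref{lemma:LMDP-r-ucb} across all $k\in[K],h\in[H]$ yields the $1-2\delta$ probability guarantee.

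The main obstacle I anticipate is the bookkeeping around the $\min\{H,\cdot\}$ operator in both bounds, and ensuring the MNL-specific factor $\tfrac{H}{\kappa}$ propagates consistently from Lemma~\ref{lemma:MNL-v-UCB} (which differs from its linear-mixture counterpart by this factor, arising from the MNL link function's curvature). Once these are tracked carefully, the argument is otherwise a direct analogue of the linear mixture case.
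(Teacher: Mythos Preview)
Your proposal is correct and follows essentially the same approach as the paper: decompose $E_h^k(s,a)$ into the reward-estimation error $\phi(s,a)^\T(\theta_h^k-\th_h^k)$ and the transition-estimation error $[\P_h^k\Vb_{h+1}^k](s,a)-[\Pt_h^k\Vb_{h+1}^k](s,a)$, then apply Lemma~\ref{lemma:LMDP-r-ucb} and Lemma~\ref{lemma:MNL-v-UCB} to cancel or double the bonus terms. In fact, you are slightly more careful than the paper's own proof, which silently ignores the $\min\{H,\cdot\}$ clipping in the definition of $\Qb_h^k$; your case analysis handling the clip (using $r_h^k+[\P_h^k\Vb_{h+1}^k]\le H$ for the upper bound and $\Qb_h^k=H\le$ unclipped expression for the lower bound) is the right way to make this rigorous.
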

And notice that $\forall s\in\S, a\in\A$, $\abs{E_h^k(s,a)}\leq 2H$, we have
    \begin{equation*}
        \begin{split}
            {}&\E_{\pi_{*,h}^k}\mbr{E_h^k(s_h^k,a_h^k)}-E_h^k(s_h^k,a_h^k)\\
            \leq{}& \min\Bigg\{4H,2\Gamma_{h,\theta}^{k-1}+2\frac{H}{\kappa}\Gamma_{h,\w}^{k-1} + 2\beta_\theta\norm{\phi(s_h^k,a_h^k)}_{\sbr{\Lambda_{h}^{k-1}}^{-1}}\\{}&\quad + 2\frac{H}{\kappa}\betab_{\w}^{k-1}\max_{s^\prime \in \S_h^k}\norm{\psi(s^\prime\given s_h^k,a_h^k)}_{\sbr{\Sigmab_{h}^{k-1}}^{-1}}\Bigg\}\\
            \leq{}& 2\Gamma_{h,\theta}^{k-1}+2\frac{H}{\kappa}\Gamma_{h,\w}^{k-1} +\min\bbr{4H, 2\beta_\theta\norm{\phi(s_h^k,a_h^k)}_{\sbr{\Lambda_{h}^{k-1}}^{-1}}}\\{}&\quad +\min\bbr{4H, 2\frac{H}{\kappa}\betab_{\w}^{k-1}\max_{s^\prime \in \S_h^k}\norm{\psi(s^\prime\given s_h^k,a_h^k)}_{\sbr{\Sigmab_{h}^{k-1}}^{-1}}}\\
            \leq{}&  2\Gamma_{h,\theta}^{k-1}+2\frac{H}{\kappa}\Gamma_{h,\w}^{k-1} +4H\beta_\theta\min\bbr{1, \norm{\phi(s_h^k,a_h^k)}_{\sbr{\Lambda_{h}^{k-1}}^{-1}}}\\{}&\quad +4\frac{H}{\kappa}\betab_{\w}^{k-1}\min\bbr{1, \max_{s^\prime \in \S_h^k}\norm{\psi(s^\prime\given s_h^k,a_h^k)}_{\sbr{\Sigmab_{h}^{k-1}}^{-1}}},
    \end{split}
    \end{equation*}
    By Lemma~\ref{lemma:LMDP-regret-decomposition}, we can further connect the dynamic regret to the model prediction error, we have with probability at least $1-4\delta$,
    \begin{equation*}
        \begin{split}
            {}&\DReg_T\leq \underbrace{4H\beta_\theta\sum_{k=1}^K\sum_{h=1}^{H}\min\bbr{1, \norm{\phi(s_h^k,a_h^k)}_{\sbr{\Lambda_{h}^{k-1}}^{-1}}} }_{\variance \mathtt{1}}\\{}&+\underbrace{4\frac{H}{\kappa}\sum_{k=1}^K\sum_{h=1}^{H}\betab_{\w}^{k-1}\min\bbr{1,\max_{s^\prime \in \S_h^k}\norm{\psi(s^\prime\given s_h^k,a_h^k)}_{\sbr{\Sigmab_{h}^{k-1}}^{-1}}}}_{\variance \mathtt{2}}\\
            {}& + \underbrace{2\sum_{k=1}^K\sum_{h=1}^{H}\Gamma_{h,\theta}^{k-1}+2\frac{H}{\kappa}\sum_{k=1}^K\sum_{h=1}^{H}\Gamma_{h,\w}^{k-1}}_{\bias} + 4H\sqrt{2T\log(1/\delta)}.
    \end{split}
    \end{equation*}
    \parag{Bias.} Now we set $w_{k, j}=\gamma^{k-j}, \gamma\in(0,1)$, same as Eq~\eqref{eq:LMDP-regret-bias-theta}, we have 
    \begin{align*}
            2\sum_{k=1}^K\sum_{h=1}^{H}\Gamma_{h,\theta}^{k-1} \leq 4 L_{\phi}^2\sqrt{\frac{d}{\lamt}}\frac{1}{(1-\gamma)^{3 / 2}} \sum_{p=1}^{K-1} \sum_{h=1}^{H}\norm{\theta_h^p-\theta_h^{p+1}}_2,
    \end{align*}
    then we set $\alpha_{k, j}=\gamma^{k-j}, \gamma\in(0,1)$, similar to Eq~\eqref{eq:LMDP-regret-bias-w}, we have
    \begin{equation*}
        \begin{split}
            {}&2\frac{H}{\kappa}\sum_{k=1}^K\sum_{h=1}^{H}\Gamma_{h,\w}^{k-1}\\ \leq{}& 4\frac{H}{\kappa}L_{\psi}^2\sqrt{\frac{d}{\lamw}}\frac{1}{(1-\gamma)^{3 / 2}}\sum_{p=1}^{K-1} \sum_{h=1}^{H}\norm{\w_h^p-\w_h^{p+1}}_2,
    \end{split}
    \end{equation*}
    \parag{Variance.~}
    Same as Eq~\eqref{eq:LMDP-potential-theta}, we have 
\begin{equation*}
    \begin{split}
        {}&4H\beta_\theta\sum_{k=1}^K\sum_{h=1}^{H}\min\bbr{1, \norm{\phi(s_h^k,a_h^k)}_{\sbr{\Lambda_{h}^{k-1}}^{-1}}} \\ \leq{}& 4H\beta_\theta\sqrt{KH}\sqrt{H2d\sbr{K \log\frac{1}{\gamma}+\log\sbr{1+\frac{L_\phi^2 }{\lamt d(1-\gamma)}}}}.
\end{split}
\end{equation*}
For the second term, 
\begin{equation*}
    \begin{split}
        {}&4\frac{H}{\kappa}\sum_{k=1}^K\sum_{h=1}^{H}\betab_{\w}^{k-1}\min\bbr{1,\max_{s^\prime \in \S_h^k}\norm{\psi(s^\prime\given s_h^k,a_h^k)}_{\sbr{\Sigmab_{h}^{k-1}}^{-1}}}\\ \leq{}& 4\frac{H}{\kappa} \betab_{\w}^{K}\sum_{k=1}^K\sum_{h=1}^{H} \min\bbr{1,\max_{s^\prime \in \S_h^k}\norm{\psi(s^\prime\given s_h^k,a_h^k)}_{\sbr{\Sigmab_{h}^{k-1}}^{-1}}}\\
        \leq{}&  4\frac{H^2}{\kappa} \betab_{\w}^{K}\sqrt{K}\sqrt{\sum_{k=1}^K\min\bbr{1,\max_{s^\prime \in \S_h^k}\norm{\psi(s^\prime\given s_h^k,a_h^k)}_{\sbr{\Sigmab_{h}^{k-1}}^{-1}}^2}}\\
        \leq{}&  4\frac{H^2}{\kappa} \betab_{\w}^{K}\sqrt{K}\sqrt{\sum_{k=1}^K\min\bbr{1,\sum_{s^\prime \in \S_h^k}\norm{\psi(s^\prime\given s_h^k,a_h^k)}_{\sbr{\Sigmab_{h}^{k-1}}^{-1}}^2}}.
\end{split}
\end{equation*}
Based on the Lemma~\ref{lemma:potential-lemma} (Potential Lemma), we know that $\forall h\in[H]$, we have 
\begin{equation*}
    \begin{split}
        {}&\sum_{k=1}^K \min\bbr{1,\max_{s^\prime \in \S_h^k}\norm{\psi(s^\prime\given s_h^k,a_h^k)}_{\sbr{\Sigmab_{h}^{k-1}}^{-1}}^2} \\ \leq{}& 2d\sbr{K \log\frac{1}{\gamma}+\log\sbr{1+\frac{UL_\psi^2 }{\lamw d(1-\gamma)}}}, 
\end{split}
\end{equation*}
so we have
\begin{equation*}
    \begin{split}
        {}&4\frac{H}{\kappa}\sum_{k=1}^K\sum_{h=1}^{H}\betab_{\w}^{k-1}\min\bbr{1,\max_{s^\prime \in \S_h^k}\norm{\psi(s^\prime\given s_h^k,a_h^k)}_{\sbr{\Sigmab_{h}^{k-1}}^{-1}}}\\ \leq{}& 4\frac{H^2}{\kappa}\betab_{\w}^{K}\sqrt{K}\sqrt{2d\sbr{K \log\frac{1}{\gamma}+\log\sbr{1+\frac{UL_\psi^2 }{\lamw d(1-\gamma)}}}}.
\end{split}
\end{equation*}
Since there is a term $HK \sqrt{\log (1 / \gamma)}$ in the regret bound, we cannot let $\gamma$ close to 0 , so we set $\gamma \geq 1 / K$ and have $\log (1 / \gamma) \leq C(1-\gamma)$, where $C=\log K /(1-1 / K)$. We set $\lambda_\theta = d$, and $\lambda_\w = d$. Combining the upper bounds of the bias and variance parts and with confidence level $\delta=1 /(4 T)$, by union bound we have the following dynamic regret bound with probability at least $1-1 / T$,
\begin{equation*}
    \begin{split}
        &\DReg_T\leq \O\Bigg(\frac{1}{(1-\gamma)^{3 / 2}} P_T^\theta + \frac{H}{\kappa}\frac{1}{(1-\gamma)^{3 / 2}}P_T^\w\\{}&\hspace{3em} + dH^2K\sqrt{1-\gamma}+\frac{dH^2K}{\kappa}\sqrt{1-\gamma} + H^{3/2}d\sqrt{HK}\Bigg)\\
        &\leq{} \O\sbr{\frac{Hd}{\kappa}\sbr{\frac{1}{(1-\gamma)^{3 / 2}} \Delta + HK\sqrt{1-\gamma}}+ H^{3/2}d\sqrt{HK}}.
\end{split}
\end{equation*}
Furthermore, by setting the discounted factor optimally as $\gamma = 1-\max\bbr{1/K,\sqrt{\Delta/T}}$, we have
\begin{equation*}
    \DReg_T \leq \begin{cases}\Ot\sbr{\kappa^{-1}Hd \Delta^{1/4}T^{3/4}} & \text { when } \Delta \geq H / K, \\ \Ot\sbr{\kappa^{-1}dH^{3/2}\sqrt{T}}& \text { when } \Delta<H / K .\end{cases}
    \end{equation*}
\end{proof}

\begin{proof}[Proof of Lemma~\ref{lemma:MNL-model-prediction-error}]
    We first consider the upper bound of $E_h^k$, based on the definition of $\Qb_h^k$~\eqref{eq:LMDP-optimistic-Q} and model assumption~\eqref{eq:LMDP-model} and Eq.~\eqref{eq:LMDP-linear-transtion}, we have $\forall a\in\A,s\in\S$, 
    \begin{equation*}
        \begin{split}
            {}& r_h^k(s, a)+\mbr{\P_h^k \Vb_{h+1}^{k}}(s,a) - \Qb_h^{k}(s,a)\\
            ={}& r_h^k(s, a)+\mbr{\P_h^k \Vb_{h+1}^{k}}(s,a) - \phi(s,a)^\T\th_{h}^k\\{}&\quad-\beta_\theta\norm{\phi(s,a)}_{\sbr{\Lambda_{h}^{k-1}}^{-1}} - [\Pt_h^k\Vb_{h+1}^k](s,a)\\{}&\quad-\frac{H}{\kappa}\betab_{\w}^{k-1}\max_{s^\prime \in \S_h^k}\norm{\psi(s^\prime\given s,a)}_{\sbr{\Sigmab_{h}^{k-1}}^{-1}}\\
            ={}&\phi(s,a)^\T\sbr{\theta_{h}^k - \th_{h}^k} -\beta_\theta\norm{\phi(s,a)}_{\sbr{\Lambda_{h}^{k-1}}^{-1}} \\{}&\quad+ \sbr{\mbr{\P_h^k \Vb_{h+1}^k}(s,a) - \mbr{\Pt_h^k \Vb_{h+1}^k}(s,a)}\\{}&\quad-\frac{H}{\kappa}\betab_{\w}^{k-1}\max_{s^\prime \in \S_h^k}\norm{\psi(s^\prime\given s,a)}_{\sbr{\Sigmab_{h}^{k-1}}^{-1}}\\
            \leq{}&\Gamma_{h,\theta}^{k-1}+\frac{H}{\kappa}\Gamma_{h,\w}^{k-1},
        \end{split}
    \end{equation*}
    where the last inequality comes from Lemma~\ref{lemma:LMDP-r-ucb} and Lemma~\ref{lemma:MNL-v-UCB}. Similarly, we can get the lower bound of $E_h^k$, $\forall a\in\A,s\in\S$, 
    \begin{equation*}
        \begin{split}
            {}& \Qb_h^{k}(s,a) - r_h^k(s, a)-\mbr{\P_h^k \Vb_{h+1}^{k}}(s,a)\\
            ={}&   \phi(s,a)^\T\th_{h}^k+\beta_\theta\norm{\phi(s,a)}_{\sbr{\Lambda_{h}^{k-1}}^{-1}} + [\Pt_h^k\Vb_{h+1}^k](s,a)\\{}&\quad+\frac{H}{\kappa}\betab_{\w}^{k-1}\max_{s^\prime \in \S_h^k}\norm{\psi(s^\prime\given s,a)}_{\sbr{\Sigmab_{h}^{k-1}}^{-1}}\\{}&\quad - r_h^k(s, a)-\mbr{\P_h^k \Vb_{h+1}^{k}}(s,a) \\
            ={}&\phi(s,a)^\T\sbr{\th_{h}^k - \theta_{h}^k} + \sbr{\mbr{\Pt_h^k \Vb_{h+1}^k}(s,a)-\mbr{\P_h^k \Vb_{h+1}^k}(s,a)}\\{}&\quad+\beta_\theta\norm{\phi(s,a)}_{\sbr{\Lambda_{h}^{k-1}}^{-1}}\\{}&\quad + \frac{H}{\kappa}\betab_{\w}^{k-1}\max_{s^\prime \in \S_h^k}\norm{\psi(s^\prime\given s,a)}_{\sbr{\Sigmab_{h}^{k-1}}^{-1}}\\
            \leq{}&\Gamma_{h,\theta}^{k-1}+\frac{H}{\kappa}\Gamma_{h,\w}^{k-1} + 2\beta_\theta\norm{\phi(s,a)}_{\sbr{\Lambda_{h}^{k-1}}^{-1}}\\{}&\quad+ 2\frac{H}{\kappa}\betab_{\w}^{k-1}\max_{s^\prime \in \S_h^k}\norm{\psi(s^\prime\given s,a)}_{\sbr{\Sigmab_{h}^{k-1}}^{-1}},
        \end{split}
    \end{equation*}
    thus completes the proof.
\end{proof}
\fi
\section{Technical Lemmas}
\label{sec:thechnical-lemmas}
In this section, we provide several useful lemmas, mainly about concentrations\iffullversion, and some derivatives of self-concordant property\fi.

\subsection{Concentration inequalities}

\begin{myLemma}[Hoeffding's Lemma]
  \label{lemma:Hoeffding}
  Let $Z$ be a real random variable such that $Z\in[a,b]$ almost surely. Then
  \begin{equation*}
      \forall \lambda \in \R \quad \E\mbr{e^{\lambda (Z-\E[Z])}} \leq \exp \left(\frac{\lambda^2 (b-a)^2}{8}\right),
  \end{equation*}
  or variable $(Z-\E[Z])$ is $\frac{(b-a)}{2}$-sub-Gaussian.
\end{myLemma}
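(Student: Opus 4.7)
The plan is the classical three-step route: center the variable, linearly dominate the exponential on the bounded interval using convexity, and then bound the resulting cumulant generating function by a quadratic in $\lambda$.

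First, I would reduce to the centered case. Set $Y = Z - \E[Z]$ so that $\E[Y] = 0$ and $Y \in [a - \E[Z], b - \E[Z]] \define [\alpha, \beta]$ almost surely, with $\beta - \alpha = b - a$. It suffices to show $\E[e^{\lambda Y}] \leq \exp(\lambda^2(\beta - \alpha)^2 / 8)$ for all $\lambda \in \R$. Next, by convexity of the map $y \mapsto e^{\lambda y}$ on the segment $[\alpha, \beta]$, any point $y$ in this interval can be written as the convex combination $y = \frac{\beta - y}{\beta - \alpha}\alpha + \frac{y - \alpha}{\beta - \alpha}\beta$, which yields the pointwise bound
\begin{equation*}
    e^{\lambda y} \leq \frac{\beta - y}{\beta - \alpha} e^{\lambda \alpha} + \frac{y - \alpha}{\beta - \alpha} e^{\lambda \beta}.
\end{equation*}
Taking expectations and using $\E[Y] = 0$ eliminates the $y$-dependence and leaves a deterministic quantity depending only on $\alpha$, $\beta$, and $\lambda$.

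The core analytical step is then to bound this deterministic quantity. Writing $p = -\alpha / (\beta - \alpha) \in [0,1]$ and $h = \lambda(\beta - \alpha)$, a short algebraic manipulation brings the right-hand side into the form $\exp(\phi(h))$ with $\phi(h) \define -ph + \log(1 - p + p e^h)$. I would then verify that $\phi(0) = 0$, $\phi'(0) = 0$, and a direct computation gives
\begin{equation*}
    \phi''(h) = \frac{p(1-p)e^h}{(1 - p + p e^h)^2} = u(1-u) \leq \tfrac{1}{4},
\end{equation*}
where $u = p e^h / (1 - p + p e^h) \in [0,1]$; the final inequality is AM-GM. The main (and essentially only) obstacle is recognizing this variance-of-a-Bernoulli structure for $\phi''$, since the uniform bound $\tfrac{1}{4}$ is precisely what delivers the $(b-a)^2/8$ constant rather than a weaker one.

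Finally, Taylor's theorem with integral remainder gives $\phi(h) \leq h^2/8$, hence $\E[e^{\lambda Y}] \leq \exp(\lambda^2 (\beta - \alpha)^2 / 8) = \exp(\lambda^2 (b-a)^2 / 8)$ for every $\lambda \in \R$, which is exactly the sub-Gaussian moment generating function bound claimed.
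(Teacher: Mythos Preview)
Your proof is correct and follows the classical route to Hoeffding's Lemma: center, apply convexity of the exponential on the bounded interval, take expectations, and bound the resulting cumulant generating function via the Bernoulli-variance trick $\phi''(h)=u(1-u)\le 1/4$. The paper itself does not supply a proof of this lemma; it is stated as a standard technical fact in the appendix and used only to certify that bounded noise is sub-Gaussian. So there is nothing to compare against---your argument is exactly the textbook derivation one would expect, and it is complete as written.
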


\begin{myLemma}[Azuma-Hoeffding inequality]
  \label{lemma:Azuma}
  Let $M>0$ be a constant. Let $\bbr{x_i}_{i=1}^n$ be a martingale difference sequence with respect to a filtration $\bbr{\mathcal{G}_i}_i$ $(\E\mbr{x_i \mid \mathcal{G}_i}=0$ a.s. and $x_i$ is $\mathcal{G}_{i+1}$-measurable$)$ such that for all $i \in[n],\left|x_i\right| \leq M$ holds almost surely. Then, for any $0<\delta<1$, with probability at least $1-\delta$, we have
  $$
  \sum_{i=1}^n x_i \leq M \sqrt{2 n \log (1 / \delta)}.
  $$
\end{myLemma}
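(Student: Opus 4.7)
The plan is to follow the classical Chernoff-type argument for martingale concentration, leveraging Hoeffding's Lemma (Lemma~\ref{lemma:Hoeffding}) which is stated immediately before. First, fix any $\lambda > 0$ and consider the moment generating function of the partial sum $S_n = \sum_{i=1}^n x_i$. By Markov's inequality applied to $\exp(\lambda S_n)$, for any threshold $t > 0$,
\begin{equation*}
    \Pr\sbr{S_n \geq t} \leq e^{-\lambda t}\, \E\mbr{e^{\lambda S_n}}.
\end{equation*}

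The key step is to bound $\E[e^{\lambda S_n}]$ by iterated conditioning on the filtration $\{\mathcal{G}_i\}$. Writing $e^{\lambda S_n} = e^{\lambda S_{n-1}} \cdot e^{\lambda x_n}$ and using that $S_{n-1}$ is $\mathcal{G}_n$-measurable, the tower property gives
\begin{equation*}
    \E\mbr{e^{\lambda S_n}} = \E\mbr{e^{\lambda S_{n-1}}\, \E\mbr{e^{\lambda x_n} \givenn \mathcal{G}_n}}.
\end{equation*}
Since $\E[x_n \mid \mathcal{G}_n] = 0$ and $x_n \in [-M, M]$ almost surely, Hoeffding's Lemma (with $a = -M$, $b = M$) yields $\E[e^{\lambda x_n} \mid \mathcal{G}_n] \leq \exp(\lambda^2 M^2 / 2)$. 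Iterating this bound $n$ times, we obtain $\E[e^{\lambda S_n}] \leq \exp(n\lambda^2 M^2 / 2)$, and thus
\begin{equation*}
    \Pr\sbr{S_n \geq t} \leq \exp\sbr{-\lambda t + \tfrac{1}{2} n \lambda^2 M^2}.
\end{equation*}

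The final step is to optimize the exponent over $\lambda > 0$. Setting $\lambda = t / (n M^2)$ minimizes the right-hand side and gives $\Pr[S_n \geq t] \leq \exp(-t^2 / (2 n M^2))$. Equating this bound to $\delta$ and solving for $t$ produces $t = M \sqrt{2 n \log(1/\delta)}$, which is exactly the claimed inequality. No step is truly a main obstacle here, as the argument is entirely standard; the only care required is in verifying that the conditional version of Hoeffding's Lemma applies to a martingale difference (which is immediate since $\E[x_i \mid \mathcal{G}_i] = 0$ and $x_i$ is $\mathcal{G}_{i+1}$-measurable, so the conditional expectation given $\mathcal{G}_i$ treats $x_i$ as a bounded zero-mean random variable).
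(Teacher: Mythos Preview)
Your proof is correct and follows the standard Chernoff--Hoeffding argument. The paper itself does not supply a proof of Lemma~\ref{lemma:Azuma}; it simply lists the Azuma--Hoeffding inequality among the technical lemmas as a known result, so there is nothing to compare against beyond noting that your derivation is the textbook one.
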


\begin{myThm}[Self-normalized concentration inequality for linear bandits~{\cite[Theorem 1]{NIPS'11:AY-linear-bandits}}]\label{thm:snc-AY}
  Let $\{F_t\}_{t=0}^\infty$ be a filtration. Let $\{\eta_t\}_{t=0}^\infty$ be a real-valued stochastic process such that $\eta_t$ is $F_t$-measurable and $\eta_t$ is conditionally $R$-sub-Gaussian for some $R\geq 0$ i.e., $\forall \lambda \in \R$, $\E\mbr{e^{\lambda \eta_t} \mid F_{t-1}} \leq \exp (\frac{\lambda^2 R^2}{2})$. Let $\{X_t\}_{t=1}^\infty$ be an $\R^d$-valued stochastic process such that $X_t$ is $F_{t-1}$-measurable. Assume that V is a $d\times d$ positive definite matrix. For any  $t\geq 0$, define
 $$V_t = V_0 + \sum_{s=1}^t X_sX_s^\T, \quad\quad S_t = \sum_{s=1}^t\eta_sX_s.$$
 Then, for any $\delta > 0$, with probability at least $1-\delta$, for all $t\geq 0$,
 $$\|S_t\|_{V_{t}^{-1}} \leq \sqrt{2R^2\log\left(\frac{\det(V_{t})^{\frac{1}{2}}\det(V_0)^{-\frac{1}{2}}}{\delta}\right)}.$$
\end{myThm}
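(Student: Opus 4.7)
The plan is to prove Theorem~\ref{thm:snc-AY} via the \emph{method of mixtures}, following the Pe\~na--Lai--Shao approach adapted to the vector setting by Abbasi-Yadkori, P\'al, and Szepesv\'ari. The core idea is to construct a family of non-negative supermartingales $\{M_t^\lambda\}_{\lambda \in \mathbb{R}^d}$ and then integrate out the parameter $\lambda$ against a Gaussian prior so that the resulting mixture process is itself a non-negative supermartingale to which Ville's maximal inequality applies. Concretely, for any fixed $\lambda \in \mathbb{R}^d$ I would define
\begin{equation*}
    M_t^\lambda \define \exp\!\left( \lambda^\T S_t - \frac{R^2}{2} \sum_{s=1}^{t} (\lambda^\T X_s)^2 \right).
\end{equation*}
The first step is to verify that $M_t^\lambda$ is a non-negative supermartingale with $\E[M_0^\lambda] = 1$: this is a direct consequence of the conditional $R$-sub-Gaussianity of $\eta_t$ and the $F_{t-1}$-measurability of $X_t$, since $\E[\exp(\lambda^\T X_t \eta_t) \mid F_{t-1}] \leq \exp(R^2 (\lambda^\T X_t)^2 / 2)$.

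Next, I would mix $M_t^\lambda$ against the Gaussian prior $\lambda \sim \mathcal{N}(0, V_0^{-1})$, yielding
\begin{equation*}
    \bar{M}_t \define \int_{\mathbb{R}^d} M_t^\lambda \, \mathrm{d}\mathbb{P}(\lambda).
\end{equation*}
By Fubini and the supermartingale property of each $M_t^\lambda$, $\bar{M}_t$ is again a non-negative supermartingale with $\E[\bar{M}_0] = 1$. The integral is a Gaussian integral in $\lambda$: completing the square in the exponent, which has quadratic form $-\tfrac{1}{2} \lambda^\T V_t \lambda + \lambda^\T S_t$, gives the closed form
\begin{equation*}
    \bar{M}_t = \left(\frac{\det(V_0)}{\det(V_t)}\right)^{1/2} \exp\!\left(\frac{1}{2R^2}\, \|S_t\|_{V_t^{-1}}^2\right),
\end{equation*}
after reabsorbing the $R^2$ factor via the appropriate rescaling of $\lambda$ (equivalently, choosing the prior $\mathcal{N}(0, R^{-2} V_0^{-1})$ from the start).

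The final step is to apply Ville's inequality: for any $\delta \in (0,1)$, $\mathbb{P}(\sup_{t \geq 0} \bar{M}_t \geq 1/\delta) \leq \delta$. Taking logarithms on the event that $\bar{M}_t < 1/\delta$ for all $t$ and solving for $\|S_t\|_{V_t^{-1}}$ yields the stated bound. The main technical subtlety — the only place where one must be careful — is handling the uniform-in-$t$ quantifier: the clean way is a stopping-time argument, introducing $\tau = \inf\{t : \bar{M}_t \geq 1/\delta\}$ and applying the optional stopping theorem to the stopped process $\bar{M}_{t \wedge \tau}$, so that $\E[\bar{M}_\tau \indicator\{\tau < \infty\}] \leq 1$ gives $\mathbb{P}(\tau < \infty) \leq \delta$. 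The Gaussian integral and the supermartingale verification are both routine once the mixing step is set up correctly; the conceptual core is the choice of the mixing measure, which is dictated by the quadratic form $V_t - V_0 = \sum_s X_s X_s^\T$ so that the log-density of the prior matches the quadratic compensator.
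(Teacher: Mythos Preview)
Your proposal correctly reconstructs the standard method-of-mixtures proof from the original Abbasi-Yadkori--P\'al--Szepesv\'ari reference. Note, however, that the present paper does \emph{not} prove this theorem at all: it is stated in the technical-lemmas appendix purely as a cited result (Theorem~1 of~\cite{NIPS'11:AY-linear-bandits}) and invoked as a black box in the variance-part analyses (e.g., Lemma~\ref{lemma:LB-B_t-bound}). There is therefore no paper proof to compare against; your write-up is sound and matches the proof in the cited source.
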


\begin{myThm}[Self-normalized concentration inequality for self-concordant bandits~{\cite[Theorem 1]{ICML'20:logistic-bandits}}]
  \label{thm:self-normalized-weight-SCB}
 Let $\bbr{\F_t}_{t=0}^\infty$ be a filtration. Let $\bbr{\eta_t}_{t=0}^\infty$ be a martingale difference sequence such that $\eta_{t}$ is $\F_{t}$ measurable. Let $\bbr{X_t}_{t=0}^\infty$ be a stochastic process on $\R^d$ such that $X_t$ is $\F_{t-1}$ measurable and $\norm{X_t}_2 \leq 1$.  Furthermore, assume that conditionally on $\F_t$ we have $\abs{\eta_{t}} \leq 1$ a.s., and denote $\sigma_t^2=\mathbb{E}\mbr{\eta_{t}^2 \given \F_{t-1}}$. For any $t \geq 0$, define 
 $$H_t=\sum_{s=1}^{t} \sigma_s^2 X_s X_s^{\top}+\lambda I_d,\qquad S_t=\sum_{s=1}^{t}\eta_{s} X_s,$$ 
 with $\lambda >0$. Then, for any $\delta >0$, with probability at least $1-\delta$, for all $t\geq 0$,
\begin{equation}\nonumber
\norm{S_t}_{H_t^{-1}} \leq \frac{\sqrt{\lambda}}{2}+\frac{2}{\sqrt{\lambda}} \log \sbr{\frac{\operatorname{det}\sbr{H_t}^{1 / 2}}{\delta \lambda^{d / 2}}}+\frac{2}{\sqrt{\lambda}} d \log (2).
\end{equation}
\end{myThm}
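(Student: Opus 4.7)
The natural route is the \emph{method of mixtures} for self-normalized processes, in the style of Abbasi-Yadkori, Pál, Szepesvári (2011), but tailored to the variance-aware matrix $H_t$ rather than the usual Gram matrix. Concretely, for a fixed $\lambda \in \R^d$, I would define the exponential process
\begin{equation*}
M_t(\lambda) \;=\; \exp\!\left(\lambda^{\T} S_t \;-\; \tfrac{1}{2}\sum_{s=1}^{t}\sigma_s^2\,(\lambda^{\T} X_s)^2 \;-\; R_t(\lambda)\right),
\end{equation*}
where $R_t(\lambda)$ is a correction term that absorbs the non-Gaussian curvature of the MGF of a bounded, zero-mean random variable. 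The first step is to establish that $M_t(\lambda)$ is a non-negative super-martingale with respect to $\{\F_t\}$: this reduces to a one-step MGF bound of the form $\E[\exp(\lambda^\T X_s \eta_s)\mid \F_{s-1}] \le \exp\!\bigl(\tfrac{1}{2}\sigma_s^2(\lambda^{\T} X_s)^2 + r(\lambda^{\T}X_s)\bigr)$, valid for $|\eta_s|\le 1$ and $\|X_s\|_2\le 1$. The standard Bennett/Bernstein inequality gives exactly such a bound with $r(u) \le |u|^3/3$ (or a similarly harmless cubic remainder), and one can then choose $R_t$ to absorb this residual.

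The second step is to integrate out $\lambda$ against a Gaussian prior of covariance $\lambda I_d$. Fubini turns $\int M_t(\lambda)\,d\mu(\lambda)$ into another super-martingale $\bar M_t$, and a Gaussian integration identity yields a closed-form expression proportional to $\det(H_t)^{-1/2}\exp\!\bigl(\tfrac{1}{2}\|S_t\|_{H_t^{-1}}^2\bigr)$. Applying Ville's maximal inequality to $\bar M_t$, together with the upper bound $\bar M_t \ge \exp\!\bigl(\tfrac{1}{2}\|S_t\|_{H_t^{-1}}^2\bigr)\cdot (\lambda^{d/2}/\det(H_t)^{1/2})$ up to the cubic correction, and then taking a logarithm, delivers a tail bound of the shape
\begin{equation*}
\norm{S_t}_{H_t^{-1}} \;\le\; \frac{\sqrt{\lambda}}{2} \;+\; \frac{2}{\sqrt{\lambda}}\log\!\frac{\det(H_t)^{1/2}}{\delta\,\lambda^{d/2}} \;+\; (\text{cubic residual}),
\end{equation*}
which matches the stated form once the residual is controlled.

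\textbf{Main obstacle.} The real technical difficulty is \emph{not} the method of mixtures machinery, which is standard, but rather handling the \emph{non-Gaussian} cubic remainder from the Bennett-type MGF bound so that the final inequality remains quadratic in $\|S_t\|_{H_t^{-1}}$ with clean constants, and so that the correction $R_t(\lambda)$ stays controllable uniformly along the filtration. The trick, borrowed from Faury et al., is a peeling/truncation on the scale of $\|X_s\|_{H_{s-1}^{-1}}$ (which is bounded thanks to $\|X_s\|_2\le 1$ and $H_{s-1}\succeq \lambda I_d$), letting one replace the cubic term $|\lambda^{\T}X_s|^3$ by a linear multiple of the quadratic term, at the price of the additive constant $\tfrac{2}{\sqrt{\lambda}}d\log 2$ that appears in the statement. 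Verifying that this peeling does not disturb the super-martingale property, and keeping track of the $d\log 2$ accounting, is the delicate bookkeeping step; everything else is a direct transcription of the self-normalized concentration template.
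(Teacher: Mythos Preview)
The paper does not prove this statement at all: Theorem~\ref{thm:self-normalized-weight-SCB} is stated in the technical-lemmas appendix purely as a citation of \cite[Theorem~1]{ICML'20:logistic-bandits}, with no accompanying argument. So there is no ``paper's own proof'' to compare against.

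That said, your sketch is essentially the route taken in the original reference: a method-of-mixtures super-martingale argument where the sub-Gaussian MGF bound of \cite{NIPS'11:AY-linear-bandits} is replaced by a Bernstein-type MGF bound exploiting $|\eta_t|\le 1$, which introduces the variance weights $\sigma_t^2$ in $H_t$ and a cubic remainder that is ultimately absorbed into the additive $\tfrac{2}{\sqrt{\lambda}}d\log 2$ term. Your identification of the main obstacle (controlling the non-Gaussian remainder uniformly so the final bound stays quadratic in $\|S_t\|_{H_t^{-1}}$) is accurate. For the purposes of the present paper this theorem is simply invoked as a black box, so no proof is expected of you here.
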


\begin{myLemma}
  \label{lemma:d}
  Suppose $U_0 = \lambda I_d$, $U_t = U_{t-1} + A_tA_t^{\T}$, and $A_t\in\R^d$, then
  \begin{equation}\label{eq:d1}
      \begin{split}
          \forall p \in[t-1], \quad \sum_{s=1}^{p}\norm{A_s}_{U_{t-1}^{-1}}^2 \leq d.
      \end{split}
  \end{equation}
\end{myLemma}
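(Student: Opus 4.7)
The plan is to use the trace trick to convert the sum of quadratic forms into a single trace expression, and then to exploit the Loewner ordering that $\sum_{s=1}^{p} A_s A_s^\T$ is dominated by $U_{t-1}$.

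First, I would rewrite each term as a trace: since $\|A_s\|_{U_{t-1}^{-1}}^2 = A_s^\T U_{t-1}^{-1} A_s = \operatorname{trace}(U_{t-1}^{-1} A_s A_s^\T)$, linearity of trace gives
\begin{equation*}
\sum_{s=1}^{p} \|A_s\|_{U_{t-1}^{-1}}^2 = \operatorname{trace}\!\left(U_{t-1}^{-1} \sum_{s=1}^{p} A_s A_s^\T\right).
\end{equation*}

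Next, I would observe that because $p \leq t-1$ and $U_{t-1} = \lambda I_d + \sum_{s=1}^{t-1} A_s A_s^\T$ with $\lambda > 0$, we have the Loewner inequality
\begin{equation*}
\sum_{s=1}^{p} A_s A_s^\T \preceq \sum_{s=1}^{t-1} A_s A_s^\T \preceq U_{t-1}.
\end{equation*}
Conjugating by $U_{t-1}^{-1/2}$ preserves this ordering, yielding $U_{t-1}^{-1/2}\big(\sum_{s=1}^{p} A_s A_s^\T\big) U_{t-1}^{-1/2} \preceq I_d$. Since the trace is monotone with respect to the Loewner order, taking traces (and using the cyclic property) gives
\begin{equation*}
\operatorname{trace}\!\left(U_{t-1}^{-1} \sum_{s=1}^{p} A_s A_s^\T\right) = \operatorname{trace}\!\left(U_{t-1}^{-1/2} \sum_{s=1}^{p} A_s A_s^\T U_{t-1}^{-1/2}\right) \leq \operatorname{trace}(I_d) = d,
\end{equation*}
which combined with the first display yields the claimed bound.

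There is essentially no obstacle here: the result is purely algebraic and follows directly from the positive semidefinite ordering together with trace monotonicity. The only subtlety to keep in mind is that the hypothesis $\lambda > 0$ (implicit from $U_0 = \lambda I_d$) ensures $U_{t-1}$ is strictly positive definite so that $U_{t-1}^{-1/2}$ is well-defined, and that the restriction $p \leq t-1$ is precisely what allows the dominated sum to be absorbed into $U_{t-1}$.
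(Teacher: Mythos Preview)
Your proposal is correct and follows essentially the same approach as the paper: both convert the sum to $\operatorname{trace}(U_{t-1}^{-1}\sum_{s=1}^{p}A_sA_s^\T)$ and then bound it by $\operatorname{trace}(I_d)=d$ using the fact that $\sum_{s=1}^{p}A_sA_s^\T \preceq U_{t-1}$. The only cosmetic difference is that the paper phrases the last step by explicitly adding the missing nonnegative trace terms $\operatorname{trace}(U_{t-1}^{-1}\sum_{s=p+1}^{t-1}A_sA_s^\T)$ and $\operatorname{trace}(U_{t-1}^{-1}\lambda I_d)$, whereas you invoke Loewner monotonicity via conjugation by $U_{t-1}^{-1/2}$.
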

\begin{proof}[Proof of Lemma~\ref{lemma:d}]
  \begin{align*}
    & \sum_{s=1}^{p} \norm{A_s}_{U_{t-1}^{-1}}^2 = \sum_{s=1}^{p} \mathrm{Tr}(A_s^\T U_{t-1}^{-1} A_s) 
     =  \mathrm{Tr}\big(U_{t-1}^{-1} \sum_{s=1}^{p} A_s A_s^\T\big) \\
     & \leq \mathrm{Tr}\big(U_{t-1}^{-1} \sum_{s=1}^{p} A_s A_s^{\T}\big) + \mathrm{Tr}\big(U_{t-1}^{-1} \sum_{s=p+1}^{t-1} A_s A_s^{\T}\big)\\ 
     {} & \qquad + \mathrm{Tr}\big(U_{t-1}^{-1} \lambda \sum_{i=1}^{d} \mathbf{e}_i \mathbf{e}_i^{\T}\big) = \mathrm{Tr}(I_d) = d.
   \end{align*} 
\end{proof}

\begin{myLemma}[Determinant inequality]
  \label{lemma:det-inequality} We let the $V_{t} = \sum_{s=1}^{t}w_{t,s} X_s X_s^\T + \lambda I_d, V_0 = \lambda I_d$. Assume $\|\x\|_2\leq L$ and we have,
  \begin{equation}\nonumber
  \begin{split}
      \det(V_{t})  \leq \sbr{\lambda + \frac{L^2\sum_{s=1}^t w_{t,s}}{d}}^d.
  \end{split} 
  \end{equation}
\end{myLemma}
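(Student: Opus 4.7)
The plan is to reduce the determinant bound to a trace bound via the standard AM-GM inequality for eigenvalues of positive semi-definite matrices, and then compute the trace explicitly using linearity and the boundedness assumption $\|X_s\|_2 \leq L$.

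First, I would observe that $V_t$ is symmetric positive definite, since $V_t = \lambda I_d + \sum_{s=1}^t w_{t,s} X_s X_s^\T$ with $\lambda > 0$ and each $w_{t,s} \geq 0$. Denote its eigenvalues by $\mu_1, \ldots, \mu_d > 0$. By the AM-GM inequality applied to these eigenvalues,
\begin{equation*}
    \det(V_t) = \prod_{i=1}^d \mu_i \leq \sbr{\frac{1}{d}\sum_{i=1}^d \mu_i}^d = \sbr{\frac{\trace(V_t)}{d}}^d.
\end{equation*}

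Next, I would compute the trace directly. By linearity of trace and the identity $\trace(X_s X_s^\T) = \|X_s\|_2^2$,
\begin{equation*}
    \trace(V_t) = d\lambda + \sum_{s=1}^t w_{t,s} \|X_s\|_2^2 \leq d\lambda + L^2 \sum_{s=1}^t w_{t,s},
\end{equation*}
using the assumption $\|X_s\|_2 \leq L$ in the last step. Substituting this into the AM-GM bound gives
\begin{equation*}
    \det(V_t) \leq \sbr{\lambda + \frac{L^2 \sum_{s=1}^t w_{t,s}}{d}}^d,
\end{equation*}
which is exactly the claimed inequality.

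There is no real obstacle here; both steps (AM-GM for eigenvalues and the trace computation) are routine. The only place where one must be slightly careful is ensuring that $V_t \succeq 0$ so that all eigenvalues are non-negative (in fact positive due to the regularization $\lambda I_d$), which is immediate from the construction.
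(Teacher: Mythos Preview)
Your proof is correct and follows essentially the same approach as the paper: bound the trace using linearity and $\|X_s\|_2 \leq L$, then apply the AM--GM inequality to the eigenvalues to get $\det(V_t) \leq (\trace(V_t)/d)^d$. The only difference is the order of presentation (the paper computes the trace first, then applies AM--GM), which is immaterial.
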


\begin{proof}
  Now we have $V_{t} = \sum_{s=1}^{t}w_{t,s} X_s X_s^\T + \lambda I_d$, take the trace on both sides, and get the upper bound of $\mathrm{Tr}(V_{t})$
  \begin{equation}
  \begin{split}
  \label{eq:bound-trace-LB}
      \mathrm{Tr}(V_{t}) ={}&  \mathrm{Tr}( \lambda I_d) + \sum_{s=1}^{t}w_{t,s}\mathrm{Tr}\sbr{ X_s X_s^\T}\\
       ={}& \lambda d + \sum_{s=1}^{t}w_{t,s}\|X_s\|_2^2 \leq \lambda  d + L^2\sum_{s=1}^{t}w_{t,s}.
  \end{split} 
  \end{equation}
  Base on the definition of determinant and the upper bound of $\mathrm{Tr}(V_{t})$~\eqref{eq:bound-trace-LB}, we can get the upper bound for $\det(V_{t})$,
  \begin{equation}\nonumber
  \begin{split}
      \det(V_{t}) =  \prod_{i =1}^d \lambda_i \leq{}& \sbr{\frac{\sum_{i=1}^d \lambda_i}{d}}^d = \sbr{\frac{\mathrm{Tr}(V_{t})}{d}}^d \\\leq{}& \sbr{\lambda + \frac{L^2\sum_{s=1}^t w_{t,s}}{d}}^d.
  \end{split} 
  \end{equation}
\end{proof}

\iffullversion
\subsection{Self-Concordant Properties}
Based on the generalized self-concordant property of the (inverse) link function $\mu(\cdot)$, we have the following lemma, which will be later used to derive Lemma~\ref{lemma:SCB-G-H}.
\begin{myLemma}[{Lemma 9 of \cite{ICML'20:logistic-bandits}}]
  \label{lemma:self-concordant-properties}
  For any $z_1, z_2\in \R$, we have the following inequality:
  \begin{equation*}
    \begin{split}
      \dmu(z_1)\frac{1-\exp(-|z_1-z_2|)}{|z_1-z_2|} &\leq \int_{0}^1 \dmu(z_1+v(z_2-z_1))\diff{v} \\&\leq \dmu(z_1)\frac{\exp(|z_1-z_2|)-1}{|z_1-z_2|}.
    \end{split}
  \end{equation*}
  Furthermore, $\int_{0}^1 \dmu(z_1+v(z_2-z_1))\diff{v} \geq \dmu(z_1)(1+|z_1-z_2|)^{-1}$.
\end{myLemma}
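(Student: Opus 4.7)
The plan is to apply the general self-concordance hypothesis $|\ddmu|\le \dmu$ to the univariate function $f(v)\define \dmu(z_1+v(z_2-z_1))$ for $v\in[0,1]$, and then integrate. First I would observe that $f$ is differentiable with $f'(v)=\ddmu(z_1+v(z_2-z_1))(z_2-z_1)$, and since $f\ge 0$ (the link function being strictly increasing) with $|\ddmu|\le \dmu$ we obtain
\begin{equation*}
\Big|\frac{d}{dv}\log f(v)\Big|=\frac{|f'(v)|}{f(v)}\le |z_2-z_1|.
\end{equation*}

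Next I would integrate this differential inequality from $0$ to $v$: this yields $-|z_1-z_2|v\le \log f(v)-\log f(0)\le |z_1-z_2|v$, i.e.,
\begin{equation*}
\dmu(z_1)\,e^{-|z_1-z_2|v}\;\le\; \dmu(z_1+v(z_2-z_1))\;\le\;\dmu(z_1)\,e^{|z_1-z_2|v}.
\end{equation*}
Integrating each side over $v\in[0,1]$ and computing $\int_0^1 e^{\pm|z_1-z_2|v}\diff v=(\e^{\pm|z_1-z_2|}-1)/(\pm|z_1-z_2|)$ immediately gives the two-sided bound claimed in the first display of the lemma.

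Finally, for the additional claim $\int_0^1 \dmu(z_1+v(z_2-z_1))\diff v\ge \dmu(z_1)(1+|z_1-z_2|)^{-1}$, I would just compare the lower bound just derived with $\dmu(z_1)/(1+|z_1-z_2|)$, which reduces to the elementary inequality $(1-e^{-x})/x\ge 1/(1+x)$ for $x\ge 0$; this in turn is equivalent to $1+x\le e^{x}$, a standard convexity estimate. I do not anticipate a real obstacle: the only small care needed is to preserve absolute values so that the argument is symmetric in $z_1,z_2$ (in particular, writing the differential inequality with $|z_1-z_2|$ rather than $z_2-z_1$). The entire argument is two integrations plus one elementary scalar inequality.
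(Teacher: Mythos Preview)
Your argument is correct and is precisely the standard proof of this self-concordance estimate. Note, however, that the paper does not actually prove this lemma: it is quoted verbatim as Lemma~9 of \cite{ICML'20:logistic-bandits} and used as a black box, so there is no ``paper's own proof'' to compare against. Your derivation---bounding $|\tfrac{d}{dv}\log f(v)|$ via $|\ddmu|\le\dmu$, integrating to get pointwise exponential sandwich bounds on $\dmu(z_1+v(z_2-z_1))$, integrating again over $v\in[0,1]$, and then invoking $e^x\ge 1+x$ for the last inequality---is exactly the argument given in the cited reference.
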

The following lemma provides a weighted version of Lemma 10 of~{\cite{ICML'20:logistic-bandits}} which can be easily proven.
\begin{myLemma}
  \label{lemma:SCB-G-H}
  With $G_t$ defined in~\eqref{eq:SCB-gt-mvt} and $H_t$ defined in~\eqref{eq:SCB-H}, the following inequalities hold
  \begin{equation*}
    \begin{split}
      \forall \theta_1,\theta_2 \in \Theta,\quad G_t(\theta_1,\theta_2) &\geq (1+2S)^{-1}H_t(\theta_1),\\\quad G_t(\theta_1,\theta_2) &\geq (1+2S)^{-1}H_t(\theta_2).
    \end{split}
  \end{equation*}
\end{myLemma}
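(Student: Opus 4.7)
First I would observe that the integrand in the mean-value representation of $G_t$ is exactly $H_t$: since $\nabla g_t(\theta) = \lambda c_\mu I_d + \sum_{s=1}^{t-1} w_{t-1,s}\dmu(X_s^\T \theta)X_sX_s^\T = H_t(\theta)$, the definition gives
\begin{equation*}
G_t(\theta_1,\theta_2) = \int_0^1 H_t\bigl(\theta_1 + u(\theta_2-\theta_1)\bigr)\,\diff u = \lambda c_\mu I_d + \sum_{s=1}^{t-1} w_{t-1,s}\, I_s(\theta_1,\theta_2)\, X_s X_s^\T,
\end{equation*}
where $I_s(\theta_1,\theta_2) \define \int_0^1 \dmu\bigl(X_s^\T(\theta_1 + u(\theta_2-\theta_1))\bigr)\,\diff u$. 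Because the weights satisfy $w_{t-1,s} \ge 0$ and $\lambda c_\mu I_d \succeq (1+2S)^{-1}\lambda c_\mu I_d$ trivially, the whole claim reduces to a pointwise scalar lower bound on $I_s(\theta_1,\theta_2)$ in terms of $\dmu(X_s^\T\theta_1)$ (resp.\ $\dmu(X_s^\T\theta_2)$).

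For the first inequality I would apply Lemma~\ref{lemma:self-concordant-properties} with $z_1 = X_s^\T\theta_1$ and $z_2 = X_s^\T\theta_2$, which directly yields $I_s(\theta_1,\theta_2) \ge \dmu(z_1)(1+|z_1-z_2|)^{-1}$. Under Assumption~\ref{ass:bounded-norm} with the SCB-specific normalization $L=1$ from Section~\ref{sec:SCB}, combined with $\theta_1,\theta_2\in\Theta$, Cauchy--Schwarz gives $|z_1 - z_2| \le \|X_s\|_2\,\|\theta_1-\theta_2\|_2 \le 2S$. Hence $I_s(\theta_1,\theta_2) \ge (1+2S)^{-1}\dmu(X_s^\T\theta_1)$, and summing the resulting PSD inequalities termwise against $w_{t-1,s} X_s X_s^\T \succeq 0$, together with the trivial bound on the regularizer term, produces $G_t(\theta_1,\theta_2) \succeq (1+2S)^{-1}H_t(\theta_1)$.

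For the second inequality I would exploit the identity $I_s(\theta_1,\theta_2) = I_s(\theta_2,\theta_1)$, which follows from the substitution $u\mapsto 1-u$ inside the integral defining $I_s$. Re-applying Lemma~\ref{lemma:self-concordant-properties} to the rewritten integral with $\theta_2$ as the base point then yields $I_s(\theta_1,\theta_2)\ge (1+2S)^{-1}\dmu(X_s^\T\theta_2)$, and the same termwise PSD summation gives $G_t(\theta_1,\theta_2)\succeq (1+2S)^{-1}H_t(\theta_2)$.

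I do not anticipate a substantive obstacle: the result is a weighted-sum analogue of~\cite[Lemma~10]{ICML'20:logistic-bandits}, and each individual term is handled by a single invocation of Lemma~\ref{lemma:self-concordant-properties}. The only mildly delicate point is the bound $|X_s^\T(\theta_1-\theta_2)|\le 2S$, which is where the SCB-specific assumption $L=1$ enters; with only $\|X_s\|_2\le L$ the same argument would yield the constant $(1+2LS)^{-1}$ instead, but this is immaterial for the algorithmic use of the lemma.
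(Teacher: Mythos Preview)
Your proposal is correct and matches the paper's approach exactly: the paper does not spell out a proof but simply notes that the lemma is a weighted version of \cite[Lemma~10]{ICML'20:logistic-bandits} which ``can be easily proven,'' and your argument---express $G_t$ as the integral of $H_t$, reduce termwise to the scalar bound from Lemma~\ref{lemma:self-concordant-properties}, and use $|X_s^\T(\theta_1-\theta_2)|\le 2S$ under $L=1$---is precisely that adaptation.
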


\blue{

\begin{myLemma}[Lemma 7 of \cite{arXiv'21:restartucb}]
  \label{lemma:bounded-noise}
  Denote by $L_i$ the absolute value of cumulative rewards for episode $i$, i.e., $L_i \triangleq \abs{\sum_{t=(i-1) \Delta+1}^{i \Delta} r_t\left(X_t\right)}$, then

  $$
  \operatorname{Pr}\left[\forall i \in[\lceil T / \Delta\rceil], L_i \leq L S \Delta+2 R \sqrt{\Delta \ln \frac{T}{\sqrt{\Delta}}}\right] \geq 1-\frac{2}{T}.
  $$
\end{myLemma}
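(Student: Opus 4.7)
The plan is to split $L_i$ into a deterministic signal contribution and a stochastic noise contribution via the triangle inequality. Using the reward model $r_t = X_t^{\T}\theta_t + \eta_t$ from~\eqref{eq:LB-model}, I would write
\begin{equation*}
L_i \;\leq\; \sum_{t=(i-1)\Delta+1}^{i\Delta} \abs{X_t^{\T}\theta_t} \;+\; \abs*{\sum_{t=(i-1)\Delta+1}^{i\Delta} \eta_t}.
\end{equation*}
For the signal term, Cauchy--Schwarz together with Assumption~\ref{ass:bounded-norm} gives $\abs{X_t^{\T}\theta_t}\leq \norm{X_t}_2 \norm{\theta_t}_2 \leq LS$ for every $t$, so this term is bounded deterministically by $LS\Delta$. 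This yields the first summand in the target inequality.

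For the noise term, I would use a standard sub-Gaussian concentration argument. Writing $S_i \define \sum_{t=(i-1)\Delta+1}^{i\Delta} \eta_t$, and recalling that $\{\eta_t\}$ is an $R$-sub-Gaussian martingale difference sequence with respect to the natural filtration, $S_i$ is $R\sqrt{\Delta}$-sub-Gaussian. A two-sided sub-Gaussian tail bound (equivalently, Azuma--Hoeffding in the sub-Gaussian regime) gives
\begin{equation*}
\Pr\!\left[\abs{S_i} \geq x\right] \;\leq\; 2\exp\!\left(-\frac{x^2}{2R^2\Delta}\right), \qquad \forall x>0.
\end{equation*}
Setting $x = 2R\sqrt{\Delta \ln(T/\sqrt{\Delta})}$ and using the identity $2\ln(T/\sqrt{\Delta}) = \ln(T^2/\Delta)$, the per-episode failure probability is at most $2\Delta/T^2$.

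Finally, a union bound over the $N = \lceil T/\Delta \rceil$ episodes produces a total failure probability of at most $N \cdot 2\Delta/T^2 \leq 2/T$ (after absorbing lower-order terms, using $N\Delta \leq T + \Delta$), which gives exactly the claimed bound. I do not anticipate any real obstacle: the argument is essentially a textbook sub-Gaussian concentration followed by a union bound. The only mildly delicate point is choosing the confidence radius so that its algebraic form matches $2R\sqrt{\Delta \ln(T/\sqrt{\Delta})}$; this is handled by the identity above. No special weighted-norm machinery from the rest of the paper is needed here, since $L_i$ is a \emph{sum} of rewards rather than a self-normalized quantity.
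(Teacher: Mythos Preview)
Your proposal is correct and is the standard argument for this kind of bound. Note, however, that the paper does not actually prove this lemma itself: it is stated as Lemma~7 of~\cite{arXiv'21:restartucb} and simply imported without proof, so there is no in-paper argument to compare against. Your decomposition into the deterministic signal part (bounded by $LS\Delta$ via Cauchy--Schwarz and Assumption~\ref{ass:bounded-norm}) plus the sub-Gaussian noise part (handled by a two-sided tail bound and a union bound over $\lceil T/\Delta\rceil$ episodes) is exactly the natural proof, and the algebra with $x = 2R\sqrt{\Delta\ln(T/\sqrt{\Delta})}$ yielding per-episode failure probability $2\Delta/T^2$ is right. The only cosmetic slack is that the union bound gives $2(T+\Delta)/T^2$ rather than exactly $2/T$ when $\Delta \nmid T$, but this is the usual harmless rounding issue you already flagged.
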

}
\fi

\iffullversion
\blue{

\section{Bandits over Bandits}
\label{sec:BOB}
\subsection{BOB Algorithm}
We divide the $T$ rounds into equal-length episodes of size $\Delta$, such that $\Delta=\lceil d \sqrt{T}\rceil$. In each episode, we run \LBweightours with different discounted factors $\gamma$. Specifically, the $\gamma$ comes from the candidate set $\W$,
\begin{equation}\label{eq:candidate-set}
    \begin{split}
        \W = \bbr{\gamma_i = 1 -  d^{-\frac{1}{2}} 2^{1-i}  \given i \in [N] },
    \end{split}
\end{equation}
where $N=\lceil \log_2(T/\sqrt{d})\rceil+1$ is the number of candidate values, and recall that $S$ is the upper bound on the norm of the underlying regression parameters.

To adaptively select the optimal $\gamma$, we model the selection procedure as an adversarial bandit problem. In this formulation, each step corresponds to one episode of length $\Delta$, such that has total $\lceil T/\Delta \rceil$ rounds. At each step, a bandit algorithm selects a $\gamma$ from the candidate set $\W$, observes the cumulative reward from the \LBweightours with corresponding discounted factor, and uses it as feedback to update its selection strategy.

Let $\gamma_{\min }\left(\gamma_{\max }\right)$ be the minimal (maximal) discounted factor in the candidate set $\W$, then it is evident to verify that
\begin{equation}\nonumber
    \begin{split}
        \gamma_{\min }=1 - \frac{1}{\sqrt{d}},\quad 1>\gamma_{\max}\geq 1 - \frac{1}{T}.
    \end{split}
\end{equation}

\subsection{Regret Analysis}
\begin{myThm}\label{thm:bob}
    \LBweightours without the knowledge of path-length $P_T$, together with Bandits-over-Bandits mechanism satisfies with probability at least $1-3/T$, 
    \begin{equation*}
        \DReg_T = \sum_{t=1}^T \max_{\x\in\X}\x^\T \theta_t - \sum_{t=1}^T X_t^\T \theta_t = \Ot\sbr{d^{\sfrac{3}{4}} P_T^{\sfrac{1}{4}} T^{\sfrac{3}{4}}}.
    \end{equation*}
\end{myThm}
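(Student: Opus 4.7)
The plan is to decompose the dynamic regret into a \emph{base-algorithm regret} (running \LBweightours inside each episode with the best discounted factor available in $\W$) and a \emph{meta-algorithm regret} (the adversarial bandit on top of $\W$). Concretely, I partition $[T]$ into $\lceil T/\Delta\rceil$ episodes of length $\Delta = \lceil d\sqrt{T}\rceil$, and let $\gamma^{\star} = 1 - \max\{1/T, \sqrt{P_T/(dT)}\}$ be the oracle tuning from~\pref{thm:LB-regret}. By construction of the geometric grid in~\eqref{eq:candidate-set}, there exists $\gamma_{i^{\star}}\in\W$ such that $1-\gamma_{i^{\star}}$ lies within a factor of two of $1-\gamma^{\star}$, provided $1-\gamma^{\star}\in[1/T,\,1/\sqrt{d}]$, which holds since $P_T\leq 2ST$. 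Writing $X_t^{\dagger}$ for the action the base algorithm would have played had $\gamma_{i^{\star}}$ been chosen in every episode, I split
\begin{equation*}
  \DReg_T = \underbrace{\sum_{t=1}^T \sbr{\max_{\x\in\X}\x^\T\theta_t - X_t^{\dagger\T}\theta_t}}_{(\mathrm{I})} + \underbrace{\sum_{t=1}^T \sbr{X_t^{\dagger\T}\theta_t - X_t^\T\theta_t}}_{(\mathrm{II})}.
\end{equation*}

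For term $(\mathrm{I})$, I would apply~\pref{thm:LB-regret} episode-by-episode --- each episode is an independent fresh instance of \LBweightours with discounted factor $\gamma_{i^{\star}}$ --- and then sum. The per-episode path lengths telescope to at most $P_T$, the per-episode horizons sum to $T$, so the aggregate base regret is $\widetilde{\mathcal{O}}\sbr{(1-\gamma_{i^{\star}})^{-3/2}P_T + d(1-\gamma_{i^{\star}})^{1/2}T}$. Plugging in the grid-approximation $1-\gamma_{i^{\star}}\asymp 1-\gamma^{\star}$ recovers $(\mathrm{I}) = \widetilde{\mathcal{O}}\sbr{d^{3/4}P_T^{1/4}T^{3/4}}$, exactly as in the proof of~\pref{thm:LB-regret} in Appendix~\ref{sec:LB-regret-proof}.

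For term $(\mathrm{II})$, I would invoke a standard adversarial bandit algorithm (for instance EXP3 as used in the BOB construction of~\cite{AISTATS'19:window-LB}) over $N = \mathcal{O}(\log T)$ arms for $\lceil T/\Delta\rceil$ rounds, with per-round feedback equal to the episode-cumulative reward. Lemma~\ref{lemma:bounded-noise} ensures, with probability at least $1-2/T$, that each such cumulative reward has magnitude at most $LS\Delta + 2R\sqrt{\Delta\ln(T/\sqrt{\Delta})} = \widetilde{\mathcal{O}}(\Delta)$. The standard EXP3 bound then yields $(\mathrm{II}) = \widetilde{\mathcal{O}}\sbr{\Delta\sqrt{NT/\Delta}} = \widetilde{\mathcal{O}}\sbr{\sqrt{d}\,T^{3/4}}$ after substituting $\Delta = \lceil d\sqrt{T}\rceil$; this is dominated by $(\mathrm{I})$ whenever $P_T\geq 1/d$ and is otherwise absorbed into the $\widetilde{\mathcal{O}}(d\sqrt{T})$ floor. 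A union bound over the high-probability events of~\pref{thm:LB-regret} (failure $1/T$), Lemma~\ref{lemma:bounded-noise} (failure $2/T$), and the EXP3 concentration then gives failure probability at most $3/T$ and the claimed $\widetilde{\mathcal{O}}(d^{3/4}P_T^{1/4}T^{3/4})$ bound.

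The main obstacle is ensuring that restarting \LBweightours at every episode boundary does not inflate the regret beyond what~\pref{thm:LB-regret} predicts on the whole horizon. The key point will be that the bias-variance decomposition~\eqref{eq:LB-decompose} and the weighted potential lemma (Lemma~\ref{lemma:potential-lemma}) are inherently additive across rounds, so summing their episode-level consequences simply reproduces the global bound, with only the minor boundary loss of $\mathcal{O}(S\lceil T/\Delta\rceil)$ from the initial transient of each episode, which is already absorbed in the $\widetilde{\mathcal{O}}$ notation. A secondary subtlety is that the oracle $\gamma^{\star}$ is unknown a priori, but this is precisely what the BOB layer handles: the geometric grid of $\W$ makes $\gamma_{i^{\star}}$ competitive with $\gamma^{\star}$ up to constants, and the meta-algorithm competes with $\gamma_{i^{\star}}$ without needing $P_T$.
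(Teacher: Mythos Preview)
Your proposal is essentially the same as the paper's proof: the same base/meta decomposition, the same episode-wise application of~\pref{thm:LB-regret} with the grid approximation $1-\gamma_{i^\star}\asymp 1-\gamma^\star$, and the same use of Lemma~\ref{lemma:bounded-noise} to bound per-episode feedback magnitude before invoking the adversarial-bandit meta-regret. The only substantive discrepancy is the choice of meta-algorithm: you invoke ``standard EXP3'' and later refer to ``the EXP3 concentration,'' but vanilla EXP3 yields only an \emph{expected} regret guarantee, not a high-probability one, so the union-bound accounting to reach probability $1-3/T$ would not go through as written. The paper handles this by using Exp3.IX~\cite{NIPS'15:Neu}, which directly provides a high-probability bound of the form $\widetilde{\mathcal{O}}(L_{\max}\sqrt{(T/\Delta)N})$ with failure probability $\delta$; substituting an algorithm with this property (Exp3.IX or Exp3.P) fixes the gap and your argument then matches the paper's.
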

\begin{proof}[Proof of Theorem~\ref{thm:bob}]
We begin by decomposing the dynamic regret. Let $X_t^* \define\argmax_{\x \in \X} \x^\T \theta_t$ and we have 
\begin{equation}\nonumber
    \begin{split}
        \DReg_T = {}&  \sum_{t=1}^T \sbr{X_t^{*\T} \theta_t - X_t^\T \theta_t} \\= {}& \underbrace{\sum_{t=1}^T X_t^{*\T} \theta_t - \sum_{i=1}^{\lceil T / \Delta\rceil} \sum_{t=(i-1) \Delta+1}^{i \Delta} X_t^{\gammabob \T} \theta_t}_{\mathtt{base}\text{-}\mathtt{regret}} \\&+ \underbrace{\sum_{i=1}^{\lceil T / \Delta\rceil} \sum_{t=(i-1) \Delta+1}^{i \Delta} \sbr{X_t^{\gammabob \T} \theta_t - X_t^{\gamma_i \T} \theta_t}}_{\mathtt{meta}\text{-}\mathtt{regret}},
    \end{split}
\end{equation}
where $\gammabob$ is the best discounted factor in the candidate set to approximate the optimal discounted factor  $\gamma^* = 1- \max\{1/T, \sqrt{P_T/(dT)}\}$.

\parag{Base-regret.} Based on \eqref{eq:dregret-lb}, and the union bound, we have with probability at least $1-2N\delta$,
\begin{align}
        \mathtt{base}\text{-}\mathtt{regret} ={}& \sum_{t=1}^T X_t^{*\T} \theta_t - \sum_{i=1}^{\lceil T / \Delta\rceil} \sum_{t=(i-1) \Delta+1}^{i \Delta} X_t^{\gammabob \T} \theta_t\nonumber\\
        \leq{}& \sum_{i=1}^{\lceil T / \Delta\rceil} \Ot\sbr{\frac{1}{(1-\gammabob)^{\sfrac{3}{2}}}P_i + d(1-\gammabob)^{\sfrac{1}{2}}\Delta}\nonumber\\
        \leq{}& \Ot\sbr{\frac{1}{(1-\gammabob)^{\sfrac{3}{2}}}P_T + d(1-\gammabob)^{\sfrac{1}{2}}T}\nonumber\\
        \leq{}& \Ot\sbr{\frac{1}{2^{\sfrac{3}{2}}(1-\gamma^*)^{\sfrac{3}{2}}}P_T + d(1-\gamma^*)^{\sfrac{1}{2}}T}\nonumber\\
        \leq{}& \Ot(d^{\sfrac{3}{4}}P_T^{\sfrac{1}{4}}T^{\sfrac{3}{4}}).\label{eq:base-regret-bob}
    \end{align}
    We know that $\gamma^* = 1- \max\{1/T, \sqrt{P_T/(dT)}\}$ is the optimal discounted factor. Since $\gamma^* \in [\gamma_{\min}, \gamma_{\max}]$, the candidate set $\mathcal{W}$ covers $\gamma^*$. Furthermore, due to the geometric spacing of $\mathcal{W}$, there exists some $\gammabob \in \mathcal{W}$ such that
    \begin{equation}\nonumber
        \begin{split}
            1- \gammabob \leq 1- \gamma^* \leq 2(1-\gammabob).
        \end{split}
        \end{equation}

\parag{Meta-regret.} The analysis of meta-regret follows the proof for window-based algorithm~\cite[Proposition 1]{arXiv'19:window-LB}.

\begin{equation}\nonumber
    \begin{split}
        {}&\mathtt{meta}\text{-}\mathtt{regret} \\={}& \sum_{i=1}^{\lceil T / \Delta\rceil} \sum_{t=(i-1) \Delta+1}^{i \Delta} \sbr{X_t^{\gammabob \T} \theta_t - X_t^{\gamma_i \T} \theta_t}\\
        ={}& \sum_{i=1}^{\lceil T / \Delta\rceil} \sum_{t=(i-1) \Delta+1}^{i \Delta} \sbr{X_t^{\gammabob \T} \theta_t + \eta_t - X_t^{\gamma_i \T} \theta_t - \eta_t}\\
        ={}& \sum_{i=1}^{\lceil T / \Delta\rceil} \sum_{t=(i-1) \Delta+1}^{i \Delta} \sbr{r_t^{\gammabob} - r_t^{\gamma_i}}\\
        ={}& \sum_{i=1}^{\lceil T / \Delta\rceil} \sbr{L_i^{\gammabob} - L_i^{\gamma_i}},
    \end{split}
    \end{equation}
    Based on Lemma~\ref{lemma:bounded-noise}, we know that with probability at least $1-\frac{2}{T}$, we have $\forall i \in[\lceil T/\Delta\rceil], L_i \leq L S \Delta+2 R \sqrt{\Delta \ln \frac{T}{\sqrt{\Delta}}}$ we define $L_{\max} \define L S \Delta+2 R \sqrt{\Delta \ln \frac{T}{\sqrt{\Delta}}}$.  We choose Exp3.IX~\cite{NIPS'15:Neu} as the meta algorithm. Then we have with probability at least $1-\delta$, 
    \begin{equation}\label{eq:meta-regret-bob}
    \begin{split}
        \sum_{i=1}^{\lceil T / \Delta\rceil} \sbr{L_i^{\gammabob} - L_i^{\gamma_i}} &= \Ot\sbr{L_{\max}\sqrt{\frac{T}{\Delta}N}}\\ &= \Ot\sbr{\sqrt{\Delta T N}} = d^{\sfrac{1}{2}} T^{\sfrac{3}{4}}.
    \end{split}
    \end{equation}
    Combining the upper bounds of base-regret~\eqref{eq:base-regret-bob} and meta-regret~\eqref{eq:meta-regret-bob}, by the union bound, and let $\delta = \frac{1}{(2N+1)T}$, we have with probability at least $1-3/T$, 
    \begin{equation}\nonumber
        \begin{split}
            \DReg_T =\Ot\sbr{d^{\sfrac{3}{4}} P_T^{\sfrac{1}{4}} T^{\sfrac{3}{4}}+ d^{\sfrac{1}{2}} T^{\sfrac{3}{4}}}.
        \end{split}
    \end{equation}
    Thus we complete the proof.
\end{proof}}
\fi
}

\section*{Acknowledgments}
This research was supported by NSFC (62361146852, 62206125). Peng Zhao was supported in part by the Xiaomi Foundation.
\bibliographystyle{IEEEtran}
\bibliography{online_learning}

\begin{IEEEbiographynophoto}{Jing Wang}
received the BSc degree from the Nanjing University of Aeronautics and Astronautics in 2021. Currently, he is working toward the PhD degree with the School of Artificial Intelligence, Nanjing University. His research interest is mainly on machine learning and data mining.
\end{IEEEbiographynophoto}

\begin{IEEEbiographynophoto}{Peng Zhao}
(Member, IEEE) received the BSc degree from Tongji University in 2016 and the PhD degree from Nanjing University in 2021. He is currently an assistant professor with the School of Artificial Intelligence, Nanjing University. His research interests lie in the foundations of machine learning, with a focus on online learning, bandits, and optimization. He has published over 60 papers in total across top-tier journals, such as \emph{Journal of Machine Learning Research} and IEEE/ACM Transactions, and premier conferences, including ICML, NeurIPS, and COLT.
He regularly serves as an area chair for ICML and NeurIPS, and is an action editor for \emph{Machine Learning} (Springer).
\end{IEEEbiographynophoto}

\begin{IEEEbiographynophoto}{Zhi-Hua Zhou}
(Fellow, IEEE) received the BSc, MSc, and PhD degrees (Hons.) in computer science from Nanjing University, Nanjing, China, in 1996, 1998, and 2000, respectively. He joined Nanjing University as an assistant professor, in 2001, where he is currently a professor and the vice president. He is also the founding director of the LAMDA Group. He has authored the books Ensemble Methods: Foundations and Algorithms, Evolutionary Learning: Advances in Theories and Algorithms, Machine Learning, and has published more than 200 papers in top-tier international journals or conference proceedings. He holds more than 30 patents. His research interests are mainly in artificial intelligence, machine learning, and data mining. He is a fellow of ACM, AAAI, AAAS, etc. He received various awards/honors, including the National Natural Science Award of China, the IEEE Computer Society Edward J. McCluskey Technical Achievement Award, and the CCF-ACM Artificial Intelligence Award. He founded Asian Conference on Machine Learning (ACML). He is the president of IJCAI Trustee, a series editor of Lecture Notes in Artificial Intelligence (Springer), an advisory board member of AI Magazine, the editor-in-chief of \emph{Frontiers of Computer Science}, and the associate editor-in-chief of \emph{Science China Information Sciences}.
\end{IEEEbiographynophoto}

\end{document}